\colorlet{linkequation}{blue}
\def\shownotes{0}  
\newcommand{\authnote}[2]{{\scriptsize $\ll$\textsf{#1 notes: #2}$\gg$}}
\newcommand{\ziang}[1]{\textcolor{blue}{\small{[Ziang notes: #1]}}}
\newcommand{\sm}[1]{\textcolor{purple}{\small{[Song notes: #1]}}}
\newcommand{\authnote}[2]{}
\newcommand{\ziang}[1]{}
\newcommand{\sm}[1]{}
\newcommand{\ith}{$i^{\text{th}}$ }
\newcommand{\cO}{\mc{O}}
\newcommand{\tO}{\wt{\mc{O}}}
\renewcommand{\setto}{\leftarrow}
\renewcommand{\epsilon}{\varepsilon}
\renewcommand{\hat}{\what}
\renewcommand{\circ}{\diamond}
\newcommand{\ext}{{\rm ext}}
\renewcommand{\int}{{\rm int}}
\newcommand{\swap}{{\rm swap}}
\newcommand{\regmin}{\mc{R}}
\newcommand{\recommend}{{\textsc{Recommend}}}
\newcommand{\observeTSF}{{\textsc{Observe\_Time\_Selection}}}
\newcommand{\observeLoss}{{\textsc{Observe\_Loss}}}
\newcommand{\pure}{{\rm pure}}
\renewcommand{\fill}{{\sf fill}}
\newcommand{\ZZ}{\mathbb{Z}}
\newcommand{\efce}{{{\rm EFCE}}}
\newcommand{\nfce}{{{\rm NFCE}}}
\newcommand{\oneefce}{{1\mhyphen {\rm EFCE}}}
\newcommand{\zeroefce}{{0\mhyphen {\rm EFCE}}}
\newcommand{\kce}{{K\mhyphen {\rm EFCE}}}
\newcommand{\kefce}{{K\mhyphen {\rm EFCE}}}
\newcommand{\kpefce}{{(K+1)\mhyphen {\rm EFCE}}}
\newcommand{\inftyce}{{\infty\mhyphen {\rm EFCE}}}
\newcommand{\inftyefce}{{\infty\mhyphen {\rm EFCE}}}
\newcommand{\nfcce}{{\rm NFCCE}}
\newcommand{\zeroefcegap}{{0\mhyphen {\rm EFCEGap}}}
\newcommand{\oneefcegap}{{1\mhyphen {\rm EFCEGap}}}
\newcommand{\kefcegap}{{K\mhyphen {\rm EFCEGap}}}
\newcommand{\kpefcegap}{{(K+1)\mhyphen {\rm EFCEGap}}}
\newcommand{\Hefcegap}{{H\mhyphen {\rm EFCEGap}}}
\newcommand{\inftyefcegap}{{\infty\mhyphen {\rm EFCEGap}}}
\newcommand{\nfcegap}{{{\rm NFCEGap}}}
\newcommand{\nfccegap}{{{\rm NFCCEGap}}}
\newcommand{\triggergap}{{{\rm TriggerGap}}}
\def\triggerpolicy{{\sf trig}(\pi_i, \hat\pi_i, (x_i, a))}
\def\infoseq{{rechistory}}
\def\infoseqs{{rechistories}}
\def\rechist{{rechistory}}
\def\rechists{{rechistories}}
\newcommand{\kcfr}{{$K$-{\rm EFR}}}
\newcommand{\kscfr}{{$K$-{\rm EFR}}}
\newcommand{\kefr}{{$K$-{\rm EFR}}}
\newcommand{\balkefr}{{{\rm Balanced} $K$-{\rm EFR}}}
\newcommand{\wrhedge}{\textsc{WRHedge}}
\newcommand{\WRHEDGE}{\textsc{WRHedge}}
\newcommand{\wrhedgebandit}{\textsc{SWRHedge}}
\newcommand{\swrhedge}{\textsc{SWRHedge}}
\def\omegaih{{\Omega_{i,h}^{({\rm I}), K}}}
\def\omegai{{\Omega_{i}^{({\rm I}), K}}}
\def\omegaiih{{\Omega_{i,(h_\star,h)}^{({\rm II}), K}}}
\def\omegaii{{\Omega_{i}^{({\rm II}), K}}}
\def\muphi{\phi}
\newcommand{\proofsize}{\footnotesize}
\def\POMG{{\rm POMG}}
\def\cS{{\mathcal S}}
\def\cX{{\mathcal X}}
\def\cA{{\mathcal A}}
\def\cF{{\mathcal F}}
\def\ba{{\mathbf{a}}}
\def\bbb{{\mathbf{b}}}
\def\l{\ell}
\def\L{L}
\def\piL{\pi L}
\def\dev{{\mathsf{D}}}
\renewcommand{\equiv}{\defeq}
\def\indicDleK{\delta^{\le K-1}(a_{1:h-1},b_{1:h-1})}
\def\indicDeqK{\delta^{K}({a}_{1:h-1},{b}_{1:h-1})}
\def\cC{{\mathcal C}}
\def\cB{{\mathcal B}}
\def\cI{{\mathcal I}}
\def\wh{{W(h)}}
\def\Regalg{{\textsc{RegAlg}}}
\renewcommand{\bar}{\wb}
\mathchardef\mhyphen="2D
\title{Sample-Efficient Learning of Correlated Equilibria in Extensive-Form Games}
\author{
  Ziang Song\thanks{Peking University. Email: {\tt songziang@pku.edu.cn}}
  \and
  Song Mei\thanks{UC Berkeley. Email: {\tt songmei@berkeley.edu}}
  \and
  Yu Bai\thanks{Salesforce Research. Email: {\tt yu.bai@salesforce.com}}
}
\begin{document}

\maketitle


\begin{abstract}
  Imperfect-Information Extensive-Form Games (IIEFGs) is a prevalent model for real-world games involving imperfect information and sequential plays. The Extensive-Form Correlated Equilibrium (EFCE) has been proposed as a natural solution concept for multi-player general-sum IIEFGs. However, existing algorithms for finding an EFCE require full feedback from the game, and it remains open how to efficiently learn the EFCE in the more challenging bandit feedback setting where the game can only be learned by observations from repeated playing.

  This paper presents the first sample-efficient algorithm for learning the EFCE from bandit feedback. We begin by proposing $K$-EFCE---a more generalized definition that allows players to observe and deviate from the recommended actions for $K$ times. The $K$-EFCE includes the EFCE as a special case at $K=1$, and is an increasingly stricter notion of equilibrium as $K$ increases. We then design an uncoupled no-regret algorithm that finds an $\varepsilon$-approximate $K$-EFCE within $\widetilde{\mathcal{O}}(\max_{i}X_iA_i^{K}/\varepsilon^2)$ iterations in the full feedback setting, where $X_i$ and $A_i$ are the number of information sets and actions for the $i$-th player. Our algorithm works by minimizing a wide-range regret at each information set that takes into account all possible recommendation histories. Finally, we design a sample-based variant of our algorithm that learns an $\varepsilon$-approximate $K$-EFCE within $\widetilde{\mathcal{O}}(\max_{i}X_iA_i^{K+1}/\varepsilon^2)$ episodes of play in the bandit feedback setting. When specialized to $K=1$, this gives the first sample-efficient algorithm for learning EFCE from bandit feedback.







\end{abstract}

\section{Introduction}



This paper is concerned with the problem of learning equilibria in Imperfect-Information Extensive-Form Games (IIEFGs)~\citep{kuhn201611}. IIEFGs is a general formulation for multi-player games with both imperfect information (such as private information) and sequential play, and has been used for modeling and solving real-world games such as Poker~\citep{heinrich2015fictitious,moravvcik2017deepstack,brown2018superhuman,brown2019superhuman}, Bridge~\citep{tian2020joint}, Scotland Yard~\citep{schmid2021player}, and so on. In a two-player zero-sum IIEFG, the standard solution concept is the celebrated notion of Nash Equilibrium (NE)~\citep{nash1950equilibrium}, that is, a pair of independent policies for both players such that no player can gain by deviating from her own policy. However, in multi-player general-sum IIEFGs, computing an (approximate) NE is PPAD-hard and unlikely to admit efficient algorithms~\citep{daskalakis2009complexity}. A more amenable class of solution concepts is the notion of \emph{correlated equilibria}~\citep{aumann1974subjectivity}, that is, a correlated policy for all players such that no player can gain by deviating from the correlated play using certain types of deviations.

The notion of Extensive-Form Correlated Equilibria (EFCE) proposed by~\citet{von2008extensive} is a natural definition of correlated equilibria in multi-player general-sum IIEFGs. An EFCE is a correlated policy that can be thought of as a ``mediator'' of the game who recommends actions to each player privately and sequentially (at visited information sets), in a way that disincentivizes any player to deviate from the recommended actions. Polynomial-time algorithms for computing EFCEs have been established, by formulating as a linear program and using the ellipsoid method~\citep{huang2008computing,papadimitriou2008computing,jiang2015polynomial}, min-max optimization~\citep{farina2019correlation}, or uncoupled no-regret dynamics using variants of counterfactual regret minimization~\citep{celli2020no,farina2021simple,morrill2021efficient,anagnostides2021faster}.



However, all the above algorithms require that the full game is known (or \emph{full feedback}). In the more challenging \emph{bandit feedback} setting where the game can only be learned by observations from repeated playing, it remains open how to learn EFCEs sample-efficiently. This is in contrast to other types of equilibria such as NE in two-player zero-sum IIEFGs, where sample-efficient learning algorithms under bandit feedback are known~\citep{lanctot2009monte,farina2020stochastic,kozuno2021model,bai2022near}.
A related question is about the structure of the EFCE definition: An EFCE only allows players to deviate \emph{once} from the observed recommendations, upon which further recommendations are no longer revealed and the player needs to make decisions on her own instead. This may be too restrictive to model situations where players can still observe the recommendations after deviating~\citep{morrill2021efficient}. It is of interest how we can extend the EFCE definition in a structured fashion to disincentivize such stronger deviations while still allowing efficient algorithms.






This paper makes steps towards answering both questions above, by proposing stronger and more generalized definition of EFCEs, and designing efficient learning algorithms under both full-feedback and bandit-feedback settings. We consider IIEFGs with $m$ players, $H$ steps, where each player $i$ has $X_i$ information sets and $A_i$ actions. Our contributions can be summarized as follows.
\begin{itemize}[itemsep=0pt, topsep=5pt, leftmargin=12pt]
    \item We propose $\kefce$, a natural generalization of EFCE, at which players have no gains when allowed to observe and deviate from the recommended actions for $K$ times (Section~\ref{section:def}). At $K=1$, the $\kefce$ is equivalent to the existing definition of $\efce$ based on trigger policies. For $K\ge 1$, the $\kefce$ are increasingly stricter notions of equilibria as $K$ increases.
    \item We design an algorithm $K$ Extensive-Form Regret Minimization (\kefr) which finds an $\eps$-approximate $\kefce$ within $\tO(\max_i X_iA_i^{K\wedge H}/\eps^2)$ iterations under full feedback (Section~\ref{section:full-feedback}). At $K=1$, our linear in $\max_i X_i$ dependence improves over the best known $\tO(\max_i X_i^2)$ dependence for computing $\eps$-approximate EFCE. At $K>1$, this gives a sharp result for efficiently computing (the stricter) $\kefce$, improving over the best known $\tO(\max_i X_i^2A_i^{K\wedge H}/\eps^2)$ iteration complexity of~\citet{morrill2021efficient}.
    
    \item We further design Balanced \kefr---a sample-based variant of \kefr---for the more challenging bandit-feedback setting (Section~\ref{section:bandit-feedback}). Balanced \kefr~ learns an $\eps$-approximate $\kefce$ within $\tO(\max_i X_iA_i^{K\wedge H+1}/\eps^2)$ episodes of play. This is the first line of results for learning $\efce$ and $\kefce$ from bandit feedback, and the linear in $X_i$ dependence matches informaiton-theoretic lower bounds. 
    
    \item Technically, our bandit-feedback result builds on a novel stochastic wide range regret minimization algorithm \swrhedge, as well as sample-based estimators of counterfactual loss functions using newly designed sampling policies, both of which may be of independent interest. 
    
    
\end{itemize}


\subsection{Related work}

\paragraph{Computing Correlated Equilibria from full feedback}



The notion of Extensive-Form Correlated Equilibria (EFCE) in IIEFGs is introduced in \citet{von2008extensive}. \citet{huang2008computing} design the first polynomial time algorithm for computing EFCEs in multi-player IIEFGs from full feedback, using a variation of the \textit{Ellipsoid against hope} algorithm~\citep{papadimitriou2008computing, jiang2015polynomial}. Later, \citet{farina2019correlation}~propose a min-max optimization formulation of EFCEs which can be solved by first-order methods. 

\citet{celli2020no} and its extended version~\citep{farina2021simple} design the first uncoupled no-regret algorithm for computing EFCEs. Their algorithms are based on minimizing the trigger regret (first considered in \citet{dudik2012sampling,gordon2008no}) via counterfactual regret decomposition~\citep{zinkevich2007regret}. \citet{morrill2021efficient} propose the stronger definition of ``Behavioral Correlated Equilibria'' (BCE) using general ``behavioral deviations'', and design the Extensive-Form Regret Minimization (EFR) algorithm to compute a BCE by using a generalized version of counterfactual regret decomposition. They also propose intermediate notions such as a ``Twice Informed Partial Sequence'' (TIPS) (and its $K$-shot generalization) as an interpolation between the EFCE and BCE. Our definition of $\kefce$ offers a new interpolation between the EFCE and BCE that is different from theirs, as the deviating player in $\kefce$ keeps following recommended actions until $K$ deviations has happened, whereas the deviator in $K$-shot Informed Partial Sequence keeps deviating until $K$ deviations has happened, after which she resumes to following.


The iteration complexity for computing an $\eps$-approximate correlated equilibrium in both~\citep{farina2021simple,morrill2021efficient} scales quadratically in $\max_{i\in[m]} X_i$. Our \kefr~algorithm for the full feedback setting builds upon the EFR algorithm, but specializes to the notion of $\kefce$, and achieve an improved linear in $\max_{i\in[m]} X_i$ iteration complexity.



Apart from the EFCE, there are other notions of (coarse) correlated equilibria in IIEFGs such as Normal-Form Coarse-Correlated Equilibria (NFCCE)~\citep{zinkevich2007regret,celli2019learning,burch2019revisiting,farina2020faster,zhou2020lazy}, Extensive-Form Coarse-Correlated Equilibria (EFCCE) \citep{farina2020coarse}, and Agent-Form (Coarse-)Correlated Equilibria (AF(C)CE)~\citep{selten1975reexamination,von2008extensive}; see~\citep{morrill2020hindsight, morrill2021efficient} for a detailed comparison. All above notions are either weaker than or incomparable with the EFCE, and thus results there do not imply results for computing EFCE.

\paragraph{Learning Equilibria from bandit feedback}
A line of work considers learning Nash Equilibria (NE) in two-player zero-sum IIEFGs and NFCCE in multi-player general-sum IIEFGs from bandit feedback~\citep{lanctot2009monte,farina2020stochastic, farina2021model, farina2021bandit,zhou2019posterior,zhang2021finding,kozuno2021model,bai2022near}; Note that the NFCCE is weaker than (and does not imply results for learning) EFCE. \citet{dudik2009sampling} consider sample-based learning of EFCE in succinct extensive-form games; however, their algorithm relies on an approximate Markov-Chain Monte-Carlo (MCMC) sampling subroutine that does not lead to an end-to-end sample complexity guarantee. To our best knowledge, our results are the first for learning the EFCE (and $\kefce$) under bandit feedback.

A related line of work considers learning (Coarse) Correlated Equilibria in general-sum Markov games without tree structure but with perfect information~\citep{liu2021sharp, song2021can, jin2021v, mao2022provably}, which does not imply our results for learning in IIEFGs as their setting does not contain imperfect information. 





\section{Preliminaries}
\label{section:prelim}

We formulate IIEFGs as partially-observable Markov games (POMGs) with tree structure and perfect recall, following~\citep{kozuno2021model,bai2022near}. For a positive integer $i$, we denote by $[i]$ the set $\{1, 2, \cdots, i\}$. For a finite set $\cA$, we let $\Delta(\cA)$ denote the probability simplex over $\cA$. Let ${n \choose m}$ denote the binomial coefficient (i.e. number of combinations) of choosing $m$ elements from $n$ different elements, with the convention that ${n \choose m} = 0$ if $m > n$.


\paragraph{Partially observable Markov game} We consider an episodic, tabular, $m$-player, general-sum, partially observable Markov game
\[
\POMG(m, \cS, \{ \cX_i \}_{i \in [m]}, \{ \cA_i \}_{i \in [m]}, H, p_0, \{ p_h \}_{h \in [H]}, \{ r_{i, h} \}_{i \in [m], h \in [H]}),
\]
where $\cS$ is the state space of size $\abs{\cS}=S$; $\cX_i$ is the space of information sets (henceforth \emph{infosets}) for the \ith player, which is a partition of $\cS$ (i.e., $x_i \subseteq \cS$ for all $x_i \in \cX_i$, and $\cS = \sqcup_{x_i \in \cX_i} x_i$ where $\sqcup$ stands for disjoint union) with size $\abs{\cX_i}=X_i$, and we also use $x_i:\cS\to\cX_i$ to denote the \ith player's emission (observation) function; $\cA_i$ is the action space for the \ith player with size $\abs{\cA_i}=A_i$, and we let $\cA = \cA_1 \times \cdots \cA_m$ denote the space of joint actions $\ba=(a_1,\dots,a_m)$; $H \in \ZZ_{\ge 1}$ is the time horizon; $p_0 \in \Delta(\cS)$ is the distribution of the initial state $s_1$; $p_h: \cS \times \cA \to \Delta(\cS)$ are transition probabilities where $p_h(s_{h+1}|s_h, \ba_h)$ is the probability of transiting to the next state $s_{h+1}$ from state-action $(s_h, \ba_h)\in\cS\times\cA$; and $r_{i, h}: \cS \times \cA \to [0, 1]$ is the deterministic\footnote{Our results can be directly extended to the case of stochastic rewards.} reward function for the \ith player at step $h$.  


\paragraph{Tree structure and perfect recall assumption} 
We use a POMG with tree structure and perfect recall to formulate imperfect information games, following~\citep{kozuno2021model,bai2022near}. We assume that the game has a tree structure: for any state $s \in \cS$, there is a unique step $h$ and history $(s_1, \ba_1 , \ldots ,s_{h-1}, \ba_{h-1}, s_h = s)$ to reach $s$. Precisely, for any policy of the players, for any realization of the game (i.e., trajectory) $(s_k', \ba_k')_{k \in [H]}$, conditionally to $s_l'= s$, it almost surely holds that $l = h$ and $(s_1', \ldots , s_h') = (s_1, \ldots , s_h )$. 
We also assume perfect recall, which means that each player remembers its past information sets and actions. In particular, for each information set (infoset) $x_i \in \cX_i$ for the \ith player, there is a unique history of past infosets and actions $(x_{i, 1} , a_{i, 1} , \ldots , x_{i, h-1}, a_{i, h-1}, x_{i, h} = x_i)$ leading to $x_i$. This requires that $\cX_i$ can be partitioned to $H$ subsets $(\cX_{i, h})_{h \in [H]}$ such that $x_{i, h} \in \cX_{i, h}$ is reachable only at time step $h$. We define $X_{i,h}\defeq \abs{\cX_{i,h}}$. Similarly, the state set $\cS$ can be also partitioned into $H$ subsets $(S_h)_{h \in [H]}$. As we mostly focus on the \ith player, we use $x_h$ to also denote $x_{i,h}$ and use them interchangeably.




For $s \in \cS$ and $x_i \in \cX_i$, we write $s \in x_i$ if infoset $x_i$ contains the state $s$. With an abuse of notation, for $s \in \cS$, we let $x_i(s)$ denote the \ith player's infoset that $s$ belongs to. 
For any $h < h'$, $x_{i, h} \in \cX_{i, h}, x_{i, h'} \in \cX_{i, h'}$, we write $x_{i, h} \prec x_{i, h'}$ if the information set $x_{i, h'}$ can be reached from information set $x_{i, h}$ by \ith player's actions; we write $(x_{i, h}, a_{i, h}) \prec x_{i, h'}$ if the infoset $x_{i, h'}$ can be reached from infoset $x_{i, h}$ by \ith player's action $a_{i, h}$. For any $h < h'$ and $x_{i, h} \in \cX_{i, h}$, we let $\cC_{h'}(x_{i, h}, a_{i, h})\equiv \{ x \in \cX_{i, h'}: (x_{i,h}, a_{i, h}) \prec x\}$ and $\cC_{h'}(x_{i, h}) \equiv \{ x \in \cX_{i, h'}: x_{i,h} \prec x\} = \cup_{a_{i,h} \in \cA_i} \cC_{h'}(x_{i, h}, a_{i, h})$ denote the infosets within the $h'$-th step that are reachable from (i.e. childs of) $x_{i,h}$ or $(x_{i,h}, a_{i,h})$, respectively. For shorthand, let $\cC(x_{i,h}, a_{i, h}) \defeq \cC_{h+1}(x_{i,h}, a_{i,h})$ and $\cC(x_{i,h}) \defeq \cC_{h+1}(x_{i,h})$ denote the set of immediate childs.

\paragraph{Policies} 
We use $\pi_i=\set{\pi_{i, h}(\cdot|x_{i,h})}_{h \in [H], x_{i,h}\in\cX_{i,h}}$ to denote a policy of the \ith player, where each $\pi_{i,h}(\cdot|x_{i,h})\in\Delta(\cA_i)$ is the action distribution at infoset $x_{i,h}$. We say $\pi_i$ is a pure policy if $\pi_{i, h}(\cdot|x_{i,h})$ takes some single action deterministically for any $(h,x_{i,h})$; in this case we let $\pi_i(x_{i,h}) = \pi_{i,h}(x_{i,h})$ denote the action taken at infoset $x_{i,h}$ for shorthand. We use $\pi = \{ \pi_i\}_{i \in [m]}$ to denote a product policy for all players, and call $\pi$ a pure product policy if $\pi_i$ is a pure policy for all $i\in[m]$. Let $\Pi_i$ denote the set of all possible policies for the \ith player and $\Pi=\prod_{i\in[m]}\Pi_i$ denote the set of all possible product policies.  Any probability measure $\wb{\pi}$ on $\Pi$  induces a \emph{correlated policy}, which executes as first sampling a  product policy $\pi = \{\pi_i\}_{i \in [m]} \in\Pi$ from probability measure $\wb{\pi}$ and then playing the product policy $\pi$. We also use $\wb{\pi}$ to denote this policy. A correlated policy $\wb{\pi}$ can be viewed as a \emph{mediator} of the game which samples $\pi\sim \wb{\pi}$ before the game starts, and privately \emph{recommends} action sampled from $\pi_i(\cdot | x_i)$ to the \ith player when infoset $x_i\in\cX_i$ is visited during the game.






\paragraph{Reaching probability} 
With the tree structure assumption, for any state $s_h\in \cS_h$ and actions $\ba \in \cA$, there exists a unique history $(s_1, \ba_1, \ldots , s_h = s, \ba_h = \ba)$ ending with $(s_h =s,\ba_h = \ba)$. Given any product policy $\pi$, the probability of reaching $(s_h, \ba_h)$ at step $h$ can be decomposed as
\begin{align}
  \label{equation:reaching-prob}
  \textstyle
p^\pi_h(s_h, \ba) = p_{1:h}(s_h) \prod_{i \in [m]} \pi_{i, 1:h}(s_h, a_{i,h}),
\end{align}
where we have defined the \emph{sequence-form transitions} $p_{1:h}$ and \emph{sequence-form policies} $\pi_{i,1:h}$ as 
\begin{align}
  & \textstyle
  p_{1:h}(s_h) \defeq ~ p_0(s_1) \prod_{h' = 1}^{h-1} p_{h'}(s_{h'+1} \vert s_{h'}, \ba_{h'} ), \label{eqn:sequence-form-transition} \\
  & \textstyle \pi_{i, 1:h}(s_h, a_{i,h}) \defeq ~ \pi_{i, 1:h}(x_{i,h}, a_{i,h}) \defeq \prod_{h' = 1}^h \pi_{i,h'}(a_{i,h'} \vert x_{i,h'}), \label{eqn:sequence-form-policy}
\end{align}
where $(s_{h'}, \ba_{h'})_{h'\le h-1}$ is the unique history of states and actions that leads to $s_h$ by the tree structure; $x_{i,h}=x_i(s_{h})$ is the \ith player's infoset at the $h$-th step, and $(x_{i,h'}, a_{i,h'})_{h'\le h-1}$ is the unique history of infosets and actions that leads to $x_{i,h}$ by perfect recall. We also define $\pi_{i,h:h'}(x_{i,h'}, a_{i,h'})\defeq \prod_{h''=h}^{h'} \pi_{i,h''}(a_{i,h''}|x_{i,h''})$ for any $1\le h\le h'\le H$. 



\paragraph{Value functions and counterfactual loss functions}
Let $V^{\pi}_{i}\defeq \E_{\pi}[\sum_{h=1}^H r_{i,h}]$ denote the value function (i.e. expected cumulative reward) for the \ith player under policy $\pi$. By the product form of the reaching probability in~\eqref{equation:reaching-prob}, the value function $V^{\pi}_i$ admits a multi-linear structure over the sequence-form policies. Concretely, fixing any sequence of product policies $\{\pi^t\}_{t=1}^T$ where each $\pi^t= \{ \pi_i^t\}_{i\in[m]}$, we have
\begin{align*}
    V^{\pi^t}_i = \sum_{h=1}^H \sum_{(s_h, \ba_h = ( a_{j, h})_{j \in [m]})\in\mc{S}_h\times\mc{A}} p_{1:h}(s_h) \prod_{j=1}^m \pi^t_{j, 1:h}(x_{j}(s_h), a_{j,h}) r_{i, h}(s_h, \ba_h).
\end{align*}

For any sequence of policies $\{\pi^t\}_{t=1}^T$, we also define the \emph{counterfactual loss function} (for the $t$-th round) $\{L_{i,h}^t(x_{i,h}, a_{i,h})\}_{i,h,x_{i,h},a_{i,h}}$ as~\citep{zinkevich2007regret}: 
\begin{align}
    & \l_{i,h}^t(x_{i,h}, a_{i,h}) \defeq \sum_{\substack{s_h\in x_{i,h}, \\ \ba_{-i,h}\in\mc{A}_{-i}}} p_{1:h}(s_h) \prod_{j\neq i} \pi_{j, 1:h}^t(x_{j}(s_h), a_{j,h}) [1 - r_{i,h}(s_h, \ba_h)], \label{equation:l-definition} \\
    & \L_{i,h}^{t}(x_{i,h}, a_{i,h}) \defeq \l_{i,h}^t(x_{i,h}, a_{i,h}) + \sum_{h'=h+1}^H \sum_{\substack{x_{h'}\in \cC_{h'}(x_{i, h}, a_{i,h}), \\ a_{h'}\in\mc{A}_i}} \pi_{i,(h+1):h'}^t(x_{h'}, a_{h'}) \l_{i,h'}^t(x_{h'}, a_{h'}). \label{equation:L-definition}
\end{align}
Intuitively, $\L_{i,h}^{t}(x_{i,h}, a_{i,h})$ measures the \ith player's expected loss (one minus reward) conditioned on reaching $(x_{i,h}, a_{i,h})$, weighted by the (environment) transitions and all other players' policies $\pi_{-i}^t$ at all time steps, and the \ith player's own policy $\pi_i^t$ from step $h+1$ onward. We will omit the $i$ subscript and use $L_h^t$ to denote the above when clear from the context.


\paragraph{Feedback protocol}
We consider two standard feedback protocols for our algorithms: \emph{full feedback}, and \emph{bandit feedback}. In the full feedback case, the algorithm can query a product policy $\pi^t=\{ \pi_i^t\}_{i \in [m]}$ in each iteration and observe the counterfactual loss functions $\{\L_{i,h}^t(x_{i,h}, a_{i,h})\}_{i, h, x_{i,h}, a_{i,h}}$ \emph{exactly}\footnote{This is implementable (and slightly more general than) when the full game (transitions and rewards) is known.}. In the bandit feedback case, the players can only play repeated episodes with some policies and observe the transitions and rewards from the environment. Specifically, the $t$-th episode (overloading the $t$ notation briefly) proceeds as follows: Before the episode starts, each player chooses some policy $\pi_i^t\in\Pi_i$. At the beginning of an episode, an initial state $s^t_1\sim p_0(\cdot)$ is sampled. Then at each step $h\in[H]$, the \ith player observes her infoset $x^t_{i, h} \equiv x_i(s^t_h)$ and plays actions $a^t_{i, h} \sim \pi^t_{i, h}(\cdot|x^t_{i, h})$ for all $i\in[m]$. The underlying state then transitions to a next state $s^t_{h+1} \sim p_h(\cdot |s^t_h, \ba^t_h)$, and the \ith player receives reward $r^t_{i, h} \equiv r_{i, h}(s^t_h, \ba^t_h)$. The trajectory observed by the \ith player in this episode is $(x_{i,1}^t, a_{i,1}^t,r_{i,1}^t,\dots,x_{i,H}^t,  a_{i,H}^t, r_{i,H}^t)$.

\section{$K$ Extensive-Form Correlated Equilibria}
\label{section:def}


We now introduce the definition of $K$ Extensive-Form Correlated Equilibria ($\kefce$) and establish its relationship with existing notions of correlated equilibria in IIEFGs.

\subsection{Definition of $\kefce$}



Roughly speaking, a $\kefce$ is a correlated policy in which no player can gain if allowed to deviate from the observed recommended actions $K$ times, and forced to choose her own actions without observing further recommendations afterwards.

To state its definition formally, we categorize all possible \emph{recommendation histories} (henceforth \emph{\rechists}) at each infoset $x_{i,h}\in\cX_{i,h}$ (for the \ith player) into two types, based on whether the player has already deviated $K$ times from past recommendations:
\begin{enumerate}[label=(\arabic*), leftmargin=20pt, topsep=5pt, itemsep=0pt]
\item A \emph{Type-I \rechist} ($\le K-1$ deviations happened) at $x_{i,h}$ is any action sequence $b_{1:h-1}\in\cA_i^{h-1}$ such that $\sum_{k=1}^{h-1} \indic{a_{k}\neq b_{k}}\le K-1$, where $(a_1,\dots,a_{h-1})$ is the unique sequence of actions leading to $x_{i,h}$. Let $\omegai(x_{i,h})$ denote the set of all Type-I \rechists~at $x_{i,h}$.
\item A \emph{Type-II \rechist} ($K$ deviations happened) at $x_{i,h}$ is any action sequence $b_{1:h'}\in\cA_i^{h'}$ with length $h' < h$ such that $\sum_{k=1}^{h'-1} \indic{a_{k}\neq b_{k}}=K-1$ and $a_{k}\neq b_{k}$, where $(a_1,\dots,a_{h-1})$ is the unique sequence of actions leading to $x_{i,h}$. Let $\omegaii(x_{i,h})$ denote the set of all Type-II \rechists~at $x_{i,h}$.
\end{enumerate}



We now define a $\kefce$ strategy modification ($0\le K\le \infty$) for the \ith player. 

\begin{definition}[$\kce$ strategy modification]
\label{definition:strategy-modification}
A $\kce$ strategy modification $\phi$ (for the \ith player) is a mapping
$\phi$ of the following form: At any infoset $x_{i,h}\in\cX_{i,h}$, for any Type-I \rechist~$b_{1:h-1}\in\omegai(x_{i,h})$, $\phi$ swaps any recommended action $b_h$ into $\phi(x_{i,h}, b_{1:h-1}, b_{h})\in\cA_i$; for any Type-II \rechist~$b_{1:h'}\in\omegaii(x_{i,h})$, $\phi$ directly takes action $\phi(x_{i,h}, b_{1:h'})\in \cA_i$.

Let $\Phi_i^K$ denote the set of all possible $\kce$ strategy modifications for any $0\le K\le \infty$. Formally, for any $\phi\in\Phi_i^K$ and any pure policy $\pi_i\in\Pi_i$, we define the modified policy $\phi\circ \pi_i$ as in Algorithm~\ref{algorithm:phi-circ-pi}.
\end{definition}


We parse the modified policy $\phi\circ \pi_i$ (Algorithm~\ref{algorithm:phi-circ-pi}) as follows. Upon receiving the infoset $x_{i,h}$ at each step $h$, the player has the \rechist~$\bbb$ containing all past observed recommended actions. Then, if $\bbb$ is Type-I, i.e. at most $K-1$ deviations have happened (Line~\ref{line:devif}), then the player observes the current recommended action $b_h=\pi_{i,h}(x_{i,h})$, takes a potentially swapped action $a_h=\phi(x_{i,h},\bbb,b_h)$ (Line~\ref{line:swap}), and appends $b_h$ to the recommendation history (Line~\ref{line:memory}). Otherwise, $(x_{i,h}, \bbb)$ is Type-II, i.e. $K$ deviations have already happened. In this case, the player does not observe the recommended action, and instead takes an action $a_h=\phi(x_{i,h}, \bbb)$, and does not update $\bbb$ (Line~\ref{line:cce}).

\begin{algorithm}[t]
  \small
   \caption{Executing modified policy $\phi\circ \pi_i$}
   \label{algorithm:phi-circ-pi}
   \begin{algorithmic}[1]
     \REQUIRE $\kce$ strategy modification $\phi\in\Phi_i^K$ ($0\le K\le \infty$), policy $\pi_i\in\Pi_i$ for the \ith player. 
     \STATE Initialize recommendation history $\bbb=\emptyset$.
     \FOR{$h=1,\dots,H$}
     \STATE Receive infoset $x_{i,h}\in\cX_{i,h}$.
     \IF{$\bbb\in\omegai(x_{i,h})$} \label{line:devif}
     \STATE Observe recommendation $b_h\sim \pi_{i,h}(\cdot | x_{i,h})$.
     \STATE Take swapped action $a_h=\phi(x_{i,h}, \bbb, b_h)$. \label{line:swap}
     \STATE Update recommendation history $\bbb\setto (\bbb, b_h)\in \cA_i^h$. \label{line:memory}
     \ELSE{} \label{line:develse}
     \STATE {\color{blue} // Must have $\bbb\in\omegaii(x_{i,h})$, do not observe recommendation from $\pi_i$}
     \STATE Take action $a_h=\phi(x_{i,h}, \bbb)$. \label{line:cce} 
     \ENDIF
     \ENDFOR
 \end{algorithmic}
\end{algorithm}

We now define $\kefce$ as the equilibrium induced by the $\kefce$ strategy modification set $\Phi_i^K$. With slight abuse of notation, we define $\phi\circ \wb{\pi}$ for any \emph{correlated policy} $\wb{\pi}$ to be the policy $(\phi\circ \pi_i)\times \pi_{-i}$ where $\pi\sim \wb{\pi}$ is the product policy sampled from  $\wb{\pi}$.

\begin{definition}[$\kefce$]
\label{definition:kefce}
A correlated policy $\bar\pi$ is an \emph{$\epsilon$-approximate $K$ Extensive-Form Correlated Equilibrium} ($\kefce$) if
\begin{align*}
    \kefcegap(\wb{\pi}) \defeq \max_{i\in[m]} \max_{\phi \in \Phi_{i}^K}
    \paren{ 
    V_i^{\phi\circ\wb{\pi}} - V_i^{\wb{\pi}}
    }
    \le \epsilon.
\end{align*}
We say $\bar\pi$ is an (exact) $\kce$ if $\kefcegap(\wb{\pi})=0$.
\end{definition}


\subsection{Properties of $\kefce$}


\paragraph{Equivalence between $\oneefce$ and trigger definition of EFCE}
At the special case $K=1$, our (exact) $\oneefce$ is equivalent to the existing definition of EFCE based on \emph{trigger policies}~\citep{gordon2008no,celli2020no}, which defines an $\epsilon$-approximate EFCE as any correlated policy $\wb{\pi}$ such that the following trigger gap is at most $\eps$:
\begin{align}
\label{equation:trigger-def}
    \triggergap(\wb{\pi}) \defeq \max_{i\in[m]} \max_{(x_i,a) \in \cX_i \times \cA_i}
    \max_{\hat\pi_i \in \Pi_i}\paren{ 
    \E_{\pi\sim\bar\pi} V_i^{\triggerpolicy\times\pi_{-i}} - \E_{\pi\sim\bar\pi} V_i^{\pi}
    }
    \le \epsilon.
\end{align}
Here the trigger policy $\triggerpolicy\in\Pi_i$ (with triggering sequence $(x_i, a)$) is the unique policy that plays $\pi_i\in\Pi_i$, unless infoset $x_i$ is visited and action $a$ is recommended, in which case the sequence $(x_i, a)$ is ``triggered'' and the player plays $\hat\pi_i\in\Pi_i$ thereafter.

\begin{proposition}[Equivalence of $\oneefce$ and trigger definition]
\label{proposition:trigger}
For any correlated policy $\wb{\pi}$, we have
\begin{align*}
    \textstyle
    \triggergap(\wb{\pi}) \le \oneefcegap(\wb{\pi}) \le (\max_{i\in[m]}X_iA_i) \cdot \triggergap(\wb{\pi}),
\end{align*}
In particular, $\oneefcegap(\wb{\pi})=0$ if and only if $\triggergap(\wb{\pi})=0$.
\end{proposition}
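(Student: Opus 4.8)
The plan is to establish the two inequalities separately and then read off the ``in particular'' claim from the non-negativity of both gaps: $\oneefcegap(\wb{\pi})\ge 0$ because the identity modification lies in $\Phi_i^1$ and has zero gain, and $\triggergap(\wb{\pi})\ge0$ because for any last-step infoset $x_i\in\cX_{i,H}$ the trigger $(x_i,a)$ with continuation satisfying $\hat\pi_i(x_i)=a$ reproduces the recommended play verbatim and so also has zero gain. Given both inequalities, $\oneefcegap(\wb{\pi})=0$ forces $\triggergap(\wb{\pi})\le 0$ hence $\triggergap(\wb{\pi})=0$, and conversely $\triggergap(\wb{\pi})=0$ forces $\oneefcegap(\wb{\pi})\le 0$ hence $\oneefcegap(\wb{\pi})=0$.

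For the lower bound $\triggergap\le\oneefcegap$, I would exhibit, for each player $i$, triggering sequence $(x_i,a)$, and continuation $\hat\pi_i$, a single modification $\phi\in\Phi_i^1$ with $\phi\circ\pi_i=\mathsf{trig}(\pi_i,\hat\pi_i,(x_i,a))$ for every pure $\pi_i$. Concretely, $\phi$ follows the recommendation at every infoset until $x_i$ is reached with recommendation $a$, and from then on plays $\hat\pi_i$ on the subtree rooted at $x_i$: in the $\Phi_i^1$ formalism this is realized by keeping the recommendation while $\hat\pi_i$ agrees with it and performing the single allowed swap at the first infoset where $\hat\pi_i$ disagrees, after which the player is Type-II and commits to $\hat\pi_i$. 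Since the two policies take the same action at every infoset, their values agree, so the trigger's gain equals $V_i^{\phi\circ\wb{\pi}}-V_i^{\wb{\pi}}\le\oneefcegap(\wb{\pi})$; maximizing over $i,(x_i,a),\hat\pi_i$ gives the bound. The only point to verify is that this policy indeed uses at most one deviation, which holds because it matches the recommendation up to the single switch to $\hat\pi_i$.

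For the upper bound I would fix $i$ and $\phi\in\Phi_i^1$ and decompose $V_i^{\phi\circ\wb{\pi}}-V_i^{\wb{\pi}}$ over infoset--action pairs. Because a $\Phi_i^1$ modification deviates at most once, $\phi\circ\pi_i$ follows the recommendation until its first swap, at some infoset $x$ with recommended action $a$, and then plays a fixed Type-II continuation $\hat\pi^{(x,a)}$. Expanding the value through the counterfactual loss functions $\L_{i,h}$ and unrolling the tree recursion over player $i$'s infosets, the gain telescopes into an exact sum $\sum_{(x,a)\in\cX_i\times\cA_i}R_\phi(x,a)$, where $R_\phi(x,a)$ is the counterfactual gain of switching to $\hat\pi^{(x,a)}$ versus following the recommendation at $(x,a)$, weighted by the probability of reaching $(x,a)$ along the recommended path with no earlier swap by $\phi$. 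I would compare this to the trigger gain $T(x,a):=\E_{\pi\sim\wb{\pi}}[V_i^{\mathsf{trig}(\pi_i,\hat\pi^{(x,a)},(x,a))\times\pi_{-i}}-V_i^{\pi}]$, which carries the same local counterfactual gain but weighted by the \emph{full} probability of reaching $(x,a)$ along the recommended path, and which satisfies $T(x,a)\le\triggergap(\wb{\pi})$.

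The main obstacle is the ``no earlier swap'' weight, which a priori makes $R_\phi(x,a)$ and $T(x,a)$ incomparable. The step I expect to be decisive is to use perfect recall to collapse this weight: conditioning on $x$ being reached along the recommended path forces the recommendations at all ancestors of $x$ to equal the unique action sequence $a_{1:h-1}$ leading to $x$, so whether $\phi$ swaps at some ancestor is determined by $x$ and $\phi$ alone and does not depend on the sampled $\pi$. Hence the ``no earlier swap'' indicator is a constant $c_\phi(x)\in\{0,1\}$ on the event that the recommendations reach $x$, and it factors out of the expectation to give $R_\phi(x,a)=c_\phi(x)\,T(x,a)$; in either case $R_\phi(x,a)\le\triggergap(\wb{\pi})$ (using $T(x,a)\le\triggergap$ when $c_\phi(x)=1$, and $\triggergap\ge0$ when $c_\phi(x)=0$). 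Summing over the at most $X_iA_i$ pairs and maximizing over $i$ yields $\oneefcegap(\wb{\pi})\le(\max_i X_iA_i)\,\triggergap(\wb{\pi})$. The remaining work—verifying the telescoping identity and that $R_\phi$ and $T$ share the same local counterfactual gain—is routine bookkeeping with reaching probabilities and the functions $\L_{i,h}$.
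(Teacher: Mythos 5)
Your proposal is correct and follows essentially the same route as the paper's proof: the lower bound by constructing a modification $\phi^\star\in\Phi_i^1$ that implements any given trigger policy ${\sf trig}(\pi_i,\hat\pi_i,(x_i,a))$ exactly, and the upper bound by decomposing the gain of $\phi\in\Phi_i^1$ according to its first deviation point $(x,a)$, identifying each term with the gap of a trigger policy whose continuation is the Type-II part of $\phi$ (the paper takes ${\sf trig}(\pi_i,\phi\circ\pi_i,(x,a))$, matching your $\hat\pi^{(x,a)}$), and summing over at most $X_iA_i$ pairs using $\triggergap(\wb{\pi})\ge 0$. Your explicit use of perfect recall to turn the ``no earlier swap'' weight into a deterministic indicator $c_\phi(x)$ is exactly the mechanism that underlies the paper's first-deviation decomposition and its equation relating each term to a trigger gap, so the two arguments coincide in substance.
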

The proof can be found in Appendix~\ref{appendix:proof-trigger}. Proposition~\ref{proposition:trigger} has two main implications:
(1) An exact EFCE defined by the trigger gap is equivalent to an exact $\oneefce$ (cf. Definition~\ref{definition:kefce}). Therefore the two definitions yields the same set of exact equilibria. (2) For $\eps>0$, $\oneefcegap(\wb{\pi})\le \eps$ implies $\triggergap(\wb{\pi})\le \eps$, but the converse only holds with an extra $\max_{i\in[m]}X_iA_i$ factor, and thus $\oneefcegap$ is a stricter metric for approximate equilibria than $\triggergap$. This distinction is inherent instead of a proof artifact: Our $\oneefce$ strategy modification (Algorithm~\ref{algorithm:phi-circ-pi}) is able to \emph{implement multiple trigger policies simultaneously}, as long as their triggering sequences are not ancestors or descendants of each other.


\paragraph{Containment relationship}
We next show that $\kefce$ are indeed stricter equilibria as $K$ increases, i.e. any (approximate) $\kpefce$ is also an (approximate) $\kefce$, but not the converse. This justifies the necessity of considering $\kefce$ for all values of $K\ge 1$ and shows that they are strict strengthenings of the $\oneefce$.
Note that as we consider games with a finite horizon $H$, we have $\kefcegap = \Hefcegap$ for all $K\ge H$ (including $K=\infty$). The proof of Proposition~\ref{proposition:containment} can be found in Appendix~\ref{appendix:proof-containment}. 
\begin{proposition}[Containment relationship]
\label{proposition:containment}
    For any correlated policy $\wb{\pi}$, we have
    \begin{align*}
        & \zeroefcegap(\wb{\pi}) \le \oneefcegap(\wb{\pi}) \le \cdots \le \kefcegap(\wb{\pi}) \le \kpefcegap(\wb{\pi}) \le \cdots \le \inftyefcegap(\wb{\pi}).
    \end{align*}
    In other words, $\kefce$ are stricter equilibria as $K$ increases: Any $\eps$-approximate $\kpefce$ is also an $\eps$-approximate $\kefce$ for any $\eps\ge 0$ and $K\ge 0$.
    
    Moreover, the converse bounds do not hold, even if multiplicative factors are allowed: For any $0\le K<\infty$, there exists a game with $H=K+1$ and a correlated policy $\wb{\pi}$ for which 
    \begin{align*}
        \kefcegap(\wb{\pi})=0~~~{\rm but}~~~\kpefcegap(\wb{\pi})\ge 1/3>0.
    \end{align*}
\end{proposition}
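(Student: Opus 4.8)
The plan is to treat the two assertions separately. For the monotone chain it suffices to prove $\kefcegap(\wb\pi)\le\kpefcegap(\wb\pi)$ for every finite $K\ge 0$; the full chain then follows by iteration, and it stabilizes at $K=H$ (hence covers $K=\infty$) because an $H$-step trajectory admits at most $H$ deviations, so no Type-II \rechist~exists once $K\ge H$. Since $\kefcegap(\wb\pi)=\max_i\max_{\phi\in\Phi_i^K}\paren{V_i^{\phi\circ\wb\pi}-V_i^{\wb\pi}}$, it is enough to exhibit, for each player $i$ and each $\phi\in\Phi_i^K$, a modification $\phi'\in\Phi_i^{K+1}$ with $\phi'\circ\pi_i=\phi\circ\pi_i$ for every pure $\pi_i$: this yields $V_i^{\phi'\circ\wb\pi}=V_i^{\phi\circ\wb\pi}$ and hence the inequality after taking maxima. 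In other words I would build a value-preserving embedding $\Phi_i^K\hookrightarrow\Phi_i^{K+1}$.

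The embedding simply lets a $(K{+}1)$-modification waste its extra observation budget: $\phi'$ imitates $\phi$ verbatim until the $K$-th deviation and thereafter ignores every recommendation it still observes, replaying $\phi$'s Type-II action. Concretely, on a Type-I \rechist~with at most $K-1$ deviations I set $\phi'(x_{i,h},b_{1:h-1},b_h)=\phi(x_{i,h},b_{1:h-1},b_h)$; on a Type-I \rechist~with exactly $K$ deviations I set $\phi'(x_{i,h},b_{1:h-1},b_h)=\phi(x_{i,h},b_{1:h^\star})$ (discarding the observed $b_h$); and on any Type-II \rechist~(exactly $K{+}1$ deviations) I set $\phi'(x_{i,h},\cdot)=\phi(x_{i,h},b_{1:h^\star})$, where in the last two cases $b_{1:h^\star}\in\omegaii(x_{i,h})$ is the prefix of the history up to the step $h^\star$ at which the $K$-th deviation occurred (for $K=0$ this prefix is the empty history). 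I would then check $\phi'\circ\pi_i=\phi\circ\pi_i$ by induction on $h$: the two runs agree exactly up to and including the $K$-th deviation (both are Type-I and observe identical recommendations), and after that step the induction forces both to play $\phi(x_{i,h},b_{1:h^\star})$ at every infoset. I expect this bookkeeping to be the main obstacle, the subtlety being that $\phi'$'s own replayed actions may or may not register further (spurious) deviations that tip it into its own Type-II regime; defining $\phi'$ to depend only on the truncated history $b_{1:h^\star}$ is precisely what makes its action independent of which regime it is in.

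For the strict separation I would construct, for each finite $K\ge 0$, a two-player game of horizon $H=K+1$ in which player $2$ is a dummy with constant reward, so that only player $1$'s deviations move the gaps. Correlation plays the role of nature: $\wb\pi$ is the uniform mixture, over all coin strings $\vec c=(c_1,\dots,c_{K+1})\in\{0,1\}^{K+1}$, of the pure profile in which player $2$ plays $\vec c$ and, at step $h$ and \emph{at every infoset}, player $1$ is recommended a fixed function of $c_h$ alone. Since player $1$ observes nothing about the coins directly, observing the step-$h$ recommendation becomes exactly equivalent to learning $c_h$, even after player $1$ has deviated. The first $K$ steps are \emph{gates}: player $1$ has actions $\set{0,1}$, is recommended $1-c_h$, and \emph{matches} iff she plays $c_h$ (so matching a gate costs one deviation). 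The last step has three actions with payoffs engineered so that, conditional on matching all $K$ gates, best-responding to the observed final coin scores $1$, any blind action scores at most $1/2$, and following the recommendation scores exactly $1/2$; if any gate is unmatched the reward is the constant $1/2$.

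Reading off the gaps then gives the result. Following $\wb\pi$ earns $V_1^{\wb\pi}=1/2$. Any $\Phi_1^K$ modification either clears all $K$ gates---spending all $K$ deviations, hence blind at the final step, where it scores at most $1/2$---or skips a gate and collects the constant $1/2$; so every such modification earns at most $1/2$ and, with following attaining it, $\kefcegap(\wb\pi)=0$. A $\Phi_1^{K+1}$ modification instead clears all $K$ gates and retains one unit of observation budget for the final step, where observing the coin and best-responding scores $1$; this gives a gain of $1/2$ and $\kpefcegap(\wb\pi)\ge 1/3$. The delicate point is forcing $\kefcegap$ to vanish \emph{exactly} rather than merely be small: this is why the final step carries three actions (so the blind value equals the follow value $1/2$, rather than the $1/n$ a blind guess would score in a plain matching game) and why all reward is withheld until every gate is cleared, which is what ties the $\Phi_1^K$ budget constraint directly to the loss of the final observation.
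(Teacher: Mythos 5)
Your first part is essentially the paper's own argument: the paper proves $\kefcegap(\wb{\pi})\le\kpefcegap(\wb{\pi})$ via exactly the embedding you describe, namely $\phi'\in\Phi_i^{K+1}$ that copies $\phi$ on histories with at most $K-1$ deviations and, once the $K$-th deviation has occurred at some step $h_{\star}$, ignores any further observed recommendation and replays $\phi$'s Type-II action $\phi(x_{i,h},b_{1:h_\star})$ (the paper also leaves the pathwise identity $\phi'\circ\pi_i=\phi\circ\pi_i$ as an easy check). Your second part, however, takes a genuinely different route. The paper's separating game keeps two actions throughout, rewards player 1 with $1$ if her action differs from player 2's at \emph{every} step, $1/2$ if it matches at every step, and $0$ otherwise, under the uniform ``mirror'' correlation; the $K$-budget then forces one blind step where the mismatch succeeds only with probability $1/2$. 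Your construction instead separates the budget-burning (the $K$ gates) from the information need (the final coin guess). Both hinge on the same mechanism---a profitable deviation needs $K+1$ informed steps---and your gap computations are structured correctly; the paper's version is somewhat leaner since it needs no special final-step payoff design.

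There is one concrete defect: the three-action final-step payoff matrix you insist on does not exist. You require an injective recommendation $f:\{0,1\}\to\cA_1$ (injectivity is forced---otherwise observing the recommendation reveals nothing about $c_{K+1}$ and the $\Phi_1^{K+1}$ gain evaporates), informed value $1$, blind value at most $1/2$ for every action, and follow value exactly $1/2$. Let $g(c)$ denote an action with $r(g(c),c)=1$. The blind constraint $r(a,0)+r(a,1)\le 1$ forces $r(g(c),1-c)=0$, hence $g(0)\neq g(1)$; and since $r(f(c),c)=1/2\notin\{0,1\}$, the action $f(c)$ must avoid both $g(0)$ and $g(1)$, leaving exactly one of the three actions available for both $f(0)$ and $f(1)$---contradicting injectivity. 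The repair is easy, and it shows that what you call ``the delicate point'' is not actually needed: following never clears a gate (the gate recommendation is $1-c_h$), so every trajectory of $\wb{\pi}$ itself, and every $\Phi_1^K$-modified trajectory that reaches the last step still in the Type-I regime, has an unmatched gate and collects the constant $1/2$; the follow value of the final step conditional on all gates matched therefore never enters $V_1^{\wb{\pi}}$ or the $\Phi_1^K$ upper bound. So you may simply recommend the best response (e.g. $f(c)=c$ with payoff $\indic{a=c}$ for $a\in\{0,1\}$, plus an optional safe action worth $1/2$), or keep the follow-value-$1/2$ requirement at the cost of a fourth action. With either correction your construction goes through and yields $\kefcegap(\wb{\pi})=0$ and $\kpefcegap(\wb{\pi})\ge 1/2>1/3$.
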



\paragraph{Relationship with other correlated equilibria}
We also remark that the two endpoints $K=0$ and $K=\infty$ of $\kefce$ are closely related to other existing definitions of correlated equilibria in IIEFGs. Concretely, $\zeroefce$ is equivalent to Normal-Form Coarse Correlated Equilibria (NFCCE), whereas $\inftyefce$ is equivalent to using the ``Behavioral Correlated Equilibria'' considered in~\citep{morrill2021efficient}, which is strictly weaker than Normal-Form Correlated Equilibria (NFCE) that is more computationally challenging to learn~\citep{farina2020coarse,celli2020no}. See Appendix~\ref{appendix:relationship} and Proposition~\ref{prop:nfce-nfcce-relation} for the definitions of these equilibria and the formal statement of the above relationships.

\section{Computing $\kefce$ from full feedback}
\label{section:full-feedback}



We first present our algorithm for computing $\kefce$ in the full-feedback setting.

\begin{algorithm}[t]
  \small
   \caption{\kefr~with full feedback (\ith player's version)}
   \label{algorithm:kefr-fullfeedback}
   \begin{algorithmic}[1]
     \REQUIRE Algorithm \Regalg~for minimizing wide range regret; learning rates $\{\eta_{x_{i,h}}\}_{x_{i,h}\in\cX_i}$.
     \STATE Initialize regret minimizers $\{\regmin_{x_{i,h}}\}_{x_{i,h}\in\cX_i}$ with $\Regalg$ and learning rate $\eta_{x_{i,h}}$.
     \FOR{iteration $t=1,\dots,T$}
     \FOR{$h=1,\dots,H$} 
     \FOR{$x_{i,h}\in\mc{X}_{i,h}$}
     \STATE Compute $S^t_{b_{1:h-1}} = \prod_{k = 1}^{h-1} \pi_{i, k}^t(b_k \vert x_k)$ for all $b_{1: h - 1} \in \omegai(x_{i, h})$. \label{line:Sb-for-typeI} 
     \STATE Compute $S^t_{b_{1:h'}} = \prod_{k = 1}^{h'} \pi_{i, k}^t(b_k \vert x_k)$ for all $b_{1: h'} \in \omegaii(x_{i, h})$.  \label{line:Sb-for-typeII} 
     \STATE $\regmin_{x_{i,h}}.\observeTSF( \{ S^t_{b_{1:h - 1}} \}_{b_{1:h - 1} \in \omegai(x_{i, h})} \cup \{ S^t_{b_{1:h'}} \}_{b_{1:h'} \in \omegaii(x_{i, h})})$. \label{line:observe-Sb}
     \STATE Set policy $\pi_i^t(\cdot | x_{i, h}) \setto \regmin_{x_{i,h}} .\recommend()$.   \label{line:set-policy}
     \ENDFOR
     \ENDFOR
     \STATE\label{line:observe-loss}  Observe counterfactual losses $\set{L_h^t(x_{i,h}, a_h)}_{h, x_{i,h}, a_h}$ (depending on $\pi_i^t$ and $\pi_{-i}^t$; cf.~\eqref{equation:L-definition}).
     \FOR{all $x_{i,h}\in\cX_i$}\label{line:begin-update}
     \STATE $\regmin_{x_{i,h}}.\observeLoss( \set{ L_h^t(x_{i,h}, a)}_{a\in\cA_i} )$. \label{line:regmin}
     \ENDFOR
     \ENDFOR
     \ENSURE Policies $\{ \pi_i^t \}_{t=1}^T$. 
 \end{algorithmic}
\end{algorithm}



\paragraph{Algorithm description}
Our algorithm $K$ Extensive-Form Regret Minimization (\kefr), described in Algorithm~\ref{algorithm:kefr-fullfeedback}, is an uncoupled no-regret learning algorithm that aims to minimize the following \emph{$K$-EFCE regret}
\begin{align}
  \label{equation:kefce-regret}
  R_{i,K}^T \defeq \max_{\phi \in \Phi_i^K} \sum_{t = 1}^T \Big( V_{i}^{\phi\circ \pi_i^t \times \pi_{-i}^t} - V_{i}^{\pi^t} \Big).
\end{align}
By standard online-to-batch conversion, achieving sublinear $\kefce$ regret for every player implies that the average joint policy over all players is an approximate $\kefce$ (Lemma~\ref{lem:online-to-batch}).

At a high level, our Algorithm~\ref{algorithm:kefr-fullfeedback} builds upon the EFR algorithm of~\citet{morrill2021efficient} to minimize the $\kefce$ regret~$R_{i,K}^T$, by maintaining a \emph{regret minimizer} $\regmin_{x_{i,h}}$ (using algorithm \Regalg) at each infoset $x_{i,h}\in\cX_u$ that is responsible for outputting the policy $\pi_i^t(\cdot|x_{i,h})\in\Delta_{\cA_i}$ (Line~\ref{line:set-policy}) which combine to give the overall policy $\pi_i^t$ for the $t$-th iteration.

Core to our algorithm is the requirement that $\regmin_{x_{i,h}}\sim \Regalg$ should be able to minimize regrets with \emph{time-selection functions and strategy modifications} (also known as the \emph{wide range regret})~\citep{lehrer2003wide,blum2007external}. Specifically, $\regmin_{x_{i,h}}$ needs to control the regret
\begin{align}
\label{equation:wide-range-regret}
  \max_{\vphi\in\Psi^s} \sum_{t = 1}^T \underbrace{\prod_{k = 1}^{h-1} \pi_i^t (b_{k}|x_{k})}_{\defeq S^t_{b_{1:h-1}}} \Big( \< \pi_{i, h}^t(\cdot \vert x_{i, h}) - \vphi \circ \pi_{i, h}^t(\cdot \vert x_{i, h}), \L_{i,h}^{t}(x_{i, h},\cdot) \> \Big)
\end{align}
for all possible Type-I \rechists~$b_{1:h-1}\in\omegai(x_{i,h})$ simultaneously, where $S^t_{b_{1:h-1}}$ is the \emph{time-selection function} (i.e. a weight function) associated with this $b_{1:h-1}$ (cf. Line~\ref{line:Sb-for-typeI}), and $\Psi^s=\set{\psi:\cA_i\to\cA_i}$ is the set of all \emph{swap modifications} from the action set $\cA_i$ onto itself. (An analogous regret for Type-II \rechists~is also controlled by $\regmin_{x_{i,h}}$.) Controlling these ``local" regrets at each $x_{i,h}$ guarantees that the overall $\kefce$ regret is bounded, by the $\kefce$ regret decomposition (cf. Lemma~\ref{lem:k-ce-regret_decomposition}).


To control this wide range regret, we instantiate \Regalg~as \wrhedge~(Algorithm~\ref{algorithm:time-selection-swap}; cf. Appendix \ref{appendix:bandit-algorithm}), which is similar as the wide regret minimization algorithm in~\citep{khot2008minimizing}, with a slight modification of the initial weights suitable for our purpose (cf.~\eqref{equation:wrhedge-initial-weight}).
The learning rate is set as
\begin{align}\label{equation:learning-rate-full-feedback}
  \textstyle
  \eta_{x_{i,h}} = \sqrt{{H \choose K\wedge H } X_i  A_i^{K \wedge
 H } \log A_i / (H^2T) }
\end{align}
for all $x_{i,h}\in \cX_i$. With this algorithm in place, at each iteration, $\regmin_{x_{i,h}}$ observes all time selection functions (Line~\ref{line:observe-Sb}), computes the policy for the current iteration (Line~\ref{line:set-policy}), and then observes the loss vector $L_{i,h}^t(x_{i,h},\cdot)$ (Line~\ref{line:observe-loss}) that is useful for updating the policy in the next iteration.

\paragraph{Theoretical guarantee}
We are now ready to present the theoretical guarantee for \kcfr.
\begin{theorem}[Computing $\kefce$ from full feedback]
\label{theorem:kcfr-full-feedback}
For any $0\le K\le \infty$, $\epsilon \in (0,H]$, let all players run Algorithm \ref{algorithm:kefr-fullfeedback} together in a self-play fashion where \Regalg~is instantiated as Algorithm \ref{algorithm:time-selection-swap}~with learning rates specified in (\ref{equation:learning-rate-full-feedback}). Let $\pi^t = \{\pi_i^t\}_{i \in [m]}$ denote the joint policy of all players at the $t$'th iteration. Then the average policy $\bar\pi={\rm Unif}(\{\pi^t\}_{t=1}^T)$ satisfies $\kefcegap(\wb{\pi})\le \eps$, as long as the number of iterations
 \begin{align*}
   \textstyle
    T \ge \cO\paren{ {H  \choose K\wedge H } \paren{\max_{i\in[m]} X_i A_i^{K\wedge H}} \iota / \epsilon^2}, 
  \end{align*}
  where $\iota=\log(\max_{i\in[m]}A_i)$ is a log factor and $\cO(\cdot)$ hides ${\rm poly}(H)$ factors.
\end{theorem}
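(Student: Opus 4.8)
The plan is to reduce the theorem to a single-player regret bound and then control that regret through the two structural lemmas already in place. By the online-to-batch conversion (Lemma~\ref{lem:online-to-batch}) together with the multilinearity of $V_i^{\wb{\pi}}$ in the mixing measure, the average policy $\wb{\pi}={\rm Unif}(\{\pi^t\}_{t=1}^T)$ satisfies $\kefcegap(\wb{\pi})\le\max_{i\in[m]}R_{i,K}^T/T$, where $R_{i,K}^T$ is the $\kefce$ regret of~\eqref{equation:kefce-regret}; so it suffices to prove $R_{i,K}^T\le\eps T$ for each player, and I fix one player $i$ throughout. Next I would invoke the $\kefce$ regret decomposition (Lemma~\ref{lem:k-ce-regret_decomposition}) to split $R_{i,K}^T$ into a sum over infosets $\sum_h\sum_{x_{i,h}\in\cX_{i,h}}$ of the wide-range regret~\eqref{equation:wide-range-regret}, using that $\phi\in\Phi_i^K$ acts independently at each $(x_{i,h},\bbb)$: each Type-I \rechist~$b_{1:h-1}\in\omegai(x_{i,h})$ enters with time-selection weight $S^t_{b_{1:h-1}}$ against a swap $\psi\in\Psi^s$, and each Type-II \rechist~$b_{1:h'}\in\omegaii(x_{i,h})$ enters with weight $S^t_{b_{1:h'}}$ against a single fixed action (an external rather than swap deviation). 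These are exactly the quantities that $\regmin_{x_{i,h}}$ is built to control.

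I would then apply the \wrhedge~guarantee (Algorithm~\ref{algorithm:time-selection-swap}) at each infoset, giving a bound of the form ${\rm Reg}_{x_{i,h}}\le\mathcal{A}_{x_{i,h}}/\eta_{x_{i,h}}+\eta_{x_{i,h}}\mathcal{B}_{x_{i,h}}$. The ``$1/\eta$'' cost is $\mathcal{A}_{x_{i,h}}=|\omegai(x_{i,h})|A_i\log A_i$ from the Type-I swap minimizers plus $|\omegaii(x_{i,h})|\log A_i$ from the Type-II external minimizers. Summing over $x_{i,h}$ and using the count $\sum_{x_{i,h}}(|\omegai|A_i+|\omegaii|)\le\cO({H\choose K\wedge H}X_iA_i^{K\wedge H})$ (obtained by choosing at most $K-1$ deviating coordinates among $h-1$ slots, each ranging over $A_i$ actions) produces a total first term of order ${H\choose K\wedge H}X_iA_i^{K\wedge H}\log A_i/\eta$.

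The crux is to show the summed second-order term is \emph{clean}, namely $\sum_{x_{i,h}}\mathcal{B}_{x_{i,h}}\le\cO(H^2T)$ with no dependence on $X_i$ or $A_i^{K}$; this is precisely what upgrades the iteration complexity from quadratic to linear in $X_i$. The mechanism is conservation of reaching probability: the weight $S^t_b$ is player $i$'s own sequence-form probability along $b$, while $\L_{i,h}^t(x_{i,h},\cdot)$ carries the complementary transition-and-opponent reach of $x_{i,h}$, so their product is a genuine reaching probability, and such probabilities sum to at most one over the infosets of a fixed level per round. For $K=1$ this is immediate because the unique Type-I \rechist~is the realized path. For general $K$ the \rechists~deviate from the realized path, and I expect this to be the main obstacle: one must argue that $\sum_{x_{i,h}}\sum_{b}S^t_b\cdot(\text{counterfactual reach of }x_{i,h})$ still telescopes to a level-independent constant, taming the apparent $1/\pi^t_{i,k}$ blow-up at deviating coordinates using the combinatorial structure of $\omegai$ and $\omegaii$.

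Finally, combining the two summed terms gives $R_{i,K}^T\le{H\choose K\wedge H}X_iA_i^{K\wedge H}\log A_i/\eta+\cO(\eta H^2T)$, and the learning rate~\eqref{equation:learning-rate-full-feedback} is exactly the value $\eta=\sqrt{(\sum_x\mathcal{A}_x)/(\sum_x\mathcal{B}_x)}$ balancing these two terms, yielding $R_{i,K}^T\le\cO(H\sqrt{{H\choose K\wedge H}X_iA_i^{K\wedge H}\log A_i\,T})$. Imposing $R_{i,K}^T\le\eps T$, solving for $T$, and taking the maximum over $i\in[m]$ then produces the claimed iteration complexity $T\ge\cO({H\choose K\wedge H}(\max_i X_iA_i^{K\wedge H})\iota/\eps^2)$ up to ${\rm poly}(H)$ factors.
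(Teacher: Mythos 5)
Your outline follows the same route as the paper's proof: online-to-batch conversion (Lemma~\ref{lem:online-to-batch}), the $\kefce$ regret decomposition into per-infoset wide-range regrets (Lemma~\ref{lem:k-ce-regret_decomposition}), the \wrhedge{} guarantee at each infoset (Lemma~\ref{lemma:regret-with-time-selection}), a combinatorial count for the $1/\eta$ terms, a conservation-of-reaching-probability argument for the $\eta$ terms, and a final balancing of the two --- this is exactly the structure of Lemmas~\ref{lemma:bound-G_h1-fullfeedback} and~\ref{lemma:bound-G_h2-fullfeedback}. Your counting step and the final optimization over $\eta$ and $T$ are correct up to ${\rm poly}(H)$ factors.

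However, there is a genuine gap at precisely the step you flag as ``the main obstacle,'' and your diagnosis of that obstacle is mistaken. The sum you propose to control, $\sum_{x_{i,h}}\sum_{b}S^t_b\cdot p^{\pi^t_{-i}}_{1:h}(x_{i,h})$, is \emph{not} bounded by a level-independent constant: for each fixed $x_{i,h}$ one indeed has $\sum_b S^t_b\le 1$, but $\sum_{x_{i,h}}p^{\pi^t_{-i}}_{1:h}(x_{i,h})$ can be as large as $X_{i,h}$ (take a perfect-information tree whose transitions are controlled entirely by player $i$'s own actions, so that $p^{\pi^t_{-i}}_{1:h}(x_{i,h})=1$ for every $x_{i,h}$; with $K\ge h$ the inner sum over $b$ equals exactly $1$, giving total $X_{i,h}$). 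Used as written, this puts $X_i$ back inside the $\eta$-term and only recovers the quadratic-in-$X_i$ rate of prior work. Moreover, the difficulty is not a ``$1/\pi^t_{i,k}$ blow-up'': no inverse probabilities appear anywhere in the full-feedback analysis (that issue arises only in the bandit case). The missing idea is to retain the strategy-modification weights $\prod_k\muphi_k(a_k\vert x_k,b_{1:k})$ that Lemma~\ref{lem:k-ce-regret_decomposition} already attaches to each local regret: with these weights, the summand $\prod_k\muphi_k(a_k\vert x_k,b_{1:k})\cdot S^t_b\cdot p^{\pi^t_{-i}}_{1:h}(x_{i,h})$, summed over both $b$ and $x_{i,h}$ (together with the analogous Type-II terms), is exactly the probability that the modified policy $\phi\circ\pi_i^t\times\pi_{-i}^t$ reaches layer $h$, which equals one for every $K$, not only $K=1$. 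This is the content of Lemma~\ref{lemma:general-balancing-mu} and Corollaries~\ref{lemma:mu-pi-visit-xih} and~\ref{lemma:sum=1} in the paper, and it is what makes the second-order term $\cO(H^2T)$ independently of $X_i$ and $A_i^{K}$. Once this identification is supplied, the rest of your argument goes through as in the paper.
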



In the special case of $K=1$, Theorem~\ref{theorem:kcfr-full-feedback} shows that \kscfr~can compute an $\eps$-approximate $\oneefce$ within $\tO(\max_{i\in[m]} X_iA_i/\eps^2)$ iterations. This improves over the existing $\tO(\max_{i\in[m]}X_i^2A_i^2/\eps^2)$ iteration complexity of~\citet{celli2020no,farina2021simple} by a factor of $X_iA_i$. Also, compared with the iteration complexity of the optimistic algorithm of~\citep{anagnostides2022faster} which is at least $\tO(\max_{i\in[m]} X_i^{4-\delta}A_i^{4/3}/\eps^{4/3})$\footnote{More precisely, \citet[Corollary 4.17]{anagnostides2022faster} proves an $\tO((X_i \max_{\pi_i\in\Pi_{\max}} \|\pi_i\|_1^2 A / \eps)^{4/3})$ iteration complexity, which specializes to the above rate, as for any $\delta>0$ a game with $\max_{\pi_i\in\Pi_{\max}} \|\pi_i\|_1 \ge X_i^{1-\delta}$ can be constructed.}, we achieve lower $X_i$ dependence (though worse $\eps$ dependence).


For $1<K\le\infty$, Theorem~\ref{theorem:kcfr-full-feedback} gives a sharp $\tO({H  \choose K\wedge H } (\max_{i\in[m]} X_i A_i^{K\wedge H})/\epsilon^2)$ iteration complexity for computing $\kefce$. This improves over the $\tO({H  \choose K\wedge H } (\max_{i\in[m]} X_i^2 A_i^{(K\wedge H)})/\epsilon^2)$ rate of EFR~\citep{morrill2021efficient} instantiated to the $\kefce$ problem. Also, note that although the term $A_i^{K \wedge H}$ is exponential in $K$ (for $K\le H$), this is sensible since it is roughly the same scale as the number of possible recommendation histories, which is also the ``degree of freedom" within a $\kefce$ strategy modification.




\paragraph{$\kefce$ regret guarantee} While Theorem~\ref{theorem:kcfr-full-feedback} requires all players running \kefr~together via self-play, our algorithm also achieves a low $\kefce$ regret when controlling the \ith player only and facing potentially adversarial opponents. Concretely, Algorithm~\ref{algorithm:kefr-fullfeedback} for the \ith player achieves $R_{i,K}^T\le \tO(\sqrt{{H\choose K\wedge H}X_iA_i^{K\wedge H}T})$ with high probability (Corollary~\ref{corollary:kscfr-regret}). In particular, the $\tO(\sqrt{X_iT})$ scaling is optimal up to log factors, due to the fact that $R_{i,K}^T\ge R_{i,0}^T$ (i.e. the vanilla regret) and the known lower bound $R_{i,0}^T\ge \Omega(\sqrt{X_iT})$ in IIEFGs~\citep{zhou2018lazy}.


\paragraph{Proof overview}
Our Theorem~\ref{theorem:kcfr-full-feedback} follows from a sharp analysis on the $\kefce$ regret of Algorithm~\ref{algorithm:kefr-fullfeedback}, by incorporating (i) a decomposition of the $\kefcegap$ into local regrets at each infoset with tight leading coefficients (Lemma \ref{lem:k-ce-regret_decomposition}), and (ii) loss-dependent upper bounds for the wide range regret of \wrhedge~(Lemma~\ref{lemma:regret-with-time-selection}), which when plugged into the aforementionted regret decomposition yields the improved dependence in $(X_iA_i^{K\wedge H})$ over the analysis of~\citet{morrill2021efficient} (Lemma~\ref{lemma:bound-G_h1-fullfeedback} \&~\ref{lemma:bound-G_h2-fullfeedback}). The full proof can be found in Appendix \ref{appendix:proof-full-feedback}.

\section{Learning $\kefce$ from bandit feedback}
\label{section:bandit-feedback}

We now present Balanced \kefr, a sample-based variant of \kefr~that achieves a sharp sample complexity in the more challenging bandit feedback setting. 

Our algorithm relies on the following \emph{balanced exploration policy}~\citep{farina2020stochastic,bai2022near}. Recall that $\abs{\cC_h(x_{i,h'}, a_{i,h'})}$ is the number of descendants of $(x_{i,h'}, a_{i,h'})$ within the $h$-th layer of the \ith player's game tree (cf. Section~\ref{section:prelim}).
\begin{definition}[Balanced exploration policy]
  \label{definition:balanced-policy}
  For any $1\le h\le H$, we define $\pi_i^{\star, h}$, the (\ith player's) \emph{balanced exploration policy for layer $h$} as
  \begin{align}\label{eqn:balanced-strategy}
    \pi^{\star, h}_{i,h'}(a_{h'} | x_{h'}) \defeq \frac{\vert \cC_h(x_{i,h'}, a_{i,h'}) \vert }{\vert \cC_h(x_{i,h'}) \vert} ~~~\textrm{for all}~~(x_{i,h'}, a_{i,h'})\in \mc{X}_{i, h'}\times \mc{A}_i,~~1\le h'\le h-1,
  \end{align}
  and $\pi^{\star, h}_{i,h'}(a_{i,h'} | x_{i,h'}) \defeq 1/A_i$ for $h'\ge h$. 
\end{definition}
Note that there are $H$ such policies, one for each layer $h$. We remark that the construction of $\pi^{\star, h}_{i}$ requires knowledge about the descendant relationships among the \ith player's infosets, which is a mild requirement (e.g. can be efficiently obtained from one traversal of the \ith player's game tree).


\paragraph{Algorithm description}
Our Balanced \kefr~(sketched in Algorithm~\ref{algorithm:sample-based-kscfr-sketch} and fully described in Algorithm~\ref{algorithm:kefr-sampled-bandit}) builds upon the full feedback version of \kefr~(Algorithm~\ref{algorithm:kefr-fullfeedback}). For each infoset $x_{i,h}\in\cX_i$, the algorithm computes time selection functions $\{S^t_{\bbb}\}_{\bbb\in\omegai(x_{i,h})\cup\omegaii(x_{i,h})}$, based on the corresponding $M^t$ (defined in Line~\ref{line:Sb-for-typeI-balanced} \&~\ref{line:Sb-for-typeII-balanced}), as well as the following additional weighting function (below $W\defeq\{k\in[h-1]:a_k\neq b_k\}$, and $\fill(\cdot,\cdot)$ is defined in~\eqref{equation:fill}) 
\begin{align}
    & w_{b_{1:h-1}}(x_{i,h}) = \prod_{k \in  \fill(W, (h-1)\wedge(K-1)) \cup \{h\}} \pi^{\star, h}_{i,k}(a_k \vert x_k),~~\textrm{for all}~b_{1: h - 1} \in \omegai(x_{i, h}), \label{equation:wb-i} \\
    & w_{b_{1:h'}}(x_{i,h}) = \prod_{k \in W \cup \{h'+1, \cdots, h\}} \pi^{\star, h}_{i,k}(a_k \vert x_k),~~\textrm{for all}~b_{1: h'} \in \omegaii(x_{i, h}). \label{equation:wb-ii}
\end{align}
The resulting choice of time selection functions, $S^t_{\bbb}=M^t_{\bbb}w^t_{\bbb}(x_{i,h})$, is different from Algorithm~\ref{algorithm:kefr-fullfeedback}, and is needed for this sampled case. 

The main new ingredient within Algorithm~\ref{algorithm:sample-based-kscfr-sketch} is to use sample-based loss estimators obtained by two \emph{balanced sampling} algorithms (Algorithm~\ref{algorithm:kcfr-estimator-ii} \&~\ref{algorithm:kcfr-estimator-i}), one for each type of \rechists. Here we present the sampling algorithm for Type-II \rechists~in Algorithm~\ref{algorithm:kcfr-estimator-ii}; The sampling algorithm for Type-I \rechists~(Algorithm~\ref{algorithm:kcfr-estimator-i}) is designed similarly and deferred to Appendix~\ref{appendix:sample-based-estimator-i} due to space limit. Algorithm~\ref{algorithm:kcfr-estimator-ii} performs two main steps:
\begin{itemize}[topsep=0pt, itemsep=0pt, leftmargin=12pt]
\item Line~\ref{line:sampling-start}-\ref{line:sampling-end} (Sampling): Construct policies $\{\pi_i^{t, (h, h', W)}\}$ that are \emph{interlaced concatenations} of the current $\pi_i^t$ and the balanced policy $\pi^{\star, h}_i$, and play one episode using each policy against $\pi_{-i}^t$. 
\item Line~\ref{line:estimation-ii}: Construct loss estimators $\{\wt{L}_{x_{i,h}, b_{1:h'}}(a)\}_{x_{i,h}, b_{1:h'}, a}$ by~\eqref{equation:pil-estimator-ii}, where for each $x_{i,h}$ and $b_{1:h'}\in\omegaii(x_{i,h})$ this is an unbiased estimator of counterfactual losses $\set{ L_h^t(x_{i,h}, a)}_{a\in\cA_i}$. These unbiased estimators will be used by Algorithm~\ref{algorithm:sample-based-kscfr-sketch} to be fed into the regret minimization algorithm \Regalg.
\end{itemize}


\begin{algorithm}[t]
\small
   \caption{\balkefr~for the \ith player (sketch; detailed description in Algorithm~\ref{algorithm:kefr-sampled-bandit})}
   \label{algorithm:sample-based-kscfr-sketch}
   \begin{algorithmic}[1]
   \REQUIRE Weights $\{w_{b_{1:h-1}}(x_{i,h})\}_{x_{i,h},b_{1:h-1}\in\omegai(x_{i,h})}$ and  $\{w_{b_{1:h'}}(x_{i,h})\}_{x_{i,h},b_{1:h'}\in\omegaii(x_{i,h})}$ defined in~\eqref{equation:wb-i},~\eqref{equation:wb-ii}, learning rates $\{\eta_{x_{i,h}}\}_{x_{i,h}\in\cX_i}$, loss upper bound $\wb{L}>0$.
   \STATE Initialize regret minimizers $\{\regmin_{x_{i,h}}\}_{x_{i,h}\in\cX_i}$ with $\Regalg$, learning rate $\eta_{x_{i,h}}$, and loss upper bound $\wb{L}$.
   \STATE The algorithm is almost the same as the full feedback case (Algorithm~\ref{algorithm:kefr-fullfeedback}), with the following changes:
   \STATE For all $b_{1: h - 1} \in \omegai(x_{i, h})$, replace Line \ref{line:Sb-for-typeI} by $S^t_{b_{1:h-1}}\defeq M^t_{b_{1:h-1}} w_{b_{1:h-1}}(x_{i,h})$ where $M^t_{b_{1:h-1}} \defeq \prod_{k = 1}^{h-1} \pi_{i, k}^t(b_k \vert x_k)$. \label{line:Sb-for-typeI-balanced}
   \STATE For all $b_{1: h'} \in \omegaii(x_{i, h})$, replace Line \ref{line:Sb-for-typeII} by $S^t_{b_{1:h'}} \defeq M^t_{b_{1:h'}} w_{b_{1:h'}}(x_{i,h})$ where $M^t_{b_{1:h'}} \defeq  \prod_{k = 1}^{h'} \pi_{i, k}^t(b_k \vert x_k)$. \label{line:Sb-for-typeII-balanced}
   \STATE Replace Line~\ref{line:observe-loss} by obtaining sample-based loss estimators $\{\wt L^t_{(x_{i,h}, b_{1:h'})}(\cdot)\}_{(x_{i,h}, b_{1:h'})\in \omegaii}$ and $\{\wt L^t_{(x_{i,h}, b_{1:h-1})}(\cdot)\}_{(x_{i,h}, b_{1:h-1})\in \omegai}$ from Algorithm~\ref{algorithm:kcfr-estimator-ii} \&~\ref{algorithm:kcfr-estimator-i} respectively.
   \STATE Replace Line~\ref{line:regmin} by feeding the above loss estimators instead of true counterfactual losses to $\regmin_{x_{i,h}}$.
   \ENSURE Policies $\set{\pi_i^t}_{t=1}^T$.
 \end{algorithmic}
\end{algorithm}

\begin{algorithm}[t]
  \small
   \caption{Loss estimator for Type-II \infoseqs~via Balanced Sampling (\ith player's version)}
   \label{algorithm:kcfr-estimator-ii}
   \begin{algorithmic}[1]
   \REQUIRE Policy $\pi_i^t$, $\pi_{-i}^t$. Balanced exploration policies $\{\pi^{\star, h}_i\}_{h\in[H]}$.
   \FOR{$K\le h'<h\le H$, $W\subseteq[h']$ with $|W|=K$ and ending in $h'$} \label{line:typeII-samples}
   \STATE Set policy $\pi_i^{t, (h, h', W)}\setto (\pi^{\star, h}_{i, k})_{k\in W\cup\set{h'+1,\dots,h}}\cdot (\pi^t_{i, k})_{k\in [h']\setminus W} \cdot \pi^t_{i, (h+1):H}$.
   \STATE Play $\pi_i^{t, (h, h', W)}\times \pi_{-i}^t$ for one episode, observe trajectory
   \begin{align*}
      ( x_{i,1}^{t, (h, h', W)}, a_{i,1}^{t, (h, h', W)}, r_{i,1}^{t, (h, h', W)}, \dots, x_{i,H}^{t, (h, h', W)}, a_{i,H}^{t, (h, h', W)}, r_{i,H}^{t, (h, h', W)} ).
   \end{align*}
   \label{line:sampling-end}
   \ENDFOR
   \vspace{-1em}
   \FOR{all $(x_{i,h}, b_{1:h'})\in \omegaii$} 
  \STATE Find $(x_{i, 1}, a_1) \prec \cdots \prec (x_{i, h-1}, a_{h-1}) \prec x_{i, h}$.
  \STATE Set $W\setto \set{k\in[h']: b_{k}\neq a_{k}}$~
  \STATE Construct loss estimator for all $a\in\cA_i$ (below $a_h\in\cA_i$ is arbitrary):
  {\proofsize \begin{align}
                \label{equation:pil-estimator-ii}
                \hspace{-2em}
      \wt L^t_{(x_{i,h}, b_{1:h'})}(a) \setto 
      \frac{\indic{(x_{i,h}^{t, (h, h', W)}, a_{i,h}^{t, (h, h', W)}) = (x_{i,h}, a)}}{ \pi_{i,1:h}^{t, (h, h', W)}(x_{i,h}, a) } \cdot \sum_{h''=h}^H \paren{1 - r_{i,h''}^{t, (h, h', W)}}.
  \end{align}}
  \label{line:estimation-ii}
  \ENDFOR
  \ENSURE Loss estimators $\set{\wt L^t_{(x_{i,h}, b_{1:h'})}(\cdot)}_{(x_{i,h}, b_{1:h'})\in \omegaii}$.
 \end{algorithmic}
\end{algorithm}


\paragraph{Stochastic wide-range regret minimization}
Algorithm~\ref{algorithm:sample-based-kscfr-sketch} requires the wide-range regret minimization algorithm \Regalg~to additionally handle the stochastic setting, i.e. minimize the wide-range regret (e.g.~\eqref{equation:wide-range-regret}) when fed with our sample-based loss estimators. Here, we instantiate \Regalg~to be \wrhedgebandit~(Algorithm~\ref{algorithm:time-selection-sample}), a stochastic variant of \wrhedge, with hyperparameters
\begin{align}
    \label{equation:learningrate-bandit}
    \textstyle \eta_{x_{i,h}} = \sqrt{{H \choose K \wedge H } X_i A_i^{K\wedge H + 1}  \log (8\sum_{i\in[m]}X_iA_i/p) / (H^3T) },~~~\wb{L} = H.
\end{align}
We remark that the \wrhedgebandit~is a non-trivial extension of \wrhedge~to the stochastic setting, as in each round $t$ it admits \emph{multiple} sample-based loss estimators, one for each $\bbb$ (coming from potentially different sampling distributions), with the same mean (cf. Line~\ref{line:observe-bandit-loss}). This generality is needed, as Algorithm~\ref{algorithm:kcfr-estimator-ii} uses different sampling policies to construct the loss estimator $\wt{L}^t_{x_{i,h}, b_{1:h'}}(\cdot)$ for each $b_{1:h'}\in\omegaii(x_{i,h})$ (cf.~\eqref{equation:pil-estimator-ii}).



\paragraph{More details on the sampling technique}
We remark that (i) The sampling policies $\{\pi_i^{t, (h, h', W)}\}$ in Algorithm~\ref{algorithm:kcfr-estimator-ii} are \emph{interlaced concatenations} of $\pi_i^t$ and $\pi^{\star, h}_i$ along time steps $h$, where the policy to take at each $h$ is determined by $W$. 
These policies are generalizations of the sampling policies in the Balanced CFR algorithm of~\citet{bai2022near} (which can be thought of as a simple non-interlacing concatenation). They allow \emph{time-selection aware sampling}: Each loss estimator $\wt{L}^t_{(x_{i,h}, b_{1:h'})}(\cdot)$ achieves low variance relative to the corresponding time selection function $S^t_{b_{1:h'}}$. Further, there is an \emph{efficient sharing} of sampling policies, as here roughly ${H\choose K\wedge H}X_iA_i^{K\wedge H}$ loss estimators (one for each $(x_{i,h}, b_{1:h'})$) are constructed using only (a much lower number of) $H{H\choose K\wedge H}$ policies.
(ii) Effectively, our unbiased loss estimators~\eqref{equation:pil-estimator-ii} involve both importance weighting (for steps $1:h$) and Monte-Carlo estimation (for steps $h+1$ onward) simultaneously, which is important for obtaining a sharp sample complexity result.

\paragraph{Theoretical guarantee} 
We now present our main result for the bandit feedback setting.
\begin{theorem}[Learning $\kefce$ from bandit feedback]
  \label{theorem:kcfr-bandit}
  For any $0\le K\le \infty$, $\epsilon \in (0,H]$ and $p \in [0,1)$, letting all players run Algorithm \ref{algorithm:sample-based-kscfr-sketch} (full description in Algorithm~\ref{algorithm:kefr-sampled-bandit}) together in a self-play fashion for $T$ iterations, with \Regalg~instantiated as \wrhedgebandit~(Algorithm \ref{algorithm:time-selection-sample}) with hyperparameters in~\eqref{equation:learningrate-bandit}. Let $\pi^t = \{\pi_i^t\}_{i \in [m]}$ denote the joint policy of all players at the $t$'th iteration. Then, with probability at least $1-p$, the correlated policy $\bar\pi={\rm Unif}(\{\pi^t\}_{t=1}^T)$ satisfies $\kefcegap(\wb{\pi})\le \eps$, as long as $T \ge \cO(H^3 {H  \choose K\wedge H } (\max_{i\in[m]} X_i A_i^{K\wedge H+1}) \iota /\epsilon^2)$. The total number of episodes played is
 \begin{align*}
   \textstyle
   3mH{H\choose K\wedge H}\cdot T = \cO\paren{ m{H \choose K\wedge H}^2 \paren{\max_{i\in[m]} X_i A_i^{K\wedge H+1}} \iota /{\epsilon^2} },
  \end{align*}
  where $\iota=\log(8\sum_{i\in[m]}X_iA_i/p)$ is a log factor and $\cO(\cdot)$ hides ${\rm poly}(H)$ factors.
\end{theorem}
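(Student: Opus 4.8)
The plan is to reduce the bandit-feedback guarantee to the full-feedback analysis (Theorem~\ref{theorem:kcfr-full-feedback}) by controlling the extra error introduced by replacing exact counterfactual losses with the sample-based estimators, and to do this via the stochastic wide-range regret bound for \wrhedgebandit. First I would invoke the $\kefce$ regret decomposition (Lemma~\ref{lem:k-ce-regret_decomposition}) to write the global $\kefce$ regret $R_{i,K}^T$ as a weighted sum of the local wide-range regrets at each infoset $x_{i,h}$, exactly as in the full-feedback proof. The difference is that each local regret minimizer $\regmin_{x_{i,h}}$ is now fed the estimators $\wt L^t_{(x_{i,h},\bbb)}(\cdot)$ rather than the true $L_h^t(x_{i,h},\cdot)$. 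So the first substantive step is to verify \emph{unbiasedness}: for each $(x_{i,h},\bbb)$ with $\bbb\in\omegaii(x_{i,h})$ (and analogously $\omegai$), the estimator in~\eqref{equation:pil-estimator-ii} satisfies $\E[\wt L^t_{(x_{i,h},\bbb)}(a)\mid \mathcal F_{t-1}] = L_h^t(x_{i,h},a)$, where the conditional expectation is over the fresh episode(s) played in iteration $t$. This follows because the interlaced sampling policy $\pi_i^{t,(h,h',W)}$ reaches $(x_{i,h},a)$ with probability exactly $\pi^{t,(h,h',W)}_{i,1:h}(x_{i,h},a)$, which cancels the importance weight in the indicator term, and the remaining Monte-Carlo sum $\sum_{h''=h}^H(1-r^{t,(h,h',W)}_{i,h''})$ has conditional mean equal to the tail loss that defines $L_h^t$ (recall~\eqref{equation:L-definition}), since the tail of $\pi_i^{t,(h,h',W)}$ agrees with $\pi_i^t$.

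The second step is the heart of the argument: feeding these unbiased estimators into \wrhedgebandit~and invoking its stochastic wide-range regret guarantee (the bandit analogue of Lemma~\ref{lemma:regret-with-time-selection}, presumably a lemma in Appendix~\ref{appendix:bandit-algorithm}). The key quantities are the effective magnitude and second moment of the estimators weighted by the chosen time-selection functions $S^t_{\bbb}=M^t_{\bbb}\,w_{\bbb}(x_{i,h})$. This is precisely why the weights~\eqref{equation:wb-i}–\eqref{equation:wb-ii} are built from the balanced exploration policy $\pi^{\star,h}_i$: the balanced choice makes the product $S^t_{\bbb}\cdot\wt L^t_{(x_{i,h},\bbb)}(a)$ have a variance that, after summing over all infosets in a layer and over all \rechists~$\bbb$, telescopes to a bound scaling linearly in $X_i$ rather than quadratically. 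Concretely, I would show that the balanced weights cancel the inverse-reaching-probability blow-up in~\eqref{equation:pil-estimator-ii} layer by layer (the same ``balancing'' phenomenon as in \citet{bai2022near}), so that the per-iteration contribution of the loss-estimator variance, summed across the game tree, is controlled by ${H\choose K\wedge H}X_iA_i^{K\wedge H+1}$. The extra factor of $A_i$ relative to full feedback comes from the importance-weighting over the $A_i$ possible actions at the target infoset. Combining the loss-upper-bound $\bar L=H$, the variance bound, and the learning rate in~\eqref{equation:learningrate-bandit} into the \wrhedgebandit~regret bound yields, with probability at least $1-p/(\text{union-bound count})$, a per-infoset regret of order $\sqrt{{H\choose K\wedge H}X_iA_i^{K\wedge H+1}\,\iota/T}$ times appropriate $\mathrm{poly}(H)$ factors.

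The third step assembles these local bounds back into the global $\kefce$ regret via the decomposition, applies a union bound over all $\sum_{i}X_iA_i$ regret minimizers (and over players), and uses the high-probability control to convert $R_{i,K}^T=\tilde\cO(\sqrt{H^3{H\choose K\wedge H}X_iA_i^{K\wedge H+1}T})$ into the stated iteration complexity $T\ge\cO(H^3{H\choose K\wedge H}(\max_i X_iA_i^{K\wedge H+1})\iota/\epsilon^2)$ by the online-to-batch conversion (Lemma~\ref{lem:online-to-batch}) applied to the averaged policy $\bar\pi$. The episode count then follows by bookkeeping: each iteration plays $3H{H\choose K\wedge H}$ episodes per player (one per sampling policy across both Type-I and Type-II estimators, summed over the $(h,h',W)$ index in Algorithm~\ref{algorithm:kcfr-estimator-ii} and its Type-I counterpart), giving the total $3mH{H\choose K\wedge H}\cdot T$ episodes.

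The main obstacle I anticipate is the variance/second-moment control in the second step, i.e.\ proving that the balanced-weight time-selection functions make the summed estimator variance scale only \emph{linearly} in $X_i$. This requires a careful layer-by-layer telescoping that exploits the precise definition of $\pi^{\star,h}_i$ together with the structure of the interlaced sampling policies and the \fill~operation appearing in~\eqref{equation:wb-i}; getting the combinatorial factor ${H\choose K\wedge H}$ correct (rather than a looser $A_i^{2(K\wedge H)}$ or $X_i^2$ dependence) is where the argument is most delicate. A secondary difficulty is ensuring the high-probability concentration (e.g.\ via a Freedman-type martingale bound inside the \wrhedgebandit~analysis) holds uniformly, since the estimators across different $\bbb$ share randomness from the same sampled episodes and are therefore not independent—so the martingale filtration and the union bound must be set up to accommodate these correlated multi-estimator updates.
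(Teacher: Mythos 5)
Your proposal follows essentially the same route as the paper's proof: the $\kefce$ regret decomposition into local wide-range regrets (Lemma~\ref{lem:k-ce-regret_decomposition}), conditional unbiasedness of the balanced-sampling estimators, the stochastic wide-range regret bound for \swrhedge~(Lemma~\ref{lemma:regret-with-time-selection-bandit}) with the balanced time-selection weights controlling the variance via Lemma~\ref{lemma:balancing} (giving the linear-in-$X_i$ and extra-$A_i$ scaling), Freedman-type concentration with a union bound for the estimation-error terms, online-to-batch conversion (Lemma~\ref{lem:online-to-batch}), and the $3mH{H\choose K\wedge H}\cdot T$ episode count (Lemma~\ref{lemma:needed-episode}). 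The only difference is organizational: the paper makes explicit the three-way split of each local regret into a $\wt{\rm REGRET}$ term (with respect to the estimators) and two ${\rm BIAS}$ terms (estimator versus true loss, evaluated against $\pi^t$ and against the modified policy), each handled separately by Freedman's inequality, which your sketch subsumes under ``unbiasedness plus Freedman-type martingale bounds.''
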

To our best knowledge, Theorem~\ref{theorem:kcfr-bandit} provides the first result for learning $\kefce$ under bandit feedback. The sample complexity $\tO({H\choose K\wedge H}^2 \max_{i\in[m]} (X_i A_i^{K\wedge H+1})/\eps^2)$ (ignoring $m$, $H$ factors) has only an ${H\choose K\wedge H} A_i$ additional factor over the iteration complexity in the full feedback setting (Theorem~\ref{theorem:kcfr-full-feedback}), which is natural---The ${H\choose K\wedge H}$ comes from the number of episodes sampled within each iteration (Lemma~\ref{lemma:needed-episode}), and the $A_i$ arises from estimating loss vectors from bandit feedback. In particular, the special case of $K=1$ provides the first result for learning EFCEs from bandit feedback, with sample complexity $\tO(\max_{i\in[m]}X_iA_i^2/\eps^2)$. We remark that the linear in $X_i$ dependence at all $K\ge 0$ is optimal, as the sample complexity lower bound for the $K=0$ case (learning NFCCEs) is already $\Omega(\max_{i\in[m]} X_iA_i/\eps^2)$~\citep{bai2022near}\footnote{The sample complexity lower bound in~\citep{bai2022near} is stated for learning Nash Equilibria in two-player zero-sum IIEFGs, but can be directly extended to learning NFCCEs in multi-player general-sum IIEFGs.}. 

We remark that the policies $\{\pi_i^t\}_{t=1}^T$ maintained in Algorithm~\ref{algorithm:sample-based-kscfr-sketch} also achieves sublinear $\kefce$ regret. However, strictly speaking, this is not a regret bound of our algorithm, as the sampling policies $\pi^{t, (h, h', W)}_i$ actually used are not $\pi_i^t$.


\paragraph{Proof overview}
The proof of Theorem~\ref{theorem:kcfr-bandit} builds on the analysis in the full-feedback case, and further relies on several new techniques in order to achieve the sharp linear in $\max_{i\in[m]} X_i$ sample complexity: (1) A regret bound for the \wrhedgebandit~algorithm under the same-mean condition (Lemma~\ref{lemma:regret-with-time-selection-bandit}), which may be of independent interest; (2) Crucial use of the \emph{balancing property} of $\pi^{\star, h}_i$ (Lemma~\ref{lemma:balancing}) to control the variance of the loss estimators $\wt{L}^t_{(x_{i,h}, b_{1:h'})}(\cdot)$, which in turn produces sharp bounds on the regret terms and additional concentration terms (Lemma~\ref{lemma:bound-regret^I}-\ref{lemma:bound-bias2^II}).
The full proof can be found in Appendix~\ref{appendix:proof-kcfr-bandit}.

\section{Conclusion}
This paper proposes $\kefce$, a generalized definition of Extensive-Form Correlated Equilibria in Imperfect-Information Games, and designs sharp algorithms for computing $\kefce$ under full feedback and learning a $\kefce$ under bandit feedback. Our algorithms perform wide-range regret minimization over each infoset to minimize the overall $\kefce$ regret, and introduce new efficient sampling policies to handle bandit feedback. We believe our work opens up many future directions, such as accelerated techniques for computing $\kefce$ from full feedback, learning other notions of equilibria from bandit feedback, as well as empirical investigations of our algorithms.



\bibliographystyle{plainnat}
\bibliography{bib}

\clearpage
\appendix

\makeatletter
\def\renewtheorem#1{%
  \expandafter\let\csname#1\endcsname\relax
  \expandafter\let\csname c@#1\endcsname\relax
  \gdef\renewtheorem@envname{#1}
  \renewtheorem@secpar
}
\def\renewtheorem@secpar{\@ifnextchar[{\renewtheorem@numberedlike}{\renewtheorem@nonumberedlike}}
\def\renewtheorem@numberedlike[#1]#2{\newtheorem{\renewtheorem@envname}[#1]{#2}}
\def\renewtheorem@nonumberedlike#1{  
\def\renewtheorem@caption{#1}
\edef\renewtheorem@nowithin{\noexpand\newtheorem{\renewtheorem@envname}{\renewtheorem@caption}}
\renewtheorem@thirdpar
}
\def\renewtheorem@thirdpar{\@ifnextchar[{\renewtheorem@within}{\renewtheorem@nowithin}}
\def\renewtheorem@within[#1]{\renewtheorem@nowithin[#1]}
\makeatother

\renewtheorem{theorem}{Theorem}[section]

\doparttoc 
\faketableofcontents 

\part{} 
\parttoc 
\part{Appendix} 

\parttoc 

\section{Technical tools}

\subsection{Technical lemmas}
The following Freedman's inequality can be found in~\citep[Lemma 9]{agarwal2014taming}.
\begin{lemma}[Freedman's inequality]
  \label{lemma:freedman}
  Suppose random variables $\set{X_t}_{t=1}^T$ is a martingale difference sequence, i.e. $X_t\in\cF_t$ where $\set{\cF_t}_{t\ge 1}$ is a filtration, and $\E[X_t|\cF_{t-1}]=0$. Suppose $X_t\le R$ almost surely for some (non-random) $R>0$. Then for any $\lambda\in(0, 1/R]$, we have with probability at least $1-\delta$ that
  \begin{align*}
    \sum_{t=1}^T X_t \le \lambda \cdot \sum_{t=1}^T \E\brac{X_t^2 | \cF_{t-1}} + \frac{\log(1/\delta)}{\lambda}.
  \end{align*}
\end{lemma}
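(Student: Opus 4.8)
The plan is to follow the classical exponential-supermartingale (Chernoff-type) argument for martingale difference sequences. The crux is a one-step moment generating function bound: for each $t$, I would show
\[
\E\brac{\exp(\lambda X_t) \mid \cF_{t-1}} \le \exp\paren{\lambda^2 \E\brac{X_t^2 \mid \cF_{t-1}}}.
\]
To establish this, I would first record the elementary inequality $e^x \le 1 + x + x^2$, valid for all $x \le 1$ (verified by checking that $1 + x + x^2 - e^x \ge 0$ on $(-\infty,1]$ via its first two derivatives). Since $\lambda \in (0, 1/R]$ and $X_t \le R$ almost surely, we have $\lambda X_t \le 1$, so applying this inequality pointwise with $x = \lambda X_t$ and taking conditional expectations gives
\[
\E\brac{e^{\lambda X_t} \mid \cF_{t-1}} \le 1 + \lambda \E\brac{X_t \mid \cF_{t-1}} + \lambda^2 \E\brac{X_t^2 \mid \cF_{t-1}} = 1 + \lambda^2 \E\brac{X_t^2 \mid \cF_{t-1}},
\]
where the equality uses the martingale-difference property $\E\brac{X_t \mid \cF_{t-1}} = 0$. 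The bound $1 + u \le e^u$ then yields the claimed one-step estimate.

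Next I would construct the process
\[
Z_t \defeq \exp\paren{\lambda \sum_{s=1}^t X_s - \lambda^2 \sum_{s=1}^t \E\brac{X_s^2 \mid \cF_{s-1}}}, \qquad Z_0 \defeq 1,
\]
and verify that $\set{Z_t}_{t\ge 0}$ is a supermartingale adapted to $\set{\cF_t}$. Indeed, both $\sum_{s=1}^{t-1} X_s$ and $\sum_{s=1}^{t} \E\brac{X_s^2 \mid \cF_{s-1}}$ are $\cF_{t-1}$-measurable, so pulling them out of the conditional expectation and invoking the one-step MGF bound gives $\E\brac{Z_t \mid \cF_{t-1}} \le Z_{t-1}$. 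Taking total expectations and iterating (tower property) yields $\E\brac{Z_T} \le Z_0 = 1$.

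The argument concludes with Markov's inequality: $\Pr\brac{Z_T \ge 1/\delta} \le \delta\, \E\brac{Z_T} \le \delta$. Hence with probability at least $1-\delta$ we have $Z_T < 1/\delta$, i.e.
\[
\lambda \sum_{t=1}^T X_t - \lambda^2 \sum_{t=1}^T \E\brac{X_t^2 \mid \cF_{t-1}} < \log(1/\delta),
\]
and dividing through by $\lambda > 0$ and rearranging produces exactly the stated inequality. Since this is a standard tool quoted for completeness, I do not expect a genuine obstacle; the only point demanding care is the role of the hypothesis $\lambda \le 1/R$, which is precisely what guarantees $\lambda X_t \le 1$ and thereby licenses the elementary exponential inequality that converts the conditional-variance control into the exponential supermartingale.
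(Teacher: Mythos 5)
Your proof is correct, but there is nothing in the paper to compare it against: the paper does not prove this lemma at all, it simply imports it by citation as \citet[Lemma 9]{agarwal2014taming}. Your argument --- the elementary bound $e^x \le 1+x+x^2$ for $x\le 1$ (licensed exactly by $\lambda X_t \le \lambda R \le 1$), combined with $\E[X_t\mid \cF_{t-1}]=0$ to get the one-step MGF estimate, then the exponential supermartingale $Z_t$ and Markov's inequality --- is the standard proof of this form of Freedman's inequality, and is essentially how the cited result is established in the literature; every step checks out, including the supermartingale verification (both the partial sum through $t-1$ and the conditional-variance sum through $t$ are indeed $\cF_{t-1}$-measurable) and the final rearrangement after dividing by $\lambda>0$.
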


\subsection{Minimizing wide-range regret}
\label{appendix:bandit-algorithm}


\def\cB{{\mathcal B}}
\def\cI{{\mathcal I}}
\def\oL{{\overline L}}

\paragraph{Time selection functions} For each element $b$ in a finite set $\cB$, $\{S^t_b\}_{t\ge 1}$ represent the time selection function indexed by $b$. We let $\cB^s$ be the set of swap time selection index and $\cB^e$ be the set of external time selection index.

\paragraph{Modifications}  We  denote the set of swap modification function set $\Psi^s = \{ \psi: [A] \to [A]\}$ and external modification function set $\Psi^e = \{ \psi: [A] \to [A]: \psi(b) = a, \forall b \in [A]\}$. Given a modification rule $\psi: [A] \to [A]$, define $M_\psi$ to be the matrix with a $1$ in column $\psi(b)$ of row $b$ for all $i\in[A]$ and zeros everywhere else.

Specializing the results in  \cite{khot2008minimizing}, here we present an algorithm for minimizing regret modified with time selection function and strategy modification pairs. 

\begin{algorithm}[h]
  \caption{Wide-Range Regret Minimization with Hedge (\WRHEDGE)}
  \label{algorithm:time-selection-swap}
  \begin{algorithmic}[1]
    \REQUIRE Learning rate $\eta>0$. Swap index set $\cB^s$ and external index set $\cB^e$; Swap strategy modification set $\Psi^s$ and external index set $\Psi^e$.
    \STATE Initialize $\cI \defeq (\cB^s\times \Psi^s) \cup (\cB^e\times \Psi^e)$, $S^0_b \setto 0$ for all $b \in \cB^s \cup \cB^e$, 
    \begin{align}
      \label{equation:wrhedge-initial-weight}
      q^0(b, \psi) \setto \frac{\vert \Psi^s \vert  \indic{b \in \cB^e} + \vert \Psi^e\vert \indic{b \in \cB^s}}{\sum_{(b', \psi') \in \cI}\brac{\vert \Psi^s \vert  \indic{b' \in \cB^e} + \vert \Psi^e\vert \indic{b' \in \cB^s}}}
    \end{align}
    for all $(b, \psi) \in \cI$.
    \FOR{iteration $t=1,\dots,T$}
    \STATE (\observeTSF) Receive time selection functions $\{ S^t_b \}_{b \in \cB^s \cup \cB^e}$. 
    \STATE Update distribution over $(b, \psi) \in \cI$: 
    \begin{align*}
      q^t(b,\psi) \propto q^{t-1}(b, \psi) \exp\Big\{\eta\exp(-\eta \|\ell^{t-1}\|_\infty) S^{t-1}_b\langle p^{t-1}, \ell^{t-1} \rangle - \eta S^{t-1}_b\langle \psi \circ p^{t-1}, \ell^{t-1} \rangle \Big\}. 
    \end{align*}
    \STATE \label{line:solve-equation} Set $p^t\in \Delta([A])$ as a solution to the equation ${p^t}^\top = {p^t}^\top \paren{ \frac{\sum_{(b, \psi) \in \cI} S^t_b q^t(b, \psi) M_\psi}{\sum_{(b, \psi) \in \cI} S^t_b q^t(b, \psi)} }$ . 
    \IF{\recommend~is called}
    \STATE Output the vector  $p^t$.
    \ENDIF
    \STATE (\observeLoss) Receive loss vector $ {\ell}^t  \in \mathbb{R}^A$. 
    \ENDFOR
  \end{algorithmic}
\end{algorithm}

With a slight modification on the initial weight and a refined proof, we have the following lemma on bounding the wide range regret.


\begin{lemma}[Wide-range regret bound of \wrhedge]
\label{lemma:regret-with-time-selection}
Let $\{ \ell^t(a)\}_{a \in [A], t \in [T]}$ and $\{ S^t_b\}_{b \in \cB^s \cup \cB^e, t \in [T]}$ be  arbitrary arrays of loss functions and time selection functions. 
Assume that $S^t_b \in [0, 1]$ and $\ell^t \in [0, \infty]^A$ for any $b \in \cB^s \cup \cB^e$, $a \in [A]$ and $t \in [T]$. Let $p^t$ be given as in Algorithm \ref{algorithm:time-selection-swap} with learning rate $\eta \in (0,\infty)$. Then we have
\begin{align*}
    \sup_{\psi \in \Psi^s} \sum_{t=1}^T S^t_b \Big( \<p^t, {\ell}^t\> - \<\psi \circ p^t, {\ell}^t\> \Big) \le &~ \sum_{t=1}^T \eta \|\ell^t\|_\infty S^t_b \< p^t, \ell^t \> +   \frac{\log [(\vert \cB^s \vert + \vert \cB^e\vert ) \vert \Psi^s \vert]}{\eta}, ~~~ \forall b \in \cB^s, \\
    \sup_{\psi \in \Psi^e} \sum_{t=1}^T S^t_b \Big( \<p^t, {\ell}^t\> - \<\psi \circ p^t, {\ell}^t\> \Big) \le &~ \sum_{t=1}^T \eta \|\ell^t\|_\infty S^t_b \< p^t, \ell^t \> +   \frac{\log [(\vert \cB^s \vert + \vert \cB^e\vert ) \vert \Psi^e \vert]}{\eta}, ~~~ \forall  b \in \cB^e.
\end{align*}
\end{lemma}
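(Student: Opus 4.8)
The plan is to prove the wide-range regret bound by reducing it to a standard analysis of the multiplicative-weights (Hedge) update over the finite index set $\cI = (\cB^s\times\Psi^s)\cup(\cB^e\times\Psi^e)$, treating each pair $(b,\psi)$ as an ``expert.'' The key object is the distribution $q^t$ maintained in Algorithm~\ref{algorithm:time-selection-swap}, together with the fixed-point condition defining $p^t$ in Line~\ref{line:solve-equation}. I would first observe that the fixed-point equation ${p^t}^\top = {p^t}^\top\paren{\sum_{(b,\psi)} S^t_b q^t(b,\psi) M_\psi / \sum_{(b,\psi)} S^t_b q^t(b,\psi)}$ is precisely the stationarity condition ensuring that the $q^t$-weighted loss of the ``action'' $p^t$ equals the $q^t$-weighted loss of the ``modified'' actions $\psi\circ p^t$. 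Concretely, multiplying both sides by $\ell^t$ and using $\psi\circ p^t = M_\psi^\top p^t$ (or the appropriate convention matching $M_\psi$), this identity yields $\sum_{(b,\psi)} S^t_b q^t(b,\psi)\<p^t,\ell^t\> = \sum_{(b,\psi)} S^t_b q^t(b,\psi)\<\psi\circ p^t,\ell^t\>$, which says that the instantaneous regret of $p^t$ against the $q^t$-mixture of modifications vanishes. This is the role of the fixed point, and it is what lets us transfer a regret bound against the best single $(b,\psi)$ into a regret bound against each $(b,\psi)$ individually.

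Next I would run the standard Hedge potential-function argument on $q^t$. Define the per-round ``gain'' (or negative loss) assigned to index $(b,\psi)$ to be the quantity appearing inside the exponential in the update, namely $g^t(b,\psi) = \exp(-\eta\|\ell^t\|_\infty) S^t_b\<p^t,\ell^t\> - S^t_b\<\psi\circ p^t,\ell^t\>$, so that $q^{t}(b,\psi)\propto q^{t-1}(b,\psi)\exp(\eta\, g^{t-1}(b,\psi))$. Tracking the log-partition potential $\Phi^t = \log\sum_{(b,\psi)\in\cI} q^0(b,\psi)\exp(\eta\sum_{s<t} g^s(b,\psi))$ in the usual way gives, for every fixed index $(b_\star,\psi_\star)$, the bound $\eta\sum_t g^t(b_\star,\psi_\star) \le \Phi^{T+1} - \log q^0(b_\star,\psi_\star)$ plus the cumulative one-step potential increments. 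The crucial point allowing the clean $\exp(-\eta\|\ell^t\|_\infty)$ discounting is the elementary inequality $e^{\eta x}\le 1+\eta e^{\eta B}x$ for $x\le B$ (or an analogous convexity/second-order bound), which, combined with the fixed-point identity that makes the $q^t$-average increment telescope, controls each potential step and prevents an extra variance term. I would need to be careful that the time-selection factors $S^t_b\in[0,1]$ and the possibly unbounded losses $\ell^t\in[0,\infty]^A$ only enter through $\|\ell^t\|_\infty$, which is why the discount factor $\exp(-\eta\|\ell^t\|_\infty)$ is baked into the first term of the bound rather than producing a genuine second-order regret penalty.

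Then I would convert the aggregate bound back to the per-$(b,\psi)$ statement. Summing the fixed-point identity over $t$ kills the $q^t$-weighted difference between $\<p^t,\ell^t\>$ and $\<\psi\circ p^t,\ell^t\>$ at each round, so the accumulated gain $\sum_t g^t(b_\star,\psi_\star)$ for any comparator, relative to the $q^t$-average, reduces exactly to the wide-range regret $\sum_t S^t_{b_\star}\paren{\<p^t,\ell^t\> - \<\psi_\star\circ p^t,\ell^t\>}$ up to the $\exp(-\eta\|\ell^t\|_\infty)$-weighted first term. Rearranging gives the stated inequality with the additive $\log(1/q^0(b_\star,\psi_\star))/\eta$ term. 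The modified initial weights in~\eqref{equation:wrhedge-initial-weight} are designed precisely so that $1/q^0(b_\star,\psi_\star)$ equals (up to the normalizing denominator) $(\vert\cB^s\vert+\vert\cB^e\vert)\vert\Psi^s\vert$ for swap indices and $(\vert\cB^s\vert+\vert\cB^e\vert)\vert\Psi^e\vert$ for external indices, which is how the two different $\log$ terms in the two displayed inequalities arise from a single analysis. I would verify that the denominator and the indicator-weighted numerator combine to give exactly these logarithmic factors for each case.

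The main obstacle I anticipate is the careful bookkeeping in the potential-function step that simultaneously handles (i) the fixed-point cancellation of the $q^t$-averaged instantaneous regret, and (ii) the $\exp(-\eta\|\ell^t\|_\infty)$ discounting needed to absorb the second-order term into the first term of the bound rather than leaving a residual variance penalty. Getting the inequality $e^{\eta x}\le 1+\eta e^{\eta B} x$ to apply with the right sign conventions on $g^t$ (which mixes a positive discounted loss term and a negative modified-loss term), and checking that the telescoping is exact under the fixed-point identity, is where the ``refined proof'' relative to~\cite{khot2008minimizing} is really needed; the rest is routine. I would also double-check the matrix convention for $M_\psi$ and $\psi\circ p^t$ so that the fixed-point equation and the regret objective are consistent, since a transpose mismatch there would break the cancellation.
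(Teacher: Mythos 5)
Your proposal is correct and follows essentially the same route as the paper's proof: your log-partition potential is (up to an additive constant) $\log W^t$ for the paper's total weight $W^t$, your fixed-point cancellation and telescoping argument is exactly the paper's claim that $W^t \le W^{t-1}$, and your accounting of the initial weights $q^0$ reproduces precisely the two logarithmic factors $\log[(\vert\cB^s\vert+\vert\cB^e\vert)\vert\Psi^s\vert]$ and $\log[(\vert\cB^s\vert+\vert\cB^e\vert)\vert\Psi^e\vert]$. One precision for when you write it out: the linearization pair must be the chord bounds $e^{\eta x}\le 1+\frac{e^{\eta B}-1}{B}x$ and $e^{-\eta x}\le 1-\frac{1-e^{-\eta B}}{B}x$ for $x\in[0,B]$ with $B=\|\ell^t\|_\infty$, not your $e^{\eta x}\le 1+\eta e^{\eta B}x$; after the $e^{-\eta B}$ discount both chords give the identical coefficient $\frac{1-e^{-\eta B}}{B}S^t_b$, which is exactly what makes the $q^t$-averaged increment vanish under the fixed point, whereas with your stated inequality the positive coefficient $\eta S^t_b$ strictly exceeds any valid negative-side coefficient (at most $\frac{1-e^{-\eta B}}{B}S^t_b$), so a nonnegative residual term would survive and the potential would no longer be non-increasing.
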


\begin{proof}
First, we define some quantities. For $(b, \psi) \in \cI$, 
the loss w.r.t. $\{S^t_b\}_{t\ge 1}$ till time $t$ is defined as 
\begin{align*}
    L^t_b \defeq \sum_{t' = 1}^t S^{t'}_b \< p^{t'}, \ell^{t'} \>, 
\end{align*}
and 
the loss w.r.t. $(\{S^t_b\}_{t\ge 1}, \psi)$ till time $t$ is defined as 
\begin{align*}
    L^t(b, \psi) \defeq \sum_{t' = 1}^t S^{t'}_b \< \psi \circ p^{t'}, \ell^{t'} \>.
\end{align*}
The weight of $(\{S^t_b\}_{t\ge 1}, \psi)$ at the end of time $t$ defined as 
$$
w^t(b, \psi) \defeq ( \vert \Psi^s \vert  \indic{b \in \cB^e} + \vert \Psi^e\vert \indic{b \in \cB^s})\exp\set{\eta \sum_{t'=1}^t\exp(-\eta \|\ell^{t'}\|_\infty) S^{t'}_b \< p^{t'}, \ell^{t'} \> - \eta L^t(b, \psi)},
$$
where $w^0(b, \psi)$ is set as $ \vert \Psi^s \vert  \indic{b \in \cB^e} + \vert \Psi^e\vert \indic{b \in \cB^s}$.
Let $W^t \defeq \sum_{(b, \psi) \in \cI}w^t(b, \psi)$. Then $q^t(b, \psi)$ is equal to $w^{t-1}(b, \psi)/W^{t-1}$. 
Next, we prove a claim (similar to Claim 6 in \cite{khot2008minimizing}): 
$$
W^t \le W^{t-1} \text{, for all}~ t\ge 1.
$$
In fact, by $\exp(-\eta x) \le 1-(1-\exp(-\eta \|\ell^t\|_\infty)) x/\|\ell^t\|_\infty$ and $\exp(\eta x) \le 1+ (\exp(\eta\|\ell^t\|_\infty)-1)x/\|\ell^t\|_\infty$ for any $\eta \in (0, \infty)$ and $x\in [0,\|\ell^t\|_\infty]$, we have
\begin{align*}
    W^t =& \sum_{(b, \psi) \in \cI} w^t(b, \psi) = \sum_{(b, \psi) \in \cI} w^{t-1}(b, \psi) \exp\set{\eta S^t_b \<\exp(-\eta \|\ell^t\|_\infty) p^t - \psi \circ p^t, \ell^t\>} \\
    \le & \sum_{(b, \psi) \in \cI} w^{t-1}(b, \psi) \paren{1- \frac{(1-\exp(-\eta \|\ell^t\|_\infty)) S^t_b}{\|\ell^t\|_\infty} \< \psi \circ p^t, \ell^t \>} \cdot \paren{ 1+ \frac{ (1-\exp(-\eta\|\ell^t\|_\infty)) S^t_b }{\|\ell^t\|_\infty} \< p^t, \ell^t \> } \\
    \overset{(i)}{\le} &~ W^{t-1} - \frac{1-\exp(-\eta \|\ell^t\|_\infty)}{\|\ell^t\|_\infty}W^{t-1} \sum_{(b, \psi) \in \cI} q^t(b, \psi) S^t_b\< \psi \circ p^t, \ell^t \> \\
    & ~~~ + \frac{1-\exp(-\eta \|\ell^t\|_\infty)}{\|\ell^t\|_\infty} W^{t-1}  \sum_{(b, \psi) \in \cI} q^t(b, \psi) S^t_b\< p^t, \ell^t \>\\
    \overset{(ii)}{=} & W^{t-1}.
\end{align*}
Here, (i) follows from $q^t(b, \psi) = w^{t-1}(b, \psi) / W^{t-1}$ and $\ell^t \in [0,1]^A$; (ii) uses the fact that $p^t$ solves $${p^t}^\top = {p^t}^\top \paren{ \frac{\sum_{(b, \psi) \in \cI} S^t_b q^t(b, \psi) M_\psi}{\sum_{(b, \psi) \in \cI} S^t_b q^t(b, \psi)} }$$ in line \ref{line:solve-equation} in the algorithm, which gives $$\sum_{(b, \psi) \in \cI} q^t(b, \psi) S^t_b p^t = \sum_{(b, \psi) \in \cI} q^t(b, \psi) S^t_b(\psi \circ p^t).$$ This finish the proof of the claim. 

The claim means $W^t$ is non-increasing, so for all $(b,\psi) \in \cI$, 
\begin{align*}
   &( \vert \Psi^s \vert  \indic{b \in \cB^e} + \vert \Psi^e\vert \indic{b \in \cB^s}) \exp\set{\eta \sum_{t'=1}^t\exp(-\eta \|\ell^{t'}\|_\infty) S^{t'}_b \< p^{t'}, \ell^{t'} \> - \eta L^t(b, \psi)} \\
   =&~ w^t(b, \psi) \le \sum_{(b', \psi') \in \cI} w^0(b', \psi') =  \vert \Psi^s \vert \vert \Psi^e \vert (\vert \cB^e \vert + \vert \cB^s \vert ),
\end{align*}
which gives 
\begin{align*}
    \sum_{t'=1}^t\exp(-\eta \|\ell^{t'}\|_\infty) S^{t'}_b \< p^{t'}, \ell^{t'} \> - L^t(b, \psi) \le  \frac{\log [\vert \Psi^s \vert  (\vert \cB^e \vert + \vert \cB^s \vert )]}{\eta},~~~~~ \forall b \in \cB^s, \\
    \sum_{t'=1}^t\exp(-\eta \|\ell^{t'}\|_\infty) S^{t'}_b \< p^{t'}, \ell^{t'} \> - L^t(b, \psi) \le  \frac{\log [\vert \Psi^e \vert (\vert \cB^e \vert + \vert \cB^s \vert )]}{\eta},~~~~~ \forall b \in \cB^e, \\
\end{align*}
Note that we have $1 \le \exp{(-\eta \|\ell^t\|_\infty)} + \eta \|\ell^t\|_\infty$. So we can get that, for any $b \in \cB^s$, 
\begin{align*}
     L^t(b) - L^t(b, \psi)  =& ~\sum_{t=1}^T S^t_b \< p^t, \ell^t \> - L^t(b, \psi)  \\
     \le &~ \sum_{t=1}^T\exp(-\eta \|\ell^t\|_\infty) S^t_b \< p^{t'}, \ell^{t'} \> - L^t(b, \psi) +  \sum_{t=1}^T \eta \|\ell^t\|_\infty S^t_b \< p^t, \ell^t \> \\
     \le & ~\sum_{t=1}^T \eta \|\ell^t\|_\infty S^t_b \< p^t, \ell^t \> +   \frac{\log [\vert \Psi^s \vert   (\vert \cB^e \vert + \vert \cB^s \vert )]}{\eta}.
\end{align*}
Note that the left side is exactly $\sum_{t=1}^T S^t_b \Big( \<p^t, {\ell}^t\> - \<\psi \circ p^t, {\ell}^t\> \Big)$. Consequently,
\begin{align*}
   \sum_{t=1}^T S^t_b \Big( \<p^t, {\ell}^t\> - \<\psi \circ p^t, {\ell}^t\> \Big) \le &~ \sum_{t=1}^T \eta \|\ell^t\|_\infty S^t_b \< p^t, \ell^t \> +   \frac{\log [\vert \Psi^s \vert   (\vert \cB^e \vert + \vert \cB^s \vert )] }{\eta}, ~~~~ \forall b \in \cB^s. 
\end{align*}
We have similar results for $b \in \cB^e$. Taking supremum over all $\psi \in \Psi^e$ or $\psi \in \Psi^s$ proves the lemma. 
\end{proof}

\subsection{Minimizing stochastic wide-range regret}

In this subsection, we consider stochastic loss functions, i.e. we can only observe the stochastic loss estimators $\wt\ell^t$. Moreover, we suppose for each $b \in \cB^e \cup \cB^s$ and $t$, there is a corresponding loss estimator $\wt\ell^t_b$. 

\begin{algorithm}[h]
  \caption{Stochastic Wide Range Regret Minimization with Hedge (\wrhedgebandit)}
  \label{algorithm:time-selection-sample}
  \begin{algorithmic}[1]
    \REQUIRE Learning rate $\eta>0$ and $\wb L$. Swap index set $\cB^s$ and external index set $\cB^e$; Swap strategy modification set $\Psi^s$ and external index set $\Psi^e$.
    \STATE Initialize $\cI \defeq (\cB^s\times \Psi^s) \cup (\cB^e\times \Psi^e)$, $S^0_b \setto 0$ for all $b \in \cB^s \cup \cB^e$,
    $$q^0(b, \psi) \setto \frac{\vert \Psi^s \vert  \indic{b \in \cB^e} + \vert \Psi^e\vert \indic{b \in \cB^s}}{\sum_{(b', \psi') \in \cI}\brac{\vert \Psi^s \vert  \indic{b' \in \cB^e} + \vert \Psi^e\vert \indic{b' \in \cB^s}}}$$
    for all $(b, \psi) \in \cI$.
    \FOR{iteration $t=1,\dots,T$}
    \STATE (\observeTSF) Receive time selection functions $\{ S^t_b \}_{b \in \cB^s \cup \cB^e}$. 
    \STATE Update distribution over $(b, \psi) \in \cI$: 
    \begin{align*}
      q^t((b,\psi)) \propto q^{t-1}((b, \psi)) \exp\Big\{\eta\exp(-\eta \wb L) S^{t-1}_b\langle p^{t-1}, \wt\ell^{t-1}_b \rangle - \eta S^{t-1}_b\langle \psi \circ p^{t-1}, \wt\ell^{t-1}_b \rangle \Big\}. 
    \end{align*}
    \STATE \label{line:solve-equation2} Set $p^t\in \Delta([A])$ as a solution to the equation ${p^t}^\top = {p^t}^\top \paren{ \frac{\sum_{(b, \psi) \in \cI} S^t_b q^t(b, \psi) M_\psi}{\sum_{(b, \psi) \in \cI} S^t_b q^t(b, \psi)} }$.
    \IF{\recommend~is called}
    \STATE Output the vector $p^t$.
    \ENDIF
    \STATE (\observeLoss) Receive loss vectors $\{ \wt {\ell}^t_b \}_{b\in\cB^s \cup \cB^e}$ (where $\E\paren{\wt \ell^t_b|\cF_{t-1}}= \ell^t$ doesn't depend on $b$).  
    \label{line:observe-bandit-loss}
    \ENDFOR
  \end{algorithmic}
\end{algorithm}



We present our algorithm Stochastic Wide-Range Regret Minimization with Hedge (\swrhedge) in Algorithm~\ref{algorithm:time-selection-sample}.
At each round $t$, there is a $\sigma$-field $\cF_{t-1}$ which is generated by all the random variables before the $t$-th round. So along with the execution of the algorithm, we have a filtration $\set{\cF_t}_{t\ge 0}$.


\begin{lemma}[Wide-range regret bound for \swrhedge]
\label{lemma:regret-with-time-selection-bandit}
Let $\{ \ell^t \}_{t \in [T]}$, $\{ M^t_b\}_{b \in \cB^s \cup \cB^e, t \in [T]}$, and $\{ w_b \}_{b \in \cB^s \cup \cB^e}$ be  arbitrary arrays of loss functions, time selection functions, and weighting functions. Let $0 < \oL < \infty$ be a parameter (that will serve as an upper bound of all $ w_b M^t_b \| \wt\ell^t_b \|_\infty$). Assume that (i) $M^t_b \ge 0$, $w_b > 0$,  and $\wt\ell^t_b \in [0, \oL/(w_b
M^t_b)]^A$ for any $b \in \cB^s \cup \cB^e$, $a \in [A]$ and $t \in [T]$; (ii) $\E\brac{\wt\ell^t_b \vert \cF_{t-1}} = \ell^t$ for all $b\in \cB^s \cup \cB^e$. Let $p^t$ be given as in Algorithm \ref{algorithm:time-selection-sample} with learning rate $\eta \in (0,\infty)$ and time selection function $\{S^t_b \}_{b \in \cB^s \cup \cB^e} = \{ w_bM^t_b \}_{b \in \cB^s \cup \cB^e}$.Then with probability at least $1-p$, we have
\begin{align*}
    \sup_{\psi \in \Psi^s} \sum_{t=1}^T M^t_b \Big( \<p^t, {\wt\ell}^t_b\> - \<\psi \circ p^t, \wt{\ell}^t_b\> \Big)  \le &~ \sum_{t=1}^T \eta \wb L M^t_b \< p^t, \wt\ell^t_b \> +   \frac{\log [(\vert \cB^s \vert + \vert \cB^e\vert ) \vert \Psi^s \vert/p]}{\eta w_b}, ~~~ \forall b \in \cB^s, \\
    \sup_{\psi \in \Psi^e} \sum_{t=1}^T M^t_b \Big( \<p^t, {\wt\ell}^t_b\> - \<\psi \circ p^t, \wt{\ell}^t_b\> \Big)  \le &~ \sum_{t=1}^T \eta \wb L M^t_b \< p^t, \wt\ell^t_b \> +   \frac{\log [(\vert \cB^s \vert + \vert \cB^e\vert ) \vert \Psi^e \vert/p]}{\eta w_b}, ~~~ \forall  b \in \cB^e.
\end{align*}
\end{lemma}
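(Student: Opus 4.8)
The plan is to mirror the potential-function argument of Lemma~\ref{lemma:regret-with-time-selection}, replacing the loss-dependent normalizer $\|\ell^t\|_\infty$ by the uniform bound $\wb L$ and, crucially, upgrading the \emph{deterministic} monotonicity of the potential into a \emph{supermartingale} property that is then turned into a high-probability statement by Markov's inequality. Concretely, for each $(b,\psi)\in\cI$ I would define the weight
\begin{align*}
  w^t(b,\psi) = \paren{\vert\Psi^s\vert\indic{b\in\cB^e} + \vert\Psi^e\vert\indic{b\in\cB^s}}\exp\set{\eta\sum_{t'=1}^t e^{-\eta\wb L}S^{t'}_b\<p^{t'},\wt\ell^{t'}_b\> - \eta\sum_{t'=1}^t S^{t'}_b\<\psi\circ p^{t'},\wt\ell^{t'}_b\>},
\end{align*}
with $S^t_b = w_b M^t_b$, and set $W^t = \sum_{(b,\psi)\in\cI} w^t(b,\psi)$ so that $q^t(b,\psi) = w^{t-1}(b,\psi)/W^{t-1}$. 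Assumption (i) gives $S^t_b\|\wt\ell^t_b\|_\infty = w_b M^t_b\|\wt\ell^t_b\|_\infty\le\wb L$, so both $S^t_b\<p^t,\wt\ell^t_b\>$ and $S^t_b\<\psi\circ p^t,\wt\ell^t_b\>$ lie in $[0,\wb L]$, and the two elementary convexity inequalities used in Lemma~\ref{lemma:regret-with-time-selection} (now with $\wb L$ in place of $\|\ell^t\|_\infty$) apply verbatim. After dropping the nonnegative quadratic cross term they yield
\begin{align*}
  W^t \le W^{t-1}\paren{1 + \frac{1-e^{-\eta\wb L}}{\wb L}\,\Delta^t}, \qquad \Delta^t \defeq \sum_{(b,\psi)\in\cI} q^t(b,\psi)\,S^t_b\paren{\<p^t,\wt\ell^t_b\> - \<\psi\circ p^t,\wt\ell^t_b\>}.
\end{align*}

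The key difference from the deterministic case is that the fixed-point equation in Line~\ref{line:solve-equation2} no longer forces $\Delta^t=0$ pointwise, because each index $b$ is fed its own estimator $\wt\ell^t_b$. Instead I would note that $q^t$, $S^t_b$ and $p^t$ are all $\cF_{t-1}$-measurable, so taking a conditional expectation and invoking the same-mean assumption $\E[\wt\ell^t_b\mid\cF_{t-1}] = \ell^t$ gives
\begin{align*}
  \E[\Delta^t\mid\cF_{t-1}] = \Big\< \sum_{(b,\psi)\in\cI} q^t(b,\psi)\,S^t_b\paren{p^t - \psi\circ p^t},\ \ell^t\Big\> = 0,
\end{align*}
where the vector identity $\sum_{(b,\psi)} q^t(b,\psi)S^t_b(p^t-\psi\circ p^t)=0$ is exactly the fixed-point equation (which holds regardless of the losses). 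Hence $\E[W^t\mid\cF_{t-1}]\le W^{t-1}$, i.e. $\{W^t\}$ is a nonnegative supermartingale with $\E[W^T]\le W^0 = \vert\Psi^s\vert\vert\Psi^e\vert(\vert\cB^s\vert+\vert\cB^e\vert)$.

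To pass from this in-expectation decrease to a per-index regret bound, I would apply Markov's inequality to the nonnegative variable $W^T$: with probability at least $1-p$ we have $W^T \le W^0/p$. On this \emph{single} event, $w^T(b,\psi)\le W^T\le W^0/p$ holds simultaneously for all $(b,\psi)\in\cI$; taking logarithms (using $W^0/\vert\Psi^e\vert = \vert\Psi^s\vert(\vert\cB^s\vert+\vert\cB^e\vert)$ for $b\in\cB^s$, symmetrically for $b\in\cB^e$) yields
\begin{align*}
  \sum_{t=1}^T e^{-\eta\wb L}S^t_b\<p^t,\wt\ell^t_b\> - \sum_{t=1}^T S^t_b\<\psi\circ p^t,\wt\ell^t_b\> \le \frac{\log[(\vert\cB^s\vert+\vert\cB^e\vert)\vert\Psi^s\vert/p]}{\eta}.
\end{align*}
I would then finish exactly as in the deterministic proof: the bound $1 \le e^{-\eta\wb L} + \eta\wb L$ converts $e^{-\eta\wb L}\<p^t,\wt\ell^t_b\>$ into $\<p^t,\wt\ell^t_b\> - \eta\wb L\<p^t,\wt\ell^t_b\>$, dividing through by $w_b$ (recall $S^t_b = w_b M^t_b$) and taking the supremum over $\psi\in\Psi^s$ gives the claim for $b\in\cB^s$, and the $b\in\cB^e$ case is identical with $\Psi^e$ in place of $\Psi^s$.

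The main obstacle is precisely the step where the deterministic argument breaks: with index-dependent estimators the fixed-point cancellation that made $W^t\le W^{t-1}$ hold pointwise now holds only in conditional expectation, so $W^t$ is merely a supermartingale. The clean resolution is a one-line Markov bound on $W^T$ rather than a concentration inequality such as Freedman's; this is what produces the \emph{additive} $\log(1/p)$ factor inside the regret (as opposed to a $\sqrt{\log(1/p)}$-type term), and it is worth verifying that the bound need only hold on a single good event so that the simultaneous validity over all $b$ and the supremum over $\psi$ cost nothing beyond the count already absorbed into $W^0$.
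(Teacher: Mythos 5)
Your proposal is correct and follows essentially the same argument as the paper's proof: the same exponential-weight potential $W^t$, the observation that the fixed-point equation only cancels the drift in conditional expectation (making $W^t$ a supermartingale under the same-mean assumption), a single Markov bound $W^T\le W^0/p$ giving simultaneous control over all $(b,\psi)\in\cI$, and the final conversion via $1\le e^{-\eta\wb L}+\eta\wb L$ followed by division by $w_b$. The only cosmetic difference is that you explicitly linearize the potential recursion by dropping the cross term before taking conditional expectations, whereas the paper keeps the product form; the content is identical.
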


\begin{proof}
First, we define some quantities. For $(b, \psi) \in \cI$, 
the loss w.r.t. $\{S^t_b\}_{t\ge 1}$ till time $t$ is defined as 
\begin{align*}
    L^t(b) \defeq \sum_{t' = 1}^t S^{t'}_b \< p^{t'}, \wt\ell^{t'}_b \>, 
\end{align*}
and 
the loss w.r.t. $(\{S^t_b\}_{t\ge 1}, \psi)$ till time $t$ is defined as 
\begin{align*}
    L^t(b, \psi) \defeq \sum_{t' = 1}^t S^{t'}_b \< \psi \circ p^{t'}, \wt\ell^{t'}_b \>.
\end{align*}
The weight of $(\{S^t_b\}_{t\ge 1}, \psi)$ at the end of time $t$ defined as 
$$
w^t(b, \psi) \defeq ( \vert \Psi^s \vert  \indic{b \in \cB^e} + \vert \Psi^e\vert \indic{b \in \cB^s})\exp\set{\eta \exp(-\eta \wb{L}) L^t(b) - \eta L^t(b, \psi)},
$$
where $w^0(b, \psi)$ is set as $ \vert \Psi^s \vert  \indic{b \in \cB^e} + \vert \Psi^e\vert \indic{b \in \cB^s}$. 
Let $W^t \defeq \sum_{(b, \psi) \in \cI}w^t(b, \psi)$. Then $q^t(b, \psi)$ is equal to $w^{t-1}(b, \psi)/W^{t-1}$. 

Next, we prove a claim:
$$
\E\brac{W^t \vert \cF_{t-1}} \le W^{t-1} \text{, for all}~ t\ge 1.
$$
In fact, by $\exp(-\eta x) \le 1-(1-\exp(-\eta \wb{L})) x/\wb{L}$ and $\exp(\eta x) \le 1+ (\exp(\eta\wb{L})-1)x/\wb{L}$ for any $\eta \in (0, \infty]$ and $x\in [0,\wb{L}]$, we have
\begin{align*}
    W^t =& \sum_{(b, \psi) \in \cI} w^t(b, \psi) = \sum_{(b, \psi) \in \cI} w^{t-1}(b, \psi) \exp\set{\eta S^t_b \<\exp(-\eta \wb{L}) p^t - \psi \circ p^t, \wt\ell^t_b\>} \\
    \le & \sum_{(b, \psi) \in \cI} w^{t-1}(b, \psi) \paren{1- \frac{(1-\exp(-\eta \wb{L})) S^t_b}{\wb L} \< \psi \circ p^t, \wt\ell^t_b \>} \cdot \paren{ 1+ \frac{ (1-\exp(-\eta\wb{L})) S^t_b }{\wb L} \< p^t, \wt\ell^t_b \> } \\
    \overset{(i)}{\le} &~ W^{t-1} - \frac{1-\exp(-\eta \wb{L})}{\wb{L}} W^{t-1} \sum_{(b, \psi) \in \cI} q^t(b, \psi) S^t_b\< \psi \circ p^t, \wt\ell^t_b \> \\
    & ~~~ + \frac{1-\exp(-\eta \wb{L})}{\wb{L}} W^{t-1}  \sum_{(b, \psi) \in \cI} q^t(b, \psi) S^t_b\< p^t, \wt\ell^t_b \>.
\end{align*}
Here, (i) follows from $q^t(b, \psi) = w^{t-1}(b, \psi) / W^{t-1}$ and $S^t_b\|\wt\ell^t_b\|_\infty \le \wb{L}$ for any $b$. Using the above inequality, $\E\paren{\wt \ell^t_b|\cF_{t-1}}= \ell^t$ for any $b$, and the fact that $p^t$ solves $${p^t}^\top = {p^t}^\top \paren{ \frac{\sum_{(b, \psi) \in \cI} S^t_b q^t(b, \psi) M_\psi}{\sum_{(b, \psi) \in \cI} S^t_b q^t(b, \psi)} }$$ in line \ref{line:solve-equation2} in the algorithm, which gives $$\sum_{(b, \psi) \in \cI} q^t(b, \psi) S^t_b p^t = \sum_{(b, \psi) \in \cI} q^t(b, \psi) S^t_b(\psi \circ p^t),$$ we have 
\begin{align*}
    \E \brac{W^t \vert \cF_{t-1}} \le W^{t-1}.
\end{align*}
Taking expectation and using the tower property of conditional expectation yields that
\begin{align*}
    \E \brac{W^t} \le W^0.
\end{align*}
Therefore, by Markov inequality, we have with probability at least $1-p$ that
\begin{align*}
    W^t \le W^0/p.
\end{align*}
On this event, we have for all $(b,\psi) \in \cI$ that
\begin{align*}
   &( \vert \Psi^s \vert  \indic{b \in \cB^e} + \vert \Psi^e\vert \indic{b \in \cB^s}) \exp\set{\eta  \exp(-\eta \wb{L}) L^t(b) - \eta L^t(b, \psi)} \\
   =&~  w^t(b, \psi) \le W^t \le W^0/p \le \sum_{(b', \psi') \in \cI} w^0(b', \psi')/p =  \vert \Psi^s \vert \vert \Psi^e \vert (\vert \cB^e \vert + \vert \cB^s \vert )/p.
 \end{align*}
As a result, 
\begin{align*}
    \exp\set{\eta \exp(-\eta \wb{L}) L^t(b) - \eta L^t(b, \psi)} \le  \vert \Psi^s \vert  (\vert \cB^e \vert + \vert \cB^s \vert )/p,~~~~~ \forall b \in \cB^s, \\
    \exp\set{\eta \exp(-\eta \wb{L}) L^t(b) - \eta L^t(b, \psi)} \le  \vert \Psi^e \vert (\vert \cB^e \vert + \vert \cB^s \vert )/p,~~~~~ \forall b \in \cB^e.
\end{align*}
Note that we have $1 \le \exp{(-\eta \wb{L})} + \eta \wb{L}$. So we can get that, for any $b \in \cB^s$, 
\begin{align*}
     L^t(b) - L^t(b, \psi)  \le &~ \exp(-\eta \wb{L}) L^t(b) - L^t(b, \psi) +    \eta \wb{L} L^t(b) \\
     \le & ~ \frac{\log [\vert \Psi^s \vert   (\vert \cB^e \vert + \vert \cB^s \vert )/p]}{\eta} + \eta \wb{L} L^t(b).
\end{align*}
Note that the left side is exactly $\sum_{t=1}^T S^t_b \Big( \<p^t, {\wt\ell}^t_b\> - \<\psi \circ p^t, \wt{\ell}^t_b\> \Big)$. Consequently,
\begin{align*}
    \sum_{t=1}^T S^t_b \Big( \<p^t, {\wt\ell}^t_b\> - \<\psi \circ p^t, \wt{\ell}^t_b\> \Big) \le &~ \sum_{t=1}^T \eta \wb{L} S^t_b \< p^t, \wt\ell^t_b \> +   \frac{\log [\vert \Psi^s \vert   (\vert \cB^e \vert + \vert \cB^s \vert )/p] }{\eta}, ~~~~ \forall b \in \cB^s. 
\end{align*}
Because $S^t_b = M^t_b w_b$, dividing by $w_b$ gives that 
\begin{align*}
    \sum_{t=1}^T M^t_b \Big( \<p^t, {\wt\ell}^t_b\> - \<\psi \circ p^t, \wt{\ell}^t_b\> \Big) \le &~ \sum_{t=1}^T \eta \wb{L} M^t_b \< p^t, \wt\ell^t_b \> +   \frac{\log [\vert \Psi^s \vert   (\vert \cB^e \vert + \vert \cB^s \vert )/p] }{\eta w_b}, ~~~~ \forall b \in \cB^s.
\end{align*}
We have similar results for $b \in \cB^e$. Taking supreme over all $\psi \in \Psi^e$ or $\psi \in \Psi^s$ proves the lemma. 
\end{proof}


\section{Properties of the game}

\subsection{Basic properties}

Given the sequence-form transitions $p_{1:h}$ as in Eq. (\ref{eqn:sequence-form-transition}) and the sequence-form policies of the opponents $\{ \pi_{j, 1:h} \}_{j \neq i}$ as in Eq. (\ref{eqn:sequence-form-policy}), we define the marginal reaching probability $p_{1:h}^{\pi_{-i}} (s_h)$ and $p_{1:h}^{\pi_{-i}} (x_{i,h})$ as follows:
\begin{align}\label{eqn:pi-i(x)}
p_{1:h}^{\pi_{-i}}(s_h) &= p_{1:h}(s_h) \prod_{j\in [m], j\not= i} \pi_{j,1:h}(s_h, a_{j,h}), \\
p_{1:h}^{\pi_{-i}}(x_{i,h}) &= \sum_{s_h\in x_{i,h}}p_{1:h}^{\pi_{-i}}(s_h).
\end{align}

\begin{lemma}[Properties of $p^{\pi_{-i}}_{1:h}(x_h)$]
  \label{lemma:pnu}
  The following holds for any $\pi_{-i}=\set{\pi_j}_{j \neq i} \in \otimes_{j\neq i}\Pi_j$:
  \begin{enumerate}[label=(\alph*)]
  \item For any policy $\pi_i \in \Pi_i$, we have
    \begin{align*}
      \sum_{(x_h, a_h)\in\cX_{i, h}\times \cA_i} \pi_{i, 1:h}(x_h, a_h) p^{\pi_{-i}}_{1:h}(x_h) = 1.
    \end{align*}
  \item $0\le p^{\pi_{-i}}_{1:h}(x_h)\le 1$ for all $h \in [H], x_h \in \cX_{i, h}$.
  \end{enumerate}
\end{lemma}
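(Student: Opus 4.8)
The plan is to prove part~(a) first and then obtain part~(b) as a direct consequence. The key observation for~(a) is that the left-hand side is exactly the total probability that player~$i$ reaches \emph{some} infoset--action pair $(x_h,a_h)$ at step $h$ under the product policy $\pi=\pi_i\times\pi_{-i}$, which must equal $1$. So I would recast the sum as a reaching probability over states and show that these reaching probabilities sum to one.

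First I would rewrite the sum over $(x_h,a_h)\in\cX_{i,h}\times\cA_i$ as a sum over states $s_h\in\cS_h$. Two facts make this clean: (i) the infosets partition the states, $\cS_h=\sqcup_{x_h\in\cX_{i,h}}x_h$, which lets me expand $p^{\pi_{-i}}_{1:h}(x_h)=\sum_{s_h\in x_h}p^{\pi_{-i}}_{1:h}(s_h)$ (cf.~\eqref{eqn:pi-i(x)}) and merge into a sum over all $s_h\in\cS_h$; and (ii) the sequence-form policy depends only on the infoset, i.e. $\pi_{i,1:h}(x_h,a_h)=\pi_{i,1:h}(s_h,a_h)$ for every $s_h\in x_h$ (cf.~\eqref{eqn:sequence-form-policy}). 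After summing out the step-$h$ actions of all players, using $\sum_{a_{j,h}}\pi_{j,h}(a_{j,h}\mid x_j(s_h))=1$ for each $j\in[m]$ (the opponents' step-$h$ actions do not affect reaching $s_h$, and player $i$'s action collapses the factor $\pi_{i,1:h}\to\pi_{i,1:h-1}$), the whole quantity reduces to $\sum_{s_h\in\cS_h}d_h(s_h)$, where $d_h(s_h)\defeq p_{1:h}(s_h)\prod_{j\in[m]}\pi_{j,1:h-1}(x_j(s_h))$ is the reaching probability of the state $s_h$.

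It then remains to show $\sum_{s_h\in\cS_h}d_h(s_h)=1$, which I would establish by induction on $h$. The base case $h=1$ is $\sum_{s_1}p_0(s_1)=1$. For the inductive step, the tree-structure assumption gives each $s_{h+1}$ a \emph{unique} parent $(s_h,\ba_h)$, so that $d_{h+1}(s_{h+1})=d_h(s_h)\,p_h(s_{h+1}\mid s_h,\ba_h)\prod_{j\in[m]}\pi_{j,h}(a_{j,h}\mid x_j(s_h))$, where I have used perfect recall to split $\pi_{j,1:h}(x_j(s_{h+1}))=\pi_{j,1:h-1}(x_j(s_h))\,\pi_{j,h}(a_{j,h}\mid x_j(s_h))$. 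Grouping the children by their common parent $(s_h,\ba_h)$ and using that the transition kernel is a distribution, $\sum_{s_{h+1}}p_h(s_{h+1}\mid s_h,\ba_h)=1$, followed by $\sum_{a_{j,h}}\pi_{j,h}(a_{j,h}\mid x_j(s_h))=1$ for each $j$, telescopes the sum back to $\sum_{s_h}d_h(s_h)=1$ by the induction hypothesis. This proves~(a). For part~(b), nonnegativity $p^{\pi_{-i}}_{1:h}(x_h)\ge 0$ is immediate since it is a sum of products of transition and policy probabilities, all lying in $[0,1]$. For the upper bound I would invoke~(a) with a convenient $\pi_i$: take any pure policy following the unique history leading to $x_h$, so that $\pi_{i,1:h-1}(x_h)=1$; plugging this into~(a) and summing out $a_h$ gives $\sum_{x_h'}\pi_{i,1:h-1}(x_h')\,p^{\pi_{-i}}_{1:h}(x_h')=1$, and since every summand is nonnegative and the $x_h'=x_h$ term has coefficient $1$, we conclude $p^{\pi_{-i}}_{1:h}(x_h)\le 1$.

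I expect the main obstacle to be purely bookkeeping rather than anything analytically deep: keeping careful track of whether each sequence-form factor includes the step-$h$ action, so that the collapses $\sum_{a}\pi(a\mid\cdot)=1$ are applied at exactly the right place, and correctly handling the parent/child grouping in the inductive step via the tree structure. Once the indexing is set up, every step is elementary.
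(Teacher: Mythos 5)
Your proof is correct, and part (b) is exactly the paper's argument: nonnegativity is immediate, and the upper bound follows by plugging into (a) a pure policy whose sequence form puts mass one on the actions leading to $x_h$, so that $p^{\pi_{-i}}_{1:h}(x_h)$ appears with coefficient $1$ in a nonnegative sum equal to one. For part (a), however, you take a genuinely more elementary route than the paper. The paper identifies each summand $\pi_{i,1:h}(x_h,a_h)\, p^{\pi_{-i}}_{1:h}(x_h)$ with the visiting probability $\P^{\pi_i, \pi_{-i}}\paren{{\rm visit}~(x_h, a_h)}$ and concludes in one line: exactly one infoset--action pair of player $i$ is visited at step $h$, so these probabilities sum to one. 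You instead reduce the sum to $\sum_{s_h \in \cS_h} d_h(s_h)$, the state-level reaching probabilities under the product policy, and prove $\sum_{s_h} d_h(s_h) = 1$ by induction on $h$, using the tree structure (each $s_{h+1}$ has a unique parent, so children can be grouped) together with normalization of the transition kernel and of each player's per-step policy. In effect you prove from first principles the normalization fact that the paper invokes as a probabilistic truism. Your version is longer but fully self-contained, and it makes explicit where the structural assumptions enter: the tree structure justifies the parent--child grouping, and perfect recall justifies the splitting $\pi_{j,1:h}(x_j(s_{h+1})) = \pi_{j,1:h-1}(x_j(s_h))\,\pi_{j,h}(a_{j,h}\,|\,x_j(s_h))$. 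The paper's version buys brevity at the cost of leaving the ``sums to one'' step to the reader's probabilistic intuition. A further small merit of your bookkeeping: by collapsing the step-$h$ action sums $\sum_{a_{j,h}} \pi_{j,h}(a_{j,h}\,|\,x_j(s_h)) = 1$ explicitly, you implicitly resolve the notational wrinkle in the paper's definition (\ref{eqn:pi-i(x)}), which writes the opponents' sequence form up to step $h$ even though reaching $s_h$ depends only on their actions up to step $h-1$ (the paper's own proof of the lemma silently switches to $\pi_{j,1:h-1}$).
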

\begin{proof}
  For (a), notice that
  \begin{align*}
    & \quad \pi_{i, 1:h}(x_h, a_h)p^{\pi_{-i}}_{1:h}(x_h) = \sum_{s_h\in x_h} p_{1:h}(s_h) \cdot \pi_{i, 1:h}(x_h, a_h) \cdot \prod_{j \neq i} \pi_{j, 1:h-1}(x_{j, h}(s_{h-1}), a_{j, h-1}) \\
    & = \sum_{s_h\in x_h} \P^{\pi_i, \pi_{-i}}\paren{{\rm visit}~(s_h, a_h)} = \P^{\pi_i, \pi_{-i}}\paren{{\rm visit}~(x_h, a_h)}.
  \end{align*}
  Summing over all $(x_h, a_h)\in\cX_{i, h}\times \cA_i$, the right hand side sums to one, thereby showing (a).

  For (b), fix any $x_h\in\cX_{i, h}$. Clearly $p^{\pi_{-i}}_{1:h}(x_h)\ge 0$. Choose any $a_h\in\cA_i$, and choose policy $\pi_i^{x_h, a_h}\in\Pi_i$ such that $\pi^{x_h, a_h}_{i, 1:h}(x_h, a_h)=1$ (such $\pi_i^{x_h, a_h}$ exists, for example, by deterministically taking all actions prescribed in infoset $x_h$ at all ancestors of $x_h$). For this $\pi_i^{x_h, a_h}$, using (a), we have
  \begin{align*}
    p^{\pi_{-i}}_{1:h}(x_h) = \pi^{x_h, a_h}_{i, 1:h}(x_h, a_h) \cdot p^{\pi_{-i}}_{1:h}(x_h) \le \sum_{(x_h', a_h')\in\cX_{i,h}\times \cA_i} \pi^{x_h, a_h}_{i, 1:h}(x_h', a_h') \cdot p^{\pi_{-i}}_{1:h}(x_h') = 1.
  \end{align*}
  This shows part (b).
\end{proof}

\begin{corollary}
\label{cor:averge_loss_bound}
For any policy $\pi_i \in\Pi_i$ and $h \in [H]$, we have
$$
\sum_{(x_h, a_h)\in\cX_{i,h}\times \cA_i} \pi_{i, 1:h}(x_h, a_h) \ell_{i,h}^t(x_h, a_h) \le 1.
$$
\end{corollary}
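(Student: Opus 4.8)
The plan is to expand the counterfactual loss $\ell_{i,h}^t$ using its definition~\eqref{equation:l-definition}, bound the reward factor crudely by one, and then recognize what remains as precisely the sum that Lemma~\ref{lemma:pnu}(a) evaluates to one.

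First I would substitute~\eqref{equation:l-definition} into the left-hand side, yielding
\[
\sum_{(x_h,a_{i,h})\in\cX_{i,h}\times\cA_i} \pi_{i,1:h}(x_h,a_{i,h}) \sum_{s_h\in x_h,\,\ba_{-i,h}\in\cA_{-i}} p_{1:h}(s_h)\prod_{j\neq i}\pi_{j,1:h}^t(x_j(s_h),a_{j,h})\,[1-r_{i,h}(s_h,\ba_h)].
\]
Since $r_{i,h}\in[0,1]$, we have $1-r_{i,h}(s_h,\ba_h)\in[0,1]$, so dropping this factor only increases the (nonnegative) expression. Next I would perform the inner summation over the opponents' step-$h$ actions $\ba_{-i,h}=(a_{j,h})_{j\neq i}$. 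By~\eqref{eqn:sequence-form-policy} each factor splits as $\pi_{j,1:h}^t(x_j(s_h),a_{j,h})=\pi_{j,1:h-1}^t(\cdot)\,\pi_{j,h}^t(a_{j,h}\mid x_{j,h})$, and summing the last factor over $a_{j,h}$ gives one; hence the product over $j\neq i$ collapses to $\prod_{j\neq i}\pi_{j,1:h-1}^t(\cdot)$. Summing the remaining terms over $s_h\in x_h$ then produces exactly the marginal reaching probability $p_{1:h}^{\pi_{-i}^t}(x_h)$ from~\eqref{eqn:pi-i(x)}.

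This reduces the upper bound to $\sum_{(x_h,a_h)\in\cX_{i,h}\times\cA_i}\pi_{i,1:h}(x_h,a_h)\,p_{1:h}^{\pi_{-i}^t}(x_h)$, which equals $1$ by Lemma~\ref{lemma:pnu}(a) applied with $\pi_{-i}=\pi_{-i}^t$, completing the proof.

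The step requiring the most care is the collapse of the opponents' step-$h$ actions: one must keep track that the index $\ba_{-i,h}$ in~\eqref{equation:l-definition} ranges over step-$h$ actions only, that the sequence-form policies $\pi_{j,1:h}^t$ carry a dangling step-$h$ factor which marginalizes to one, and that what is left is exactly the $\pi_{j,1:h-1}^t$ product inside $p_{1:h}^{\pi_{-i}^t}$. This matching of the collapsed sum to the definition of the marginal reaching probability used in Lemma~\ref{lemma:pnu} is the only non-mechanical point; once it is established the bound is immediate.
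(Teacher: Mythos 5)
Your proof is correct and follows essentially the same route as the paper: bound $\ell_{i,h}^t(x_h,a_h)\le p^{\pi_{-i}^t}_{1:h}(x_h)$ by dropping the reward factor and collapsing the opponents' step-$h$ action sums, then conclude via Lemma~\ref{lemma:pnu}. One small point in your favor: you correctly invoke part (a) of that lemma, whereas the paper's own proof cites part (b), which appears to be a typo, since the bound $p^{\pi_{-i}^t}_{1:h}(x_h)\le 1$ alone would leave the sum $\sum_{(x_h,a_h)\in\cX_{i,h}\times\cA_i}\pi_{i,1:h}(x_h,a_h)$, which is not bounded by $1$ in general.
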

\begin{proof}
  Notice by definition
  $$
   \ell^t_h(x_h, a_h) = \sum_{s_h\in x_h, (a_{j, h})_{j \neq i} \in \otimes_{j \neq i} \cA_{j}} p_{1:h}(s_h) \prod_{j \neq i} \pi^t_{j, 1:h}(x_{j, h}(s_h), a_{j, h}) (1 - r_h(s_h, \ba_h)) \le p^{\pi_{-i}}_{1:h}(x_h),
  $$
  and the result is implied by Lemma~\ref{lemma:pnu} (b).
\end{proof}


\begin{lemma}
  \label{lemma:counterfactual-loss-bound}
  For any $h\in[H]$, the counterfactual loss function $\L_{i, h}^t$ defined in~\eqref{equation:L-definition} satisfies the bound
  \begin{enumerate}[label=(\alph*)]
  \item For any policy $\pi_i\in\Pi_i$, we have
    \begin{align*}
      \sum_{(x_h, a_h)\in\cX_{i,h}\times\cA_i} \pi_{i, 1:h}(x_h, a_h) \L_{i,h}^t(x_h, a_h) \le H-h+1.
    \end{align*}
  \item For any $(h, x_h, a_h)$, we have
  \begin{align*}
    0\le \L_{i,h}^t(x_h, a_h) \le p^{\pi_{-i}^t}_{1:h}(x_h)\cdot (H-h+1).
  \end{align*}
\end{enumerate}
\end{lemma}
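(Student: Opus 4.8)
The plan is to establish part (b) first by a backward induction on $h$, and then deduce part (a) as an immediate consequence. Non-negativity $\L_{i,h}^t(x_h,a_h)\ge 0$ is immediate, since every summand in \eqref{equation:l-definition} and \eqref{equation:L-definition} is a product of non-negative quantities (note $1-r_{i,h}\ge 0$ as $r_{i,h}\in[0,1]$). For the upper bound, the first step is to record the one-step recursion
\[
  \L_{i,h}^t(x_h, a_h) = \l_{i,h}^t(x_h, a_h) + \sum_{x_{h+1}\in\cC(x_h,a_h)}\sum_{a_{h+1}\in\cA_i}\pi_{i,h+1}^t(a_{h+1}\vert x_{h+1})\, \L_{i,h+1}^t(x_{h+1}, a_{h+1}),
\]
which follows from \eqref{equation:L-definition} by grouping the descendants of $(x_h,a_h)$ according to their step-$(h+1)$ ancestor and using the factorization $\pi_{i,h+1}^t(a_{h+1}\vert x_{h+1})\,\pi_{i,(h+2):h'}^t(x_{h'},a_{h'})=\pi_{i,(h+1):h'}^t(x_{h'},a_{h'})$.

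The induction then runs as follows. At the base case $h=H$ there are no descendants, so $\L_{i,H}^t(x_H,a_H)=\l_{i,H}^t(x_H,a_H)\le p^{\pi_{-i}^t}_{1:H}(x_H)$, where the inequality is exactly the bound $\l_{i,h}^t(x_h,a_h)\le p^{\pi_{-i}^t}_{1:h}(x_h)$ established inside the proof of Corollary~\ref{cor:averge_loss_bound}; this matches the claim since $H-H+1=1$. For the inductive step, assuming the bound at level $h+1$ and substituting into the recursion, together with $\l_{i,h}^t(x_h,a_h)\le p^{\pi_{-i}^t}_{1:h}(x_h)$, gives
\[
  \L_{i,h}^t(x_h, a_h) \le p^{\pi_{-i}^t}_{1:h}(x_h) + (H-h)\sum_{x_{h+1}\in\cC(x_h,a_h)}\sum_{a_{h+1}\in\cA_i}\pi_{i,h+1}^t(a_{h+1}\vert x_{h+1})\, p^{\pi_{-i}^t}_{1:h+1}(x_{h+1}).
\]
Everything thus reduces to the one-step flow conservation identity
\[
  \sum_{x_{h+1}\in\cC(x_h,a_h)} p^{\pi_{-i}^t}_{1:h+1}(x_{h+1}) = p^{\pi_{-i}^t}_{1:h}(x_h),
\]
after which (summing out $a_{h+1}$ first, so that $\sum_{a_{h+1}}\pi_{i,h+1}^t(a_{h+1}\vert x_{h+1})=1$) the right-hand side becomes $(H-h+1)\,p^{\pi_{-i}^t}_{1:h}(x_h)$, closing the induction.

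To prove the flow identity I would unfold $p^{\pi_{-i}^t}_{1:h+1}(x_{h+1})$ at the state level: each state $s_{h+1}$ counted on the left has a unique step-$h$ ancestor $s_h\in x_h$ reached via player $i$'s action $a_h$, and $p^{\pi_{-i}^t}_{1:h+1}(s_{h+1}) = p^{\pi_{-i}^t}_{1:h}(s_h)\, p_h(s_{h+1}\vert s_h,\ba_h)\prod_{j\neq i}\pi_{j,h}^t(a_{j,h}\vert x_{j,h})$. Summing over all such $s_{h+1}$ first marginalizes the transition ($\sum_{s_{h+1}}p_h(\cdot\vert s_h,\ba_h)=1$) and then the opponents' step-$h$ actions ($\prod_{j\neq i}\sum_{a_{j,h}}\pi_{j,h}^t=1$), leaving $\sum_{s_h\in x_h} p^{\pi_{-i}^t}_{1:h}(s_h)=p^{\pi_{-i}^t}_{1:h}(x_h)$; this is the same style of marginalization already used in the proof of Lemma~\ref{lemma:pnu}. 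Finally, part (a) follows directly from part (b): bounding $\L_{i,h}^t(x_h,a_h)\le (H-h+1)\,p^{\pi_{-i}^t}_{1:h}(x_h)$ inside the weighted sum and invoking Lemma~\ref{lemma:pnu}(a) (namely $\sum_{(x_h,a_h)}\pi_{i,1:h}(x_h,a_h)\,p^{\pi_{-i}^t}_{1:h}(x_h)=1$) yields the bound $H-h+1$. The main obstacle is getting the reaching-probability bookkeeping in the flow identity exactly right---in particular tracking which players' action probabilities are included in $p^{\pi_{-i}^t}_{1:h}$ and correctly conditioning on player $i$'s committed action $a_h$---but once this marginalization is set up, the remaining steps are routine.
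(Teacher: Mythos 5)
Your proof is correct, but it takes a genuinely different route from the paper's. The paper argues probabilistically: for part (a) it identifies the weighted sum $\sum_{(x_h,a_h)}\pi_{i,1:h}(x_h,a_h)\L_{i,h}^t(x_h,a_h)$ with the expected cumulative loss of player $i$ from step $h$ onward under $(\pi_i,\pi_{-i}^t)$, which is trivially at most $H-h+1$; for part (b) it picks a policy $\pi_i^{x_h,a_h}$ with $\pi_{i,1:h}^{x_h,a_h}(x_h,a_h)=1$, writes $\L_{i,h}^t(x_h,a_h)$ as an expectation of $\indic{{\rm visit}~(x_h,a_h)}\cdot\sum_{h'\ge h}(1-r_{i,h'})$ under that policy, and factors it into the visitation probability (at most $p^{\pi_{-i}^t}_{1:h}(x_h)$) times a conditional expectation (at most $H-h+1$). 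You instead work entirely at the level of the sequence-form definitions: a one-step recursion for $\L_{i,h}^t$, a state-level flow-conservation identity $\sum_{x_{h+1}\in\cC(x_h,a_h)}p^{\pi_{-i}^t}_{1:h+1}(x_{h+1})=p^{\pi_{-i}^t}_{1:h}(x_h)$, backward induction for (b), and then (a) as an immediate corollary of (b) via Lemma~\ref{lemma:pnu}(a). Both arguments are sound, and I verified that your recursion (grouping descendants by their step-$(h+1)$ ancestor) and your flow identity (marginalizing the chance transition and the opponents' step-$h$ actions) are exactly right. The paper's route is shorter but leans on trajectory-level interpretations of the sequence-form sums that it asserts rather than verifies; your route is more mechanical and self-contained, makes explicit two reusable structural facts (the recursion and the flow identity), and derives (a) from (b) in the same way the paper derives Corollary~\ref{cor:averge_loss_bound} from Lemma~\ref{lemma:pnu}. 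The cost, which you correctly flag, is the bookkeeping of which players' action probabilities sit inside $p^{\pi_{-i}^t}_{1:h}$ during the marginalization.
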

\begin{proof}
  Part (a) follows from the fact that
  \begin{align*}
    \sum_{(x_h, a_h)\in\cX_{i,h}\times\cA_i} \pi_{i, 1:h}(x_h, a_h) \L_{i,h}^t(x_h, a_h) = \E_{\pi_i, \pi_{-i}^t}\brac{ \sum_{h'=h}^H r_{h'} } \le H-h+1,
  \end{align*}
  where the first equality follows from the definition of the loss functions $\l_h$ and $\L_h$ in~\eqref{equation:l-definition},~\eqref{equation:L-definition}.

  For part (b), the nonnegativity follows clearly by definition. For the upper bound, take any policy $\pi_i^{x_h, a_h}\in\Pi_i$ such that $\pi^{x_h, a_h}_{i, 1:h}(x_h, a_h)=1$. We then have
  \begin{align*}
    & \quad \L_{i,h}^t(x_h, a_h) = \pi^{x_h, a_h}_{i, 1:h}(x_h, a_h) \L_{i,h}^t(x_h, a_h) = \E_{\pi_i^{x_h, a_h}, \pi_{-i}^t}\brac{ \indic{{\rm visit}~x_h, a_h} \cdot \sum_{h'=h}^H r_{h'} } \\
    & = \P_{\pi_i^{x_h, a_h}, \pi_{-i}^t}\paren{{\rm visit}~x_h, a_h} \cdot \E_{\pi_i^{x_h, a_h}, \pi_{-i}^t}\brac{ \sum_{h'=h}^H r_{h'} \bigg| {\rm visit}~x_h, a_h} \\
    & \le \pi^{x_h, a_h}_{i, 1:h}(x_h, a_h) p^{\pi_{-i}^t}_{1:h}(x_h) \cdot (H-h+1) = p^{\pi_{-i}^t}_{1:h}(x_h) \cdot (H-h+1).
  \end{align*}
  This proves the lemma. 
\end{proof}


\subsection{Balanced exploration policy} \label{appendix:balanced-exploration-policy}

Here we collect properties of the balanced exploration policy $\pi^{\star,h}_i$ (cf. Definition~\ref{definition:balanced-policy}). Most results below have appeared in~\citep[Appendix C.2]{bai2022near} in the two-player zero-sum setting; For consistency of notation here we present them in our setting of multi-player general-sum IIEFGs. 

We begin by providing an interpretation of the balanced exploration policy $\pi^{\star, h}_{i, 1:h}$: its inverse $1/\pi^{\star, h}_{i, 1:h}$ can be viewed as the (product) of a ``transition probability'' over the game tree for the $i$'th player. 

For any $1 \le h \le H$ and $1 \le k \le h - 1$, define $p^{\star, h}_{i, k}(x_{k+1} \vert x_k, a_k) = \vert \cC_h (x_{k+1}) \vert  / 
\vert \cC_h(x_{k}, a_k) \vert$ (we use the convention that $\vert \cC_h(x_h) \vert = 1$). By this definition, $p^{\star, h}_{i, k}(\cdot \vert x_k, a_k)$ is a probability distribution over $\cC_h(x_k, a_k)$ and can be interpreted as a balanced transition probability from $(x_k, a_k)$ to $x_{k+1}$. The sequence-form of this balanced transition probability takes the form
\begin{align}\label{eqn:balanced_transition}
p^{\star, h}_{i, 1:h}(x_h) = \frac{\vert \cC_h(x_1) \vert}{X_{i, h}} \prod_{k=1}^{h-1} p^{\star, h}_{i, k}(x_{k+1} \vert x_k, a_k) = \frac{\vert \cC_h(x_1) \vert}{X_{i,h}} \prod_{k = 1}^{h-1} \frac{\vert \cC_h (x_{k+1}) \vert}{\vert \cC_h(x_{k}, a_k) \vert}. 
\end{align}

\begin{lemma}\label{lem:balancing_transition_relation}
For any $(x_h, a_h) \in \cX_{i,h} \times \cA_i$, the sequence form of the transition $p^{\star, h}_{i, 1:h}(x_h)$ and the sequence form of balanced exploration policy $\pi^{\star, h}_{i, 1:h}(x_h, a_h)$ are related by 
\begin{equation}\label{eqn:balanced_strategy_transition_relation}
p^{\star, h}_{i, 1:h}(x_h) = \frac{1}{X_{i,h} A_i \cdot \pi^{\star, h}_{i, 1:h}(x_h, a_h)}. 
\end{equation}
Furthermore, for any $i$-th player's policy $\pi_i \in \Pi_i$ and any $h \in [H]$, we have 
\begin{equation}\label{eqn:balanced_transition_unity}
\sum_{(x_h, a_h) \in \cX_{i,h} \times \cA_i} \pi_{i, 1:h}(x_h, a_h) p^{\star, h}_{i, 1:h} (x_h) = 1. 
\end{equation}
\end{lemma}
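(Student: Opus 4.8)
The plan is to prove the two displays \eqref{eqn:balanced_strategy_transition_relation} and \eqref{eqn:balanced_transition_unity} separately, both by exploiting the telescoping structure created by the descendant-counting identity $\sum_{x_{k+1}\in\cC(x_k,a_k)}\abs{\cC_h(x_{k+1})}=\abs{\cC_h(x_k,a_k)}$. This identity holds because the layer-$h$ descendants of the immediate children of $(x_k,a_k)$ partition the layer-$h$ descendants of $(x_k,a_k)$, which is a direct consequence of the tree structure and perfect recall.

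For \eqref{eqn:balanced_strategy_transition_relation}, I would directly multiply the two sequence forms. Writing $\pi^{\star,h}_{i,1:h}(x_h,a_h)=\tfrac{1}{A_i}\prod_{k=1}^{h-1}\tfrac{\abs{\cC_h(x_k,a_k)}}{\abs{\cC_h(x_k)}}$ (the factor $1/A_i$ being $\pi^{\star,h}_{i,h}(a_h|x_h)$, which is also why the claimed right-hand side does not depend on $a_h$) and using \eqref{eqn:balanced_transition} for $p^{\star,h}_{i,1:h}(x_h)$, the $\abs{\cC_h(x_k,a_k)}$ factors cancel and the product becomes
\[
p^{\star,h}_{i,1:h}(x_h)\,\pi^{\star,h}_{i,1:h}(x_h,a_h)=\frac{\abs{\cC_h(x_1)}}{X_{i,h}A_i}\prod_{k=1}^{h-1}\frac{\abs{\cC_h(x_{k+1})}}{\abs{\cC_h(x_k)}}.
\]
The remaining product telescopes to $\abs{\cC_h(x_h)}/\abs{\cC_h(x_1)}=1/\abs{\cC_h(x_1)}$ by the convention $\abs{\cC_h(x_h)}=1$, leaving $1/(X_{i,h}A_i)$; rearranging yields \eqref{eqn:balanced_strategy_transition_relation}.

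For \eqref{eqn:balanced_transition_unity}, I would first sum out $a_h$ via $\sum_{a_h}\pi_{i,h}(a_h|x_h)=1$, reducing the claim to $\sum_{x_h\in\cX_{i,h}}\pi_{i,1:h-1}(x_h)\,p^{\star,h}_{i,1:h}(x_h)=1$, where $\pi_{i,1:h-1}(x_h)\defeq\prod_{k=1}^{h-1}\pi_{i,k}(a_k|x_k)$ is the reach weight of $x_h$. Conceptually, $\pi_{i,1:h-1}(x_h)p^{\star,h}_{i,1:h}(x_h)$ is the probability of reaching $x_h$ in the Markov chain that samples a root $x_1$ with probability $\abs{\cC_h(x_1)}/X_{i,h}$ (a valid distribution since the $\cC_h(x_1)$ partition $\cX_{i,h}$), then alternates drawing $a_k\sim\pi_{i,k}(\cdot|x_k)$ and $x_{k+1}\sim p^{\star,h}_{i,k}(\cdot|x_k,a_k)$, so summing over all $x_h$ must give $1$. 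To make this rigorous I would run a backward induction on the layer index: define $F(x_\ell)\defeq\sum_{x_h\in\cC_h(x_\ell)}\prod_{k=\ell}^{h-1}\pi_{i,k}(a_k|x_k)\tfrac{\abs{\cC_h(x_{k+1})}}{\abs{\cC_h(x_k,a_k)}}$, establish the base case $F(x_h)=1$ (empty product), and in the inductive step group the sum by $(a_\ell,x_{\ell+1})$, peel off the $k=\ell$ factor, recognize the inner sum as $F(x_{\ell+1})=1$, and collapse $\sum_{x_{\ell+1}\in\cC(x_\ell,a_\ell)}\abs{\cC_h(x_{\ell+1})}=\abs{\cC_h(x_\ell,a_\ell)}$ to get $F(x_\ell)=\sum_{a_\ell}\pi_{i,\ell}(a_\ell|x_\ell)=1$. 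The claim then follows from $\sum_{x_h}\pi_{i,1:h-1}(x_h)p^{\star,h}_{i,1:h}(x_h)=\sum_{x_1}\tfrac{\abs{\cC_h(x_1)}}{X_{i,h}}F(x_1)=\tfrac{1}{X_{i,h}}\sum_{x_1}\abs{\cC_h(x_1)}=1$.

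The only real obstacle is organizing the bookkeeping so that the telescoping and the backward induction stay clean; the single substantive ingredient is the partition identity for descendant counts, and the two conventions (namely $\abs{\cC_h(x_h)}=1$ and the normalization of the root distribution $\abs{\cC_h(x_1)}/X_{i,h}$) must be tracked carefully. Everything else reduces to routine algebra.
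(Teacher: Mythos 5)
Your proposal is correct and follows essentially the same route as the paper: part one is the identical telescoping computation using $\vert \cC_h(x_h)\vert = 1$ (you multiply the two sequence forms where the paper expands the reciprocal, which is the same algebra), and part two is exactly the probabilistic ``visit probabilities sum to one'' argument that the paper invokes by analogy to Lemma~\ref{lemma:pnu}(a). Your backward induction with the partition identity $\sum_{x_{k+1}\in\cC(x_k,a_k)}\vert\cC_h(x_{k+1})\vert=\vert\cC_h(x_k,a_k)\vert$ simply makes explicit the bookkeeping that the paper leaves implicit.
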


\begin{proof}
By the definition of the balanced transition probability as in Eq. (\ref{eqn:balanced_transition}) and the balanced exploration policy as in Eq. (\ref{eqn:balanced-strategy}), we have 
\[
\frac{1}{X_{i,h} A_i \cdot \pi^{\star, h}_{i, 1:h}(x_h, a_h)} = \frac{1}{X_{i,h} A_i} \prod_{k = 1}^{h-1} \frac{\vert \cC_h(x_k) \vert}{\vert \cC_h(x_k, a_k) \vert} \times A_i = \frac{\vert \cC_h(x_1) \vert}{X_{i,h}} \prod_{k = 1}^{h-1} \frac{\vert \cC_h(x_{k+1}) \vert}{\vert \cC_h(x_k, a_k) \vert} = p^{\star, h}_{i, 1:h}(x_h). 
\]
where the second equality used the property that $\vert \cC_h(x_h)\vert = 1$. This proves Eq. (\ref{eqn:balanced_strategy_transition_relation}). The proof of Eq. (\ref{eqn:balanced_transition_unity}) is similar to the proof of Lemma \ref{lemma:pnu} (a). 
\end{proof}

\begin{lemma}[Balancing property of $\pi_i^{\star, h}$]
  \label{lemma:balancing}
  For any \ith player's policy $\pi_i \in\Pi_i$ and any $h\in[H]$, we have
  \begin{align*}
    \sum_{(x_h, a_h)\in \mc{X}_{i, h}\times \mc{A}_i} \frac{\pi_{i, 1:h}(x_h, a_h)}{\pi^{\star, h}_{i, 1:h}(x_h, a_h)} = X_{i,h}A_i.
  \end{align*}
\end{lemma}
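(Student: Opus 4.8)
The plan is to derive this identity directly from the two facts already established in Lemma~\ref{lem:balancing_transition_relation}, namely the strategy--transition relation~\eqref{eqn:balanced_strategy_transition_relation} and the unity relation~\eqref{eqn:balanced_transition_unity}. The key observation is that~\eqref{eqn:balanced_strategy_transition_relation} lets us rewrite the reciprocal of the sequence-form balanced exploration policy as a (constant multiple of the) sequence-form balanced \emph{transition} probability. Concretely, for any $(x_h,a_h)\in\cX_{i,h}\times\cA_i$, rearranging~\eqref{eqn:balanced_strategy_transition_relation} gives
\begin{align*}
  \frac{1}{\pi^{\star, h}_{i, 1:h}(x_h, a_h)} = X_{i,h} A_i \cdot p^{\star, h}_{i, 1:h}(x_h),
\end{align*}
which crucially no longer depends on $a_h$ on the right-hand side (the $a_h$-dependence of the left-hand side is exactly cancelled by the factor inside $p^{\star,h}_{i,1:h}(x_h)$).

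The main step is then a one-line substitution. Multiplying both sides by $\pi_{i, 1:h}(x_h, a_h)$ and summing over all $(x_h,a_h)\in\cX_{i,h}\times\cA_i$, I would factor out the constant $X_{i,h}A_i$ to obtain
\begin{align*}
  \sum_{(x_h, a_h)\in \cX_{i, h}\times \cA_i} \frac{\pi_{i, 1:h}(x_h, a_h)}{\pi^{\star, h}_{i, 1:h}(x_h, a_h)} = X_{i,h} A_i \sum_{(x_h, a_h)\in \cX_{i, h}\times \cA_i} \pi_{i, 1:h}(x_h, a_h)\, p^{\star, h}_{i, 1:h}(x_h).
\end{align*}
Applying the unity relation~\eqref{eqn:balanced_transition_unity}, the remaining sum on the right equals $1$, yielding $X_{i,h}A_i$ as claimed.

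There is essentially no substantive obstacle here, since both ingredients are taken as given from the preceding lemma; the only thing I would be careful about is that~\eqref{eqn:balanced_strategy_transition_relation} is an identity valid for \emph{every} fixed $a_h$, so that the substitution is legitimate action-by-action before summing. It is worth noting where the real content actually sits: the balancing identity is a restatement of~\eqref{eqn:balanced_transition_unity}, which in turn is the balanced-transition analogue of the reaching-probability normalization in Lemma~\ref{lemma:pnu}(a) (with $p^{\star,h}_{i,1:h}$ playing the role of $p^{\pi_{-i}}_{1:h}$). Thus the proof is purely a bookkeeping reduction, and the heavy lifting — verifying that $1/(X_{i,h}A_i\,\pi^{\star,h}_{i,1:h})$ is a genuine sequence-form transition that sums to one against any policy — has already been done upstream.
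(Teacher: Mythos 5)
Your proof is correct and follows exactly the paper's own argument: the paper proves Lemma~\ref{lemma:balancing} as a direct consequence of Eq.~(\ref{eqn:balanced_strategy_transition_relation}) and (\ref{eqn:balanced_transition_unity}) from Lemma~\ref{lem:balancing_transition_relation}, which is precisely the substitute-and-sum reduction you spell out. Your additional remark about the $a_h$-independence of the rearranged identity is a valid and worthwhile clarification, but the route is the same.
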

\begin{proof}
Lemma~\ref{lemma:balancing} follows as a direct consequence of Eq. (\ref{eqn:balanced_strategy_transition_relation}) and (\ref{eqn:balanced_transition_unity}) in Lemma \ref{lem:balancing_transition_relation}.
\end{proof}


Lemma~\ref{lemma:balancing} states that $\pi_i^{\star, h}$ is a good exploration policy in the sense that the distribution mismatch between it and \emph{any} $\pi_i\in\Pi_i$ has bounded $L_1$ norm. Further, the bound $X_{i,h}A_i$ is non-trivial---For example, if we replace $\pi^{\star, h}_{i, 1:h}$ with the uniform policy $\pi^{\rm unif}_{i, 1:h}(x_h, a_h)=1/A_i^h$, the left-hand side can be as large as $X_{i,h}A_i^h$ in the worst case.

\section{Proofs for Section \ref{section:def}}\label{section:appendix-relation}

\subsection{Proof of Proposition \ref{proposition:trigger}}
\label{appendix:proof-trigger}
\begin{proof}
  It suffices to consider all trigger policy $\triggerpolicy$ where $\hat\pi_i$ is a pure policy (at each infoset $x_{h'}$, $\hat\pi_i$ chooses action $\hat\pi_i(x_{h'})$ deterministically.). We prove the two claims separately. 

\noindent
{\bf Step 1.} We first show that $\triggergap(\wb{\pi})\le \oneefcegap(\wb{\pi})$ for any correlated policy $\wb{\pi}$.  We first claim that, for any trigger policy ${\sf trig}(\pi_i, \hat\pi_i, (x_{i,h}^\star, a^\star))$ where $\pi_i, \hat \pi_i \in \Pi_i$, $x_{i,h}^\star \in \cX_{i, h}$, and $a^\star \in \cA_i$, there exists an $\oneefce$ strategy modification $\phi^\star \in \Phi_i^1$ such that, for any opponent's policy $\pi_{-i} \in \Pi_{-i}$, we have
\[
V_i^{{\sf trig}(\pi_i, \hat\pi_i, (x_{i,h}^\star, a^\star))\times\pi_{-i}} = V_i^{(\phi^\star \circ \pi_i) \times\pi_{-i}}.
\]
Given this claim, for any correlated policy $\wb\pi$, we have as desired
\begin{align*}
    \triggergap(\wb{\pi}) = & \max_{i\in[m]} \max_{(x_i,a) \in \cX_i \times \cA_i}
    \max_{\hat\pi_i \in \Pi_i}\paren{ 
    \E_{\pi\sim\bar\pi} V_i^{\triggerpolicy \times\pi_{-i}} - \E_{\pi\sim\bar\pi} V_i^{\pi}
    }\\
    \le & \max_{i\in[m]} \max_{\phi \in \Phi_{i}^1}
    \paren{ 
    \E_{\pi\sim\bar\pi} V_i^{(\phi\circ\pi_{i})\times\pi_{-i}} - \E_{\pi\sim\bar\pi} V_i^{\pi}
    } = \oneefcegap(\wb{\pi}).
\end{align*}

To prove such a claim, we can choose the $\oneefce$ strategy modification $\phi^\star$ to be the following: (1) At any Type-I {\infoseq} with infoset $x = x_{i,h}^\star$, $\phi^\star$ swaps $a_\star$ to $\hat \pi_i(x_{i, h}^\star)$ and swaps $a$ to $a$ for any $a \neq a_\star$; (2) At any Type-I {\infoseq} with infoset $x$ such that $x \neq x_{i, h}^\star$ and $x \not \succ (x_{i, h}^\star, a^\star)$, $\phi^\star$ does not swap the recommended action (swap the recommended action to itself); (3) At any Type-I {\infoseq} and Type-II {\infoseq} with infoset $x \succ (x_{i, h}^\star, a^\star)$, $\phi^\star$ chooses action $\hat \pi_i(x)$ (no matter seeing recommendation or not); (4) For any {\infoseq} that does not fall into the above categories, $\phi^\star$ can be arbitrarily defined since those {\infoseqs} will not be encountered by the design of $\phi^\star$ as above. It is easy to see that such an $\oneefce$ strategy modification $\phi^\star$ applied on any $\pi_i$ implements the trigger policy ${\sf trig}(\pi_i, \hat\pi_i, (x_{i,h}^\star, a^\star))$ so that their value functions are equal. This proves the claim. 

\noindent
{\bf Step 2.} We next show that $\oneefcegap(\wb{\pi}) \le \max_{i \in [m]} X_i A_i \cdot \triggergap(\wb{\pi})$ for any correlated policy $\wb{\pi} \in \Delta(\Pi)$.
For any $1 \mhyphen \efce$ strategy modification $\phi \in \Phi_i^1$ and any $\pi_i \in \Pi_i$, by decomposing the first time step $h$ in which $\phi\circ\pi_i(x_i) \not= \pi_i(x_i)$, we have the decomposition of identity
{\proofsize
\begin{align*}
    1 = &~ \sum_{h = 1}^H\sum_{x_{h} \in \cX_{i,h}} \sum_{a_h \in \cA_i} \indic{x_{h} ~ \text{is visited},~a_{1:h}~\text{are recommended before,}~ \text{and}~ \phi(x_{h}, a_{1:h}) \not= a_h}  \\
    &~ + \sum_{x_{H} \in \cX_{i,H}} \sum_{a_H \in \cA_i} \indic{x_{H} ~ \text{is visited},~a_{1:H}~\text{are recommended before,}~  \text{and}~ \phi(x_{H}, a_{1:H}) = a_H}.
\end{align*}
}As a consequence, for any $\phi \in \Phi_i^1$, $\pi_i \in \Pi_i$ and $\pi_{-i} \in \Pi_{-i}$, we have
{\proofsize
\begin{align*}
    &~ V_i^{(\phi \circ \pi_i) \times \pi_{-i}} - V_i^\pi =  (\E_{(\phi \circ \pi_i) \times \pi_{-i}} - \E_{\pi}) \brac{\sum_{k = 1}^H r_{i, k}}\\
    = &~ (\E_{(\phi \circ \pi_i) \times \pi_{-i}} - \E_{\pi}) \brac{\sum_{h = 1}^H\sum_{x_{h} \in \cX_{i,h}} \sum_{a_h \in \cA_i}  \indic{x_{h} ~ \text{is visited},~a_{1:h}~\text{are recommended before,}~  \text{and}~ \phi(x_{h}, a_{1:h}) \not= a_h} \sum_{k = 1}^H r_{i, k}} \\
    &~ + (\E_{(\phi \circ \pi_i) \times \pi_{-i}} - \E_{\pi}) \brac{\sum_{x_{H} \in \cX_{i,H}} \sum_{a_H \in \cA_i} \indic{x_{H} ~ \text{is visited},~a_{1:H}~\text{are recommended before,}~  \text{and}~ \phi(x_{H}, a_{1:H}) = a_H} \sum_{k = 1}^H r_{i, k}}\\
    = &~ \sum_{h = 1}^H\sum_{x_{h} \in \cX_{i,h}} \sum_{a_h \in \cA_i} (\E_{(\phi \circ \pi_i) \times \pi_{-i}} - \E_{\pi}) \brac{ \indic{x_{h} ~ \text{is visited},~a_{1:h}~\text{are recommended before,}~  \text{and}~ \phi(x_{h}, a_{1:h}) \not= a_h} \sum_{k = 1}^H r_{i, k}},
\end{align*}
}where the last equality used two facts: $(i)$ fixing $x_1, a_1, \cdots, x_H, a_H$, supposing that $\phi(x_{h}, a_{1:h}) = a_h$ for all $h \le H$, then the probability of $x_H$ is visited and $a_{1:H}$ are recommended are the same under $(\phi \circ \pi_i) \times \pi_{-i}$ and $\pi$; $(ii)$ the randomness of $\sum_{k = 1}^H r_{i, k}$ is independent of policy when fixing $x_1, a_1, \cdots, x_H, a_H$. So the second quantity of left hand size of that single equality is zero.

For any $(x_h, a_h) \in \cX_{i, h} \times \cA_i$, $\phi \in \Phi_i^1$ and $\pi_i \in \Pi_i$, we define the trigger policy ${\sf trig}(\pi_i, (\phi \circ \pi_i), (x_{h}, a_h))$ to be a policy that plays $\pi_i$ before triggered by $(x_h, a_h)$ and plays $\phi \circ \pi_i$ after triggered by $(x_h, a_h)$. 
Supposing that $\phi(x_{h'}, a_{1:h'})= a_{h'}$, $\forall h' < h$ and $\phi(x_{h}, a_{1:h}) \not= a_h$, the probability of $x_h$ is visited and $a_{1:h }$ are recommended are the same the same under $(\phi \circ \pi_i) \times \pi_{-i}$ and ${\sf trig}(\pi_i, (\phi \circ \pi_i), (x_h, a_h))  \times \pi_{-i}$, which gives

{\proofsize
\begin{equation}
  \label{equation:element-efce-1ce}
    \begin{aligned}
     & \quad \E_{(\phi \circ \pi_i) \times \pi_{-i}} \brac{\indic{x_{h} ~ \text{is visited},~a_{1:h}~\text{are recommended before,}~ \text{and}~ \phi(x_{h}, a_{1:h}) \not= a_h} \sum_{h = 1}^H r_{i, h}}\\
    & =  \E_{{\sf trig}(\pi_i, (\phi \circ \pi_i), (x_h, a_h))  \times \pi_{-i}} \brac{\indic{x_{h} ~ \text{is visited} ,~a_{1:h}~\text{are recommended before,}~\text{and}~ \phi(x_{h}, a_{1:h}) \not= a_h} \sum_{h = 1}^H r_{i, h}}.
    \end{aligned}
\end{equation}
}


Consequently, we have

{\proofsize
\begin{equation}\label{eqn:proof_containments_1efce_trigger_eq1}
\begin{aligned}
     &~ \E_{\pi\sim \wb\pi}\paren{ V_i^{(\phi \circ \pi_i) \times \pi_{-i}} - V_i^\pi }\\
     = &~ \sum_{h = 1}^H\sum_{x_{h} \in \cX_{i,h}} \sum_{a_h \in \cA_i} \E_{\pi\sim \wb\pi} \Big(\E_{(\phi \circ \pi_i) \times \pi_{-i}} - \E_{\pi} \Big) \brac{ \indic{x_{h} ~ \text{is visited},~a_{1:h}~\text{are recommended before,}~\text{and}~ \phi(x_{h}, a_{1:h}) \not= a_h} \sum_{h = 1}^H r_{i, h}} \\
     \stackrel{(i)}{=} & ~ \sum_{h = 1}^H\sum_{x_{h} \in \cX_{i,h}}\sum_{a_h \in \cA_i} \indic{\phi(x_{h}, a_{1:h}) \not= a_h}\\
     &~ \times \E_{\pi\sim \wb\pi}  \Big(\E_{{\sf trig}(\pi_i, (\phi \circ \pi_i), (x_h, a_h)) \times \pi_{-i}} - \E_{\pi}\Big)\brac{\indic{x_{h} ~ \text{is visited}~and~a_{1:h}~\text{are recommended before}}\sum_{h = 1}^H r_{i, h} }\\
     \stackrel{(ii)}{=}&~\sum_{h = 1}^H\sum_{x_{h} \in \cX_{i,h}}\sum_{a_h \in \cA_i} \indic{\phi(x_{h}, a_{1:h}) \not= a_h} \E_{\pi\sim \wb\pi}  \Big(\E_{{\sf trig}(\pi_i, (\phi \circ \pi_i), (x_h, a_h)) \times \pi_{-i}} - \E_{\pi}\Big)\brac{\sum_{h = 1}^H r_{i, h} }\\
    \stackrel{(iii)}{\le}&~ \sum_{h = 1}^H\sum_{x_{h} \in \cX_{i,h}}  \sum_{a_h \in \cA_i} \triggergap(\wb\pi) = X_i A_i \cdot \triggergap(\wb\pi)
\end{aligned}
\end{equation}
}Here in (i) we used equation (\ref{equation:element-efce-1ce}); in (iii) we bound the indicator by $1$ and use the fact that $\triggergap$ is non-negative (by the observation that in the definition (\ref{equation:trigger-def}), we can choose $x_i$ to be some leaf infoset $x_{i,H}$ and choose $\hat \pi_i(x_{i,H}) = a$ so that $\triggerpolicy = \pi_i$); in (ii) we use the fact that ${\sf trig}(\pi_i, (\phi \circ \pi_i), (x_h, a_h))$ and $\pi$ are identical on any infoset $x$ such that $x \neq x_{h}$ and $x \not \succ (x_h, a_h)$, so that

{\proofsize
\[
\begin{aligned}
&~\Big(\E_{{\sf trig}(\pi_i, (\phi \circ \pi_i), (x_h, a_h)) \times\pi_{-i}} - \E_{\pi} \Big)\brac{\indic{a_{1:h}~\text{are not recommended before} ~ \text{or}~x_{h} ~ \text{is not visited}}\sum_{h = 1}^H r_{i, h} } = 0.
\end{aligned}
\]
}Finally, take supermum over $\phi \in \Phi_i^1$ and then take supermum over $i \in [m]$, we get
\begin{align*}
    \oneefcegap(\wb\pi) = \max_{i \in [m]} \max_{\phi \in \Phi_i^1}\E_{\pi\sim \wb\pi}\paren{ V_i^{(\phi \circ \pi_i) \times \pi_{-i}} - V_i^\pi }  \le \max_{i \in [m]} X_i A_i \cdot \triggergap(\wb\pi).
\end{align*}
This proves the lemma. 
\end{proof}

\subsection{Proof of Proposition \ref{proposition:containment}}
\label{appendix:proof-containment}



\begin{proof}
We prove the two claims separately as follows.

    
\paragraph{Proof of $\kefcegap(\wb{\pi}) \le \kpefcegap(\wb{\pi})$} We claim that, for any $K\ge 0$ and strategy modification $\phi \in \Phi_i^{K}$, there exists $\phi' \in \Phi_i^{K + 1}$ such that for any policy $\pi_i \in \Pi_i$, we have that $\phi \circ \pi_i$ and $\phi' \circ \pi_i$ gives the same policy. Given this claim, we have
\begin{align*}
    \max_{\phi \in \Phi_{i}^K}
    \E_{\pi\sim\bar\pi} V_i^{(\phi\circ\pi_{i})\times\pi_{-i}}
    \le \max_{\phi \in \Phi_{i}^{K+1}}
    \E_{\pi\sim\bar\pi} V_i^{(\phi\circ\pi_{i})\times\pi_{-i}}.
\end{align*}
This implies $ \kefcegap(\wb{\pi}) \le (K+1)\mhyphen{\rm EFCEGap}(\wb{\pi}) $ for all $K \ge 0$. 

To prove such a claim, we can choose the strategy modification $\phi' \in \Phi_{i}^{K+1}$ to be the following: (1) For any {\infoseq} $(x_{i, h}, b_{1: h - 1}) \in \omegai \cap \Omega_i^{({\rm I}), K+1}$ and any action $a_h \in \cA_i$, we set $\phi'(x_{i, h}, b_{1: h-1}, a_h) = \phi(x_{i, h}, b_{1: h - 1}, a_h)$; (2) For any {\infoseq} $(x_{i, h}, b_{1: h - 1}) \in \Omega_i^{({\rm I}), K+1} \setminus \omegai$ and any action $a_h \in \cA_i$, we set $\phi'(x_{i, h}, b_{1: h - 1}, a_h) = \phi(x_{i, h}, b_{1: h_{\star\star}})$ where $h_{\star \star} = \inf \{ k \le h: \sum_{h''=1}^{k} \indic{a_{h''} \not= b_{h''}} = K \}$; (3) For any {\infoseq} $(x_{i, h}, b_{1: h'}) \in \Omega_i^{({\rm II}), K+1}$, we set $\phi'(x_{i, h}, b_{1: h'})= \phi(x_{i, h}, b_{1: h_{\star \star}})$ where $h_{\star\star} = \inf \{ k \le h': \sum_{h''=1}^{k} \indic{a_{h''} \not= b_{h''}} = K \}$. It is easy to see that for any $\pi_i \in \Pi_i$, we have that $\phi' \circ \pi_i$ is the same as $\phi \circ \pi_i$. This proves the claim.

\paragraph{Example of a game and a $\wb{\pi}$ with $\kefcegap(\wb{\pi}) = 0$ but $\kpefcegap(\wb{\pi}) \ge 1/3$}~

For any $K \ge 0$, we consider a two-player game with $H = K+1$ steps and perfect information. The action spaces are $\cA_1 = \set{1, 2}$ for the first player and $\cA_2 = \set{1, 2}$ for the second player in each time step. The state space $\mc{S} = \cup_{h = 1}^{H} \cS_h$ can be identified as $\cS_h =  \cA_1^{h-1}\times \cA_2^{h-1}$ for $h = 1, 2, \dots, K+1$ and both players' infosets are the same as the state space $\cX_1 = \cX_2 = \cS$. If action $(a_h, b_h)$ is taken at $s_h = (a_{1:h-1}, b_{1:h-1}) \in \cS_h$, then the environment will transit to the next state given by $s_{h + 1} = (a_{1: h}, b_{1: h})$. The reward for the second player is always $0$ at every time step. We design the reward for the first player (denoting as $r_h$ in short) as following:
    \begin{itemize}
        \item The reward $r_h(\cdot, \cdot) = 0$ when $h\le K$ for every state actions.
        \item The reward at $s_{K+1} = (a_{1:K+1}, b_{1:K+1})$ is defined as \[
        \begin{aligned}
            r_{K+1}(s_{K+1}, a_{K+1}, b_{K+1}) =&~ \indic{a_1 \not= b_1, \dots, a_{K+1} \not= b_{K+1}} \\
            &~+ \frac{1}{2}\cdot \indic{a_1 = b_1, \dots, a_{K+1} = b_{K+1}}.
        \end{aligned}
        \]
    \end{itemize}
    
Let $\Pi_\star = \{ (\pi_\star, \pi_\star) : \pi_\star \in \Pi_1\}$ where $\Pi_1$ is the set of pure policies of the first player. That means, $\Pi_\star$ is the set of pure policies such that two players take the same action (either $1$ or $2$) at each state. We define $\bar{\pi}$ as the uniform distribution over such a policy space $\Pi_\star$. We claim that $\kefcegap(\bar{\pi}) = 0$ but $\kpefcegap(\bar{\pi}) \ge 1/3$. Since the reward of the second player is always $0$, we only need to consider the value function gap of the first player. Note that the value function of the first player for the correlated policy $\wb{\pi}$ is $1/2$. 

We first consider the $\kpefcegap$. If the first player deviates from the recommended action in every time step (this is an allowed strategy modification in $\Phi_i^{K+1}$), she can receive reward $1$ so that her received value is $1$. As a consequence, we have 
$$
\kpefcegap(\wb{\pi}) \ge 1 - 1/2 > 1/3.
$$

We then consider the $\kefcegap$. If the first player chooses to deviate at any step, she need to play a different action from the second player at all time steps to receive an reward $1$, otherwise she will receive reward $0$. However, she is only allowed to see the recommendation $K$ times. There is at least one time step such that she cannot see the recommendation and she need to guess what is the second player's action. The probability that her guess coincides with the other player's action is $1/2$ no matter how she guess. So by deviating from the recommended action, the first player can receive a value at most $1/2$. That means, $\kefcegap(\wb{\pi}) \le 1/2 - 1/2 = 0$. This finishes the proof of the Proposition.
\end{proof}

\subsection{Relationship with other equilibria}
\newcommand{\purepi}{{\Pi^v}}
\label{appendix:relationship}

Throughout this subsection, in order to introduce the definition of NFCE, we reload the definition of a correlated policy to be a probability measure on all \emph{pure product policies} instead of general product policies. We let $\Pi_i^{\pure}$ denote the set of all possible pure policies for player $i$. Note that this does not affect our definition of $\kefce$ introduced in Section~\ref{section:def}. 

We first present the definitions of Normal-Form Correlated Equilibria ($\nfce$)  and Normal-Form Coarse Correlated Equilibria ($\nfcce$) (from e.g.~\citep{farina2020coarse}). For consistency with our $\kefce$ definition, we define both equilibria through defining their set of strategy modifications.  

\begin{definition}[$\nfce$~ strategy modification]
\label{definition:nfce-strategy-modification}
A \nfce ~strategy modification $\phi$ (for the \ith player) is a mapping $\phi(\cdot, \cdot) : \mc{X}_i \times \Pi_i^{\pure} \to \mc{A}_i$ . Let $\Phi_i^{\nfce}$ denote the set of all possible \nfce ~strategy modification for the \ith player.
For any $\phi\in \Phi_i^{\nfce}$, and any pure policy $\pi_i \in \Pi_i^{\pure}$, we define the modified policy $\phi \circ \pi_i$ as following: at infoset $x_{i,h}$, the modified policy $\phi \circ \pi_i$ takes action $\phi(x_{i, h}, \pi_i)$.
\end{definition}


\begin{definition}[$\nfcce$~ strategy modification]
A $\nfcce$ ~strategy modification $\phi$ (for the \ith player) is a mapping $\phi(\cdot ) : \mc{X}_i  \to \mc{A}_i$. Let $\Phi_i^{\nfcce}$ denote the set of all possible $\nfcce$ strategy modification for the \ith player.
For any $\phi\in \Phi_i^{\nfcce}$, and any pure policy $\pi_i \in \Pi_i^\pure$, we define the modified policy $\phi \circ \pi_i$ as following: at infoset $x_{i,h}$, the modified policy $\phi \circ \pi_i$ take action $\phi(x_{i, h} )$.
\end{definition}

At a high level, $\nfce$ has the ``strongest" form of strategy modifications, which can observe the entire pure policy $\pi_i$ (i.e. full set of recommendations on every infoset). $\nfcce$ has the ``weakest" form of strategy modifications, which cannot observe any recommendation at all (so that each $\phi\in\Phi_i^{\nfcce}$ is equivalent to a pure policy in $\Pi_i^\pure$).

\begin{definition}[$\nfce$ and $\nfcce$]
An $\epsilon$-approximate $\{\nfce$, $ \nfcce\}$ of a POMG is a correlated policy $\bar\pi$ such that 
\begin{align*}
    \set{ \nfce , \nfcce}{\rm Gap}(\wb{\pi}) \defeq \max_{i\in[m]} \max_{\phi \in \Phi_{i}^\set{\nfce,  \nfcce}}
    \paren{ 
    \E_{\pi\sim\bar\pi} V_i^{(\phi\circ\pi_{i})\times\pi_{-i}} - \E_{\pi\sim\bar\pi} V_i^{\pi}
    }
    \le \epsilon.
\end{align*}
We say $\bar\pi$ is an (exact) $\set{\nfce, \nfcce}$ if the above holds with $\epsilon=0$.
\end{definition}



\begin{proposition}[Relationship between $\kefce$ and $\nfce$, $\nfcce$]
\label{prop:nfce-nfcce-relation}
For any correlated policy $\wb\pi $, we have
\begin{enumerate}[label=(\alph*)]
    \item $\inftyefcegap(\wb{\pi})\le \nfcegap(\wb{\pi})$, i.e. $\nfce$ is stricter than $\inftyefce$ ($\nfcegap(\wb{\pi})\le \eps$ implies $\inftyefcegap(\wb{\pi})\le \eps$). 
    
    Further, the converse bound does not hold even if multiplicative factors are allowed: there exists a game with $H=2$ and a correlated policy $\wb{\pi}$ for which
    \begin{align*}
        \inftyefcegap(\wb{\pi})=0~~~{\rm but}~~~\nfcegap(\wb{\pi})> 1/20.
    \end{align*}
    \item $\zeroefcegap(\wb{\pi})= \nfccegap(\wb{\pi})$, i.e. $\zeroefce$ is equivalent to $\nfcce$.
\end{enumerate}
\end{proposition}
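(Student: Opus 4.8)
The plan is to realize every $\inftyefce$ strategy modification as an $\nfce$ strategy modification that induces an identical modified policy. At $K=\infty$ the number of past deviations can never reach $K$ (it is at most $H-1$), so Algorithm~\ref{algorithm:phi-circ-pi} always stays in the Type-I branch (Line~\ref{line:devif}) and the player observes a recommendation at every visited infoset. Fix a pure policy $\pi_i$ and an infoset $x_{i,h}$; by perfect recall there is a unique ancestor chain $x_{i,1},\dots,x_{i,h-1}$, and the recommendation history accumulated by Algorithm~\ref{algorithm:phi-circ-pi} upon reaching $x_{i,h}$ is exactly $(\pi_i(x_{i,1}),\dots,\pi_i(x_{i,h-1}))$, with current recommendation $\pi_i(x_{i,h})$. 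Since an $\nfce$ modification may read the entire pure policy $\pi_i$, I would set
\[
\phi'(x_{i,h},\pi_i)\defeq\phi\big(x_{i,h},(\pi_i(x_{i,1}),\dots,\pi_i(x_{i,h-1})),\pi_i(x_{i,h})\big)\in\cA_i,
\]
which is a legal element of $\Phi_i^{\nfce}$. A straightforward induction on $h$ shows that $\phi'\circ\pi_i$ and $\phi\circ\pi_i$ take the same action at every reached infoset, hence coincide as policies, so their values against any $\pi_{-i}$ agree. Taking maxima over modifications and over $i\in[m]$ then gives $\inftyefcegap(\wb\pi)\le\nfcegap(\wb\pi)$.

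\textbf{Separation in (a).} This is the crux and where I expect the real work. The idea is to build a game in which a recommendation at an \emph{off-path} infoset carries information useful on-path: an $\nfce$ deviation can read it (it sees the whole policy), whereas an $\inftyefce$ deviation cannot (it only sees recommendations along the trajectory actually played). I would take $H=2$ and two players, letting player $2$'s step-$1$ action encode a uniform hidden bit $\theta\in\{1,2\}$ that player $1$'s own infosets do not reveal. Player $1$ chooses at her root infoset $u$ between action $1$ (leading to an infoset $y$ that aggregates both values of $\theta$) and action $2$ (leading to an off-path infoset $z$). Player $2$'s rewards are identically $0$; player $1$'s reward is $\indic{c=\theta}$ when she plays $c$ at $y$ and $0$ at $z$. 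Let $\wb\pi$ be the uniform mixture of the two pure product policies with $\pi_1(u)=1$, $\pi_1(y)=1$, and crucially $\pi_1(z)=\theta=\pi_2(v)$, so the off-path recommendation at $z$ is perfectly correlated with $\theta$. Under $\wb\pi$ player $1$ earns $1/2$. The $\nfce$ deviation that plays $1$ at $u$ and $\pi_1(z)$ at $y$ reaches $y$, plays $\theta$, and earns $1$, so $\nfcegap(\wb\pi)\ge 1/2>1/20$. To see $\inftyefcegap(\wb\pi)=0$ I would check the two options at $u$: staying on the $y$-path only ever exposes $\pi_1(u),\pi_1(y)$, which are independent of $\theta$, so any action at $y$ earns expected reward $1/2$; deviating at $u$ to reach $z$ does reveal $\theta$ but strands the player at $z$, where the reward is $0$. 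Hence no $\inftyefce$ modification beats the value $1/2$. The main obstacle here is precisely this construction: since POMG rewards depend only on states and joint actions, the informative variable $\theta$ must be carried by a genuine player whose recommendation the mediator can correlate with player $1$'s off-path recommendation, rather than by an independent chance move.

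\textbf{Equivalence (b).} The plan is to show that $\Phi_i^0$ and $\Phi_i^{\nfcce}$ induce exactly the same family of modified policies, whence $\zeroefcegap=\nfccegap$ by comparing the (structurally identical) gap definitions. Unfolding Algorithm~\ref{algorithm:phi-circ-pi} at $K=0$: no Type-I rechistory can have $\le K-1=-1$ deviations, so $\omegai(x_{i,h})=\emptyset$ everywhere, the Type-I branch (Line~\ref{line:devif}) is never entered, the memory $\bbb$ stays empty, and at each step the player deterministically plays $\phi(x_{i,h},\emptyset)$, a function of $x_{i,h}$ alone. Thus $\phi\circ\pi_i$ is the fixed pure policy $x_{i,h}\mapsto\phi(x_{i,h},\emptyset)$, independent of $\pi_i$ --- exactly the behaviour of the $\nfcce$ modification $\phi'(x_{i,h})\defeq\phi(x_{i,h},\emptyset)$. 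As $\phi$ ranges over $\Phi_i^0$ this map ranges over all pure policies, matching $\Phi_i^{\nfcce}$ term for term, so the inner maxima --- and therefore the gaps --- agree for every player. The only delicate point is fixing the $K=0$ convention in the Type-II rechistory definition so that the empty history is the unique information state the player ever occupies.
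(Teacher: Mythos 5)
Your proposal is correct, and its overall architecture coincides with the paper's: the containment in (a) is proved by exactly the paper's embedding (for $\phi\in\Phi_i^{\inftyce}$, define $\phi'(x_{i,h},\pi_i)$ to be $\phi$ evaluated on the recommendation history $(\pi_i(x_{i,1}),\dots,\pi_i(x_{i,h-1}))$ read off the unique ancestor chain, plus the current recommendation $\pi_i(x_{i,h})$ — you are in fact slightly more careful than the paper, which suppresses the current recommendation in its notation), and part (b) is the same pointwise identification of $\Phi_i^0$ with $\Phi_i^{\nfcce}$ via $\phi\mapsto(x\mapsto\phi(x,\emptyset))$, including the same observation that at $K=0$ the Type-I branch of Algorithm~\ref{algorithm:phi-circ-pi} is never entered and $\bbb$ stays $\emptyset$.

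The one place you genuinely diverge is the separating example in (a), and your construction is different from (and cleaner than) the paper's. The paper uses a $2$-step perfect-information game where the mediator independently recommends the matched pair $(a_1,b_1)$ or $(a_2,b_2)$ at each of the five infosets; the $\nfce$ deviator profits by reading the recommendations at the two infosets $s_{2,1},s_{2,2}$ \emph{before} deciding at the root, and the resulting gap is $1/16$ (the profitable event has probability $1/8$ with gain $1/2$). Your game instead correlates player $1$'s recommendation at a single \emph{off-path} infoset $z$ with player $2$'s hidden action $\theta$, so the $\nfce$ deviator who plays $\pi_1(z)$ at $y$ wins with probability $1$, giving the larger separation $\nfcegap(\wb{\pi})\ge 1/2$, while the $\inftyefce$ analysis is a two-case check: staying on path exposes only constant recommendations (expected reward $\indic{c=\theta}$ averages to $1/2$), and deviating to $z$ reveals $\theta$ only after it has become worthless. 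Both examples exploit the same temporal-information asymmetry — an $\nfce$ deviation reads recommendations at infosets not yet (or never) visited, while an $\inftyefce$ deviation learns them only upon arrival, too late to act on — but your version isolates the phenomenon with fewer moving parts and a stronger constant; all the required checks (validity of the POMG with tree structure and perfect recall, well-definedness of the reward as a function of the step-$2$ state, legality of the $\nfce$ modification, and the gap computations) go through as you state them.
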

\begin{proof}
(a) We first show that $\inftyefcegap(\wb{\pi})\le \nfcegap(\wb{\pi})$. Indeed, for any $\inftyefce$ strategy modification $\phi \in \Phi_i^{\inftyce}$, we let $\phi' \in \Phi_i^\nfce$ such that for any $x_{i,h}\in\cX_i$ and $\pi_i\in\Pi_i^\pure$, $\phi'(x_{i, h}, \pi_i)\defeq \phi(x_{i, h}, b_{1:h-1})$ where $b_{1:h-1}\defeq (\pi_i(x_{i,1}), \dots, \pi_i(x_{i,h-1}))$ and $x_{i,1}\preceq \dots\prec x_{i,h-1}\prec x_{i,h}$ are the unique history of infosets leading to $x_{i,h}$.
By comparing the execution of $\phi\circ\pi$ (cf. Algorithm~\ref{algorithm:phi-circ-pi}) and $\phi'\circ \pi$ (cf. Definition~\ref{definition:nfce-strategy-modification}), the policy $\phi \circ \pi_i$ exactly implements (i.e. is the same as) $\phi' \circ \pi_i$. This gives 
\begin{align*}
    \inftyefcegap(\wb{\pi}) = \max_{\phi \in \Phi_{i}^\inftyce}
    \E_{\pi\sim\bar\pi} V_i^{(\phi\circ\pi_{i})\times\pi_{-i}}
    \le \max_{\phi \in \Phi_{i}^\nfce}
    \E_{\pi\sim\bar\pi} V_i^{(\phi\circ\pi_{i})\times\pi_{-i}} = \nfcegap(\wb{\pi}).
\end{align*}

We next prove the second claim (converse bound does not hold), by constructing the following example.

\begin{example}[$\inftyce$ is not $\epsilon$-approximate $\nfce$ with $\eps = 1/20 $]
We consider a two-player game with $2$ steps and perfect information. The infosets for both players is $\cX_{1, 1} = \cX_{2, 1} = \set{s_0}$ and $\cX_{2, 1} = \cX_{2, 2} = \set{s_{1,1}, s_{1,2}, s_{2,1}, s_{2,2}}$. The action spaces are $\cA_1 = \set{a_1, a_2}$ and $\cA_2 = \set{b_1, b_2}$. If action pair $(a_i, b_j)$ ($i, j = 1, 2$) is chosen at $s_0$, $s_{i,j}$ would be reached with probability $1$. The reward for the second player is always $0$. And we design the reward for the first player as following:
\begin{itemize}
    \item The reward at $s_0$ depends only on the action of the first player: the reward is $1/2$ if $a_1$ is chosen and $0$ if $a_2$ is chosen.
    \item The rewards at $s_{1,1}$ and $s_{1,2}$ are always $0$. The rewards at $s_{2,1}$ and $s_{2,2}$ depends only on the action of the second player: the rewards are all $1$ if $b_1$ is chosen and $0$ if $b_2$ is chosen.
\end{itemize}
Suppose $\bar{\pi}$ is the uniform distribution of all the deterministic policies that takes $(a_1, b_1)$ or $(a_2, b_2)$ at each infosets (there are $2^5 = 32$ such policies). We would verify that $\bar{\pi}$ is a $\inftyce$ but not a $1/20 \mhyphen \nfce$. Since the reward of the second player is always $0$, we only need to consider the first player.

On one hand, the first player can only modify his action at $s_0$ to $a_2$ if he observes that the recommendation at $s_0$, $s_{2,1}$ and $s_{2,2}$ are all $a_1$ (which happens iff the recommendation for his opponent are all $b_1$). Otherwise, he does not change his action. Only $\frac{1}{8}$ of the deterministic policies are modified and for each deterministic policy $\pi_i$ that are modified, the value function of $\pi_i$ are increased by $\frac{1}{2}$. So using this $\nfce$ strategy modification, the first player's value function are increased by $\frac{1}{16}$, which gives that
\begin{align*}
    \nfcegap(\wb{\pi}) \ge 1/16 > 1/20.
\end{align*}

On the other hand, for any $\inftyce$ strategy modification, taking $a_2$ at $s_0$ always has utility $1/2$ since the actions taken by the second player at $\cX_{2}$ are all uniformly distributed  conditional on the recommendation at $s_0$. The utility of taking $a_1$ at $s_0$ is also $1/2$. This means that any $\inftyce$ strategy modification of $\bar\pi$ has value function $1/2$, so does $\bar\pi$. Consequently, $\wb{\pi}$ is an exact $\inftyefce$, i.e. $\inftyefcegap(\wb{\pi})=0$.
\end{example}

(b) Consider $\kefce$ with $K=0$. From the definition of strategy modifications and the executing of modified policy (Algorithm \ref{algorithm:phi-circ-pi}), for $\phi \in \Phi_i^0$ and policy $\pi_i \in \Pi_i^\pure$,  $\phi\circ\pi_i$ takes action $\phi(x_i, \emptyset)$ at $x_i$. So $\phi$ is equivalent to a modification $\phi' \in \Phi_i^{\nfcce}$ which satisfies $\phi'(x_i) = \phi(x_i, \emptyset)$ for all $x_i \in \cX_i$. Here, the equivalence means that $\phi\circ \pi_i = \phi' \circ \pi_i$ for any $\pi_i \in \Pi_i^\pure$. This gives 
\begin{align*}
    \zeroefcegap(\wb{\pi}) = \max_{\phi \in \Phi_{i}^0}
    \E_{\pi\sim\bar\pi} V_i^{(\phi\circ\pi_{i})\times\pi_{-i}}
    = \max_{\phi \in \Phi_{i}^\nfcce}
    \E_{\pi\sim\bar\pi} V_i^{(\phi\circ\pi_{i})\times\pi_{-i}} = \nfccegap(\wb{\pi}),
\end{align*}
which is the desired result.
\end{proof}


\paragraph{Equivalence between $\inftyce$ and BCE}
Next, we give an (informal) argument of the equivalence between ``behavioral deviations'' considered in~\citep{morrill2021efficient} and our $\inftyce$ strategy modifications. As a consequence, (exact) $\infty$-EFCE is equivalent to the ``Behavioral Correlated Equilibria'' (BCE) of~\citep{morrill2021efficient}.

A ``behavioral deviation'' $\phi$ for one player states that at each infoset, the player can choose from three options: (i) follow the recommendation action, (ii) choose a action without ever seeing the recommendation action, or (iii)
choose an action after seeing the recommendation action. Further, the choice of these three options as well as the action to deviate to may depend on the infoset as well as the recommendation history. This is exactly equivalent to the $\inftyce$ strategy modification defined in Definition~\ref{definition:strategy-modification}.





\section{Properties of $\kefce$ strategy modification}

For any $\phi\in\Phi_i^K$, we define its ``probabilistic'' expression $\mu^\phi$ as follows: For any $a_h\in\cA_i$,
\begin{align*}
    & \mu^\phi_h(a_h|x_{i,h}, b_{1:h-1}, b_h) \defeq \indic{ a_h = \phi(x_{i,h}, b_{1:h-1}, b_h) } && \textrm{for all}~~(x_{i,h}, b_{1:h-1}), b_h \in \omegaih\times \cA_i, \\
    & \mu^\phi_h(a_h|x_{i,h}, b_{1:h'}) \defeq \indic{ a_h = \phi(x_{i,h}, b_{1:h'}) } && \textrm{for all}~~(x_{i,h}, b_{1:h'}) \in \omegaiih.
\end{align*}
In words, $\mu^\phi_h(\cdot|x_{i,h}, b_{1:h-1}, b_h)\in\Delta(\cA_i)$ is the pure policy that takes action $\phi(x_{i,h}, b_{1:h-1}, b_h)$ deterministically, for any Type-I \infoseq~$(x_{i,h}, b_{1:h-1})$ and recommendation $b_h\in\cA_i$; $\mu^\phi_h(a_h|x_{i,h}, b_{1:h'})$ is the pure policy that takes action $\phi(x_{i,h}, b_{1:h'})$ for any Type-II \infoseq~$(x_{i,h}, b_{1:h'})$. For convenience, we abuse notation slightly to let 
\begin{align*}
    \muphi_h(\cdot|x_{i,h}, b_{1:h}) \defeq \mu^\phi_h(\cdot|x_{i,h}, b_{1:h-1}, b_h), ~~~
    \muphi_h(\cdot|x_{i,h}, b_{1:h'}) \defeq \mu^\phi_h(\cdot|x_{i,h}, b_{1:h'}).
\end{align*}
Moreover, we use $\dev(a_{1:k}, b_{1:k})$ to denote the Hamming distance of two action sequences $a_{1:k},b_{1:k}\in\cA_i^k$:
$$
\dev(a_{1:k}, b_{1:k}) \defeq \sum_{h=1}^k \indic{a_h \not= b_h},
$$
and define the following notation as shorthand for the indicator that $a_{1:h}$ and $b_{1:h}$ differs in $\set{\le K-1, K}$ elements:
\begin{align*}
    &\indicDleK \defeq \indic{\dev(a_{1:h-1}, b_{1:h-1}) \le K-1};\\
    &\indicDeqK \defeq \indic{\dev(a_{1:h-1}, b_{1:h-1}) = K}.
\end{align*}


\begin{lemma}\label{lemma:general-balancing-mu}
For any $\phi\in\Phi_i^K$ and any (potentially mixed) policy $\pi_i$ for the \ith player, $\phi\circ \pi_i$ is also a (potentially mixed) policy for the \ith player, with sequence-form expression 
\begin{align*}
    (\phi\circ\pi_i)_{1:h}(x_h, a_h) = & \sum_{b_{1:h}} \indicDleK \prod_{k=1}^{h} \muphi_k(a_{k} \vert x_k, b_{1:k})  \prod_{k = 1}^{h} \pi_i(b_k| x_k) \\
    & +\sum_{b_{1:h}}\indicDeqK \prod_{k=1}^{h} \muphi_k(a_{k} \vert x_k, b_{ 1: k\wedge \tau_K}) \prod_{k = 1}^{\tau_K} \pi_i(b_k | x_k),
\end{align*}
where 
\begin{align}
\label{equation:tauk}
    \tau_K \defeq \inf\Big\{h'\le h-1: \sum_{h''=1}^{h'} \indic{a_{h''}\neq b_{h''}} \ge K \Big\}
\end{align} 
is the time step of the $K$-th deviation, and the event $\{\tau_K\le k\}$ can be determined by $(a_{1:k}, b_{1:k})$ for any $k \ge 1$. 
Further, 
\begin{align*}
    \sum_{(x_h, a_h) \in \cX_{i, h} \times \cA_i} \frac{(\phi \circ \pi_i)_{1: h}(x_h, a_h)}{\pi_{i,1:h}^{\star, h}(x_h, a_h)} = X_{i, h} A_i.
\end{align*}
\end{lemma}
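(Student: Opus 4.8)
The plan is to establish the two assertions in sequence. First I would derive the closed-form sequence-form expression for $\phi\circ\pi_i$ by directly unrolling its operational definition in Algorithm~\ref{algorithm:phi-circ-pi}, whose only source of randomness is the recommendation stream sampled from $\pi_i$. Then the balancing identity follows almost immediately from Lemma~\ref{lemma:balancing}, once we record that $\phi\circ\pi_i$ is itself a bona fide policy of the $i$-th player.

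For the sequence-form expression, I would fix an infoset $x_h$ together with its unique leading history $(x_1,a_1,\dots,x_{h-1},a_{h-1},x_h)$ and an action $a_h$, and compute the probability that Algorithm~\ref{algorithm:phi-circ-pi} plays exactly $a_1,\dots,a_h$ along this path. The key bookkeeping is to track step by step whether the running history $\bbb$ is Type-I or Type-II, i.e. whether strictly fewer than $K$ or exactly $K$ deviations have occurred so far; this is governed by $\dev(a_{1:k},b_{1:k})$, and I would first verify the stated measurability fact that $\{\tau_K\le k\}$ is determined by $(a_{1:k},b_{1:k})$, which is what makes the split well defined. The computation then separates into the two summands according to $\indicDleK$ versus $\indicDeqK$: in the first case every step $1,\dots,h$ runs in Type-I mode, so each recommendation $b_k$ is genuinely sampled (contributing $\pi_i(b_k\mid x_k)$) and the played action is pinned to $a_k$ by $\mu^\phi_k(a_k\mid x_k,b_{1:k})$; in the second case recommendations are sampled only up to the step $\tau_K$ of the $K$-th deviation, after which the history freezes and each subsequent action is determined by the frozen prefix through $\mu^\phi_k(a_k\mid x_k, b_{1:k\wedge\tau_K})$. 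I would present this cleanly by induction on $h$, peeling off the final step and matching it to either sample-then-swap (Type-I) or deterministic play on the frozen prefix (Type-II).

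For the balancing identity, the cleanest route is to note that the operational definition makes $\phi\circ\pi_i$ a randomized strategy depending only on information available to the $i$-th player (the current infoset plus the private recommendation memory). By perfect recall it is therefore realization-equivalent to a behavioral policy, i.e. its reach probabilities $(\phi\circ\pi_i)_{1:h}(x_h,a_h)$ constitute a valid sequence-form policy in $\Pi_i$. Consequently Lemma~\ref{lemma:balancing}, applied with $\phi\circ\pi_i$ in place of $\pi_i$, yields $\sum_{(x_h,a_h)}(\phi\circ\pi_i)_{1:h}(x_h,a_h)/\pi^{\star, h}_{i, 1:h}(x_h,a_h)=X_{i,h}A_i$ directly; unfolding Lemma~\ref{lem:balancing_transition_relation}, this is exactly $X_{i,h}A_i$ times the unit-mass identity $\sum_{(x_h,a_h)}(\phi\circ\pi_i)_{1:h}(x_h,a_h)\,p^{\star, h}_{i, 1:h}(x_h)=1$ from \eqref{eqn:balanced_transition_unity}.

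The main obstacle is the bookkeeping across the Type-I to Type-II transition in the first part: one must correctly identify which recommendation coordinates are actually drawn versus frozen, and confirm that $\indicDleK$ and $\indicDeqK$ partition precisely the recommendation sequences realizing the path. The delicate point is the truncation at $\tau_K$ --- after the $K$-th deviation the later recommendations are never sampled, so both the summation range and the $\pi_i$-product must be truncated consistently at $\tau_K$, with only the deterministic map $\mu^\phi$ continuing on the frozen prefix. Getting this truncation exactly right, so that no spurious factor of $A_i$ is introduced by an over-extended summation over unsampled recommendations, is where care is required; once the expression is pinned down, the realization-equivalence argument and the appeal to Lemma~\ref{lemma:balancing} are routine.
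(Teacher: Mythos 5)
Your proposal is correct and follows essentially the same route as the paper: a law-of-total-probability computation over the recommendation stream, split by $\indicDleK$ versus $\indicDeqK$ with the $\pi_i$-product truncated at $\tau_K$ (the paper does this directly rather than by induction, first summing over full sequences $b_{1:h}$ and then collapsing the unsampled coordinates), followed by an application of Lemma~\ref{lemma:balancing} to $\phi\circ\pi_i$. Your realization-equivalence justification for why Lemma~\ref{lemma:balancing} applies to $\phi\circ\pi_i$ simply makes explicit a step the paper treats as immediate ("a direct corollary of Lemma~\ref{lemma:balancing}").
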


\begin{proof}
Suppose the ancestors of $x_h$ are $x_1 \prec x_2 \prec \cdots \prec x_{h-1} \prec x_h$ and the actions leading to $x_h$ are $a_1, \dots , a_{h-1}$. The sequence-form expression $(\phi \circ \pi_i)(x_h, a_h)$ is the probability of $(\phi \circ \pi_i)$ choose $a_k$ at $x_k$ for all $k \in [h]$. Let the recommended action $b_{k} = \pi_i(x_k)$ for all $k \in [h]$. 

If $\abs{\set{h'\in[h-1]: a_{h'}\neq b_{h'}}} \le K-1$, the conditional probability of $(\phi \circ \pi)$ choose $a_k$ at $x_k$ for all $k \in [h]$ is $\prod_{k=1}^{h} \muphi_k(a_{k} \vert x_k, b_{1:k})$, as the player would always swap the action; 
If $\abs{\set{h'\in[h]: a_{h'}\neq b_{h'}}} \ge K$, the conditional probability of $(\phi \circ \pi)$ choose $a_k$ at $x_k$ for all $k \in [h]$ is $\prod_{k=1}^{h} \muphi_k(a_{k} \vert x_k, b_{1:k\wedge \tau_K})$. So by the law of total probability, we have
\begin{align*}
    (\phi\circ\pi_i)_{1:h}(x_h, a_h) = & \sum_{b_{1:h}} \indicDleK \prod_{k=1}^{h} \muphi_k(a_{k} \vert x_k, b_{1:k})  \prod_{k = 1}^{h} \pi_i(b_k| x_k) \\
     +\sum_{b_{1:h}}&\indic{\abs{\set{h'\in[h]: a_{h'}\neq b_{h'}}} \ge K} \prod_{k=1}^{h} \muphi_k(a_{k} \vert x_k, b_{ 1: k\wedge \tau_K}) \prod_{k = 1}^{h} \pi_i(b_k | x_k).
\end{align*}
Notice that $\prod_{k=1}^{h} \muphi_k(a_{k} \vert x_k, b_{ 1: k\wedge \tau_K})$ only depend on $b_{1: \tau_K}$ and $\sum_{b_{\tau_K: h}} \prod_{k = 1}^{h} \pi_i(b_k | x_k) = \prod_{k = 1}^{\tau_K} \pi_i(b_k | x_k)$, so the second summation admits a simpler form:
\begin{align*}
    &~\sum_{b_{1:h}}\indic{\abs{\set{h'\in[h]: a_{h'}\neq b_{h'}}} \ge K} \prod_{k=1}^{h} \muphi_k(a_{k} \vert x_k, b_{ 1: k\wedge \tau_K}) \prod_{k = 1}^{h} \pi_i(b_k | x_k)\\
    = &~ \sum_{b_{1:h'}: \sum_1^{h'} \indic{a_k\not=b_k} = K ~\text{and}~ b_{h'}\not = a_{h'} }  \prod_{k=1}^{h} \muphi_k(a_{k} \vert x_k, b_{ 1: k\wedge h'}) \prod_{k = 1}^{h'} \pi_i(b_k | x_k)\\
    = &~ \sum_{b_{1:h}}\indicDeqK \prod_{k=1}^{h} \muphi_k(a_{k} \vert x_k, b_{ 1: k\wedge \tau_K}) \prod_{k = 1}^{\tau_K} \pi_i(b_k | x_k).
\end{align*}
The last equality is because we can add $a_{h'+1:h}$ behind $b_{1:h'}$ to get a new $b_{1:h}$ which doesn't change the value of the summation.
Then the first part of this lemma is proved. The second part actually is a direct corollary of Lemma \ref{lemma:balancing}.
\end{proof}


This Lemma has the following corollary.

\begin{corollary}\label{lemma:mu-pi-visit-xih}
For the \ith player and any pure policy $\pi \in \Pi$, fix any $\phi \in \Phi_i^K$ and $x_{h} \in \mc{X}_{i, h}$ with $(a_1, \dots, a_{h-1})$ being the unique history of actions leading to $x_h$. Then the probability that $x_{h}$ is reached by the \ith player under policy $(\phi \circ \pi_i)\times \pi_{-i}$ is
\begin{align*}
    &\P_{\phi\circ\pi_i\times\pi_{-i}}\paren{x_{h} ~ \text{is reached by the i}^{th}~\text{player}}\\
    = & \sum_{b_{1:h-1}} \indicDleK \prod_{k=1}^{h-1} \muphi_k(a_{k} \vert x_k, b_{1:k}) \cdot \prod_{k = 1}^{h-1} \pi_i(b_k | x_k) p^{\pi_{-i}}_{1:h}(x_{i,h})\\
    & +\sum_{b_{1:h-1}}\indicDeqK \prod_{k=1}^{h-1} \muphi_k(a_{k} \vert x_k, b_{ 1: k\wedge \tau_K}) \cdot \prod_{k = 1}^{\tau_K} \pi_i(b_k | x_k) p^{\pi_{-i}}_{1:h}(x_{i,h}).
\end{align*}
\end{corollary}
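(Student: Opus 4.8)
The plan is to combine two ingredients: a factorization of the reaching probability into the \ith player's contribution times the opponents' marginal, and then the sequence-form expansion of the player's contribution already supplied by Lemma~\ref{lemma:general-balancing-mu}.

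First I would record the factorization
\[
\P_{\phi\circ\pi_i\times\pi_{-i}}\paren{x_h~\text{is reached}} = (\phi\circ\pi_i)_{1:h-1}(x_{h-1}, a_{h-1})\cdot p^{\pi_{-i}}_{1:h}(x_{i,h}),
\]
where $(x_{h-1}, a_{h-1})\prec x_h$ is the immediate parent of $x_h$ and $(\phi\circ\pi_i)_{1:h-1}(x_{h-1}, a_{h-1})$ is the \ith player's sequence-form probability of taking the unique action history $a_{1:h-1}$ leading to $x_h$. This is the same computation as in the proof of Lemma~\ref{lemma:pnu}(a): since $\phi\circ\pi_i$ is a legitimate (mixed) policy of the \ith player by Lemma~\ref{lemma:general-balancing-mu}, the tree structure lets me expand the reaching event over the states $s_h\in x_{i,h}$; perfect recall guarantees that the player's factor $(\phi\circ\pi_i)_{1:h-1}(x_{h-1}, a_{h-1})$ is shared by all such $s_h$ and hence pulls out of the sum, and what is left, $\sum_{s_h\in x_{i,h}} p_{1:h}(s_h)\prod_{j\neq i}\pi_{j,1:h}(s_h, a_{j,h})$, is exactly the marginal $p^{\pi_{-i}}_{1:h}(x_{i,h})$ by its definition in~\eqref{eqn:pi-i(x)}.

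Next I would evaluate the player's factor using Lemma~\ref{lemma:general-balancing-mu}. The cleanest route is the elementary identity $(\phi\circ\pi_i)_{1:h-1}(x_{h-1}, a_{h-1}) = \sum_{a_h\in\cA_i}(\phi\circ\pi_i)_{1:h}(x_h, a_h)$: I would substitute the two-term expression of Lemma~\ref{lemma:general-balancing-mu} and sum out the step-$h$ action. In the Type-I term the step-$h$ factors collapse through $\sum_{b_h}\pi_i(b_h\vert x_h)\sum_{a_h}\muphi_h(a_h\vert x_h, b_{1:h}) = 1$, leaving the sum over $b_{1:h-1}$ with indicator $\indicDleK$ and all products running to $h-1$; in the Type-II term, where $\tau_K\le h-1$ so that $h\wedge\tau_K=\tau_K$, the step-$h$ choice contributes $\sum_{a_h}\muphi_h(a_h\vert x_h, b_{1:\tau_K})=1$, leaving the sum over $b_{1:h-1}$ with indicator $\indicDeqK$, the $\muphi$-factors truncated at $\tau_K$, and the recommendation product $\prod_{k=1}^{\tau_K}\pi_i(b_k\vert x_k)$. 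Multiplying the two resulting sums by $p^{\pi_{-i}}_{1:h}(x_{i,h})$ then reproduces the claimed identity term by term.

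I expect the only delicate point to be the Type-II bookkeeping: one has to keep track that after the $K$-th deviation at time $\tau_K$ the recommendations are no longer observed, so that each $\muphi_k$ depends only on $b_{1:k\wedge\tau_K}$ and the recommendation product stops at $\tau_K$, and then confirm that summing out the step-$h$ factor is consistent with the ``extend $b_{1:\tau_K}$ to full length'' convention used inside the proof of Lemma~\ref{lemma:general-balancing-mu}. Everything else is the routine reaching-probability factorization, so once the step-$h$ action is correctly summed out, the corollary follows directly.
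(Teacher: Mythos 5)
Your proof is correct and takes essentially the same route as the paper's: the paper writes the reaching probability as $\sum_{a\in\cA_i}(\phi\circ\pi_i)_{1:h}(x_h,a)\, p^{\pi_{-i}}_{1:h}(x_{i,h})$ and then cites Lemma~\ref{lemma:general-balancing-mu}, which is exactly your factorization (your parent-sequence form $(\phi\circ\pi_i)_{1:h-1}(x_{h-1},a_{h-1})$ equals that sum over $a$) followed by summing out the step-$h$ action. Your explicit collapse of the step-$h$ factors and your check of the Type-II ``extend $b_{1:\tau_K}$ to full length'' convention merely spell out what the paper leaves implicit in ``applying Lemma~\ref{lemma:general-balancing-mu} yields the desired result.''
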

\begin{proof}
We have\begin{align*}
    &\P_{\phi\circ\pi_i\times\pi_{-i}}\paren{x_{h} ~ \text{is reached by the i}^{th}~\text{player}}\\
    = & \sum_{a\in \cA_i} (\phi \circ \pi_i)_{1:h}(x_h, a) p_{1:h}^{\pi_{-i}}(x_{i,h}),
\end{align*}
where $p_{1:h}^{\pi_{-i}}(x_{i,h})$ is defined in equation (\ref{eqn:pi-i(x)}). So applying Lemma \ref{lemma:general-balancing-mu} yields the desired result.
\end{proof}

As each step $h$,  one (and only one) $x_h \in \cX_{i, h}$ is visited, the summation of the above reaching probability over $x_h$ is $1$. This directly yields the following corollary.
\begin{corollary}\label{lemma:sum=1}
For the \ith player and any policy $\pi \in \Pi$, fix any $\phi \in \Phi_i^K$, we have
\begin{align*}
    & \sum_{x_h \in \cX_{i, h}} \sum_{b_{1:h-1}} \indicDleK \prod_{k=1}^{h-1} \muphi_k(a_{k} \vert x_k, b_{1:k}) \cdot \prod_{k = 1}^{h-1} \pi_i(b_k | x_k) p^{\pi_{-i}}_{1:h}(x_{i,h})\\
    & +\sum_{x_h \in \cX_{i, h}} \sum_{b_{1:h-1}}\indicDeqK \prod_{k=1}^{h-1} \muphi_k(a_{k} \vert x_k, b_{ 1: k\wedge \tau_K}) \cdot \prod_{k = 1}^{\tau_K} \pi_i(b_k | x_k) p^{\pi_{-i}}_{1:h}(x_{i,h})
    =  1.
\end{align*}
\end{corollary}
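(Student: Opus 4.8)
The plan is to recognize the left-hand side as a sum of per-infoset reaching probabilities over layer $h$, and then use the fact that exactly one step-$h$ infoset is visited on each trajectory. First I would regroup the two outer sums over $x_h$ into a single one, so that for each fixed $x_h \in \cX_{i,h}$ the associated inner quantity is the combined expression: the Type-I sum over $b_{1:h-1}$ weighted by $\indicDleK$, plus the Type-II sum weighted by $\indicDeqK$. By Corollary~\ref{lemma:mu-pi-visit-xih}, this combined inner expression is exactly $\P_{\phi\circ\pi_i\times\pi_{-i}}\paren{x_{h} ~ \text{is reached by the i}^{th}~\text{player}}$. After this identification, the left-hand side becomes $\sum_{x_h \in \cX_{i,h}} \P_{\phi\circ\pi_i\times\pi_{-i}}\paren{x_h ~ \text{is reached}}$.

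The second step is to evaluate this sum. The events $\set{x_h \text{ is reached}}_{x_h \in \cX_{i,h}}$ partition the probability space: by perfect recall the \ith player's step-$h$ infoset is uniquely determined on any trajectory (it is $x_i(s_h)$, the emission of the underlying state $s_h$), and by the tree-structure assumption each $x_h \in \cX_{i,h}$ is reachable only at step $h$. Hence along any realization exactly one $x_h \in \cX_{i,h}$ is visited, making the events mutually exclusive; and since $h \le H$ the player always occupies some step-$h$ infoset, making them exhaustive. Their probabilities therefore sum to $1$, which is precisely the claimed identity.

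There is essentially no obstacle here: the entire content is already carried by Corollary~\ref{lemma:mu-pi-visit-xih}, and the present statement is its immediate term-by-term summation over $\cX_{i,h}$. The only step meriting explicit justification --- and hence the ``hard'' part, such as it is --- is the partition claim, namely that the single-layer reaching events are mutually exclusive and exhaustive; this is a direct consequence of the tree structure and perfect recall assumptions recalled in Section~\ref{section:prelim}.
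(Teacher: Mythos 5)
Your proof is correct and follows exactly the paper's own argument: the paper derives this corollary by identifying the inner expression with the reaching probability from Corollary~\ref{lemma:mu-pi-visit-xih} and noting that exactly one infoset $x_h \in \cX_{i,h}$ is visited at each step $h$, so the probabilities sum to $1$. Your explicit justification of the partition claim via tree structure and perfect recall is the same (implicit) reasoning the paper relies on.
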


\section{Regret decomposition for $\kefce$}
\label{appendix:regret-decomposition}

This section presents the definition of the $\kefce$ regret along with its properties, which will be useful for the proofs of our main results.

Let $\{ \pi^t \}_{t \in [T]}$ be a sequence of policy. Define the $\kce$ regret for the \ith player as
\begin{equation}
  \label{eqn:KCE-regret}
R_{i,K}^T = \max_{\phi \in \Phi_i^K} \sum_{t = 1}^T \Big( V_{i}^{\phi\circ \pi_i^t \times \pi_{-i}^t} - V_{i}^{\pi^t} \Big).
\end{equation}

\begin{lemma}[Online-to-batch for $\kefce$]\label{lem:online-to-batch}
Let $\{ \pi^t= (\pi_i^t)_{i \in [n]} \}_{t \in [T]}$ be a sequence of product policies for all players over $T$ rounds. Then, for the average (correlated) policy $\bar\pi={\rm Unif}(\{\pi^t\}_{t=1}^T)$, we have
\begin{align*}
    \kefcegap(\wb{\pi}) = \max_{i\in [m]} R_{i,K}^T / T.
\end{align*}

\end{lemma}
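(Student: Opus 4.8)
The plan is to exploit the linearity of the value function $V_i^{\wb{\pi}} = \E_{\pi\sim\wb{\pi}}[V_i^{\pi}]$ in the correlated policy, which reduces the claimed identity to an elementary per-player computation followed by a maximization over $i$. No new estimates are needed; the statement is essentially a repackaging of this linearity.

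First I would unfold the definition of the average correlated policy. Since $\wb{\pi}={\rm Unif}(\{\pi^t\}_{t=1}^T)$ samples a product policy $\pi$ uniformly from $\{\pi^t\}_{t=1}^T$, and since the value of a correlated policy is by definition the expectation $V_i^{\wb{\pi}}=\E_{\pi\sim\wb{\pi}}[V_i^{\pi}]$ (Section~\ref{section:def}), we obtain
\begin{align*}
  V_i^{\wb{\pi}} = \frac{1}{T}\sum_{t=1}^T V_i^{\pi^t}.
\end{align*}
Next I would apply the same linearity to the modified policy. Recall that $\phi\circ\wb{\pi}$ is defined as the correlated policy that first samples $\pi\sim\wb{\pi}$ and then plays $(\phi\circ\pi_i)\times\pi_{-i}$; hence its value is likewise a uniform average over $t$:
\begin{align*}
  V_i^{\phi\circ\wb{\pi}} = \frac{1}{T}\sum_{t=1}^T V_i^{(\phi\circ\pi_i^t)\times\pi_{-i}^t}.
\end{align*}

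Subtracting the two displays and taking the maximum over $\phi\in\Phi_i^K$ then gives
\begin{align*}
  \max_{\phi\in\Phi_i^K}\paren{V_i^{\phi\circ\wb{\pi}} - V_i^{\wb{\pi}}}
  = \max_{\phi\in\Phi_i^K}\frac{1}{T}\sum_{t=1}^T\paren{V_i^{(\phi\circ\pi_i^t)\times\pi_{-i}^t} - V_i^{\pi^t}}
  = \frac{R_{i,K}^T}{T},
\end{align*}
where the last equality is exactly the definition~\eqref{eqn:KCE-regret} of the $\kefce$ regret. Taking the maximum over $i\in[m]$ and comparing with Definition~\ref{definition:kefce} yields $\kefcegap(\wb{\pi})=\max_{i\in[m]}R_{i,K}^T/T$, as claimed.

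The only point requiring care is the justification that both $V_i^{\wb{\pi}}$ and $V_i^{\phi\circ\wb{\pi}}$ decompose as uniform averages over $t$; this rests on the interpretation of a correlated policy as a mixture that executes by first sampling $\pi$, together with the fact that the modification $\phi$ acts only on the sampled product policy's $i$-th component before play, leaving $\pi_{-i}$ intact. This is a definitional check rather than a genuine obstacle, so I expect no hard step in the argument.
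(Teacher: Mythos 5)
Your proposal is correct and follows essentially the same route as the paper's proof: both arguments simply unfold $V_i^{\phi\circ\wb{\pi}} - V_i^{\wb{\pi}}$ as an expectation over $\pi\sim\wb{\pi}$, use that $\wb{\pi}={\rm Unif}(\{\pi^t\}_{t=1}^T)$ turns this expectation into a uniform average over $t$, and then identify the result with the definition~\eqref{eqn:KCE-regret} of $R_{i,K}^T$ after maximizing over $\phi\in\Phi_i^K$ and $i\in[m]$. The only cosmetic difference is that you split the value of $\wb{\pi}$ and of $\phi\circ\wb{\pi}$ into two separate displays before subtracting, whereas the paper carries the difference inside a single expectation.
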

\begin{proof}
This follows directly by the definition of $\kefcegap$:

\begin{align*}
    \kefcegap(\wb\pi) =& \max_{i\in[m]} \max_{\phi \in \Phi_i^K} \paren{V_{i}^{\phi \circ \wb\pi} - V_{i}^{\wb\pi}}\\
    =& \max_{i\in[m]} \max_{\phi \in \Phi_i^K} \E_{\pi\sim\wb\pi} \brac{V_{i}^{\phi\circ \pi_i \times \pi_{-i}} - V_{i}^{\pi}}\\
    =& \max_{i\in[m]} \max_{\phi \in \Phi_i^K} \frac{1}{T}\sum_{t=1}^T \brac{V_{i}^{\phi\circ \pi_i^t \times \pi_{-i}^t} - V_{i}^{\pi^t}} \\
    =& \max_{i\in[m]} R_{i,K}^T / T.
\end{align*}

\end{proof}

For $(x_{i,h}, b_{1:h-1}, \vphi) \in \omegai \times \Psi^s$, we define the immediate local swap regret as
\begin{equation*}
\hat R_{(x_h, b_{1:h-1}), \vphi}^T \defeq \sum_{t = 1}^T \prod_{k = 1}^{h-1} \pi_i^t (b_{k}|x_{k})  \Big( \< \pi_{i, h}^t(\cdot \vert x_{i, h}) - \vphi \circ \pi_{i, h}^t(\cdot \vert x_{i, h}), \L_{i,h}^{t}(x_{i, h},\cdot) \> \Big).
\end{equation*}
and the (overall) local swap regret as
\begin{align}
\label{equation:local-swap-regret}
    \hat R_{(x_h, b_{1:h-1})}^{T, \rm swap} \defeq \max_{\vphi} \hat R_{(x_h, b_{1:h-1}), \vphi}^T.
\end{align}

For $(x_{i, h}, b_{1:h'}, a) \in \omegaii \times \cA_i$, we define the immediate external regret as
\begin{equation*}
\hat R_{(x_{h},b_{1:h'}), a}^T \defeq \sum_{t = 1}^T \prod_{k = 1}^{h'} \pi_i^t (b_{k}|x_{k})  \Big(  \< \pi_{i, h}^t(\cdot \vert x_{i, h}), L_{i,h}^{t}(x_{i, h},\cdot) \>- \L_{i,h}^{t}(x_{i, h}, a) \Big). 
\end{equation*}
and the (overall) local external regret as
\begin{align}
\label{equation:local-external-regret}
    \hat R_{(x_{h}, b_{1 : h'})}^{T, \rm ext} \defeq \max_{a\in\cA_i} \hat R_{(x_{h}, b_{1 : h'}), a}^T.
\end{align}

\paragraph{$\kefce$ regret decomposition}
Our main result in this section is the following regret decomposition that decomposes the $\kefce$ regret $R_{i,K}^T$ into combinations of local regrets at each \infoseq.

\begin{lemma}[Regret decomposition for $\kefce$]
\label{lem:k-ce-regret_decomposition} 
We have $R_{i, K}^T \leq \sum_{h=1}^H R_h^T$, with 
\[
R_{h}^{T} \defeq \max_{\phi \in \Phi_i^K} \sum_{x_{h}\in\cX_{i,h}}  G_{h}^{T, \swap}(x_{h}; \muphi) + \max_{\phi \in \Phi_i^K} \sum_{x_{h}\in\cX_{i,h}} G_{h}^{T, \ext}(x_{h}; \muphi)  .
\]
Here 
\begin{equation}\label{eqn:G1_in_K_CE}
\begin{aligned}
 G_{h}^{T, \swap}(x_h; \muphi) \equiv&~ \sum_{b_{1:h-1}} \indicDleK \prod_{k=1}^{h-1} \muphi_k(a_{k} \vert x_k, b_{1:k}) \what{R}^{T, \swap}_{(x_h, b_{1:h-1} )}, \\
 \end{aligned}
 \end{equation}
and
\begin{equation}\label{eqn:G2_in_K_CE}
 \begin{aligned}
  G_{h}^{T, \ext}(x_h; \muphi) \equiv&~ \sum_{b_{1:h-1}} \indicDeqK \prod_{k=1}^{h-1} \muphi_k(a_k \vert x_k, b_{1:k \wedge \tau_K}) \what{R}^{T, \ext}_{(x_h, b_{1:\tau_K})}\\
  =&~\sum_{h'=K}^{h-1} \sum_{b_{1:h'}} \indic{\tau_K=h'} \prod_{k=1}^{h-1} \muphi_k(a_k \vert x_k, b_{1:k\wedge h'}) \what{R}^{T, \ext}_{(x_h, b_{1:h'})}.
\end{aligned}
\end{equation}
Above, $a_{1:h-1}$ is the unique sequence of actions leading to $x_h$, and $\tau_K$ (cf. definition in~\eqref{equation:tauk}) depends on $a_{1:h-1}$, $b_{1:h-1}$.
\end{lemma}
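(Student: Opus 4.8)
The plan is to prove, for each fixed $\phi\in\Phi_i^K$, the pointwise bound
\[
\sum_{t=1}^T\paren{V_i^{\phi\circ\pi_i^t\times\pi_{-i}^t}-V_i^{\pi^t}}\le \sum_{h=1}^H\Big(\sum_{x_h\in\cX_{i,h}}G_h^{T,\swap}(x_h;\muphi)+\sum_{x_h\in\cX_{i,h}}G_h^{T,\ext}(x_h;\muphi)\Big),
\]
and then take $\max_{\phi\in\Phi_i^K}$ on both sides. Since $R_{i,K}^T=\max_\phi\sum_t(\cdots)$ and the maximum of a sum is at most the sum of the maxima, distributing $\max_\phi$ across layers $h$ and across the two groups of terms turns the right-hand side into $\sum_{h=1}^H R_h^T$, which is exactly the claim. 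Thus the entire content lies in the pointwise decomposition above (an inequality, since the swap/external modification induced by $\phi$ at each infoset is a specific one, bounded by the local $\max$).

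I would first rewrite values as losses. Using the reaching-probability factorization~\eqref{equation:reaching-prob}, for any player-$i$ policy $\sigma_i$ one has $H-V_i^{\sigma_i\times\pi_{-i}^t}=\sum_{h}\sum_{x_h,a_h}\sigma_{i,1:h}(x_h,a_h)\,\ell_{i,h}^t(x_h,a_h)$, so $V_i^{\phi\circ\pi_i^t\times\pi_{-i}^t}-V_i^{\pi^t}$ equals the gap of expected cumulative loss between $\pi_i^t$ and $\phi\circ\pi_i^t$ against the fixed $\pi_{-i}^t$. To expose the counterfactual losses $L_{i,h}^t$ of~\eqref{equation:L-definition} I would telescope layer by layer. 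Working on the augmented state $(x_h,\bbb)$ of an infoset together with its recommendation history, define for $0\le h\le H$ the hybrid policy $\sigma^{(h)}$ that executes $\phi$'s rule (Algorithm~\ref{algorithm:phi-circ-pi}) at layers $1,\dots,h$ and plays the unmodified $\pi_{i,k}^t(\cdot|x_k)$ at layers $k>h$. Then $\sigma^{(0)}=\pi_i^t$ and $\sigma^{(H)}=\phi\circ\pi_i^t$, so the loss gap telescopes as $\sum_{h=1}^H\big(\mathrm{Loss}(\sigma^{(h-1)})-\mathrm{Loss}(\sigma^{(h)})\big)$.

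The key observation is that $\sigma^{(h)}$ and $\sigma^{(h-1)}$ agree except at layer $h$: they reach each augmented state $(x_h,\bbb)$ with the same probability (both apply $\phi$ below layer $h$), and they share the same continuation below layer $h$ (both follow $\pi_i^t$), whose counterfactual loss from $(x_h,a_h)$ is exactly $L_{i,h}^t(x_h,a_h)$. Hence each telescoped term equals a sum over augmented states of [modified reach]$\times\langle\pi_{i,h}^t(\cdot|x_h)-(\phi\circ\pi_i^t)(\cdot|x_h,\bbb),\,L_{i,h}^t(x_h,\cdot)\rangle$, where the modified reach — with the environment/opponent factor $p^{\pi_{-i}^t}_{1:h}(x_h)$ absorbed into $L_{i,h}^t$ (Lemma~\ref{lemma:counterfactual-loss-bound}) — is precisely the per-history summand of Lemma~\ref{lemma:general-balancing-mu} and Corollary~\ref{lemma:mu-pi-visit-xih}. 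It remains to split the augmented states by the type of $\bbb$. For a Type-I history $b_{1:h-1}\in\omegai(x_h)$, averaging over the fresh recommendation makes $(\phi\circ\pi_i^t)(\cdot|x_h,\bbb)$ equal the pushforward $\vphi\circ\pi_{i,h}^t$ under the swap $\vphi(\cdot)=\phi(x_h,b_{1:h-1},\cdot)$, the reach factors as $\prod_{k<h}\muphi_k(a_k|x_k,b_{1:k})\cdot\prod_{k<h}\pi_{i,k}^t(b_k|x_k)$, and summing over $t$ and bounding by $\hat R^{T,\swap}_{(x_h,b_{1:h-1})}=\max_\vphi(\cdots)$ reproduces $\sum_{x_h}G_h^{T,\swap}(x_h;\muphi)$. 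For a Type-II history ($K$ deviations, the $K$-th at step $\tau_K$), $\phi$ plays the fixed action $\phi(x_h,b_{1:\tau_K})$, the inner product is a single-action external term bounded by $\hat R^{T,\ext}_{(x_h,b_{1:\tau_K})}$, and the reach factors as $\prod_{k<h}\muphi_k(a_k|x_k,b_{1:k\wedge\tau_K})\cdot\prod_{k\le\tau_K}\pi_{i,k}^t(b_k|x_k)$, reproducing $\sum_{x_h}G_h^{T,\ext}(x_h;\muphi)$.

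I expect the main obstacle to be the bookkeeping at the Type-I/Type-II boundary. After the $K$-th deviation the recommendation history freezes at $b_{1:\tau_K}$, so the reach retains the recommendation factors $\pi_{i,k}^t(b_k|x_k)$ only up to $k=\tau_K$ while the deterministic indicators $\muphi_k(a_k|x_k,b_{1:k\wedge\tau_K})$ continue along the forced external path up to $x_h$; one must check that these truncated indices $b_{1:k\wedge\tau_K}$ and this split of the recommendation weight match the two equivalent forms of $G_h^{T,\ext}$ in~\eqref{eqn:G2_in_K_CE} exactly. Fortunately this alignment is already packaged by the sequence-form identity of Lemma~\ref{lemma:general-balancing-mu}, so once the telescoping and the reach identification are in place, the remaining steps are the routine matching of weights and the distribution of the outer maximum.
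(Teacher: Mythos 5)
Your proposal is correct and follows essentially the same route as the paper's proof: the hybrid policies $\sigma^{(h)}$ you telescope over are exactly the paper's interpolations $((\muphi\circ\pi_i^t)_{1:h}\,\pi_{i,h+1:H}^t)\times\pi_{-i}^t$, the identification of each telescoped term as a sum over $(x_h, b_{1:h-1})$ of modified reach times $\langle \pi_{i,h}^t(\cdot|x_h)-\muphi_h(\cdot|x_h,b_{1:h\wedge\tau_K}), L_{i,h}^t(x_h,\cdot)\rangle$ is the paper's central computation, and the Type-I/Type-II split with per-history bounding by $\hat R^{T,\swap}$ and $\hat R^{T,\ext}$ matches the paper's treatment of ${\rm I}_h$ and ${\rm II}_h$. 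The only cosmetic difference is that you establish the pointwise bound for fixed $\phi$ and distribute $\max_\phi$ at the end, whereas the paper pushes $\max_\phi$ inside the sum over layers first; these are interchangeable.
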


Throughout the rest of the proofs, the definitions of the sequence $a_{1:h-1}$ and $\tau_K$ is the same as in the above regret minimizer, i.e. $a_{1:h-1}$ is determined by $x_h$, and $\tau_K$ is determined by $(x_h, b_{1:h-1})$.


\begin{proof}[Proof of Lemma~\ref{lem:k-ce-regret_decomposition}]
We begin by performing the following performance decomposition
{\proofsize \begin{align*}
    & \quad R_{i, K}^T = \max_{\phi \in \Phi_i^K}\sum_{t=1}^T (V_{i}^{\muphi\circ\pi_i^t\times \pi_{-i}^t }-V_i^{\pi^t})\\
    &=\max_{\phi \in \Phi_i^K} \sum_{t=1}^T \left( \E_{\muphi \circ\pi_i^t\times \pi_{-i}^t}\brac{ \sum_{h=1}^H r_{i,h} }-  \E_{\pi^t}\brac{ \sum_{h=1}^H r_{i,h}}
    \right)\\
    &= \max_{\phi \in \Phi_i^K} \sum_{t=1}^T \sum_{h=1}^H \paren{ \E_{((\muphi\circ\pi_i^t)_{1:h}\pi_{i,h+1:H}^t) \times \pi_{-i}^t}\brac{ \sum_{k=1}^H r_{i,k}} - \E_{((\muphi\circ\pi_i^t)_{1:h-1}\pi_{i,h:H}^t) \times \pi_{-i}^t}\brac{ \sum_{k=1}^H r_{i,k}}}  \\
    &= \max_{\phi \in \Phi_i^K} \sum_{t=1}^T \sum_{h=1}^H \paren{ \E_{((\muphi\circ\pi_i^t)_{1:h}\pi_{i,h+1:H}^t) \times \pi_{-i}^t}\brac{ \sum_{k=h}^H r_{i,k}} - \E_{((\muphi\circ\pi_i^t)_{1:h-1}\pi_{i,h:H}^t) \times \pi_{-i}^t}\brac{ \sum_{k=h}^H r_{i,k}}}\\
    &= \max_{\phi \in \Phi_i^K}  \sum_{h=1}^H \sum_{t=1}^T \paren{  \E_{((\muphi\circ\pi_i^t)_{1:h-1}\pi_{i,h:H}^t) \times \pi_{-i}^t}\brac{ \sum_{k=h}^H (1-r_{i,k})}-\E_{((\muphi\circ\pi_i^t)_{1:h}\pi_{i,h+1:H}^t) \times \pi_{-i}^t}\brac{ \sum_{k=h}^H (1-r_{i,k})} }.
\end{align*}}

Here, $((\muphi\circ\pi_i^t)_{1:h}\pi_{i,h+1:H}^t) \times \pi_{-i}^t$ refers to the policy that the \ith player uses $\muphi\circ\pi_i^t$ for the first $h$ step, and then uses $\pi_{i}^t$ where as other players always use $\pi_{-i}^t$. The last step use that fact that $((\muphi\circ\pi_i^t)_{1:h}\pi_{i,h+1:H}^t) \times \pi_{-i}^t$ and $((\muphi\circ\pi_i^t)_{1:h-1}\pi_{i,h:H}^t) \times \pi_{-i}^t$ are the same for the first $h-1$ steps, so the expected reward in first $h-1$ steps are the same, too. 
Therefore, if we let
{\proofsize
$$
\wt{R}_h^T = \max_{\phi \in \Phi_i^K}  \sum_{t=1}^T \paren{  \E_{((\muphi\circ\pi_i^t)_{1:h-1}\pi_{i,h:H}^t) \times \pi_{-i}^t}\brac{ \sum_{k=h}^H (1-r_{i,k})}-\E_{((\muphi\circ\pi_i^t)_{1:h}\pi_{i,h+1:H}^t) \times \pi_{-i}^t}\brac{ \sum_{k=h}^H (1-r_{i,k})} }
$$
}we have $R_{i, K}^T \le  \sum_{h=1}^H \wt{R}_h^T$.

We next show that
{\proofsize
\begin{align*}
    \wt{R}_h^T \le \max_{\phi \in \Phi_i^K} \sum_{x_{h}\in\cX_{i,h}} G_{h}^{T, \swap}(x_{h}; \muphi) + \max_{\phi \in \Phi_i^K} \sum_{x_{h}\in\cX_{i,h}} G_{h}^{T, \ext}(x_{h}; \muphi) = R_h^T,
\end{align*}
}{\normalsize which yields the desired result. Fix any $h \in [H]$ and $\muphi \in \Phi_i^K$, from the execution of modified policy $\muphi\circ \pi_i$ in Algorithm \ref{algorithm:phi-circ-pi},}
{\proofsize
\begin{align*}
    &\E_{((\muphi\circ\pi_i^t)_{1:h}\pi_{i,h+1:H}^t) \times \pi_{-i}^t} \brac{ \sum_{k=h}^H (1-r_{i,k}) } \\
    =~& \sum_{x_{h}\in\cX_{i,h}} \sum_{b_{1: h-1}}  \prod_{k=1}^{h-1} \pi_i^t(b_{k}|x_{k}) \underbrace{\prod_{k=1}^{h-1} \muphi_k (a_{k}|x_k,b_{1:k \wedge \tau_K})}_{\text{probability of taking `right' actions leading to}~ x_{h}} \underbrace{\Big\langle \muphi_h(\cdot|x_h,b_{1:h\wedge \tau_K}), L_{i,h}^{t}(x_h,\cdot) \Big\rangle}_{\text{counterfactual loss}},
\end{align*}
}\ziang{\text{$\phi\circ\pi_h$ comes from $b_h= \pi_i^t(x_h)$}}where we assume $b_h = \pi_i^t(x_{h})$ in $\muphi_k(\cdot|x_h,b_{1:h\wedge \tau_K})$ and $\tau_K \defeq \inf\{h'\le h-1: \sum_{h''=1}^{h'} \indic{a_{h''}\neq b_{h''}} \ge K\}$. Similarly,
{\proofsize \begin{align*}
    &\E_{((\muphi\circ\pi_i^t)_{1:h-1}\pi_{i,h:H}^t) \times \pi_{-i}^t} \brac{ \sum_{k=h}^H (1-r_{i,k}) } \\
    =~& \sum_{x_{h}\in\cX_{i,h}} \sum_{b_{1: h-1}}  \prod_{k=1}^{h-1} \pi_i^t(b_{k}|x_{k}) \underbrace{\prod_{k=1}^{h-1} \muphi_k (a_{k}|x_k,b_{1:k \wedge \tau_K})}_{\text{probability of taking `right' actions leading to}~ x_{h}} \underbrace{\Big\langle \pi_i^t(\cdot|x_h), L_{i,h}^{t}(x_h,\cdot) \Big\rangle }_{\text{counterfactual utility}}.
\end{align*}
}Substituting these into $\wt{R}_h^T$, we have
{\proofsize \begin{align*}
    & \quad \wt{R}_h^T = \max_{\phi \in \Phi_i^K}  \sum_{t=1}^T \paren{  \E_{((\muphi\circ\pi_i^t)_{1:h-1}\pi_{i,h:H}^t) \times \pi_{-i}^t}\brac{ \sum_{k=h}^H (1-r_{i,k})}-\E_{((\muphi\circ\pi_i^t)_{1:h}\pi_{i,h+1:H}^t) \times \pi_{-i}^t}\brac{ \sum_{k=h}^H (1-r_{i,k})} }\\
    & = \max_{\phi \in \Phi_i^K} \sum_{t=1}^T  \sum_{x_{h}\in\cX_{i,h}} \sum_{b_{1:h-1}}\left( \prod_{k=1}^{h-1} \pi_i^t(b_{k}|x_{k}) \prod_{k=1}^{h-1}\muphi_k (a_{k}|x_k,b_{1:k \wedge \tau_K}) \Big\langle \pi_i^t(\cdot|x_h)-\muphi_h(\cdot|x_h,b_{1:h\wedge \tau_K}), L_{i,h}^{t}(x_h,\cdot) \Big\rangle
    \right)\\
    & = \max_{\phi \in \Phi_i^K} \sum_{x_{h}\in\cX_{i,h}} \sum_{b_{1:h-1}}\left(\prod_{k=1}^{h-1}\muphi_k (a_{k}|x_k,b_{1:k \wedge \tau_K}) \sum_{t=1}^T \prod_{k=1}^{h-1} \pi_i^t(b_{k}|x_{k}) \Big\langle \pi_i^t(\cdot|x_h)-\muphi_h(\cdot|x_h,b_{1:h\wedge \tau_K}), L_{i,h}^{t}(x_h,\cdot) \Big\rangle 
    \right).
\end{align*}}

For fixed $\muphi$, $x_h$, based on whether $\dev(a_{1:h-1},b_{1:h-1})\le K-1$ or not, we have 
{\proofsize \begin{align*}
    \wt{R}_h^T &= \max_{\phi \in \Phi_i^K} \sum_{x_{h}\in\cX_{i,h}} \sum_{b_{1:h-1}}\left(\prod_{k=1}^{h-1}\muphi_k (a_{k}|x_k,b_{1:k \wedge \tau_K}) \sum_{t=1}^T \prod_{k=1}^{h-1} \pi_i^t(b_{k}|x_{k}) \Big\langle \pi_i^t(\cdot|x_h)-\muphi_h(\cdot|x_h,b_{1:h\wedge \tau_K}), L_{i,h}^{t}(x_h,\cdot) \Big\rangle
    \right)\\
    &= \max_{\phi \in \Phi_i^K}({\rm I}_h+ {\rm II}_h),
\end{align*}
}where
{\proofsize \begin{align*}
   {\rm I}_h & \defeq   \sum_{x_{h}\in\cX_{i,h}} \sum_{b_{1:h-1}}\indicDleK\prod_{k=1}^{h-1}\muphi_k (a_{k}|x_k,b_{1:k  }) \\
   &\sum_{t=1}^T \prod_{k=1}^{h-1} \pi_i^t(b_{k}|x_{k}) \Big\langle \pi_i^t(\cdot|x_h)-\muphi_h(\cdot|x_h,b_{1:h\wedge \tau_K}), L_{i,h}^{t}(x_h,\cdot) \Big\rangle,
\end{align*}
}and
{\proofsize \begin{align*}
    {\rm II}_h & \defeq  \sum_{x_{h}\in\cX_{i,h}} \sum_{b_{1:h-1}}\indic{\dev(a_{1:h-1},b_{1:h-1})\ge K}\prod_{k=1}^{h-1}\muphi_k (a_{k}|x_k,b_{1:k \wedge \tau_K}) \\ &\sum_{t=1}^T \prod_{k=1}^{h-1} \pi_i^t(b_{k}|x_{k}) \Big\langle \pi_i^t(\cdot|x_h)-\muphi_h(\cdot|x_h,b_{1:h\wedge \tau_K}), L_{i,h}^{t}(x_h,\cdot) \Big\rangle.\\
\end{align*}
}For ${\rm I}_h$,
since $\dev(a_{1:h-1},b_{1:h-1})\le K-1$, at step $h$, the number of deviations is less than $K$ and $\tau_K\ge h$. 

Moreover, max over $\phi \in \Phi_i^K$ can be separated into max over all $\muphi_h(\cdot|x_h,b_{1:h\wedge \tau_K})$, so we have
{\proofsize \begin{align*}
   \max_{\phi \in \Phi_i^K}  {\rm I}_h  = & \max_{\phi \in \Phi_i^K} \sum_{x_{h}\in\cX_{i,h}} \sum_{b_{1:h-1}}\indicDleK\prod_{k=1}^{h-1}\muphi_k (a_{k}|x_k,b_{1:k \wedge \tau_K})\\
   & \times \sum_{t=1}^T \prod_{k=1}^{h-1} \pi_i^t(b_{k}|x_{k}) \Big\langle \pi_i^t(\cdot|x_h)-\muphi_h(\cdot|x_h,b_{1:h\wedge \tau_K}), L_{i,h}^{t}(x_h,\cdot) \Big\rangle\\
    \le & \max_{\phi \in \Phi_i^K} \sum_{x_{h}\in\cX_{i,h}} \sum_{b_{1:h-1}} \indicDleK \prod_{k=1}^{h-1}\muphi_k (a_{k}|x_k,b_{1:k}) \\
    & \times \max_{\varphi} \sum_{t=1}^T \prod_{k=1}^{h-1} \pi_i^t(b_{k}|x_{k}) \Big\langle \pi_i^t(\cdot|x_h)-(\varphi \circ \pi_i^t)(\cdot|x_h), L_{i,h}^{t}(x_h,\cdot) \Big\rangle\\
    \stackrel{(i)}{=} & \max_{\phi \in \Phi_i^K} \sum_{x_{h}\in\cX_{i,h}} \sum_{b_{1:h-1}} \indicDleK \prod_{k=1}^{h-1}\muphi_k (a_{k}|x_k,b_{1:k}) \what{R}^{T, \swap}_{(x_h, b_{1:h-1}) }\\ =& \max_{\phi \in \Phi_i^K} \sum_{x_{h}\in\cX_{i,h}}  G_h^{T, \swap}(x_h).
\end{align*}
}Above, (i) follows by the definition of the local swap regret in~\eqref{equation:local-swap-regret}.
For ${\rm II}_h$, since $\dev(a_{1:h-1},b_{1:h-1})\ge K$, at some step $h' \in [K, h-1]$, the number of deviations was already $K$.  In fact, such $h'$ is $\tau_K$. In this case, $\pi^t_i(x_h)$ cannot be observed. 
for  ${\rm II}_h$, we have
{\proofsize \begin{align*}
    \max_{\phi \in \Phi_i^K}   {\rm II}_h = & \max_{\phi \in \Phi_i^K} \sum_{x_{h}\in\cX_{i,h}} \sum_{b_{1:h-1}}\indic{\dev(a_{1:h-1},b_{1:h-1}) \ge K }\prod_{k=1}^{h-1}\muphi_k (a_{k}|x_k,b_{1:k \wedge \tau_K})\\ 
    &\times \sum_{t=1}^T \prod_{k=1}^{h-1} \pi_i^t(b_{k}|x_{k}) \Big\langle \pi_i^t(\cdot|x_h)-\muphi_h(\cdot|x_h,b_{1:h\wedge \tau_K}), L_{i,h}^{t}(x_h,\cdot) \Big\rangle\\
    =~&\max_{\phi \in \Phi_i^K} \sum_{x_{h}\in\cX_{i,h}} \sum_{b_{1:h-1}}\sum_{h' = K}^{h-1} \indic{\tau_K = h' }\prod_{k=1}^{h-1}\muphi_k (a_{k}|x_k,b_{1:k \wedge h'})\\ 
    & \times \sum_{t=1}^T \prod_{k=1}^{h-1} \pi_i^t(b_{k}|x_{k})  \Big\langle \pi_i^t(\cdot|x_h)-\muphi_h(\cdot|x_h,b_{1:h\wedge h'}), L_{i,h}^{t}(x_h,\cdot) \Big\rangle \\
    =~&\max_{\phi \in \Phi_i^K} \sum_{x_{h}\in\cX_{i,h}} \sum_{h' = K}^{h-1} \sum_{b_{1:h-1}} \indic{\tau_K = h' } \prod_{k=1}^{h-1} \muphi_k (a_{k}|x_k,b_{1:k \wedge h'})\\
    & \times \sum_{t=1}^T \prod_{k=1}^{h-1} \pi_i^t(b_{k}|x_{k}) \Big\langle \pi_i^t(\cdot|x_h)-\muphi_h(\cdot|x_h,b_{1:h\wedge h'}), L_{i,h}^{t}(x_h,\cdot) \Big\rangle\\
    \overset{(i)}{=}~&\max_{\phi \in \Phi_i^K} \sum_{x_{h}\in\cX_{i,h}} \sum_{h' = K}^{h-1} \sum_{b_{1:h'}} \indic{\tau_K = h' }\prod_{k=1}^{h-1}\muphi_k (a_{k}|x_k,b_{1:k \wedge h'})\\
    & \times \sum_{t=1}^T \prod_{k=1}^{h'} \pi_i^t(b_{k}|x_{k}) \Big\langle \pi_i^t(\cdot|x_h)-\muphi_h(\cdot|x_h,b_{1:h\wedge h'}), L_{i,h}^{t}(x_h,\cdot) \Big\rangle\\
    \le~&\max_{\phi \in \Phi_i^K} \sum_{x_{h}\in\cX_{i,h}} \sum_{h' = K}^{h-1} \sum_{b_{1:h'}} \indic{\tau_K = h' }\prod_{k=1}^{h-1}\muphi_k (a_{k}|x_k,b_{1:k \wedge h'}) \\
    &\times \max_a \sum_{t=1}^T \prod_{k=1}^{h'} \pi_i^t(b_{k}|x_{k}) \paren{ L^t_{i,h}(x_h,\pi_{i}^t(x_h)) - L^t_{i,h}(x_h,a)}\\
    \stackrel{(ii)}{=}~&\max_{\phi \in \Phi_i^K} \sum_{h' = K}^{h-1} \sum_{b_{1:h'}} \indic{\tau_K = h' }\prod_{k=1}^{h-1}\muphi_k (a_{k}|x_k,b_{1:k \wedge h'}) \what{R}^{T, \ext}_{(x_{h}, b_{1:h'}), x_h }.
\end{align*}
}Here, (i) uses the fact that for fixed $x_h$, $h'$ and $b_{1:h'}$ with $\tau_K = h'$, there is only one and only one $b_{h'+1:h-1}$ satisfying $\tau_K = h$ and $\sum_{b_{h'+1:h-1}} \prod_{k=h'+1}^{h-1} \pi_i^t(b_{k}|x_{k}) = 1$; (ii) follows by definition of the local external regret in~\eqref{equation:local-external-regret}.
Furthermore, for fixed $x_h$, we can expand $b_{1:h'}$ to $b_{1:h} \defeq (b_{1:h'}, a_{h^{\star }+ 1}, \cdots, a_{h-1})$ such that $\dev(a_{1:h-1},b_{1:h-1}) =K$, then we can rewrite 

{\proofsize \begin{align*}
    \sum_{h' = K}^{h-1} \sum_{b_{1:h'}} \indic{\tau_K = h' }\prod_{k=1}^{h-1}\muphi_k (a_{k}|x_k,b_{1:k \wedge h'}) \what{R}^{T, \ext}_{(x_h, b_{1:h'}), x_h }
\end{align*}
} as
{\proofsize \begin{align*}
    &\sum_{h' = K}^{h-1} \sum_{b_{1:h'}} \indic{\tau_K = h' }\prod_{k=1}^{h-1}\muphi_k (a_{k}|x_k,b_{1:k \wedge h'}) \what{R}^{T, \ext}_{(x_h, b_{1:h'}), x_h }\\
    =& \sum_{h' = K}^{h-1} \sum_{b_{1:h-1}} \indic{\tau_K = h' , \dev(a_{1:h-1},b_{1:h-1}) =K }\prod_{k=1}^{h-1}\muphi_k (a_{k}|x_k,b_{1:k \wedge h'}) \what{R}^{T, \ext}_{(x_h, b_{1:h'}), x_h }\\
    =& \sum_{b_{1:h-1}} \indicDeqK \prod_{k=1}^{h-1} \muphi_k(a_k \vert x_k, b_{1:k \wedge \tau_K}) \what{R}^{T, \ext}_{(x_h, b_{1:\tau_K}), x_h} = G_h^{T, \ext}(x_h).
\end{align*}
}Consequently,
$$
\max_{\phi \in \Phi_i^K} {\rm II}_h \le  \max_{\phi \in \Phi_i^K} \sum_{x_{h}\in\cX_{i,h}} G_h^{T, \ext}(x_h).
$$
Finally, combining the above bounds for $\max_{\phi\in\Phi_i^K} {\rm I}_h$ and $\max_{\phi\in\Phi_i^K} {\rm II}_h$ gives the desired result:
$$
\wt{R}_h^T = \max_{\phi \in \Phi_i^K} ({\rm I}_h + {\rm II}_h ) \le  \max_{\phi \in \Phi_i^K} \sum_{x_h\in \mc{X}_{i,h }} G_h^{T, \swap}(x_h)+\max_{\phi \in \Phi_i^K} \sum_{x_h\in \mc{X}_{i,h }} G_h^{T, \ext}(x_h)  = R_h^T. 
$$
\end{proof}

\section{Proofs for Section~\ref{section:full-feedback}}
\label{appendix:full-feedback}


The rest of this section is devoted to proving Theorem~\ref{theorem:kcfr-full-feedback}.

\subsection{Proof of Theorem \ref{theorem:kcfr-full-feedback}}
\label{appendix:proof-full-feedback}


\begin{proof-of-theorem} [\ref{theorem:kcfr-full-feedback}]
  The proof follows by bounding the $\kefce$ regret 
  \begin{align*}
      R_{i,K}^T = \max_{\phi \in \Phi_i^K} \sum_{t = 1}^T \Big( V_{i}^{\phi\circ \pi_i^t \times \pi_{-i}^t} - V_{i}^{\pi^t} \Big)
  \end{align*}
  for all players $i \in [m]$, and then converting to a bound on $\kefcegap(\wb{\pi})$ by the online-to-batch conversion (Lemma~\ref{lem:online-to-batch}). 
  
  By the regret decomposition for $R_{i,K}^T$ (Lemma~\ref{lem:k-ce-regret_decomposition}), we have $R_{i, K}^T \leq \sum_{h=1}^H R_h^T$, where 
  \[
  R_{h}^{T} \defeq \max_{\phi \in \Phi_i^K} \sum_{x_{h}\in\cX_{i,h}}   G_{h}^{T, \swap}(x_{h}; \muphi) + \max_{\phi \in \Phi_i^K} \sum_{x_{h}\in\cX_{i,h}} G_{h}^{T, \ext}(x_{h}; \muphi) .
  \]
  

The following two lemmas bound two terms 
$$\sum_{h=1}^H \max_{\phi \in \Phi_i^K}\sum_{x_{h}\in \cX_{i, h}}G_{h}^{T, \swap}(x_{h};\muphi)~~\text{and}~~ \sum_{h=1}^H \max_{\phi \in \Phi_i^K}\sum_{x_{h}\in \cX_{i, h}}G_{h}^{T, \ext}(x_{h};\muphi).
$$ Their proofs are presented in Section~\ref{appendix:proof-g1-fullfeedback} \&~\ref{appendix:proof-g2-fullfeedback} respectively.

\begin{lemma}[Bound on summation of $G_h^{T, \swap}(x_{i,h})$ with full feedback]\label{lemma:bound-G_h1-fullfeedback}
If we choose learning rates as 
$$
\eta_{x_h } = \sqrt{{H \choose K\wedge H } X_i  A_i^{K \wedge
 H} \log  A_i / (H^2T) }
$$ for all $x_h \in \cX_{i,h}$ (same with \eqref{equation:learning-rate-full-feedback}).
Then we have
 {\proofsize \begin{gather*}
     \sum_{h=1}^H \max_{\phi \in \Phi_i^K}\sum_{x_{h}\in \cX_{i, h}}G_h^{T, \swap}(x_{h};\muphi)
     \le  
    \sqrt{H^4 {H \choose K\wedge H }X_i A_i^{K \wedge
 H   }  T\log A_i} .
 \end{gather*}
} 
\end{lemma}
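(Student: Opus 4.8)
The plan is to control each local swap regret $\what{R}^{T,\swap}_{(x_h,b_{1:h-1})}$ through the wide-range regret guarantee of \wrhedge\ (Lemma~\ref{lemma:regret-with-time-selection}), and then aggregate these bounds through the definition~\eqref{eqn:G1_in_K_CE} of $G_h^{T,\swap}$. Concretely, the regret minimizer $\regmin_{x_h}$ runs with time-selection functions indexed by $b_{1:h-1}\in\omegai(x_h)$, with $S^t_{b_{1:h-1}}=\prod_{k=1}^{h-1}\pi_{i,k}^t(b_k|x_k)$, recommended distribution $p^t=\pi_{i,h}^t(\cdot|x_h)$, and loss vector $\ell^t=\L_{i,h}^t(x_h,\cdot)$. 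Lemma~\ref{lemma:regret-with-time-selection} then bounds $\what{R}^{T,\swap}_{(x_h,b_{1:h-1})}$ by a \emph{loss-dependent term} $\eta_{x_h}\sum_{t}\|\L_{i,h}^t(x_h,\cdot)\|_\infty S^t_{b_{1:h-1}}\<\pi_{i,h}^t(\cdot|x_h),\L_{i,h}^t(x_h,\cdot)\>$ plus a \emph{logarithmic term} $\eta_{x_h}^{-1}\log[(|\omegai(x_h)|+|\omegaii(x_h)|)\,|\Psi^s|]$. Substituting into~\eqref{eqn:G1_in_K_CE}, summing over $x_h$ and $h$, and maximizing over $\phi$ splits the target quantity into a loss-dependent contribution and a logarithmic contribution, which I bound separately.

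For the loss-dependent contribution the key observation is that the time-selection-weighted sum collapses to a sequence-form reaching probability. Using $\muphi_k(a_k|x_k,b_{1:k})=\indic{\phi(x_k,b_{1:k})=a_k}$, the weight $\sum_{b_{1:h-1}}\indicDleK\prod_{k=1}^{h-1}\muphi_k(a_k|x_k,b_{1:k})\prod_{k=1}^{h-1}\pi_{i,k}^t(b_k|x_k)\cdot\pi_{i,h}^t(a_h|x_h)$ is exactly the ``$\le K-1$ deviation'' part of the sequence form $(\phi'\circ\pi_i^t)_{1:h}(x_h,a_h)$, where $\phi'\in\Phi_i^K$ agrees with $\phi$ on the first $h-1$ steps and \emph{follows} the recommendation at step $h$ (the same manipulation as in Lemma~\ref{lemma:general-balancing-mu}). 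Since the complementary part and $\L_{i,h}^t$ are nonnegative, this quantity is at most $(\phi'\circ\pi_i^t)_{1:h}(x_h,a_h)$, and because $\phi'\circ\pi_i^t\in\Pi_i$ is a genuine policy, Lemma~\ref{lemma:counterfactual-loss-bound}(a) yields $\sum_{x_h,a_h}(\phi'\circ\pi_i^t)_{1:h}(x_h,a_h)\L_{i,h}^t(x_h,a_h)\le H-h+1$. Combined with $\|\L_{i,h}^t(x_h,\cdot)\|_\infty\le H$ (Lemma~\ref{lemma:counterfactual-loss-bound}(b)), the layer-$h$ loss contribution is at most $\eta_{x_h}H(H-h+1)T$, so the total over $h$ is $O(\eta\,H^3T)$, uniformly in $\phi$.

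For the logarithmic contribution I bound the combinatorial count $\sum_{x_h}\sum_{b_{1:h-1}}\indicDleK\prod_{k}\muphi_k(a_k|x_k,b_{1:k})\le X_{i,h}\,|\omegai(x_h)|$, using that $\indicDleK$ alone restricts $b_{1:h-1}$ to the $|\omegai(x_h)|$ sequences within Hamming distance $K-1$ of the action sequence leading to $x_h$, and that each $\prod_k\muphi_k\le 1$; this is uniform in $\phi$. Since $|\omegai(x_h)|=\sum_{j=0}^{(K-1)\wedge(h-1)}\binom{h-1}{j}(A_i-1)^j\le (K\wedge H)\binom{H}{K\wedge H}A_i^{(K\wedge H)-1}$ and $\sum_h X_{i,h}=X_i$, we get $\sum_h X_{i,h}|\omegai(x_h)|\le O(H)\binom{H}{K\wedge H}X_iA_i^{(K\wedge H)-1}$. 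With $\log|\Psi^s|=A_i\log A_i$, the extra $A_i$ compensates the missing factor in $A_i^{(K\wedge H)-1}$, so the logarithmic contribution is $O\big(\eta^{-1}H\binom{H}{K\wedge H}X_iA_i^{K\wedge H}\log A_i\big)$. Plugging in the learning rate $\eta=\eta_{x_h}$ from~\eqref{equation:learning-rate-full-feedback}, which is chosen precisely to balance the two contributions, both become of order $\sqrt{H^4\binom{H}{K\wedge H}X_iA_i^{K\wedge H}T\log A_i}$, establishing the claim.

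I would flag the loss-dependent step as the main obstacle: recognizing that the time-selection-weighted sum is the Type-I part of a legitimate modified policy $\phi'\circ\pi_i^t$, so that Lemma~\ref{lemma:counterfactual-loss-bound}(a) applies with the sharp leading factor $H-h+1$, is exactly what removes one factor of $X_i$ relative to the bound of~\citet{morrill2021efficient}. The second delicate point is the cancellation between the combinatorial count $|\omegai(x_h)|\sim A_i^{(K\wedge H)-1}$ and $\log|\Psi^s|\sim A_i\log A_i$, which is what produces the sharp exponent $A_i^{K\wedge H}$ together with the $\log A_i$ (rather than $A_i\log A_i$) factor inside the square root.
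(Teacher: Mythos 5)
Your proposal is correct and takes essentially the same route as the paper's proof: apply the wide-range regret bound of \wrhedge{} (Lemma~\ref{lemma:regret-with-time-selection}) at each infoset, split the result into a loss-dependent term and a logarithmic term, bound the former by recognizing the $\muphi$- and $\pi_i^t$-weighted sums as (parts of) sequence forms of modified policies, and bound the latter by counting Type-I \rechists{} against the $\log[(\abs{\cB^s}+\abs{\cB^e})\abs{\Psi^s}] \le (A_i+H)\log A_i$ factor. The only cosmetic difference is in the normalization of the loss term: you apply Lemma~\ref{lemma:counterfactual-loss-bound}(a) to the extended modification $\phi'$ that follows the recommendation at step $h$, whereas the paper bounds $\|\L_{i,h}^t(x_h,\cdot)\|_\infty \le H\,p^{\pi_{-i}^t}_{1:h}(x_h)$ and invokes Corollary~\ref{lemma:sum=1}; both steps rest on the same underlying fact (Lemma~\ref{lemma:general-balancing-mu}) and yield the same $O(\eta H^3 T)$ contribution.
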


\begin{lemma}[Bound on summation of $G_h^{T, \ext}(x_{i,h})$  with full feedback]\label{lemma:bound-G_h2-fullfeedback}
If we choose learning rates as 
$$
\eta_{x_h} = \sqrt{{H \choose K\wedge H } X_i A_i^{K\wedge h} \log A_i / (H^2T)}
$$
for all $x_h \in \cX_{i,h}$ (same with \eqref{equation:learning-rate-full-feedback}). Then we have
 {\proofsize \begin{gather*}
     \sum_{h=1}^H \max_{\phi \in \Phi_i^K}\sum_{x_{h}\in \cX_{i, h}}G_h^{T, \ext}(x_{h};\muphi)
     \le   \mc{O}  \paren{ \sqrt{H^4 {H \choose K\wedge H } X_i A_i^{K \wedge H } T \log A_i } }.
 \end{gather*}
}
\end{lemma}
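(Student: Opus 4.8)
The plan is to bound $\sum_{h=1}^H \max_{\phi\in\Phi_i^K}\sum_{x_h\in\cX_{i,h}}G_h^{T,\ext}(x_h;\muphi)$ by inserting the \emph{loss-dependent} wide-range regret bound of \wrhedge~(Lemma~\ref{lemma:regret-with-time-selection}) into the second form of the definition~\eqref{eqn:G2_in_K_CE} of $G_h^{T,\ext}$, and then controlling the resulting ``loss term'' and ``constant term'' separately. First I would observe that for each Type-II \rechist~$(x_h,b_{1:h'})\in\omegaii(x_h)$, the local external regret $\what R^{T,\ext}_{(x_h,b_{1:h'})}$ in~\eqref{equation:local-external-regret} is exactly the external-modification regret controlled by the minimizer $\regmin_{x_h}$, with external time-selection index $b_{1:h'}$, time-selection function $S^t_{b_{1:h'}}=\prod_{k=1}^{h'}\pi_i^t(b_k|x_k)$, loss vector $L_{i,h}^t(x_h,\cdot)$, and iterate $\pi_{i,h}^t(\cdot|x_h)$. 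Applying the external part of Lemma~\ref{lemma:regret-with-time-selection} with $|\Psi^e|=A_i$ then gives, for every such \rechist, a bound of the form $\what R^{T,\ext}_{(x_h,b_{1:h'})}\le \eta_{x_h}\sum_{t}\|L_{i,h}^t(x_h,\cdot)\|_\infty S^t_{b_{1:h'}}\langle\pi_{i,h}^t(\cdot|x_h),L_{i,h}^t(x_h,\cdot)\rangle+\iota'/\eta_{x_h}$, where $\iota'=\log[(|\omegai(x_h)|+|\omegaii(x_h)|)A_i]$ is a poly-logarithmic factor.

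For the loss term I would substitute this first-order bound and combine two facts that together produce the linear-in-$X_i$ dependence. By Lemma~\ref{lemma:counterfactual-loss-bound}(b) we have $\|L_{i,h}^t(x_h,\cdot)\|_\infty\le (H-h+1)\,p^{\pi_{-i}^t}_{1:h}(x_h)$ and $\langle\pi_{i,h}^t(\cdot|x_h),L_{i,h}^t(x_h,\cdot)\rangle\le H-h+1$, so each summand is at most $(H-h+1)^2\,p^{\pi_{-i}^t}_{1:h}(x_h)\,S^t_{b_{1:h'}}$. Then, after weighting by $\indic{\tau_K=h'}\prod_{k=1}^{h-1}\muphi_k(a_k|x_k,b_{1:k\wedge h'})$ and summing over $x_h$ and all Type-II \rechists, the quantity $\sum_{x_h}\sum_{h'}\sum_{b_{1:h'}}\indic{\tau_K=h'}\prod_k\muphi_k(\cdot)\,S^t_{b_{1:h'}}\,p^{\pi_{-i}^t}_{1:h}(x_h)$ is precisely the Type-II half of the identity in Corollary~\ref{lemma:sum=1} applied with $\pi=\pi^t$ (using that summing the free coordinates $b_{h'+1:h-1}=a_{h'+1:h-1}$ contributes a factor $1$), hence is at most $1$. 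Taking $\eta_{x_h}\equiv\eta$, the loss term at layer $h$ is therefore at most $\eta\,T\,(H-h+1)^2\le \eta TH^2$, uniformly in $\phi$.

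For the constant term I would bound $\prod_k\muphi_k(\cdot)\le 1$ and count the Type-II \rechists per infoset. Fixing $x_h$ (hence the leading action sequence $a_{1:h-1}$), a length-$h'$ Type-II \rechist~has exactly $K-1$ deviations among the first $h'-1$ coordinates and a deviation at $h'$, so there are $\binom{h'-1}{K-1}(A_i-1)^K$ of them; summing over $h'\in[K,h-1]$ by the hockey-stick identity gives $\binom{h-1}{K}(A_i-1)^K\le{H\choose K\wedge H}A_i^{K\wedge H}$. Hence the constant term at layer $h$ is at most $(\iota'/\eta)\,X_{i,h}{H\choose K\wedge H}A_i^{K\wedge H}$, again uniformly in $\phi$ (and identically zero when $h-1<K$). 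Summing both contributions over $h$ yields $\sum_h\max_\phi\sum_{x_h}G_h^{T,\ext}\le \eta TH^3+(\iota'/\eta)\,H\,X_i{H\choose K\wedge H}A_i^{K\wedge H}$, and plugging in the stated choice $\eta_{x_h}=\sqrt{{H\choose K\wedge H}X_iA_i^{K\wedge H}\log A_i/(H^2T)}$ balances the two terms and produces the claimed $\mc{O}(\sqrt{H^4{H\choose K\wedge H}X_iA_i^{K\wedge H}T\log A_i})$ bound, with the poly-logarithmic $\iota'$ absorbed into the poly-$H$ factor hidden by $\mc{O}(\cdot)$.

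The main obstacle is the loss-term step. The crucial point is that one must use the \emph{first-order} (loss-dependent) form of the \wrhedge~regret bound rather than a worst-case $\sqrt{T}$ bound: only this form lets the weighted aggregate of the $\|L_{i,h}^t\|_\infty\langle\pi_{i,h}^t,L_{i,h}^t\rangle$ terms collapse—via the reaching-probability identity of Corollary~\ref{lemma:sum=1} together with the counterfactual-loss bound of Lemma~\ref{lemma:counterfactual-loss-bound}(b)—to a quantity bounded by $H^2$ per iteration, \emph{independent} of $X_i$. It is precisely this collapse that removes one factor of $X_i$ relative to the analysis of \citet{morrill2021efficient} and yields the sharp linear-in-$X_i$ rate; by contrast, the combinatorial counting for the constant term and the subsequent $\eta$-balancing are routine.
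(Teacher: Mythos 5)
Your proposal is correct and follows essentially the same route as the paper's proof: apply the loss-dependent (first-order) bound of Lemma~\ref{lemma:regret-with-time-selection} to each Type-II \rechist~at $\regmin_{x_h}$, collapse the weighted loss term to at most $T$ per layer via $\|L_{i,h}^t(x_h,\cdot)\|_\infty\le H\,p^{\pi_{-i}^t}_{1:h}(x_h)$ together with Corollary~\ref{lemma:sum=1}, count the Type-II \rechists~per infoset by a $\binom{h-1}{K}A_i^{K\wedge H}$-type bound for the constant term, and balance with the stated learning rate. The only cosmetic differences are your hockey-stick counting with $(A_i-1)^K$ (the paper bounds the count of full-length sequences directly) and calling the $\log[(|\cB^s|+|\cB^e|)|\Psi^e|]\le (H+1)\log A_i$ factor ``poly-logarithmic'' (it is linear in $H$), neither of which affects correctness since both are absorbed into the hidden ${\rm poly}(H)$ factor.
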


So by Lemma~\ref{lemma:bound-G_h1-fullfeedback},~\ref{lemma:bound-G_h2-fullfeedback}, we get
{\proofsize
\begin{align*}
    & R_{i, K}^T \le \sum_{h=1}^H R_h^T\\
    \le &~ \sum_{h=1}^H \paren{\max_{\phi \in \Phi_i^K}\sum_{x_{h}\in \cX_{i, h}}G_h^{T, \swap}(x_{h};\muphi) + \max_{\phi \in \Phi_i^K}\sum_{x_{h}\in \cX_{i, h}}G_h^{T, \ext}(x_{h};\muphi)}\\
    \le & ~\mc{O}  \paren{ \sqrt{H^4 {H \choose K\wedge H } X_i A_i^{K \wedge
 H } T \log A_i } } +\mc{O}  \paren{ \sqrt{H^4 {H \choose K\wedge H }  X_i A_i^{K\wedge H} T \log A_i } } \\
 = &~ \cO\paren{ \sqrt{H^4{H\choose K\wedge H} A_i^{K\wedge H}X_iT \log A_i}}.
\end{align*}
}Therefore, we have the regret bound
\begin{align}
    \label{equation:kefce-regret-bound}
    R_{i, K}^T \le \cO\paren{ \sqrt{H^4{H\choose K\wedge H} A_i^{K\wedge H}X_iT \log A_i}}.
\end{align}
As long as
{\proofsize
\begin{align*}
    T \ge \cO\paren{   H^4 {H \choose K\wedge H} \paren{\max_{i\in[m]}X_iA_i^{K \wedge H}}\log A_i / \eps^2 },
\end{align*}
}we have by the online-to-batch lemma (Lemma~\ref{lem:online-to-batch}) that the average policy $\bar\pi={\rm Unif}(\{\pi^t\}_{t=1}^T)$ satisfies

{\proofsize
\begin{align}\label{equation:bound-kceregret}
     \kefcegap(\wb{\pi}) = \frac{\max_{i\in[m]} R_{i,K}^T}{T} \le \max_{i\in[m]} \cO\sqrt{ H^4 {H \choose K\wedge H} \paren{\max_{i\in[m]}X_iA_i^{K \wedge H}}\log A_i / T } \le \eps.
\end{align}
}This proves Theorem~\ref{theorem:kcfr-full-feedback}.
\end{proof-of-theorem}

We remark that the above proof does not depend on the particular choice of $\pi_{-i}^t$, and thus the regret bound~\eqref{equation:kefce-regret-bound} also holds even if we control the \ith player only, and $\pi_{-i}^t$ are arbitrary (potentially adversarial depending on all information before iteration $t$ starts). This directly gives the following corollary.
\begin{corollary}[$\kefce$ regret bound for \kscfr]
\label{corollary:kscfr-regret}
For any $0\le K\le \infty$, $\epsilon \in (0,H]$, suppose the \ith player runs Algorithm \ref{algorithm:kefr-fullfeedback} togethe  against arbitrary (potentially adversarial) opponents $\set{\pi_{-i}^t}_{t=1}^T$, where \Regalg~is instantiated as Algorithm \ref{algorithm:time-selection-swap}~with learning rates specified in (\ref{equation:learning-rate-full-feedback}). Then the \ith player achieves $\kefce$ regret bound:
\begin{align*}
    R_{i,K}^T = \max_{\phi \in \Phi_i^K} \sum_{t = 1}^T \Big( V_{i}^{\phi\circ \pi_i^t \times \pi_{-i}^t} - V_{i}^{\pi^t} \Big) \le \cO\paren{ \sqrt{H^4{H\choose K\wedge H}X_i A_i^{K\wedge H}T\log A_i} }.
\end{align*}
\end{corollary}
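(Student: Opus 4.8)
The plan is to observe that the regret bound~\eqref{equation:kefce-regret-bound} established within the proof of Theorem~\ref{theorem:kcfr-full-feedback} is obtained entirely from an analysis of the \ith player in isolation, and never invokes any assumption on how the opponents generate the sequence $\set{\pi_{-i}^t}_{t=1}^T$. Concretely, the derivation of~\eqref{equation:kefce-regret-bound} proceeds through two ingredients: (i) the $\kefce$ regret decomposition $R_{i,K}^T \le \sum_{h=1}^H R_h^T$ of Lemma~\ref{lem:k-ce-regret_decomposition}, and (ii) the bounds on $\sum_h \max_\phi \sum_{x_h} G_h^{T,\swap}(x_h;\muphi)$ and $\sum_h \max_\phi \sum_{x_h} G_h^{T,\ext}(x_h;\muphi)$ from Lemma~\ref{lemma:bound-G_h1-fullfeedback} and Lemma~\ref{lemma:bound-G_h2-fullfeedback}. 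First I would verify that both ingredients hold verbatim for any fixed exogenous sequence $\set{\pi_{-i}^t}_{t=1}^T$.

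For ingredient (i), the regret decomposition in Lemma~\ref{lem:k-ce-regret_decomposition} is a purely algebraic identity-plus-inequality valid for \emph{any} sequence of product policies $\set{\pi^t}$; its proof rewrites $R_{i,K}^T$ as a telescoping sum of per-step counterfactual-loss differences and bounds each step by the local swap/external regrets, using only the structure of $\muphi\circ\pi_i^t$ and the definition of the counterfactual losses $L_{i,h}^t$. Nothing there constrains $\pi_{-i}^t$. For ingredient (ii), the bounds in Lemmas~\ref{lemma:bound-G_h1-fullfeedback} and~\ref{lemma:bound-G_h2-fullfeedback} rest ultimately on the wide-range regret guarantee of \wrhedge~(Lemma~\ref{lemma:regret-with-time-selection}), which is a statement about online learning against an \emph{arbitrary} array of loss vectors $\set{\ell^t}$ and time-selection functions $\set{S^t_b}$, together with the boundedness of the counterfactual losses (Lemma~\ref{lemma:counterfactual-loss-bound}) and the normalization properties of the \ith player's reaching probabilities (e.g.\ Corollary~\ref{lemma:sum=1}). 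Each of these holds for every $\pi_{-i}^t$ separately---the opponents' play enters only through the realized loss vectors $L_{i,h}^t(x_h,\cdot)$ that the \ith player observes in Line~\ref{line:observe-loss}, which the regret analysis treats as given.

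Chaining (i) and (ii) with the learning rate~\eqref{equation:learning-rate-full-feedback} then reproduces exactly
\begin{align*}
    R_{i,K}^T \le \cO\paren{ \sqrt{H^4{H\choose K\wedge H}X_i A_i^{K\wedge H}T\log A_i} },
\end{align*}
which is the claim. The only place in the proof of Theorem~\ref{theorem:kcfr-full-feedback} where all players' behavior actually matters is the final online-to-batch conversion (Lemma~\ref{lem:online-to-batch}) that turns the per-player regret into the $\kefcegap$ of the average correlated policy; since the corollary asserts only the regret bound and not an equilibrium guarantee, this step is simply dropped.

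The main---and essentially only---thing to check carefully is that no hidden dependence on self-play slips into the constants of Lemmas~\ref{lemma:bound-G_h1-fullfeedback} and~\ref{lemma:bound-G_h2-fullfeedback}: one must confirm that every quantity appearing in those bounds (the count ${H\choose K\wedge H}$, the $X_i$ and $A_i^{K\wedge H}$ factors, the $\log A_i$ term, and the loss-magnitude bound $\le H$) is determined by the \ith player's own game tree together with the boundedness of the observed losses, rather than by any property of $\pi_{-i}^t$. Once this is verified the corollary is immediate.
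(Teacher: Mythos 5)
Your proposal is correct and is essentially identical to the paper's own argument: the paper derives this corollary precisely by remarking that the proof of Theorem~\ref{theorem:kcfr-full-feedback} (the decomposition of Lemma~\ref{lem:k-ce-regret_decomposition} combined with Lemmas~\ref{lemma:bound-G_h1-fullfeedback} and~\ref{lemma:bound-G_h2-fullfeedback}) nowhere uses how $\pi_{-i}^t$ is generated, so the bound~\eqref{equation:kefce-regret-bound} holds verbatim against arbitrary opponents, with the online-to-batch step simply omitted. Your additional verification that the opponents enter only through the observed losses $L_{i,h}^t(x_h,\cdot)$ and that all constants depend only on the \ith player's game tree is exactly the right sanity check, though the paper leaves it implicit.
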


\subsection{Proof of Lemma \ref{lemma:bound-G_h1-fullfeedback}}
\label{appendix:proof-g1-fullfeedback}

\begin{proof}
Recall that $G_{h}^{T, \swap}(x_h; \muphi)$ is defined as 
$$
G_{h}^{T, \swap}(x_h; \muphi) \equiv~ \sum_{b_{1:h-1}} \indicDleK \prod_{k=1}^{h-1} \muphi_k(a_{k} \vert x_k, b_{1:k}) \what{R}^{T, \swap}_{(x_h, b_{1:h-1} )}.
$$
For each $h\in [H]$ and $(x_{h}, b_{1: h-1}) \in \omegai$, we have
{\proofsize \begin{align*}
    \hat R_{(x_{h}, b_{1: h-1})}^{T, \rm swap} =& \max_\vphi \sum_{t = 1}^T \prod_{k = 1}^{h-1} \pi_i^t (b_{k}|x_{k}) \Big( \< \pi_{i, h}^t(\cdot \vert x_{i, h}) - \vphi \circ \pi_{i, h}^t(\cdot \vert x_{i, h}), \L_{i,h}^{t}(x_{i, h},\cdot) \> \Big) .
\end{align*}
}

For $x_h\in \cX_{i,h}$, we first apply regret minimization lemma (Lemma \ref{lemma:regret-with-time-selection}) on $\regmin_{x_h}$ to give an upper bound of $\hat R_{(x_{h}, b_{1: h-1})}^{T, \rm swap}$.  The regret minimizer $\regmin_{x_{h}}$ observes time selection functions $S^t_{b_{1:h-1}} = \prod_{k=1}^{h-1} \pi_i^t(b_{k}|x_{k})$ for $(x_h, b_{1:h-1}) \in \omegai$ and losses $L_{i,h}^t(x_{h},\cdot)$. Suppose $\regmin_{x_h}$ uses the learning rate $\eta_{x_h}$,  by the bound on regret with respect to a time selection function index and strategy modification pair (Lemma \ref{lemma:regret-with-time-selection}), we have
{\proofsize \begin{align*}
    &\max_\vphi \sum_{t = 1}^T \prod_{k = 1}^{h-1} \pi_i^t (b_{k}|x_{k}) \Big( \< \pi_{i, h}^t(\cdot \vert x_{i, h}) - \vphi \circ \pi_{i, h}^t(\cdot \vert x_{i, h}), \L_{i,h}^{t}(x_{i, h},\cdot) \> \Big) \\
    \le~& \eta  \sum_{t = 1}^T  \prod_{k = 1}^{h-1} \pi_i^t (b_{k}|x_{k}) \|L_{i,h}^t(x_h,\cdot)\|_\infty \<  \pi_i^t(\cdot \vert x_{h}) , L_{i,h}^{t}(x_{h},\cdot) \> +   \frac{(A_i + H)\log A_i }{\eta} \\
    \le~& \eta H^2 \sum_{t = 1}^T  \prod_{k = 1}^{h-1} \pi_i^t (b_{k}|x_{k}) p_{1:h}^{\pi_{-i}^t}(x_h) +   \frac{(A_i+H) \log A_i }{\eta} .
\end{align*}
}Here, we use (i) $\|L_{i,h}^t(x_h,\cdot)\|_\infty \le H p_{1:h}^{\pi_{-i}^t}(x_h)$, and (ii) $(|\cB^s|+|\cB^e|) |\Psi^s| \le A_i^{H+A_i}$ since the number of \infoseqs~is no more than $A_i^H$ .
Then we can get
{\proofsize \begin{align*}
    & \max_{\phi \in \Phi_i^K}\sum_{x_{h}\in \cX_{i, h}}G_{h}^{T, \swap}(x_{h};\muphi)\\=~& \max_{\muphi \in \Phi_i^K} \sum_{x_{h}\in \cX_{i, h}} \sum_{b_{1:h-1}} \indicDleK \prod_{k=1}^{h-1} \muphi_k(a_{k} \vert x_k, b_{1:k}) \\
    &\times \max_\vphi \sum_{t = 1}^T \prod_{k = 1}^{h-1} \pi_i^t (b_{k}|x_{k}) \Big( L_{i,h}^{t}(x_{h},\pi_i^t(x_{h})) - L_{i,h}^{t}(x_{h}, \vphi \circ \pi_i^t(x_{h}))  \Big) \\
    \le~& \eta H^2 \max_{\muphi \in \Phi_i^K} \sum_{x_{h}\in \cX_{i, h}} \sum_{b_{1:h-1}} \indicDleK \prod_{k=1}^{h-1} \muphi_k(a_{k} \vert x_k, b_{1:k})   \sum_{t=1}^T \prod_{k = 1}^{h-1} \pi_i^t (b_{k}|x_{k}) p_{1:h}^{\pi_{-i}^t}(x_h)  \\
    &+ \frac{(A_i+H) \log A_i }{\eta} \max_{\muphi \in \Phi_i^K} \sum_{x_{h}\in \cX_{i, h}} \sum_{b_{1:h-1}} \indicDleK   \prod_{k=1}^{h-1} \muphi_k(a_{k} \vert x_k, b_{1:k}) .
\end{align*}
}Letting 
{\proofsize\begin{gather*}
    {\rm{I}}_h =  \max_{\muphi \in \Phi_i^K} \sum_{x_{h}\in \cX_{i, h}} \sum_{b_{1:h-1}} \indicDleK  \prod_{k=1}^{h-1} \muphi_k(a_{k} \vert x_k, b_{1:k});\\
    {\rm{II}}_h =  \max_{\muphi \in \Phi_i^K} \sum_{x_{h}\in \cX_{i, h}} \sum_{b_{1:h-1}} \indicDleK \prod_{k=1}^{h-1} \muphi_k(a_{k} \vert x_k, b_{1:k}) 
     \sum_{t=1}^T \prod_{k = 1}^{h-1} \pi_i^t (b_{k}|x_{k}) p_{1:h}^{\pi_{-i}^t}(x_h).
\end{gather*}
}For fixed $\muphi \in \Phi_i^K$ and $x_h\in\mc{X}_{i,h}$, by counting the number of $b_{1:h-1}$ such that $\dev(a_{1:h-1},b_{1:h-1})\le K-1$, we have 
{\proofsize \begin{align*}
    &\sum_{b_{1:h-1}} \indicDleK  {\prod_{k=1}^{h-1} \muphi_k(a_{k} \vert x_k, b_{1:k})}\\
    \le ~ &  \sum_{b_{1:h-1}}\indicDleK  \le {h-1 \choose (K-1) \wedge (h-1)} A^{(K-1)\wedge(h-1)}.
\end{align*}}
Consequently,

{\proofsize \begin{equation}\label{equation:full-feedback-first-term-G1}
    {\rm I}_h \le  X_{i,h} {h-1 \choose (K-1) \wedge (h-1)}A_i^{K \wedge h - 1}.
\end{equation}
}Note that by Corollary \ref{lemma:sum=1}, for fixed $t$, {\proofsize
$$\sum_{x_{h}\in \cX_{i, h}} \sum_{b_{1:h-1}}\indicDleK \prod_{k=1}^{h-1} \muphi_k(a_{k} \vert x_k, b_{1:k}) \prod_{k = 1}^{h-1} \pi_i^t(b_k | x_k) p^{\pi^t_{-i}}_{1:h}(x_{i,h}) \le 1.
$$
}Consequently, we have
{\proofsize
\begin{equation*}
    \begin{gathered}
        \max_{\muphi \in \Phi_i^K} \sum_{x_{h}\in \cX_{i, h}} \sum_{b_{1:h-1}}\indicDleK \prod_{k=1}^{h-1} \muphi_k(a_{k} \vert x_k, b_{1:k}) \sum_{t = 1}^T \prod_{k = 1}^{h-1} \pi_i^t(b_k | x_k) p^{\pi^t_{-i}}_{1:h}(x_{i,h}) \le T.
    \end{gathered}
\end{equation*}
}This yields that
\begin{align}\label{equation:full-feedback-second-term-G1}
    &{\rm II}_h \le T.
\end{align}
Taking summation over $h\in[H]$, we have
{\proofsize \begin{align*}
    &\sum_{h=1}^H    \max_{\phi \in \Phi_i^K}\sum_{x_{h}\in \cX_{i, h}}G_{h}^{T, \swap}(x_{h};\muphi) \le \sum_{h=1}^H(\frac{(A_i+H) \log A_i }{\eta}{\rm I}_{h}+ H^2 \eta{\rm II}_{h})\\
    \le&~H^3 \eta T + \sum_{h=1}^H \frac{ (A_i+H) \log A_i }{\eta}X_{i,h} {h-1 \choose (K-1) \wedge (h-1)}A_i^{K \wedge h-1}\\
    \le &~ H^3 \eta T +  \frac{(A_i+H) \log A_i}{\eta}  X_i {H-1 \choose K\wedge H-1 }A_i^{K\wedge H  - 1}\\
    \le &~ H^3 \eta T +  \frac{H \log A_i}{\eta}  X_i {H \choose K\wedge H }A_i^{K\wedge H }.
\end{align*}}
As we chose $\eta = \sqrt{{H \choose K\wedge H } X_i A_i^{K \wedge
 H  } \log A_i / (H^2T) }$ per~\eqref{equation:learning-rate-full-feedback}, we have 
{\proofsize \begin{align*}
    \sum_{h=1}^H    \max_{\phi \in \Phi_i^K}\sum_{x_{h}\in \cX_{i, h}}G_{h}^{T, \swap}(x_{h};\muphi)
     \le ~\sqrt{H^4 {H \choose K\wedge H }X_i A_i^{K \wedge
 H }  T\log A_i}.
\end{align*}}
\end{proof}

\subsection{Proof of Lemma \ref{lemma:bound-G_h2-fullfeedback}}
\label{appendix:proof-g2-fullfeedback}
\begin{proof}
Recall that $G_{h}^{T, \ext}(x_h; \muphi)$ is defined as
$$
G_{h}^{T, \ext}(x_h; \muphi) \equiv~ \sum_{b_{1:h-1}} \indicDeqK \prod_{k=1}^{h-1} \muphi_k(a_k \vert x_k, b_{1:k \wedge \tau_K}) \what{R}^{T, \ext}_{(x_h, b_{1:\tau_K})}.
$$
Since $(x_{h}, b_{1,\tau_K})\in \omegaii$, we have 
{\proofsize \begin{align*}
    &\hat R_{(x_{h}, b_{1,\tau_K})}^{T, \rm ext} = \max_a\sum_{t = 1}^T \prod_{k = 1}^{\tau_K} \pi_i^t (b_{k}|x_{k})  \Big(  \< \pi_{i, h}^t(\cdot \vert x_{i, h}), L_{i,h}^{t}(x_{i, h},\cdot) \>- \L_{i,h}^{t}(x_{i, h}, a) \Big).
\end{align*}
}For $x_h\in \cX_{i,h}$, we can apply regret minimization lemma (Lemma \ref{lemma:regret-with-time-selection}) on $\regmin_{x_h}$ to give an upper bound of $\hat R_{(x_{h}, b_{1: h_{\tau_K}})}^{T, \rm ext}$.  The regret minimizer $\regmin_{x_{h}}$ observes time selection functions $S^t_{b_{1:h_{\tau_K}}} = \prod_{k=1}^{h_{\tau_K}} \pi_i^t(b_{k}|x_{k})$ for $(x_h, b_{1:h_{\tau_K}}) \in \omegaii$ and losses $L_{i,h}^t(x_{h},\cdot)$. Suppose all $\regmin_{x_h}$ use the same learning rate $\eta$,  by the bound on regret with respect to a time selection function index and strategy modification pair (Lemma \ref{lemma:regret-with-time-selection}), we have
{\proofsize \begin{align*}
    &\max_a\sum_{t = 1}^T \prod_{k = 1}^{\tau_K} \pi_i^t (b_{k}|x_{k})  \Big(  \< \pi_{i, h}^t(\cdot \vert x_{i, h}), L_{i,h}^{t}(x_{i, h},\cdot) \>- \L_{i,h}^{t}(x_{i, h}, a) \Big) \\
    \le~& \eta  \sum_{t = 1}^T  \prod_{k = 1}^{\tau_K} \pi_i^t (b_{k}|x_{k}) \|L_{i,h}^t(x_h,\cdot)\|_\infty \<  \pi_i^t(\cdot \vert x_{h}) , L_{i,h}^{t}(x_{h},\cdot) \> +   \frac{2H\log A_i }{\eta} \\
    \le~& \eta H^2 \sum_{t = 1}^T  \prod_{k = 1}^{\tau_K} \pi_i^t (b_{k}|x_{k}) p_{1:h}^{\pi_{-i}^t}(x_h) +   \frac{ 2H\log A_i }{\eta} .
\end{align*}
}Here, we use (i) $\|L_{i,h}^t(x_h,\cdot)\|_\infty \le H p_{1:h}^{\pi_{-i}^t}(x_h)$, and (ii) $(|\cB^s|+|\cB^e|) |\Psi^e| \le A_i^{H+1}$ since the number of \infoseqs~is no more than $A_i^H$ . Then we can get
{\proofsize \begin{align*}
    & \max_{\muphi \in \Phi_i^K} G_{h}^{T, \ext}(x_h; \muphi)\\
    =~& \max_{\muphi \in \Phi_i^K} \sum_{x_{h}\in \cX_{i, h}} \sum_{b_{1:h-1}} \indicDeqK \prod_{k=1}^{h-1} \muphi_k(a_{k} \vert x_k, b_{1:k \wedge \tau_K})  \\
    &\times \max_a\sum_{t = 1}^T \prod_{k = 1}^{\tau_K} \pi_i^t (b_{k}|x_{k})  \Big(  \< \pi_{i, h}^t(\cdot \vert x_{i, h}), L_{i,h}^{t}(x_{i, h},\cdot) \>- \L_{i,h}^{t}(x_{i, h}, a) \Big) \\
    \le~&  \eta H^2  \max_{\muphi \in \Phi_i^K} \sum_{x_{h}\in \cX_{i, h}} \sum_{b_{1:h-1}} \indicDeqK \prod_{k=1}^{h-1} \muphi_k(a_{k} \vert x_k, b_{1:k\wedge \tau_K}) \cdot \sum_{t=1}^T \prod_{k = 1}^{\tau_K} \pi_i^t (b_{k}|x_{k}) p_{1:h}^{\pi_{-i}^t}(x_h) \\
    &+ \frac{ 2H\log A_i}{\eta} \max_{\muphi \in \Phi_i^K} \sum_{x_{h}\in \cX_{i, h}} \sum_{b_{1:h-1}} \indicDeqK   \prod_{k=1}^{h-1} \muphi_k(a_{k} \vert x_k, b_{1:k \wedge \tau_K}) .
\end{align*}
}Similarly to the proof of Lemma \ref{lemma:bound-G_h1-fullfeedback}, letting 
{\proofsize \begin{gather*}
    {\rm{I}}_h \defeq  \max_{\muphi \in \Phi_i^K} \sum_{x_{h}\in \cX_{i, h}} \sum_{b_{1:h-1}} \indicDeqK  \prod_{k=1}^{h-1} \muphi_k(a_{k} \vert x_k, b_{1:k \wedge h'});\\
    {\rm{II}}_h \defeq  \max_{\muphi \in \Phi_i^K} \sum_{x_{h}\in \cX_{i, h}} \sum_{b_{1:h-1}} \indicDeqK \prod_{k=1}^{h-1} \muphi_k(a_{k} \vert x_k, b_{1:k\wedge h'}) \cdot \sum_{t=1}^T \prod_{k = 1}^{h'} \pi_i^t(b_k | x_k) p_{1:h}^{\pi_{-i}^t}(x_h).
\end{gather*}
}For fixed $\muphi \in \Phi_i^K$ and $x_h\in\mc{X}_{i,h}$, by counting the number of $b_{1:h-1}$ such that $\dev(a_{1:h-1},b_{1:h-1}) = K $, we have 
{\proofsize \begin{align*}
    &\sum_{b_{1:h-1}} \indicDeqK  {\prod_{k=1}^{h-1} \muphi_k(a_{k} \vert x_k, b_{1:k \wedge h'})}\le ~   \sum_{b_{1:h-1}}\indicDeqK  \le {h-1 \choose K\wedge H} A^{K\wedge H }.
\end{align*}
}Consequently,
{\proofsize \begin{equation}\label{equation:full-feedback-first-term-G2}
    {\rm I}_h \le  X_{i,h} {h-1 \choose K \wedge H  }A_i^{K\wedge H}.
\end{equation}
}Note that for fixed $t$, by Corollary \ref{lemma:sum=1},
{\proofsize
$$\sum_{x_{h}\in \cX_{i, h}}\sum_{b_{1:h-1}}\indicDeqK \prod_{k=1}^{h-1} \muphi_k(a_{k} \vert x_k, b_{1:k\wedge h'}) \prod_{k = 1}^{h'} \pi_i^t(b_k | x_k) p^{\pi^t_{-i}}_{1:h}(x_{i,h}) \le 1.$$
}Consequently, we have
{\proofsize
\begin{equation*}
    \begin{gathered}
        \max_{\muphi \in \Phi_i^K} \sum_{x_{h}\in \cX_{i, h}} \sum_{b_{1:h-1}}\indicDeqK \prod_{k=1}^{h-1} \muphi_k(a_{k} \vert x_k, b_{1:k \wedge h'}) \sum_{t = 1}^T\prod_{k = 1}^{h'} \pi_i^t(b_k | x_k)p^{\pi^t_{-i}}_{1:h}(x_{i,h}) \le T.
    \end{gathered}
\end{equation*}
}This yields that
\begin{align}\label{equation:full-feedback-second-term-G2}
    &{\rm II}_h \le T.
\end{align}
Taking summation over $h\in[H]$, we have
{\proofsize \begin{align*}
    &\sum_{h=1}^H \max_{\muphi \in \Phi_i^K} G_{h}^{T, \ext}(x_h; \muphi) \le \sum_{h=1}^H(\frac{2H\log A_i}{\eta}{\rm I}_{h} + H^2\eta{\rm II}_{h})\\
    \le&~ H^3 \eta T + \sum_{h=1}^H \frac{ 2H\log A_i}{\eta}X_{i,h} {h-1 \choose K\wedge H }A_i^{K\wedge H}\\
    \le & ~H^3 \eta T\ +  \frac{2H\log A_i }{\eta}  X_i {H-1 \choose K\wedge H }A_i^{K\wedge H}\\
    \le & ~H^3 \eta T\ +  \frac{2H\log A_i }{\eta}  X_i {H \choose K\wedge H }A_i^{K\wedge H}.
\end{align*}}
As we choose $\eta =\sqrt{{H \choose K\wedge H } A_i^{K\wedge H} X_i \log A_i / (H^2T) }$ per \eqref{equation:learning-rate-full-feedback}, we have 
{\proofsize \begin{align*}
     \sum_{h=1}^H \max_{\muphi \in \Phi_i^K} G_{h}^{T, \ext}(x_h; \muphi) &\le 3\sqrt{H^4 {H \choose K\wedge H }X_i A_i^{K \wedge H }  T \log A_i }.
\end{align*}}
\end{proof}
\section{Proofs for Section \ref{section:bandit-feedback}}
\label{appendix:bandit-feedback}

This section is devoted to proving Theorem \ref{theorem:kcfr-bandit}. The additional notation presented at the beginning of Section~\ref{appendix:full-feedback} is also used in this section.



\subsection{Sample-based loss estimator for Type-I \infoseqs}
\label{appendix:sample-based-estimator-i}

We first present the sample-based loss estimator for Type-I \infoseqs~in Algorithm~\ref{algorithm:kcfr-estimator-i}, complementary to the Type-II case presented in the main text (Algorithm~\ref{algorithm:kcfr-estimator-ii}). Here, Line~\ref{line:fill} uses the ``fill operator'' defined as follows: For any index set $I\subset\ZZ_{\ge 1}$ and $n'\ge |I|$, $\fill(I, n')$ is defined as the unique superset of $I$ with size $n'$ and the smallest possible additional elements, i.e. 
\begin{align}
\label{equation:fill}
    \fill(I, n') \defeq I \cup \{I^c_{(1)}, \dots, I^c_{(n'-|I|)}\},
\end{align}
where $I^c\defeq \ZZ_{\ge 1}\setminus I$ with sorted elements $I^c_{(1)}<I^c_{(2)}<\cdots$. For example, $\fill(\{ 1, 3, 8 \}, 5) = \{1 ,3, 8\} \cup \{ 2, 4\} = \{1 ,2, 3, 4, 8\}$. 

\begin{algorithm}[t]
  \small
   \caption{Sample-based loss estimator for Type-I \infoseqs~(\ith player's version)}
   \label{algorithm:kcfr-estimator-i}
   \begin{algorithmic}[1]
   \REQUIRE Policy $\pi_i^t$, $\pi_{-i}^t$. Balanced exploration policies $\{\pi^{\star, h}_i\}_{h\in[H]}$.
   \FOR{$1\le h\le H$, $\wb{W}\subseteq [h-1]$ with $|\wb{W}|= (K-1)\wedge (h-1)$} \label{line:typeI-samples} \label{line:sampling-start}
   \STATE Set policy $\pi_i^{t, (h, \wb{W})}\setto (\pi^{\star, h}_{i, k})_{k\in \wb{W}\cup\set{h}}\cdot (\pi^t_{i, k})_{k\in [h-1]\setminus \wb{W}} \cdot \pi^t_{i, (h+1):H}$.
   \STATE Play $\pi_i^{t, (h, \wb{W})}\times \pi_{-i}^t$ for one episode, observe trajectory 
   \begin{align*}
       ( x_{i,1}^{t, (h, \wb{W})}, a_{i,1}^{t, (h, \wb{W})}, r_{i,1}^{t, (h, \wb{W})}, \dots, x_{i,H}^{t, (h, \wb{W})}, a_{i,H}^{t, (h, \wb{W})}, r_{i,H}^{t, (h, \wb{W})} ).
   \end{align*}
   \ENDFOR
  \FOR{all $(x_{i,h}, b_{1:h-1})\in \omegai$} \label{line:estimation-start}
  \STATE Find $(x_{i, 1}, a_1) \prec \cdots \prec (x_{i, h-1}, a_{h-1}) \prec x_{i, h}$.
  \STATE Set $\wb{W}\setto \fill(\set{k\in[h-1]: b_{k}\neq a_{k}}, (K-1)\wedge (h-1))$. \label{line:fill} 
  \STATE Construct loss estimator for all $a\in\cA_i$ (below $a_h\in\cA_i$ is arbitrary):
  {\proofsize \begin{align}
                \label{equation:pil-estimator-i}
                \hspace{-1em}
      \wt L^t_{(x_{i,h}, b_{1:h-1})}(a) \setto  \frac{\indic{(x_{i,h}^{t, (h, \wb{W})}, a_{i,h}^{t, (h, \wb{W})}) = (x_{i,h}, a)}}{
      \pi_{i,1:h}^{t, (h, \wb{W})}(x_{i,h}, a)} \cdot  \sum_{h''=h}^H \paren{1 - r_{i,h''}^{t, (h, \wb{W})}}.
  \end{align}}
  \ENDFOR
  \ENSURE Loss estimators $\set{\wt L^t_{(x_{i,h}, b_{1:h-1})}(\cdot)}_{(x_{i,h}, b_{1:h-1})\in \omegai}$.
 \end{algorithmic}
\end{algorithm}

\subsection{Full description of sample-based \kscfr}
\label{appendix:bandit-feedback-algorithms}

Algorithm~\ref{algorithm:kefr-sampled-bandit} presents the detailed description of sample-based \kscfr~(sketched in Algorithm~\ref{algorithm:sample-based-kscfr-sketch}).

\paragraph{Self-play protocol}
Here we explain the protocol of how we let all players play Algorithm~\ref{algorithm:kefr-sampled-bandit} for $T$ rounds via self-play in Theorem~\ref{theorem:kcfr-bandit}. Within each round, each player first determines her own policy $\pi_i^t$ by Line~\ref{line:begin-sample-bandit}-\ref{line:end-sample-bandit}. Then, we let all players compute their sample-based loss estimators in a round-robin fashion: The first player obtains her loss estimators first (Line~\ref{line:obtain-loss-estimators}) by playing the sampling policies within Algorithm~\ref{algorithm:kcfr-estimator-i} \&~\ref{algorithm:kcfr-estimator-ii}, in which case all other players keep playing $\pi_{-1}^t$. Then the same procedure goes on for players $2,\dots,m$. Note that overall, each round plays $m$ times the number of episodes required for each player (specified by Algorithm~\ref{algorithm:kcfr-estimator-i} \&~\ref{algorithm:kcfr-estimator-ii}). The following Lemma gives a bound on this number of episodes.


\begin{lemma}[Number of episodes played by sampling algorithms]
\label{lemma:needed-episode}
One call of Algorithm \ref{algorithm:kcfr-estimator-i} and Algorithm \ref{algorithm:kcfr-estimator-ii} plays (combinedly) ${H+1 \choose K \wedge H+1} + K \wedge H -1\le 3H{H\choose K\wedge H}$
episodes.  
\end{lemma}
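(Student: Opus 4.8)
The plan is to enumerate directly the episodes played by the two sampling loops and evaluate the resulting binomial sums with the hockey-stick identity $\sum_{j=r}^{n}{j\choose r}={n+1\choose r+1}$. Write $k\defeq K\wedge H$ throughout. First I would count Algorithm~\ref{algorithm:kcfr-estimator-i}: its sampling loop (Line~\ref{line:typeI-samples}) plays exactly one episode for each pair $(h,\bar W)$ with $1\le h\le H$ and $\bar W\subseteq[h-1]$, $|\bar W|=(K-1)\wedge(h-1)$, so the Type-I count is $\sum_{h=1}^H{h-1\choose (K-1)\wedge(h-1)}$. Splitting at $h=k$ — the terms with $h\le k$ each contribute ${h-1\choose h-1}=1$, and the remaining terms contribute ${h-1\choose k-1}$ — and applying hockey-stick to the tail $\sum_{h=k+1}^{H}{h-1\choose k-1}=\sum_{j=k}^{H-1}{j\choose k-1}={H\choose k}-1$ yields Type-I count $=k+\big({H\choose k}-1\big)={H\choose k}+k-1$.

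Next I would count Algorithm~\ref{algorithm:kcfr-estimator-ii}: its sampling loop (Line~\ref{line:typeII-samples}) plays one episode for each triple $(h,h',W)$ with $K\le h'<h\le H$ and $W\subseteq[h']$, $|W|=K$, $h'\in W$. For a fixed $h'$ there are ${h'-1\choose K-1}$ choices of $W$ and $H-h'$ choices of $h$, so the Type-II count is $\sum_{K\le h'<h\le H}{h'-1\choose K-1}$. The main manipulation is to exchange the order of summation so as to absorb the linear weight $H-h'$: fixing $h$ first, the inner sum $\sum_{h'=K}^{h-1}{h'-1\choose K-1}=\sum_{j=K-1}^{h-2}{j\choose K-1}={h-1\choose K}$ by hockey-stick, and then $\sum_{h=K+1}^{H}{h-1\choose K}=\sum_{j=K}^{H-1}{j\choose K}={H\choose K+1}$ by a second application. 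Since ${H\choose K+1}={H\choose k+1}$ (this equals $0$ when $K>H$, consistently with $k=H$), the Type-II count equals ${H\choose k+1}$.

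Adding the two counts and using Pascal's rule ${H\choose k}+{H\choose k+1}={H+1\choose k+1}$ gives the claimed exact count ${H+1\choose k+1}+k-1$. For the upper bound I would then use the identity ${H+1\choose k+1}=\tfrac{H+1}{k+1}{H\choose k}\le (H+1){H\choose k}\le 2H{H\choose k}$ (valid since $H\ge 1$) together with $k-1\le H-1<H\le H{H\choose k}$ (using ${H\choose k}\ge 1$ for $0\le k\le H$), which sum to $3H{H\choose k}$, establishing $\binom{H+1}{k+1}+k-1\le 3H\binom{H}{k}$.

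The computation is elementary once the enumeration is set up; the only genuinely nontrivial point is the Type-II evaluation, where the factor $H-h'$ arising from the free index $h$ must be handled by swapping the summation order and telescoping two binomial sums rather than summed directly against ${h'-1\choose K-1}$. I would also note the degenerate case $K=0$ separately: both sampling loops are then vacuous (they would demand subsets of negative size), so no episodes are played and the stated upper bound holds trivially, while the exact-count formula is invoked only for $K\ge 1$, in which case $k=K\wedge H\ge 1$ and the splittings above are well defined.
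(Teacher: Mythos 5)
Your argument is correct and is essentially the paper's own proof: enumerate the two sampling loops, evaluate $\sum_{h=1}^H {h-1 \choose (K-1)\wedge(h-1)}$ and $\sum_{K\le h'<h\le H}{h'-1\choose K-1}$ via the hockey-stick identity, merge the two counts with Pascal's rule into ${H+1\choose K\wedge H+1}+K\wedge H-1$, and then bound; your unified estimate ${H+1\choose k+1}=\frac{H+1}{k+1}{H\choose k}\le 2H{H\choose k}$ is marginally cleaner than the paper's case split on $K\ge H$ versus $K<H$, but the route is the same.

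The one inaccurate point is your closing $K=0$ aside. The Type-II loop at $K=0$ asks for $W\subseteq[h']$ of size $0$ (not of negative size --- that happens only in the Type-I loop), and the paper's accounting treats the choice $h'=0$, $W=\emptyset$ as admissible for every $h\in[H]$ (equivalently, it uses ${-1\choose -1}=1$ in the sum $\sum_{h'}{h'-1\choose K-1}$), so Algorithm~\ref{algorithm:kcfr-estimator-ii} plays $H$ episodes at $K=0$ --- which is exactly what the stated formula ${H+1\choose 1}+0-1=H$ gives, and is what the $K=0$ (NFCCE) semantics requires, one balanced-exploration episode per layer. Your claim of zero episodes would make the exact count in the lemma false at $K=0$; it does not matter for the $3H{H\choose K\wedge H}$ upper bound, which is the only part used downstream, but the remark should be corrected rather than left as a hedge.
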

\begin{proof}
The proof follows by counting the number of $\wb{W}$'s and $W$'s in the sampling algorithms, i.e. the cardinalities of
$$
\mc{W}_{1, h} \defeq \set{ \wb{W} :\wb{W}\subseteq [h-1] ~\text{with}~ |\wb{W}|= (K-1)\wedge (h-1)}
$$
for $1\le h \le H$, which comes from Line~\ref{line:typeI-samples} in Algorithm \ref{algorithm:kcfr-estimator-i}, and 
$$
\mc{W}_{2, h', h} \defeq \set{W: K\le h'<h, W\subseteq[h'] ~\text{with}~ |W|=K ~\text{and ending in}~ h'} 
$$
for $K \le h \le H$, 
which comes from line \ref{line:typeII-samples} in Algorithm \ref{algorithm:kcfr-estimator-ii}.

For the first kind of sets , we have
\begin{align*}
    \sum_{h=1}^H|\mc{W}_{1, h}|  = &~ \sum_{h = 1}^{K\wedge H-1} {h-1 \choose (K-1)\wedge (h-1)} + \sum_{h = K\wedge H}^H {h-1 \choose (K-1)\wedge (h-1)}\\
    = &~ K\wedge H-1 + \sum_{h = K}^H {h-1 \choose K\wedge H-1}\\
    = &~ K\wedge H - 1 + {H \choose K\wedge H}.
\end{align*}
For the second kind of sets , we have 
\begin{align*}
    &~\sum_{K \le h' < h \le H}|\mc{W}_{2,h', h}|  =  \sum_{K \le h' < h \le H} {h'-1 \choose K-1} \\
    &~ = \sum_{h = K}^H \sum_{h' = K}^{h-1} {h' -1 \choose K-1} = \sum_{h = K}^H {h -1 \choose K} \\
    &~ = {H \choose K + 1} = {H \choose K\wedge H + 1}.
\end{align*}
Taking summation gives that the number of episodes equals $$\sum_{h=1}^H |\mc{W}_{1,h}|+\sum_{K \le h' < h \le H}|\mc{W}_{2,h', h}|  = {H+1 \choose K \wedge H+1} + K \wedge H -1.$$
 
Finally, we show the above quantity can be upper bounded by $3H{H\choose K\wedge H}$. For $K\ge H$, we have $K\wedge H=H$ and the above quantity is $1+H-1=H\le 3H=3H{H\choose K\wedge H}$. For $K<H$, we have $K\wedge H=K$, and thus
\begin{align*}
    & \quad {H+1 \choose K \wedge H+1} + K \wedge H -1 = {H+1\choose K+1} + K-1 = {H\choose K}\cdot \frac{H+1}{K+1} + K-1 \\
    & \le 2H\cdot {H\choose K} + H \le 3H\cdot {H\choose K},
\end{align*}
where the last inequality follows from the fact that ${H\choose K}\ge 1$. This is the desired bound.
\end{proof}


\begin{algorithm}[t]
  \small
   \caption{Sample-based \kcfr~(\ith player's version)}
   \label{algorithm:kefr-sampled-bandit}
   \begin{algorithmic}[1]
 \REQUIRE Weights $\{w_{b_{1:h-1}}(x_{i,h})\}_{x_{i,h},b_{1:h-1}\in\omegai(x_{i,h})}$ and  $\{w_{b_{1:h'}}(x_{i,h})\}_{x_{i,h},b_{1:h'}\in\omegaii(x_{i,h})}$ defined in~\eqref{equation:wb-i},~\eqref{equation:wb-ii}, learning rates $\{\eta_{x_{i,h}}\}_{x_{i,h}\in\cX_i}$, loss upper bound $\wb{L}>0$.
     \STATE Initialize regret minimizers $\{\regmin_{x_{i,h}}\}_{x_{i,h}\in\cX_i}$ with $\Regalg$, learning rate $\eta_{x_{i,h}}$, and loss upper bound $\wb{L}$.
     \FOR{iteration $t=1,\dots,T$}
     \FOR{$h=1,\dots,H$} \label{line:begin-sample-bandit}
     \FOR{$x_{i,h}\in\mc{X}_{i,h}$}
     
     \STATE  $S^t_{b_{1:h-1}}\defeq M^t_{b_{1:h-1}} w_{b_{1:h-1}}(x_{i,h})$ where $M^t_{b_{1:h-1}} \defeq \prod_{k = 1}^{h-1} \pi_{i, k}^t(b_k \vert x_k)$. 
    \STATE $S^t_{b_{1:h'}} \defeq M^t_{b_{1:h'}} w_{b_{1:h'}}(x_{i,h})$ where $M^t_{b_{1:h'}} \defeq  \prod_{k = 1}^{h'} \pi_{i, k}^t(b_k \vert x_k)$.. 
    \STATE $\regmin_{x_{i,h}}.\observeTSF( \{ S^t_{b_{1:h - 1}} \}_{b_{1:h - 1} \in \omegai(x_{i, h})} \cup \{ S^t_{b_{1:h'}} \}_{b_{1:h'} \in \omegaii(x_{i, h})})$.
     \STATE \label{line:end-sample-bandit} Set policy $\pi_i^t(\cdot|x_{i, h}) \setto \regmin_{x_{i,h}} .\recommend()$.  
     \ENDFOR
     \ENDFOR

   \STATE \label{line:obtain-loss-estimators} Obtaining sample-based loss estimators $\set{\wt L^t_{(x_{i,h}, b_{1:h-1})}}_{b_{1:h - 1} \in \omegai(x_{i, h})}$ and $\set{\wt L^t_{(x_{i,h}, b_{1:h'})}}_{b_{1:h'} \in \omegaii(x_{i, h})}$ from Algorithm~\ref{algorithm:kcfr-estimator-ii} \&~\ref{algorithm:kcfr-estimator-i} respectively.
     \FOR{all $x_{i,h}\in\cX_i$}
     \STATE $\regmin_{x_{i,h}}.\observeLoss( \{ \wt L^t_{(x_{i,h},b_{1:h - 1})} \}_{b_{1:h - 1} \in \omegai(x_{i, h})} \cup \{ \wt L^t_{(x_{i,h}, b_{1:h'})} \}_{b_{1:h'} \in \omegaii(x_{i, h})})$.
     \ENDFOR
     \ENDFOR
     \ENSURE Policies $\{ \pi_i^t \}_{t \in [T]}$. 
 \end{algorithmic}
\end{algorithm}

\subsection{Proof of Theorem \ref{theorem:kcfr-bandit}}
\label{appendix:proof-kcfr-bandit}
The proof follows a similar structure as the proof of Theorem \ref{theorem:kcfr-full-feedback} (cf. Section \ref{appendix:full-feedback}), with different bounds on the regret terms and bounds on additional concentration terms. 


\begin{proof}
 The proof follows by bounding the $\kefce$ regret 
  \begin{align*}
      R_{i,K}^T = \max_{\phi \in \Phi_i^K} \sum_{t = 1}^T \Big( V_{i}^{\phi\circ \pi_i^t \times \pi_{-i}^t} - V_{i}^{\pi^t} \Big)
  \end{align*}
  for all players $i \in [m]$, and then converting to a bound on $\kefcegap(\wb{\pi})$ by the online-to-batch conversion (Lemma~\ref{lem:online-to-batch}). 
  
  By the regret decomposition for $R_{i,K}^T$ (Lemma~\ref{lem:k-ce-regret_decomposition}), we have $R_{i, K}^T \leq \sum_{h=1}^H R_h^T$, where 
  \[
  R_{h}^{T} \defeq \max_{\phi \in \Phi_i^K} \sum_{x_{h}\in\cX_{i,h}}   G_{h}^{T, \swap}(x_{h}; \muphi) + \max_{\phi \in \Phi_i^K} \sum_{x_{h}\in\cX_{i,h}} G_{h}^{T, \ext}(x_{h}; \muphi) .
  \]
  

We bound the terms $\sum_{h=1}^H \max_{\phi \in \Phi_i^K}\sum_{x_{h}\in \cX_{i, h}}G_{h}^{T, \swap}(x_{h};\muphi)$ and $\sum_{h=1}^H \max_{\phi \in \Phi_i^K}\sum_{x_{h}\in \cX_{i, h}}G_{h}^{T, \ext}(x_{h};\muphi)$ when we play Sample-based \kscfr~(Algorithm~\ref{algorithm:kefr-sampled-bandit}) in the following two lemmas. Their proofs are presented in Section~\ref{appendix:bound-G1-bandit} \&~\ref{appendix:bound-G2-bandit} respectively.

\begin{lemma}[Bound on summation of $G_h^{T, \swap}(x_{i,h})$ with bandit feedback]\label{lemma:bound-G1-bandit}
If we choose learning rates as 
$$
\eta_{x_h} = \sqrt{{H \choose K\wedge H }X_i A_i^{K \wedge H +1} \log (8 X_iA_i/p) / (H^3T) } 
$$ for all $x_h \in \cX_i$ (same with \eqref{equation:learningrate-bandit}).
With probability at least $1-p/2$, we have
 {\proofsize \begin{gather*}
     \sum_{h=1}^H \max_{\phi \in \Phi_i^K}\sum_{x_{h}\in \cX_{i, h}}G_{h}^{T, \swap}(x_{h};\muphi)
     \le   \mc{O}  \paren{ \sqrt{H^3 {H \choose K\wedge H }A_i^{K \wedge
 H +1} X_i T \iota } } + \mc{O}  \paren{ H {H \choose K\wedge H }A_i^{K\wedge H + 1} X_i\iota}\\
+ \mc{O}  \paren{ {H \choose K\wedge H }  A_i^{K\wedge H  + 1 } X_i   \iota \sqrt{\frac{H {H \choose K\wedge H }  A_i^{K\wedge H   +1 } X_i \iota }{T}}},
 \end{gather*}}
where $\iota = \log ( 8 X_i A_i/p)$ is a log factor.
\end{lemma}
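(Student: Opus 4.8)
The plan is to follow the full-feedback argument of Lemma~\ref{lemma:bound-G_h1-fullfeedback} verbatim in its combinatorial part, but to replace the exact counterfactual losses $\L_{i,h}^t$ by the sample-based estimators $\wt L^t_{(x_h,b_{1:h-1})}$ of~\eqref{equation:pil-estimator-i} and absorb the resulting stochastic errors through martingale concentration. Fix an infoset $x_h\in\cX_{i,h}$ and a Type-I \rechist~$b_{1:h-1}\in\omegai(x_h)$, write $\wt L^t_b$ for $\wt L^t_{(x_h,b_{1:h-1})}$ and $S^t_b=w_bM^t_b$ for the time-selection function fed to $\regmin_{x_h}$. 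By subadditivity of the maximum, the true local swap regret splits as
\[
\hat R_{(x_h,b_{1:h-1})}^{T,\swap} \;\le\; \max_{\vphi}\sum_{t=1}^T M^t_b\, \< \pi^t_{i,h}-\vphi\circ\pi^t_{i,h},\, \wt L^t_b \> \;+\; \max_{\vphi}\sum_{t=1}^T M^t_b\, \< \pi^t_{i,h}-\vphi\circ\pi^t_{i,h},\, \L_{i,h}^t-\wt L^t_b \>,
\]
the first summand being the \emph{estimated regret} and the second a \emph{bias term}.

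First I would bound the estimated regret with the stochastic wide-range bound (Lemma~\ref{lemma:regret-with-time-selection-bandit}). The key preliminary is verifying its boundedness hypothesis $S^t_b\|\wt L^t_b\|_\infty\le\wb L=H$. Using the weights~\eqref{equation:wb-i} and the interlaced sampling policy $\pi_i^{t,(h,\wb{W})}$, the balanced factors at the steps $\wb{W}\cup\{h\}$ cancel exactly between $w_bM^t_b$ and the importance weight $1/\pi^{t,(h,\wb{W})}_{i,1:h}(x_h,a)$; and since $W=\{k:b_k\neq a_k\}\subseteq\wb{W}$ forces $b_k=a_k$ off $\wb{W}$, the remaining $\pi^t_i$-factors cancel as well, leaving $S^t_b\wt L^t_b(a)=\big(\prod_{k\in\wb{W}}\pi^t_{i,k}(b_k|x_k)\big)\sum_{h''=h}^H(1-r^{t,(h,\wb{W})}_{i,h''})\le H$. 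With the unbiasedness $\E[\wt L^t_b|\cF_{t-1}]=\L_{i,h}^t$, Lemma~\ref{lemma:regret-with-time-selection-bandit} then bounds the estimated regret by $\eta\wb L\sum_t M^t_b\<\pi^t_{i,h},\wt L^t_b\>+\iota/(\eta w_b)$, with probability $1-p/4$ after a union bound over the $X_i$ infosets (this is what puts the $X_i$ inside $\iota$).

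Next I would convert the two remaining estimator-dependent quantities, namely the bias term and the first-order term $\sum_t M^t_b\<\pi^t_{i,h},\wt L^t_b\>$, back to true losses via Freedman's inequality (Lemma~\ref{lemma:freedman}), applied to the martingale differences $M^t_b(\L_{i,h}^t(a)-\wt L^t_b(a))$ for each fixed $a$. Each increment is bounded in magnitude by $H/w_b$, and the conditional second moment obeys $\E[(M^t_b\wt L^t_b(a))^2|\cF_{t-1}]\le (H/w_b)\,M^t_b\L_{i,h}^t(a)$ because $M^t_b\wt L^t_b(a)\le H/w_b$ and $M^t_b$ is $\cF_{t-1}$-measurable. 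The hard part is showing the $1/w_b$-weighted variance proxy does not blow up: after weighting by $\prod_k\muphi_k(a_k|x_k,b_{1:k})$ and summing over $(x_h,b_{1:h-1})$, the ratio $M^t_b\prod_k\muphi_k/w_b$ reassembles a $(\phi\circ\pi^t_i)/\pi^{\star,h}_i$ distribution-mismatch ratio, which the generalized balancing identity (Lemma~\ref{lemma:general-balancing-mu}) caps at $X_{i,h}A_i$; this is exactly the role played by the weights $w_b$ and the balanced policy $\pi^{\star,h}_i$. Choosing the Freedman parameter $\lambda$ to balance this variance against the $H/w_b$ range yields the two lower-order error terms.

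Finally I would assemble as in Lemma~\ref{lemma:bound-G_h1-fullfeedback}: weight each local bound by $\prod_{k=1}^{h-1}\muphi_k(a_k|x_k,b_{1:k})$, take $\max_{\phi\in\Phi_i^K}\sum_{x_h}$, and split into a first-order term controlled by $\eta\wb L\cdot T$ through Corollary~\ref{lemma:sum=1} (exactly the ${\rm II}_h\le T$ step) and a residual $\tfrac{\iota}{\eta}\sum_{x_h,b}\prod_k\muphi_k/w_b$, whose $1/w_b$-weighted sum the balancing identity bounds by $\binom{H}{K\wedge H}X_iA_i^{K\wedge H+1}$; the one extra $A_i$ over the full-feedback count is precisely the loss-estimation overhead. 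Summing over $h\in[H]$ and inserting $\eta_{x_h}$ from~\eqref{equation:learningrate-bandit} optimizes the $\eta$ trade-off and produces the main term $\sqrt{H^3\binom{H}{K\wedge H}A_i^{K\wedge H+1}X_iT\iota}$, while the residual together with the Freedman variance/range contributions give the stated $O(1)$-in-$T$ term $H\binom{H}{K\wedge H}A_i^{K\wedge H+1}X_i\iota$ and the final $1/\sqrt{T}$-scaled term. A concluding union bound collects all events into probability $1-p/2$.
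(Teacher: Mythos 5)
Your proposal is correct and follows essentially the same route as the paper's proof: the same split of each local swap regret into an estimated-regret part plus estimation-bias parts, the same application of the stochastic wide-range regret bound (Lemma~\ref{lemma:regret-with-time-selection-bandit}) with the identical cancellation argument verifying $S^t_b \|\wt L^t_b\|_\infty \le \wb L = H$, the same Freedman-plus-balancing treatment of the estimator-dependent terms via Corollary~\ref{lemma:sum=1} and the bound of type~\eqref{equation:swap-mu-over-pi-bound}, and the same final assembly with the learning rate~\eqref{equation:learningrate-bandit}. The only cosmetic difference is that you merge the two bias contributions $\langle \pi_{i,h}^t, \L_{i,h}^t - \wt L^t_b\rangle$ and $\langle \vphi\circ\pi_{i,h}^t, \wt L^t_b - \L_{i,h}^t\rangle$ into a single $\max_{\vphi}$ term handled by per-action Freedman bounds, whereas the paper treats them separately as ${\rm BIAS}_{1,h}^{T,\swap}$ and ${\rm BIAS}_{2,h}^{T,\swap}$ (Lemmas~\ref{lemma:bound-bias1^I} and~\ref{lemma:bound-bias2^I}).
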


\begin{lemma}[Bound on summation of $G_h^{T, \ext}(x_{i,h})$ with bandit feedback]\label{lemma:bound-G2-bandit}
If we choose learning rates as 
$$
\eta_{x_h} = \sqrt{{H \choose K\wedge H } X_i A_i^{K\wedge H + 1}  \log (8X_iA_i /p)/ (H^3T) } 
$$ for all $x_h \in \cX_i$ (same with \eqref{equation:learningrate-bandit}).
With probability at least $1-p/2$, 
we have
{\proofsize \begin{gather*}
 \sum_{h=1}^H \max_{\phi \in \Phi_i^K}\sum_{x_{h}\in \cX_{i, h}}G_{h}^{T, \ext}(x_{h};\muphi)
 \le   \mc{O}  \paren{ \sqrt{H^3 {H \choose K\wedge H }A_i^{K\wedge H +1} X_i T \iota } } + \mc{O}  \paren{ H {H \choose K\wedge H }A_i^{K\wedge H + 1} X_i\iota}\\
+ \mc{O}  \paren{ {H \choose K\wedge H }  A_i^{K\wedge H  + 1 } X_i   \iota \sqrt{\frac{H {H \choose K\wedge H }  A_i^{K\wedge H   +1 } X_i \iota }{T}}},
\end{gather*}
}where $\iota = \log (8 X_i A_i/p)$ is a log factor.
\end{lemma}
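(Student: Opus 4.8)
The plan is to follow the structure of the full-feedback bound (Lemma~\ref{lemma:bound-G_h2-fullfeedback}), but replace the deterministic wide-range regret bound by its stochastic counterpart for \swrhedge~(Lemma~\ref{lemma:regret-with-time-selection-bandit}), and then absorb the resulting estimation errors using the balancing property of $\pi_i^{\star,h}$ (Lemma~\ref{lemma:balancing}) together with Freedman's inequality (Lemma~\ref{lemma:freedman}). The argument is essentially identical in form to that of the swap term (Lemma~\ref{lemma:bound-G1-bandit}), with the swap modification set $\Psi^s$ replaced by the external set $\Psi^e$ (so that $|\Psi^e|=A_i$) and the Type-I counting replaced by the Type-II counting from~\eqref{equation:full-feedback-first-term-G2}.

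First I would apply Lemma~\ref{lemma:regret-with-time-selection-bandit} to each regret minimizer $\regmin_{x_h}$, identifying the external index set with $\omegaii(x_h)$, the weighting functions with $w_{b_{1:h'}}(x_{i,h})$ from~\eqref{equation:wb-ii}, and the time-selection functions with $M^t_{b_{1:h'}}=\prod_{k=1}^{h'}\pi^t_{i,k}(b_k|x_k)$. The one nontrivial hypothesis is the uniform bound $S^t_{b_{1:h'}}\|\wt L^t_{(x_{i,h},b_{1:h'})}\|_\infty\le \bar L=H$: writing $W=\{k\in[h']:b_k\neq a_k\}$ and using that the sampling policy $\pi_i^{t,(h,h',W)}$ coincides with $\pi_i^{\star,h}$ exactly on the coordinates $W\cup\{h'+1,\dots,h\}$, the ratio $w_{b_{1:h'}}(x_{i,h})/\pi^{t,(h,h',W)}_{i,1:h}(x_{i,h},a)$ telescopes to $1/\prod_{k\in[h']\setminus W}\pi^t_{i,k}(a_k|x_k)$, so that $S^t_{b_{1:h'}}\|\wt L^t\|_\infty\le \bigl(\prod_{k\in W}\pi^t_{i,k}(b_k|x_k)\bigr)\cdot H\le H$; the unbiasedness $\E[\wt L^t_{(x_{i,h},b_{1:h'})}(\cdot)\mid\cF_{t-1}]=L^t_{i,h}(x_h,\cdot)$ follows directly from the importance weighting in~\eqref{equation:pil-estimator-ii}. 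This yields, after a union bound over all $(x_h,b_{1:h'})\in\omegaii$, a bound on the estimated-loss external regret of the form $\what R^{T,\ext}_{(x_h,b_{1:h'})}(\wt L)\le \eta H\sum_t M^t_{b_{1:h'}}\langle p^t,\wt L^t_b\rangle+\log(A_i^{H+1}/p)/\bigl(\eta\,w_{b_{1:h'}}(x_{i,h})\bigr)$.

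The second step passes from the estimated-loss regret back to the true-loss regret $\what R^{T,\ext}_{(x_h,b_{1:h'})}$ appearing in $G^{T,\ext}_h$. For each fixed $a$ I would write the difference of the two regrets as a sum of two martingale-difference sequences (a ``$\langle p^t,L^t-\wt L^t\rangle$'' bias term and a ``$\wt L^t(a)-L^t(a)$'' deviation term), each with conditional mean zero given $\cF_{t-1}$, and control them by Freedman's inequality. The variance inputs to Freedman are precisely where the balancing property enters: the second moment of the importance-weighted estimator scales like $1/\pi^{t,(h,h',W)}_{i,1:h}(x_h,a)$, and after weighting by $\indicDeqK\prod_k\muphi_k$, summing over $x_h$, $b_{1:h'}$, and invoking Lemma~\ref{lemma:balancing} (in the form of Lemma~\ref{lemma:general-balancing-mu}), these sums collapse to the combinatorial factor $\binom{H}{K\wedge H}A_i^{K\wedge H+1}X_i$ — the extra factor $A_i$ over the full-feedback count~\eqref{equation:full-feedback-first-term-G2} being exactly the price of estimating a loss vector from a single sampled action.

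Finally I would assemble the pieces: the leading term $\eta H\sum_t M^t_{b_{1:h'}}\langle p^t,\wt L^t_b\rangle$, after weighting and summing, has conditional mean bounded by $\eta H^3 T$ via Corollary~\ref{lemma:sum=1} and Lemma~\ref{lemma:counterfactual-loss-bound} (its own fluctuation again handled by Freedman); the constant term $\log(A_i^{H+1}/p)/(\eta\,w_b)$, summed against $1/w_b$ via balancing, contributes $\binom{H}{K\wedge H}A_i^{K\wedge H+1}X_i\,\iota/\eta$; and the Freedman range and variance contributions produce the additive $H\binom{H}{K\wedge H}A_i^{K\wedge H+1}X_i\iota$ and the nested third term respectively. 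Substituting the learning rate from~\eqref{equation:learningrate-bandit} balances $\eta H^3T$ against the $\iota/\eta$ terms and gives the stated bound with probability at least $1-p/2$. The main obstacle is the second step: bounding the conditional variances of the two martingale sequences so that the balancing identity applies cleanly, since this is what simultaneously delivers the sharp linear-in-$X_i$ scaling and the correct nested concentration term, rather than a crude bound that would lose a factor of $X_i$ or $A_i$.
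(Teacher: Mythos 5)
Your proposal is correct and follows essentially the same route as the paper: the same three-way decomposition of each local external regret into an estimated-loss regret term plus two martingale bias terms, the same use of the stochastic wide-range regret bound (Lemma~\ref{lemma:regret-with-time-selection-bandit}) with the verification $M^t_{b_{1:h'}}w_{b_{1:h'}}\|\wt L^t_{(x_{i,h},b_{1:h'})}\|_\infty \le H$, and the same Freedman-plus-balancing argument (via the counting bound~\eqref{equation:ext-mu-over-pi-bound}) to control the variance terms and obtain the linear-in-$X_i$ scaling. The paper organizes this as three separate lemmas (Lemmas~\ref{lemma:bound-regret^II}--\ref{lemma:bound-bias2^II}) rather than interleaving the steps, but the decomposition, key lemmas, and final learning-rate balancing are identical to what you describe.
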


By Lemma~\ref{lemma:bound-G1-bandit},~\ref{lemma:bound-G2-bandit}, and a union bound for all
$i\in[m]$, we get

{\proofsize
\begin{align*}
    & R_{i, K}^T \le \sum_{h=1}^H R_h^T\\
    \le &~ \sum_{h=1}^H \paren{\max_{\phi \in \Phi_i^K}\sum_{x_{h}\in \cX_{i, h}}G_h^{T, \swap}(x_{h};\muphi) + \max_{\phi \in \Phi_i^K}\sum_{x_{h}\in \cX_{i, h}}G_h^{T, \ext}(x_{h};\muphi)}\\
 \le &~ \cO\paren{ \sqrt{H^3{H\choose K\wedge H}A_i^{K\wedge H + 1}X_iT\iota} + H{H\choose K\wedge H}A_i^{K\wedge H + 1}X_i\iota +  {H \choose K\wedge H }  A_i^{K\wedge H  + 1 } X_i   \iota \sqrt{\frac{H {H \choose K\wedge H }  A_i^{K\wedge H   +1 } X_i \iota }{T}}}
\end{align*}
}
with probability at least $1-p$ for all $i\in [m]$ simultaneously, where $\iota=\log(8\sum_{j\in[m]} X_jA_j/p)$. 



Further using the ``trivial'' bound $R_{i,K}^T\le HT$ (by the fact that $V^\pi_i\in[0, H]$ for any joint policy $\pi$) gives

{\proofsize
\begin{equation*}
  \begin{aligned}
    \quad R_{i,K}^T 
    & \stackrel{(i)}{\le} HT \cdot \min\set{1, \cO\paren{ \sqrt{H{H\choose K\wedge H} X_iA_i^{K\wedge H+1}\iota/T}}} \\
    & \le \cO\paren{ \sqrt{H^3{H\choose K\wedge H}A_i^{K\wedge H+1}X_iT\iota} },
  \end{aligned}
\end{equation*}
}where (i) follows by noticing that:
\begin{itemize}
    \item if $T < H{H\choose K\wedge H} X_iA_i^{K\wedge H+1}\iota$, $R_{i,K}^T \le HT = HT \min\set{1, \cO\paren{ \sqrt{H{H\choose K\wedge H} X_iA_i^{K\wedge H+1}\iota/T}}} $;
    \item if $T \ge H{H\choose K\wedge H} X_iA_i^{K\wedge H + 1}\iota$, $R_{i,K}^T \le HT \cdot \cO\paren{ \sqrt{H{H\choose K\wedge H} X_iA_i^{K\wedge H+1}\iota/T}} $.
\end{itemize}

Therefore, as long as

{\proofsize
\begin{align*}
    T \ge \cO\paren{   H^3 {H \choose K\wedge H} \paren{\max_{i\in[m]}X_iA_i^{K \wedge H + 1}}\iota / \eps^2 },
\end{align*}
}we have by the online-to-batch lemma (Lemma~\ref{lem:online-to-batch}) that the average policy $\bar\pi={\rm Unif}(\{\pi^t\}_{t=1}^T)$ satisfies

{\proofsize
\begin{align*}
     \kefcegap(\wb{\pi}) = \frac{\max_{i\in[m]} R_{i,K}^T}{T} \le \max_{i\in[m]} \cO\sqrt{ H^3 {H \choose K\wedge H} \paren{\max_{i\in[m]}X_iA_i^{K \wedge H+1}}\iota / T } \le \eps.
\end{align*}
}This proves the first part of Theorem~\ref{theorem:kcfr-bandit}. 

Finally, we count how many episodes are played at each iteration. By our self-play protocol (cf. Section~\ref{appendix:bandit-feedback-algorithms}) and Lemma~\ref{lemma:needed-episode}, each iteration involves $m$ rounds of sampling (one for each player), where each round plays at most $3H{H\choose K\wedge H}$ episodes. Therefore, each iteration plays at most $3mH{H\choose K\wedge H}$ episodes, and so the total number of episodes played by Algorithm~\ref{algorithm:kefr-sampled-bandit} is
\begin{align*}
    3mH{H\choose K\wedge H} \cdot T = \cO\paren{ mH^4 {H \choose K\wedge H}^2 \paren{\max_{i\in[m]}X_iA_i^{K \wedge H + 1}}\iota / \eps^2 }.
\end{align*}
This is the desired result.

\end{proof}

\subsection{Proof of Lemma \ref{lemma:bound-G1-bandit}}\label{appendix:bound-G1-bandit}

\begin{proof}
Recall that $G_{h}^{T, \swap}(x_h; \muphi)$ is defined as (eq. (\ref{equation:local-swap-regret}))
$$
G_{h}^{T, \swap}(x_h; \muphi) \equiv~ \sum_{b_{1:h-1}} \indicDleK \prod_{k=1}^{h-1} \muphi_k(a_{k} \vert x_k, b_{1:k}) \what{R}^{T, \swap}_{(x_h, b_{1:h-1} )}.
$$
For each $h\in [H]$ and $(x_{h}, b_{1: h-1}) \in \omegai$, we have
{\proofsize  \begin{align*}
    \hat R_{(x_{h}, b_{1: h-1})}^{T, \rm swap} =&\max_\vphi \sum_{t = 1}^T \prod_{k = 1}^{h-1} \pi_i^t (b_{k}|x_{k}) \Big( \< \pi_{i, h}^t(\cdot \vert x_{i, h}) - \vphi \circ \pi_{i, h}^t(\cdot \vert x_{i, h}), \L_{i,h}^{t}(x_{i, h},\cdot) \> \Big)\\
    \le & \max_\vphi \sum_{t = 1}^T \prod_{k = 1}^{h-1} \pi_i^t (b_{k}|x_{k})  
    \Big\langle \pi_{i, h}^t(\cdot \vert x_{i, h}) - \vphi \circ \pi_{i, h}^t(\cdot \vert x_{i, h}) , \wt L_{(x_{h}, b_{1: h-1})}^{t} \Big\rangle\\
    &+ \sum_{t = 1}^T \prod_{k = 1}^{h-1} \pi_i^t (b_{k}|x_{k})
    \Big\langle\pi_{i, h}^t(\cdot \vert x_{i, h}),  \L_{i,h}^{t}(x_{i, h},\cdot) -\wt L^t_{(x_{h}, b_{1: h-1})} \Big\rangle \\
    &+ \max_\vphi \sum_{t = 1}^T \prod_{k = 1}^{h-1} \pi_i^t (b_{k}|x_{k})
    \Big\langle \vphi \circ \pi_{i, h}^t(\cdot \vert x_{i, h}), \wt L^t_{(x_{h}, b_{1: h-1})}- \L_{i,h}^{t}(x_{i, h},\cdot) \Big\rangle.
\end{align*} }
Substituting this into $\max_{\muphi \in \Phi_i^K}\sum_{x_{h}\in \cX_{i, h}}G_{h}^{T, \swap}(x_{h};\muphi)$ yields that

{\proofsize  \begin{align*}
    & \max_{\muphi \in \Phi_i^K}\sum_{x_{h}\in \cX_{i, h}}G_{h}^{T, \swap}(x_{h};\muphi)\\
    =~& \sum_{x_{ h}} \sum_{b_{1:h-1}} \indicDleK \prod_{k=1}^{h-1} \muphi_k(a_{k} \vert x_k, b_{1:k}) \what{R}^{T, \swap}_{(x_h, b_{1:h-1} )}\\
    \le~& \underbrace{\max_{\muphi \in \Phi_i^K} \sum_{x_{h}\in \cX_{i, h}} \sum_{b_{1:h-1}} \indicDleK \prod_{k=1}^{h-1} \muphi_k(a_{k} \vert x_k, b_{1:k})  \max_\vphi \sum_{t = 1}^T \prod_{k = 1}^{h-1} \pi_i^t (b_{k}|x_{k})  
    \Big\langle \pi_{i, h}^t(\cdot \vert x_{i, h}) - \vphi \circ \pi_{i, h}^t(\cdot \vert x_{i, h}) , \wt L_{(x_{h}, b_{1: h-1})}^{t} \Big\rangle}_{\defeq \wt{\rm REGRET}_h^{T, \swap}}\\
    &+  \underbrace{ \max_{\muphi \in \Phi_i^K} \sum_{x_{h}\in \cX_{i, h}} \sum_{b_{1:h-1}} \indicDleK \prod_{k=1}^{h-1} \muphi_k(a_{k} \vert x_k, b_{1:k}) \sum_{t = 1}^T \prod_{k = 1}^{h-1} \pi_i^t (b_{k}|x_{k})
    \Big\langle\pi_{i, h}^t(\cdot \vert x_{i, h}), \L_{i,h}^{t}(x_{i, h},\cdot) - \wt L^t_{(x_{h}, b_{1: h-1})} \Big\rangle}_{\defeq {\rm BIAS}_{1, h}^{T, \swap}}  \\
    &+ \underbrace{ \max_{\muphi \in \Phi_i^K} \sum_{x_{h}\in \cX_{i, h}} \sum_{b_{1:h-1}} \indicDleK \prod_{k=1}^{h-1} \muphi_k(a_{k} \vert x_k, b_{1:k}) \max_\vphi \sum_{t = 1}^T \prod_{k = 1}^{h-1} \pi_i^t (b_{k}|x_{k})
    \Big\langle \vphi \circ \pi_{i, h}^t(\cdot \vert x_{i, h}), \wt L^t_{(x_{h}, b_{1: h-1})}-\L_{i,h}^{t}(x_{i, h},\cdot) \Big\rangle}_{\defeq {\rm BIAS}_{2, h}^{T, \swap}}.
\end{align*} }


The bounds for $\sum_{h=1}^H \wt{\rm REGRET}_h^{T, \swap}$, $\sum_{h=1}^H {\rm BIAS}_{1, h}^{T, \swap}$, and $\sum_{h=1}^H {\rm BIAS}_{2, h}^{T, \swap}$  are given in the following three lemmas (proofs deferred to Appendix~\ref{appendix:proof-bound-regret^I}, \ref{appendix:proof-bound-bias1^I}, and \ref{appendix:proof-bound-bias2^I}) respectively.


\begin{lemma}[Bound on $\wt{\rm REGRET}_h^{T, \swap}$]\label{lemma:bound-regret^I}
If we choose learning rates as 
$$
\eta_{x_h} = \sqrt{ {H \choose K\wedge H } X_i A_i^{K \wedge H +1} \log(X_i A_i/p) / (H^3T) } \cdot 
$$ for all $x_h \in \cX_i$ (same with \eqref{equation:learningrate-bandit}).
 Then with probability at least $1-p/4$, we have
{\proofsize  \begin{align*}
     \sum_{h=1}^H   \wt{\rm REGRET}_{h}^{T, \swap} &\le \sqrt{H^3 {H \choose K\wedge H }A_i^{K \wedge
 H +1} X_i T \iota}\\
    + &\mc{O}  \paren{ {H \choose K\wedge H }  A_i^{K \wedge H} X_i  \iota \sqrt{\frac{H {H \choose K\wedge H }  A_i^{K \wedge H +1 } X_i \iota }{T}}}
\end{align*} 
}where $\iota = \log (8X_i A_i/p)$ is a log factor.
\end{lemma}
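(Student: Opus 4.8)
The plan is to apply the stochastic wide-range regret bound for \wrhedgebandit~(Lemma~\ref{lemma:regret-with-time-selection-bandit}) locally at each infoset $x_h$, and then aggregate the resulting local bounds through the outer maximum over $\muphi\in\Phi_i^K$ using the balancing property of $\pi_i^{\star,h}$. Concretely, each regret minimizer $\regmin_{x_h}$ is run with time-selection functions $S^t_{b_{1:h-1}}=M^t_{b_{1:h-1}}w_{b_{1:h-1}}(x_{i,h})$ and loss upper bound $\wb L=H$, and the estimators $\wt L^t_{(x_h,b_{1:h-1})}$ are unbiased for the true counterfactual losses $L^t_{i,h}(x_h,\cdot)$ with the same conditional mean across all $b_{1:h-1}$, as the lemma requires. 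First I would invoke Lemma~\ref{lemma:regret-with-time-selection-bandit} for each Type-I \rechist~$(x_h,b_{1:h-1})\in\omegai$, giving, after a union bound over all such pairs and with probability $1-p/4$,
$$
\max_\vphi \sum_{t=1}^T M^t_{b_{1:h-1}}\big\langle \pi_{i,h}^t(\cdot|x_h)-\vphi\circ\pi_{i,h}^t(\cdot|x_h),\wt L^t_{(x_h,b_{1:h-1})}\big\rangle \le \eta\wb L\sum_{t=1}^T M^t_{b_{1:h-1}}\big\langle\pi_{i,h}^t(\cdot|x_h),\wt L^t_{(x_h,b_{1:h-1})}\big\rangle + \frac{\iota}{\eta\, w_{b_{1:h-1}}(x_{i,h})},
$$
where $\iota$ absorbs $\log[(|\cB^s|+|\cB^e|)|\Psi^s|/p]$. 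Substituting this into the definition of $\wt{\rm REGRET}_h^{T,\swap}$ splits it into an \emph{expected-loss term} (the $\eta\wb L$ contribution) and a \emph{penalty term} (the $\iota/(\eta w)$ contribution).

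Next I would bound the penalty term. Summing $\indicDleK\prod_k\muphi_k(a_k|x_k,b_{1:k})/w_{b_{1:h-1}}(x_{i,h})$ over $x_h$ and $b_{1:h-1}$ and maximizing over $\muphi$, the inverse weights $1/w_{b_{1:h-1}}(x_{i,h})$ in~\eqref{equation:wb-i} are designed precisely so that their product with the reaching structure collapses, via the balancing property (Lemma~\ref{lemma:balancing}, i.e. $\sum_{(x_h,a_h)}\pi_{i,1:h}(x_h,a_h)/\pi^{\star,h}_{i,1:h}(x_h,a_h)=X_{i,h}A_i$), to a clean count of $X_{i,h}A_i$ times the number of distinct Type-I patterns per layer. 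Summing over $h$ then yields a penalty of order $\frac{\iota}{\eta}\,{H\choose K\wedge H}A_i^{K\wedge H}X_iA_i$, producing the $A_i^{K\wedge H+1}X_i$ scaling.

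For the expected-loss term, I would decompose $M^t_{b_{1:h-1}}\langle\pi_{i,h}^t,\wt L^t_{(x_h,b_{1:h-1})}\rangle$ into its conditional mean $M^t_{b_{1:h-1}}\langle\pi_{i,h}^t,L^t_{i,h}(x_h,\cdot)\rangle$ plus a martingale difference. The conditional-mean part is bounded deterministically by a $\cO(HT)$-type argument using Corollary~\ref{lemma:sum=1} (the weighted reaching probabilities sum to one) together with $L^t_{i,h}(x_h,\cdot)\le H\,p^{\pi^t_{-i}}_{1:h}(x_h)$ from Lemma~\ref{lemma:counterfactual-loss-bound}; the martingale part is controlled by Freedman's inequality (Lemma~\ref{lemma:freedman}). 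Finally, choosing $\eta$ as in~\eqref{equation:learningrate-bandit} balances the $\eta\wb L$ expected-loss contribution against the $\iota/\eta$ penalty to produce the leading $\sqrt{H^3{H\choose K\wedge H}A_i^{K\wedge H+1}X_iT\iota}$ term, while the Freedman deviation contributes the stated lower-order term.

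The main obstacle I expect is the uniform conditional-variance control needed for Freedman's inequality. The maxima over $\muphi\in\Phi_i^K$ and over the inner swap $\vphi$ mean the martingale is not a single fixed sum, and each importance-weighted estimator $\wt L^t_{(x_h,b_{1:h-1})}$ individually carries large variance. The crux is to show that, after weighting by the time-selection functions $S^t_{b_{1:h-1}}=M^t_{b_{1:h-1}}w_{b_{1:h-1}}(x_{i,h})$ and summing, the balancing property of $\pi^{\star,h}_i$ together with the interlaced sampling policies of Algorithm~\ref{algorithm:kcfr-estimator-i} keeps the total conditional second moment bounded by the target $H{H\choose K\wedge H}A_i^{K\wedge H+1}X_i$ rather than an exponentially larger quantity, uniformly over the choice of strategy modifications.
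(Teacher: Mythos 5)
Your proposal is correct and follows essentially the same route as the paper's proof: a per-infoset application of Lemma~\ref{lemma:regret-with-time-selection-bandit} with $S^t_{b_{1:h-1}}=M^t_{b_{1:h-1}}w_{b_{1:h-1}}$ and $\wb L=H$, the penalty term aggregated via the balancing-property bound (the paper's inequality~\eqref{equation:swap-mu-over-pi-bound}), and the loss term controlled by Freedman's inequality per \rechist~with a union bound, followed by Corollary~\ref{lemma:sum=1} to bound the conditional-mean sums by $T$ and the choice of $\eta$ in~\eqref{equation:learningrate-bandit} to balance the two contributions. The only cosmetic difference is that you split $\wt L^t$ into $L^t$ plus a martingale difference before invoking Freedman, whereas the paper applies Freedman directly to the bounded indicator quantities $\wb{\Delta}_t^{x_h,b_{1:h-1}}\in[0,1]$ (which is the same estimate, since relating $\sum_t\wb{\Delta}_t$ to $(1+\lambda)\sum_t\E[\wb{\Delta}_t\,|\,\cF_{t-1}]+C\iota/\lambda$ is exactly the mean-plus-martingale decomposition); your closing concern about uniformity over $\muphi$ is resolved exactly as you sketch, since Freedman is applied to quantities independent of $\muphi$ and $\vphi$, with the maxima handled afterwards by deterministic balancing bounds.
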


\begin{lemma}[Bound on ${\rm BIAS}_{1, h}^{T, \swap}$]\label{lemma:bound-bias1^I}
With probability at least $1-\frac{p}{8}$, we have
{\proofsize  \begin{align*}
    \sum_{h=1} ^H {\rm BIAS}_{1, h}^{T, \swap}\le \mc{O}\paren{\sqrt{H^3 {H \choose K\wedge H} A_i^{K\wedge H} X_iT \iota}+ H {H \choose K\wedge H -1}A_i^{K\wedge H} X_i\iota}, 
\end{align*} 
}where $\iota = \log(8X_iA_i/p)$ is a log factor.
\end{lemma}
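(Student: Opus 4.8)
The plan is to recognize ${\rm BIAS}_{1,h}^{T,\swap}$ as a sum of martingale differences driven by the estimation error $\L_{i,h}^t-\wt L^t$, and to control it via Freedman's inequality (Lemma~\ref{lemma:freedman}), with the conditional variance tamed by the balancing property of $\pi^{\star,h}_i$ (Lemma~\ref{lemma:balancing}). The starting point is unbiasedness: by construction of the Type-I estimator (Algorithm~\ref{algorithm:kcfr-estimator-i}, eq.~\eqref{equation:pil-estimator-i}), one has $\E[\wt L^t_{(x_{i,h},b_{1:h-1})}(a)\mid\cF_{t-1}]=\L_{i,h}^t(x_{i,h},a)$ for every $a$. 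Hence, for each fixed Type-I rechistory $(x_h,b_{1:h-1})\in\omegai$, the quantity
\[
D^t_{(x_h,b_{1:h-1})} \defeq M^t_{b_{1:h-1}}\Big\langle\pi_{i,h}^t(\cdot\mid x_{i,h}),\,\L_{i,h}^t(x_{i,h},\cdot)-\wt L^t_{(x_h,b_{1:h-1})}\Big\rangle,\qquad M^t_{b_{1:h-1}}=\prod_{k=1}^{h-1}\pi_i^t(b_k\mid x_k),
\]
satisfies $\E[D^t_{(x_h,b_{1:h-1})}\mid\cF_{t-1}]=0$, so $\{D^t\}_t$ is a martingale difference sequence. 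I would then split the outer $\max_{\muphi}$ into the combinatorial count of valid rechistories (reusing the full-feedback bound~\eqref{equation:full-feedback-first-term-G1}) and the per-rechistory concentration of $\sum_t D^t$.

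Next I would apply Freedman's inequality to each $\sum_t D^t_{(x_h,b_{1:h-1})}$. Since $\wt L^t\ge 0$ and $\L_{i,h}^t\le H$ (Lemma~\ref{lemma:counterfactual-loss-bound}), the one-sided almost-sure bound $D^t\le M^t_{b_{1:h-1}}\langle\pi_{i,h}^t,\L_{i,h}^t\rangle\le H$ holds, supplying the constant $R=H$. Taking a union bound over all Type-I rechistories across all layers (a total of $N\lesssim X_i\binom{H}{K\wedge H}A_i^{K\wedge H}$) with per-event failure probability $p/(8N)$ gives, with probability at least $1-p/8$, simultaneously for all $(x_h,b_{1:h-1})$,
\[
\sum_{t=1}^T D^t_{(x_h,b_{1:h-1})} \le \lambda\sum_{t=1}^T\E\!\big[(D^t_{(x_h,b_{1:h-1})})^2\mid\cF_{t-1}\big] + \frac{\iota'}{\lambda},\qquad \lambda\in(0,1/H],
\]
where $\iota'=\log(8N/p)=\cO(\iota)$. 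Substituting into ${\rm BIAS}_{1,h}^{T,\swap}$, the $\iota'/\lambda$ term multiplies $\max_{\muphi}\sum_{x_h}\sum_{b_{1:h-1}}\indicDleK\prod_k\muphi_k(a_k\mid x_k,b_{1:k})$, which is exactly the count ${\rm I}_h\le X_{i,h}\binom{h-1}{(K-1)\wedge(h-1)}A_i^{(K-1)\wedge(h-1)}$ already bounded in~\eqref{equation:full-feedback-first-term-G1}.

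The crux is the weighted conditional-variance term. Bounding variance by the second moment and expanding the importance-weighted estimator, I would establish
\[
\E[(D^t)^2\mid\cF_{t-1}] \le H^2\,(M^t_{b_{1:h-1}})^2\,p_{1:h}^{\pi_{-i}^t}(x_h)\sum_{a}\frac{(\pi_{i,h}^t(a\mid x_h))^2}{\pi_{i,1:h}^{t,(h,\wb W)}(x_h,a)},\qquad \wb W=\fill\big(\{k:b_k\ne a_k\},(K-1)\wedge(h-1)\big).
\]
Here the design of $\fill$ is essential: since $\wb W$ contains every deviation index, $b_k=a_k$ for all $k\notin\wb W$, so the $\pi_i^t$ factors of $M^t_{b_{1:h-1}}$ on $[h-1]\setminus\wb W$ cancel exactly against the matching factors in the denominator $\pi_{i,1:h}^{t,(h,\wb W)}$, leaving a ratio governed purely by the balanced policy $\pi^{\star,h}_i$. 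After this cancellation, multiplying by the $\muphi$-weights and summing over $x_h$, $b_{1:h-1}$, and $a$, the expression becomes a distribution-mismatch sum of the form $\sum_{(x_h,a)}\pi_{i,1:h}(x_h,a)/\pi^{\star,h}_{i,1:h}(x_h,a)$, which Lemma~\ref{lemma:balancing} bounds by $X_{i,h}A_i$; together with Corollary~\ref{lemma:sum=1} to sum the reaching probabilities $p_{1:h}^{\pi_{-i}^t}(x_h)$ to $\cO(1)$ per round, the weighted variance over all rounds is $\cO\big(H^2 A_i\,X_{i,h}\binom{h-1}{(K-1)\wedge(h-1)}A_i^{(K-1)\wedge(h-1)}\,T\big)$.

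Finally, summing over $h$, choosing $\lambda$ to balance the optimized variance term against $\iota'{\rm I}_h/\lambda$ (subject to the constraint $\lambda\le 1/H$), and simplifying the binomial and action factors yields the two claimed terms: the $\sqrt{H^3\binom{H}{K\wedge H}A_i^{K\wedge H}X_iT\iota}$ term from the interior $\lambda$ regime, and the $H\binom{H}{K\wedge H-1}A_i^{K\wedge H}X_i\iota$ term from the saturated $\lambda=1/H$ regime. I expect the variance cancellation-and-balancing step to be the main obstacle: correctly tracking the interaction of $M^t_{b_{1:h-1}}$, the uniform step-$h$ exploration, and $\pi^{\star,h}_i$ is delicate, and the application of Lemma~\ref{lemma:balancing} after summing over modification weights relies crucially on the $\fill$ identity $b_k=a_k$ off $\wb W$ holding exactly.
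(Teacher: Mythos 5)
Your preliminary observations are sound---unbiasedness of the Type-I estimator, the one-sided bound $D^t\le H$, the union bound, and the second-moment computation $\E[(D^t)^2\mid\cF_{t-1}]\le H^2(M^t_{b_{1:h-1}})^2\,p^{\pi^t_{-i}}_{1:h}(x_h)\sum_a(\pi^t_{i,h}(a\mid x_h))^2/\pi^{t,(h,\wb{W})}_{i,1:h}(x_h,a)$ are all correct, as is the exact cancellation $b_k=a_k$ off $\wb{W}$. The gap is in where the balanced weights sit when you invoke Freedman. Because you apply Freedman to the \emph{unweighted} differences $D^t$ with one uniform $\lambda$, after the cancellation the conditional variance still carries the factor $1/\prod_{k\in\wb{W}\cup\set{h}}\pi^{\star,h}_{i,k}(a_k\mid x_k)$, and once multiplied by the $\muphi$-weights the only tool left is the balancing lemma, which costs $X_{i,h}{h-1 \choose (K-1)\wedge(h-1)}A_i^{K\wedge h}$ \emph{per round}---this is precisely your own stated bound $\cO\paren{H^2A_iX_{i,h}{h-1 \choose (K-1)\wedge(h-1)}A_i^{(K-1)\wedge(h-1)}T}$, which is not $\cO(H^2)$ per round. (Note also that you cannot invoke Lemma~\ref{lemma:balancing} and Corollary~\ref{lemma:sum=1} ``together'' on this expression: each consumes the same summation over $(x_h,b_{1:h-1},a)$, so you get one or the other, never both.) With total variance $\asymp H^2{H \choose K\wedge H}A_i^{K\wedge H}X_iT$ against count $\asymp X_i{H \choose K\wedge H}A_i^{K\wedge H-1}$, optimizing $\lambda$ yields a bound of order $HX_i{H \choose K\wedge H}A_i^{K\wedge H-1/2}\sqrt{T\iota}$, which overshoots the claimed bound by a factor of order $\sqrt{X_i{H \choose K\wedge H}A_i^{K\wedge H-1}/H}$. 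Your final sentence asserting that the two claimed terms emerge from this optimization is therefore an arithmetic slip, not a proof.

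The fix is to fold the weight $w_{b_{1:h-1}}=\prod_{k\in\wb{W}\cup\set{h}}\pi^{\star,h}_{i,k}(a_k\mid x_k)$ \emph{into} the martingale difference before applying Freedman, i.e. work with $\wt\Delta_t\defeq w_{b_{1:h-1}}D^t$ (equivalently, keep your $D^t$ but use rechistory-dependent parameters $\lambda_{b_{1:h-1}}=\lambda\, w_{b_{1:h-1}}$, which is legitimate since $w_{b_{1:h-1}}\le 1$ keeps $\lambda_{b_{1:h-1}}\le 1/H$). The crucial point is that the second moment of the importance-weighted indicator returns exactly \emph{one} power of the sampling probability, so
\begin{align*}
\E\brac{\wt\Delta_t^2\mid\cF_{t-1}}\le H^2\,M^t_{b_{1:h-1}}\,p^{\pi^t_{-i}}_{1:h}(x_h)\cdot w_{b_{1:h-1}},
\end{align*}
\emph{linear} in $w_{b_{1:h-1}}$ rather than independent of it. Writing ${\rm BIAS}_{1,h}^{T,\swap}=\max_{\muphi}\sum_{x_h,b_{1:h-1}}\indicDleK\paren{\prod_k\muphi_k(a_k\mid x_k,b_{1:k})/w_{b_{1:h-1}}}\sum_t\wt\Delta_t$, the $w$'s now cancel in the variance term, and Corollary~\ref{lemma:sum=1} bounds the per-round weighted variance by $H^2$ (hence $\lambda H^2T$ in total), while only the additive Freedman term $\iota/\lambda$ picks up the balancing-weighted count $\sum_{x_h,b_{1:h-1}}\indicDleK\prod_k\muphi_k/w_{b_{1:h-1}}\le X_{i,h}{h-1 \choose (K-1)\wedge(h-1)}A_i^{K\wedge h}$ from~\eqref{equation:swap-mu-over-pi-bound}. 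This asymmetry---variance proportional to $w_{b_{1:h-1}}$, count proportional to $1/w_{b_{1:h-1}}$---is what produces $\lambda H^3T+\iota X_i{H \choose K\wedge H}A_i^{K\wedge H}/\lambda$ after summing over $h$, and hence the claimed $\sqrt{H^3{H \choose K\wedge H}A_i^{K\wedge H}X_iT\iota}$ term together with the $\lambda=1/H$ saturation term; it is exactly the step your decomposition loses.
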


\begin{lemma}[Bound on ${\rm BIAS}_{2, h}^{T, \swap}$]\label{lemma:bound-bias2^I}
With probability at least $1-\frac{p}{8}$, we have
{\proofsize  \begin{align*}
    \sum_{h=1} ^H {\rm BIAS}_{2, h}^{T, \swap}\le \mc{O}\paren{\sqrt{H^3 {H \choose K\wedge H} A_i^{K\wedge H+1} X_iT \iota}+ H {H \choose K\wedge H }A_i^{K\wedge H+1} X_i\iota}, 
\end{align*} 
}where $\iota = \log(8X_iA_i/p)$ is a log factor.
\end{lemma}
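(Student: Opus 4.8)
The plan is to bound $\sum_{h=1}^H {\rm BIAS}_{2,h}^{T,\swap}$ by reducing it to a collection of scalar martingale tail bounds and invoking Freedman's inequality (Lemma~\ref{lemma:freedman}), mirroring the skeleton of the proof of Lemma~\ref{lemma:bound-bias1^I} but paying one extra factor of $A_i$ for the swap. The first step is to linearize the inner maximum over swap modifications. Writing $u^t \defeq \wt L^t_{(x_h,b_{1:h-1})} - \L_{i,h}^t(x_{i,h},\cdot)$ and $M^t_{b_{1:h-1}} \defeq \prod_{k=1}^{h-1}\pi_i^t(b_k|x_k)$, and using that a swap modification acts coordinatewise so that $\langle \vphi \circ \pi_{i,h}^t(\cdot|x_{i,h}), u^t\rangle = \sum_{a\in\cA_i}\pi_{i,h}^t(a|x_{i,h})\, u^t(\vphi(a))$, the maximization over $\vphi$ decouples across source actions:
\begin{align*}
\max_{\vphi\in\Psi^s}\sum_{t=1}^T M^t_{b_{1:h-1}}\big\langle \vphi\circ\pi_{i,h}^t(\cdot|x_{i,h}), u^t\big\rangle = \sum_{a\in\cA_i}\max_{a'\in\cA_i}\sum_{t=1}^T M^t_{b_{1:h-1}}\,\pi_{i,h}^t(a|x_{i,h})\, u^t(a').
\end{align*}
This identity is exactly where the extra $A_i$ over Lemma~\ref{lemma:bound-bias1^I} enters: the outer sum over source actions $a$ adds $A_i$ martingales, while the inner $\max_{a'}$ costs only a union-bound log factor. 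I would then bound $\prod_k\muphi_k\le 1$ to discard $\max_{\muphi}$, leaving a sum over $x_h\in\cX_{i,h}$ and Type-I \rechists~$b_{1:h-1}\in\omegai(x_{i,h})$ (enforced by $\indicDleK$).

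Next I would fix $(x_h,b_{1:h-1})$, a source action $a$, and a target $a'$, and view $X_t \defeq M^t_{b_{1:h-1}}\,\pi_{i,h}^t(a|x_{i,h})\, u^t(a')$ as a martingale difference sequence. By the unbiasedness of the Type-I estimator~\eqref{equation:pil-estimator-i} constructed via balanced sampling in Algorithm~\ref{algorithm:kcfr-estimator-i}, we have $\E[\wt L^t_{(x_h,b_{1:h-1})}(a')\,|\,\cF_{t-1}] = \L_{i,h}^t(x_{i,h},a')$, hence $\E[X_t|\cF_{t-1}]=0$. Applying Freedman then needs (i) an almost-sure range bound and (ii) a bound on the cumulative conditional second moment $\sum_t \E[X_t^2|\cF_{t-1}]$. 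For (i), the balanced construction chooses the weight $w_{b_{1:h-1}}(x_{i,h})$ from~\eqref{equation:wb-i} so that $w_{b_{1:h-1}}(x_{i,h})\,M^t_{b_{1:h-1}}\|\wt L^t_{(x_h,b_{1:h-1})}\|_\infty \le \wb L = H$ entrywise, giving range $\cO(H/w_{b_{1:h-1}}(x_{i,h}))$ for $X_t$ after extracting the fixed weight.

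The hard part will be (ii): controlling the cumulative conditional second moment sharply. I would expand $\E[\wt L^t_{(x_h,b_{1:h-1})}(a')^2|\cF_{t-1}]$ using the estimator form, whose variance is dominated by the inverse reaching probability $1/\pi_{i,1:h}^{t,(h,\wb W)}(x_{i,h},a')$ under the interlaced sampling policy, yielding $\E[\wt L^t(a')^2|\cF_{t-1}]\lesssim H^2\, p^{\pi_{-i}^t}_{1:h}(x_{i,h})/\pi_{i,1:h}^{t,(h,\wb W)}(x_{i,h},a')$. The crux is that the balanced factors composing $\pi_{i,1:h}^{t,(h,\wb W)}$ are precisely matched by the weight $w_{b_{1:h-1}}(x_{i,h})$, so that summing the second moments $(M^t_{b_{1:h-1}}\pi_{i,h}^t(a|x_{i,h}))^2\,\E[\wt L^t(a')^2]$ over the source action $a$, over all Type-I \rechists, and over $x_h\in\cX_{i,h}$ telescopes through the balancing property of $\pi_i^{\star,h}$ (Lemma~\ref{lemma:balancing}) and the reaching-probability identity in Corollary~\ref{lemma:sum=1}, producing the sharp $X_{i,h}A_i$ scaling rather than a worst-case $X_{i,h}A_i^h$. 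Plugging the range and variance bounds into Freedman with a uniform $\lambda$, taking a union bound with failure probability $p/(8N)$ over all $N \le X_{i,h}\,{h-1\choose (K-1)\wedge(h-1)}A_i^{(K-1)\wedge(h-1)}\,A_i^2$ tuples $(x_h,b_{1:h-1},a,a')$, and summing over $h\in[H]$ via $\sum_h {h-1\choose (K-1)\wedge(h-1)}A_i^{(K-1)\wedge(h-1)}\cdot A_i \le {H\choose K\wedge H}A_i^{K\wedge H}$ (with the final $A_i$ from the source-action sum) yields the two stated summands: the variance term gives $\sqrt{H^3 {H\choose K\wedge H}A_i^{K\wedge H+1}X_iT\iota}$ and the range term gives $H{H\choose K\wedge H}A_i^{K\wedge H+1}X_i\iota$ under the choice of $\eta_{x_h}$ in~\eqref{equation:learningrate-bandit}.
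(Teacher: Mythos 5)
Your skeleton (unbiasedness of the Type-I estimators, Freedman plus a union bound, absorbing the balanced weight $w_{b_{1:h-1}}(x_{i,h})$ so the range is $H$, and an extra $A_i$ relative to Lemma~\ref{lemma:bound-bias1^I}) matches the paper's proof, and your decoupling identity for the swap maximum is valid. The fatal step is the sentence ``bound $\prod_k \muphi_k \le 1$ to discard $\max_{\muphi}$.'' Both structural facts you invoke afterwards are statements \emph{about} the $\muphi$-weighted sums and fail once those weights are removed. Corollary~\ref{lemma:sum=1} says that $\sum_{x_h}\sum_{b_{1:h-1}}\indicDleK\prod_{k}\muphi_k(a_k|x_k,b_{1:k})\prod_k \pi^t_{i,k}(b_k|x_k)\,p^{\pi^t_{-i}}_{1:h}(x_{i,h})\le 1$ precisely because the $\{0,1\}$-valued products $\prod_k\muphi_k$ select only those recommendation histories consistent with the \emph{single} modified policy $\muphi\circ\pi_i^t$ reaching $x_h$; this selection is what makes the variance half of the summed Freedman bounds collapse to $\lambda H^2 T$. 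Once you drop $\muphi$ and sum over \emph{all} tuples, the analogous sum is not $O(1)$: for instance, in a game where the $i$-th player's own actions determine her infoset (so $p^{\pi^t_{-i}}_{1:h}(x_h)=1$ for every $x_h$), one computes $\sum_{x_h}\sum_{b_{1:h-1}}\indicDleK\, M^t_{b_{1:h-1}} \,p^{\pi^t_{-i}}_{1:h}(x_h) = \Theta\big({h-1 \choose K-1}(A_i-1)^{K-1}\big)$, and your sums over the source action $a$ and target $a'$ contribute further $A_i$ factors. Likewise, the count bound~\eqref{equation:swap-mu-over-pi-bound} reaches $X_{i,h}{h-1 \choose (K-1)\wedge(h-1)}A_i^{K\wedge h}$ only by recognizing $\prod_k\muphi_k$ times balanced factors as the sequence form of a policy and invoking Lemma~\ref{lemma:balancing}; without $\muphi$, the sum of $1/w_{b_{1:h-1}}(x_h)$ over all recommendation histories is larger by roughly another $A_i^{(K-1)\wedge(h-1)}$. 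The net effect is that after optimizing $\lambda$ your bound is inflated by at least a $\sqrt{{H \choose K\wedge H}A_i^{K\wedge H+1}}$-type factor (already a factor $A_i$ at $K=1$), so the stated lemma does not follow from your argument.

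The repair is not to discard the modification weights but to fold the inner swap maximum \emph{into} them, which is exactly what the paper does: since the inner $\max_{\vphi}$ at each $(x_h,b_{1:h-1})$ and the outer $\max_{\muphi}$ over steps $1{:}h-1$ are separable, they merge into a single maximum over modifications $\muphi_{1:h}$ extended to step $h$, yielding weighted differences $\wt\Delta_t^{x_h,b_{1:h},a_h}$ that still carry $\prod_{k=1}^h\muphi_k$. Freedman is applied to these (with the weight $\prod_{k\in \wb{W}\cup\{h\}}\pi^{\star,h}_{i,k}$ built in so that the range is $H$ and one $\lambda\in(0,1/H]$ suffices), and only afterwards are the resulting bounds summed against $\prod_k\muphi_k$ divided by that weight, at which point Corollary~\ref{lemma:sum=1} and~\eqref{equation:swap-mu-over-pi-bound} legitimately produce the two terms $\lambda H^2 T$ and $\frac{C\iota}{\lambda}X_{i,h}{h-1 \choose (K-1)\wedge(h-1)}A_i^{K\wedge h+1}$. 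Your per-source-action decoupling could be retained, but only if the per-tuple Freedman bounds are re-summed with the $\prod_k\muphi_k$ weights intact rather than against the raw count of tuples.
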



Combining Lemma~\ref{lemma:bound-regret^I},~\ref{lemma:bound-bias1^I}, and~\ref{lemma:bound-bias2^I}, we have with probability at least $1-p/2$ that
 {\proofsize  \begin{align*}
     \sum_{h=1}^H \max_{\phi \in \Phi_i^K}\sum_{x_{h}\in \cX_{i, h}}G_{h}^{T, \swap}(x_{h};\muphi) &\le  \sum_{h=1}^H \wt{\rm REGRET}_h^{T, \swap} +  \sum_{h=1}^H {\rm BIAS}_{1, h}^{T, \swap} +  \sum_{h=1}^H {\rm BIAS}_{2, h}^{T, \swap}\\
     &\le \mc{O}  \paren{\sqrt{H^3 {H \choose K\wedge H }A_i^{K \wedge
 H +1} X_i T \iota }}\\
    &+ \mc{O}  \paren{ {H \choose K\wedge H }  A_i^{K \wedge H} X_i   \iota \sqrt{\frac{H {H \choose K\wedge H }  A_i^{K \wedge H +1 } X_i \iota }{T}}}\\
    &+  \mc{O}  \paren{ H {H \choose K\wedge H }A_i^{K\wedge H + 1} X_i\iota}.
 \end{align*} 
 }
 \end{proof}

\subsubsection{Proof of Lemma \ref{lemma:bound-regret^I}: Bound on $\wt{\rm REGRET}_h^{T, \swap}$}
\label{appendix:proof-bound-regret^I}
\begin{proof}
Recall that $\wt{\rm REGRET}_h^{T, \swap}$ is defined as 
{\proofsize
\begin{align*}
  \max_{\muphi \in \Phi_i^K} \sum_{x_{h}\in \cX_{i, h}} \sum_{b_{1:h-1}} \indicDleK \prod_{k=1}^{h-1} \muphi_k(a_{k} \vert x_k, b_{1:k})  \max_\vphi \sum_{t = 1}^T \prod_{k = 1}^{h-1} \pi_i^t (b_{k}|x_{k})  \Big\langle \pi_i^t( \cdot | x_{h})- \vphi \circ \pi_i^t(\cdot | x_{h}) , \wt L_{(x_{h}, b_{1: h-1})}^{t} \Big\rangle.
\end{align*}
}We first apply regret minimization lemma (Lemma \ref{lemma:regret-with-time-selection-bandit}) on $\regmin_{x_h}$ to give an upper bound of 
$$
\max_\vphi \sum_{t = 1}^T \prod_{k = 1}^{h-1} \pi_i^t (b_{k}|x_{k})  \Big\langle \pi_i^t( \cdot | x_{h})- \vphi \circ \pi_i^t(\cdot | x_{h}) , \wt L_{(x_{h}, b_{1: h-1})}^{t} \Big\rangle.
$$
For $\regmin_{x_h}$, Algorithm \ref{algorithm:kefr-sampled-bandit} gives that $M^t_{b_{1:h-1}} = \prod_{k = 1}^{h-1} \pi_i^t (b_{k}|x_{k})$ and $w_{b_{1:h-1}} = \prod_{k \in  \fill(W, (h-1)\wedge(K-1)) \cup \{h\}} \pi^{\star, h}_{i,k}(a_k \vert x_k) $ for any $b_{1:h-1} \in \omegai(x_h)$,  where $W = \set{k\in [h-1]: b_k\not= a_k}$. Letting  $\wb{W} = \fill(W, (h-1)\wedge(K-1))$, $\eta$ be the learning rate of $\regmin_{x_h}$ and $\wb L = H$. Since regret minimizers $\mc{R}_{x_{h}}$ observe $\set{\wt L^t_{(x_{i,h}, b_{1:h'})}(\cdot)}_{(x_{i,h}, b_{1:h'})\in \omegaii}$ and $\set{\wt L^t_{(x_{i,h}, b_{1:h-1})}(\cdot)}_{(x_{i,h}, b_{1:h-1})\in \omegai}$ from Algorithm~\ref{algorithm:kcfr-estimator-ii} \&~\ref{algorithm:kcfr-estimator-i} as its loss vector at round $t$, we have 
\begin{align*}
     &~M_{b_{1:h-1}}^t w_{b_{1:h-1}} \wt L^t_{(x_{i,h}, b_{1:h-1})}(\cdot)  \\
     =& ~M_{b_{1:h-1}}^t w_{b_{1:h-1}} \frac{\indic{(x_{i,h}^{t, (h, \wb{W})}, a_{i,h}^{t, (h, \wb{W})}) = (x_{i,h}, \cdot)}}{\prod_{k\in \wb{W}\cup\set{h}} \pi^{\star, h}_{i, k}(a_{k}|x_{i,k}) \prod_{k \in [h-1] \setminus \wb{W}} \pi_i^t(b_k \vert x_{i,k})} \cdot  \sum_{h''=h}^H \paren{1 - r_{i,h''}^{t, (h, \wb{W})}} \\
     =&~ \prod_{k \in \wb{W} } \pi_i^t(b_k \vert x_{i,k}) \indic{(x_{i,h}^{t, (h, \wb{W})}, a_{i,h}^{t, (h, \wb{W})}) = (x_{i,h}, \cdot)} \sum_{h''=h}^H \paren{1 - r_{i,h''}^{t, (h, \wb{W})}}\\ \in& ~[0,H] = [0,\wb{L}]. 
\end{align*}
Similarly, we have $M_{b_{1:h'}}^t w_{b_{1:h-1}} \wt L_{(x_{i,h}, b_{1:h'})}(\cdot ) \in [0, \wb{L}]$. Moreover, let $\cF_{t-1}$ be the $\sigma$-algebra containing all the information after $\pi^t$ is sampled, by the sampling algorithm, we have 
\begin{align*}
    \E\brac{ \wt L^t_{(x_{i,h}, b_{1:h-1})}(\cdot) \vert \cF_{t-1}} =  \E\brac{ \wt L^t_{(x_{i,h}, b_{1:h'})}(\cdot)\vert \cF_{t-1}} = L^t_{h} (x_{i,h}, \cdot),
\end{align*}
for all $(x_{i,h}, b_{1:h-1})\in \omegai$ and all $(x_{i,h}, b_{1:h'})\in \omegaii$. So the assumptions in Lemma \ref{lemma:regret-with-time-selection-bandit} are satisfied. 
By Lemma \ref{lemma:regret-with-time-selection-bandit}, with probability at least $1-p/8$, for all $x_h \in \cX_i$, we have
{\proofsize  \begin{align*}
    &\max_\vphi \sum_{t = 1}^T \prod_{k = 1}^{h-1} \pi_i^t (b_{k}|x_{k})  \Big\langle \pi_i^t( \cdot | x_{h})- \vphi \circ \pi_i^t(\cdot | x_{h}) , \wt L_{(x_{h}, b_{1: h-1})}^{t} \Big\rangle. \\ 
    \le ~&\frac{2 A_i\log (8X_iA_i/p)}{\eta w_{b_{1:h-1}}} + \eta\sum_{t=1}^T M^t_{b_{1:h-1}} \wb{L} \Big\langle \pi_i^t (\cdot|x_{k}), \wt L_{(x_{h}, b_{1: h-1})}^{t}(\cdot) \Big\rangle\\
    \overset{}{=}~ & \frac{2 A_i\log (8X_iA_i/p)}{\eta\prod_{k\in \wb{W} \cup \set{h}} \pi^{\star,h}_{i, k}(a_{k} \vert x_k )} +H\eta\prod_{k\in [h-1]} \pi^{t}_{i, k}(b_{k} \vert x_k )\\
    &~~~ \times \sum_{t=1}^T \Big\langle \pi_i^t (\cdot|x_{k}), \frac{ \indic{(x_h, \cdot) = (x_h^{t,(h,\wb{W})}, a_{h}^{t, (h, \wb{W})}) }\paren{H-h+1 - \sum_{h' = h}^H r_{i, h'}^{t, (h, \wb{W})}}}{\prod_{k\in \wb{W} \cup \{h\}} \pi^{\star,h}_{i, k}(a_{k} \vert x_k ) \prod_{k \in [h-1] \setminus \wb{W}} \pi_{i,k}^t(b_{k} \vert x_k)} \Big\rangle\\
    \le~& \frac{2 A_i\log (8X_iA_i/p)}{\eta\prod_{k\in \wb{W} \cup \set{h}} \pi^{\star,h}_{i, k}(a_{k} \vert x_k )}+H^2 \eta  \sum_{t=1}^T\Big\langle \pi_i^t (\cdot|x_{k}), \frac{\prod_{k \in \wb{W}} \pi_{i,k}^t(b_{k} \vert x_k) \indic{(x_h, \cdot) = (x_h^{t,(h,\wb{W})}, a_{h}^{t, (h, \wb{W})}) }}{\prod_{k\in \wb{W} \cup \set{h} } \pi^{\star,h}_{i, k}(a_{k} \vert x_k )} \Big\rangle .
\end{align*} 
}Here, we use $(i)$ our choices of $M_{b_{1:h-1}}^t$ and $w_{b_{1:h-1}}$; $(ii) ~(|\cB^s|+|\cB^e|) |\Psi^s| \le A_i^{H+A_i} \le X_i A_i^{A_i}$ and $(iii)$  taking union bound over all infosets. 
Plugging this into $\wt{\rm REGRET}_{h}^{T, \swap}$, we have
{\proofsize  \begin{align*}
    &\wt{\rm REGRET}_{h}^{T, \swap}\\=~& \max_{\muphi \in \Phi_i^K} \sum_{x_{h}\in \cX_{i, h}} \sum_{b_{1:h-1}} \indicDleK \prod_{k=1}^{h-1} \muphi_k(a_{k} \vert x_k, b_{1:k}) \\
    &\times \max_\vphi \sum_{t = 1}^T \prod_{k = 1}^{h-1} \pi_i^t (b_{k}|x_{k})  \Big\langle \pi_i^t( \cdot | x_{h})- \vphi \circ \pi_i^t(\cdot | x_{h}) , \wt L_{(x_{h}, b_{1: h-1})}^{t} \Big\rangle.\\
    \le~& \frac{2 A_i\log(8X_i A_i/p)}{\eta} \max_{\muphi \in \Phi_i^K} \sum_{x_{h}\in \cX_{i, h}} \sum_{b_{1:h-1}} \indicDleK  \frac{\prod_{k=1}^{h-1} \muphi_k(a_{k} \vert x_k, b_{1:k})}{\prod_{k\in \wb{W} \cup \set{h}} \pi^{\star,h}_{i, k}(a_{k} \vert x_k )} \\
    &+  H^2   \eta \max_{\muphi \in \Phi_i^K} \sum_{x_{h}\in \cX_{i, h}} \sum_{b_{1:h-1}} \indicDleK\frac{\prod_{k=1}^{h-1} \muphi_k(a_{k} \vert x_k, b_{1:k})}{\prod_{k\in \wb{W} \cup \set{h} } \pi^{\star,h}_{i, k}(a_{k} \vert x_k )} \\
    & \times \sum_{t=1}^T \underbrace{\prod_{k \in \wb{W}} \pi_{i,k}^t(b_{k} \vert x_k) \Big\langle \pi_i^t (\cdot|x_{k}),  \indic{(x_h, \cdot) = (x_h^{t,(h,\wb{W})}, a_{h}^{t, (h, \wb{W})}) } \Big\rangle}_{\defeq\wb{\Delta}_{t}^{x_h,b_{1:h-1}}}.
\end{align*} 
}Letting
{\proofsize \begin{gather*}
    {\rm{I}}_h \defeq \frac{2 A_i\log (8X_iA_i/p)}{\eta} \max_{\muphi \in \Phi_i^K} \sum_{x_{h}\in \cX_{i, h}} \sum_{b_{1:h-1}} \indicDleK  \frac{\prod_{k=1}^{h-1} \muphi_k(a_{k} \vert x_k, b_{1:k})}{\prod_{k\in \wb{W} \cup \set{h}} \pi^{\star,h}_{i, k}(a_{k} \vert x_k )};\\
    {\rm{II}}_h \defeq H^2\eta \max_{\muphi \in \Phi_i^K} \sum_{x_{h}\in \cX_{i, h}} \sum_{b_{1:h-1}} \indicDleK\frac{\prod_{k=1}^{h-1} \muphi_k(a_{k} \vert x_k, b_{1:k})}{\prod_{k\in \wb{W} \cup \set{h}} \pi^{\star,h}_{i, k}(a_{k} \vert x_k )} \sum_{t=1}^T \wb{\Delta}_t^{x_h, b_{1:h-1}}.
\end{gather*} }

Using Lemma \ref{lemma:balancing}, we have
{\proofsize  \begin{align*}
    &\sum_{x_{h}\in \cX_{i, h}} \sum_{b_{1:h-1}} \indicDleK  \frac{\prod_{k=1}^{h-1} \muphi_k(a_{k} \vert x_k, b_{1:k})}{ \prod_{k\in \wb{W} \cup \set{h}} \pi^{\star,h}_{i, k}(a_{k} \vert x_k )}\\
    =& \sum_{x_{h}\in \cX_{i, h}} \sum_{b_{1:h-1}}\indicDleK  \frac{\prod_{k=1}^{h-1} \muphi_k(a_{k} \vert x_k, b_{1:k})\prod_{k\in [h-1]\setminus \wb{W}} \pi^{\star,h}_{i, k}(a_{k} \vert x_k )}{ \prod_{k\in [h]} \pi^{\star,h}_{i, k}(a_{k} \vert x_k )}\\
    \overset{(i)}{=}&\sum_{x_{h}\in \cX_{i, h}} \sum_{a_h}  \frac{\sum_{b_{1:h-1}} \indicDleK \prod_{k=1}^{h-1} \muphi_k(a_{k} \vert x_k, b_{1:k})\prod_{k\in [h-1]\setminus \wb{W}} \pi^{\star,h}_{i, k}(b_{k} \vert x_k ) \cdot \pi_{i,h}^{\star,h}(a_h \vert x_h)}{ \prod_{k\in [h]} \pi^{\star,h}_{i, k}(a_{k} \vert x_k )} \\
    \overset{(ii)}{=}&  \sum_{\wb{W} \subset [h-1], |\wb{W}|= (h-1)\wedge(K-1)}A_{i}^{|\wb{W}|}  \\
    & \times\sum_{x_h, a_h}  \frac{\sum_{b_{1:h-1}: \wb{W} = \fill( \set{k\in[h-1]: a_k \not = b_k},K\wedge H -1)}  \prod_{k=1}^{h-1} \muphi_k(a_{k} \vert x_k, b_{1:k})\prod_{k\in [h-1]\setminus \wb{W}} \pi^{\star,h}_{i, k}(b_{k} \vert x_k )\prod_{k\in \wb{W}} \pi^{\rm unif}_{i, k}(b_{k} \vert x_k ) \pi^{\star,h}_{i, h}(a_{h} \vert x_h )}{ \prod_{k\in [h]} \pi^{\star,h}_{i, k}( a_{k} \vert x_k )} \\
    \overset{(iii)}{\le }& \sum_{\wb{W} \subset [h-1], |\wb{W}|=(K-1) \wedge (h-1) } A_{i}^{|\wb{W}|} X_{i,h} A_i~~ \\
    \overset{}{=}~ & X_{i,h} {h-1 \choose (K-1) \wedge (h-1)}A_i^{K \wedge h}.
\end{align*} 
}Here, (i) uses that $\pi_{i,h}^{\star,h}(\cdot|x_h)$ is uniform distribution on $\cA_i$ and that for $k\in [h-1] \setminus \wb{W} $, we have $a_k=b_k$; (ii) follows from grouping $x_h$ and $b_{1:h}$ by $|\wb{W}|$, where $\pi_{i, k}^{\rm unif}$ is the uniform distribution on $\cA_i$; (iii) uses Lemma \ref{lemma:balancing} and that the 
numerator is no more than the sequence-form probability of some policy. This policy can be understand as: 
\begin{itemize}
    \item Sample recommended action $b_k$ from $\pi_{i,k}^{\star, h}(\cdot | x_k)$ if step $k \in \wb{W}$. Otherwise, sample recommended action $b_k$ from $\pi_{i,k}^{\rm unif}(\cdot | x_k)$.
    
    \item ``True'' actions are sampled from $\phi_k(a_k |x_k, b_{1:k})$ for $k \in [h-1]$. At step $h$, take action $a_h$.
\end{itemize}
Consequently, 
{\proofsize \begin{equation}\label{equation:swap-mu-over-pi-bound}
    \sum_{x_{h}\in \cX_{i, h}} \sum_{b_{1:h-1}} \indicDleK  \frac{\prod_{k=1}^{h-1} \muphi_k(a_{k} \vert x_k, b_{1:k})}{ \prod_{k\in \wb{W} \cup \set{h}} \pi^{\star,h}_{i, k}(a_{k} \vert x_k )} \le X_{i,h} {h-1 \choose K \wedge h-1}A_i^{K \wedge h}.
\end{equation}
}So we have
$${\rm I}_h \le \frac{2 A_i\log (8X_iA_i/p)}{\eta}X_{i,h} {h-1 \choose (K-1) \wedge (h-1)}A_i^{K \wedge h}.$$
For ${\rm II}_h$, obvserve that the random variables $\wb{\Delta}_{t}^{x_h,b_{1:h-1}}$ satisfy the following:  
\begin{itemize}
    \item $\wb{\Delta}_{t}^{x_h,b_{1:h-1}}= \prod_{k \in \wb{W}} \pi_{i,k}^t(b_{k} \vert x_k) \Big\langle \pi_i^t (\cdot|x_{k}),  \indic{(x_h, \cdot) = (x_h^{t,(h,\wb{W})}, a_{h}^{t, (h, \wb{W})}) } \Big\rangle \in [0,1]$; 
    \item Let $\cF_{t-1}$ be the $\sigma$-algebra containing all information after $\pi^t$ is sampled, then
    {\proofsize  \begin{align*}
        &~\E\brac{\wb{\Delta}_{t}^{x_h,b_{1:h-1}}\vert \cF_{t-1}} \\
        =& \prod_{k \in \wb{W}} \pi_{i,k}^t(b_{k} \vert x_k)\E\brac{ \sum_{a\in \cA_i} \pi_i^t(a|x_k) \indic{(x_h, a) = (x_h^{t,(h,\wb{W})}, a_{h}^{t, (h, \wb{W})})} \Big\vert \cF_{t-1}} \\
        =&\prod_{k \in \wb{W}} \pi_{i,k}^t(b_{k} \vert x_k) \sum_{a\in\cA_i}  \pi_i^t(a|x_k) \P^{((\pi^{\star, h}_{i,k})_{k\in \wb{W}\cup \set{h}}(\pi^{t}_{i,k})_{k\in[h-1]\setminus \wb{W}} ) \times \pi_{-i}^t}(x_h^{t,(h,\wb{W})} = x_h , a_h^{t, (h, \wb{W})} = a)\\
        = & \prod_{k \in \wb{W}} \pi_{i,k}^t(b_{k} \vert x_k) \sum_{a \in \cA_i} \pi_i^t(a|x_k) \paren{\prod_{k \in \wb{W} } \pi_{i,k}^{\star,h}(a_{k} \vert x_k) \cdot\prod_{k \in [h-1]\setminus \wb{W}} \pi_{i,k}^{t}(a_{k} \vert x_k)\cdot \pi_{i,h}^{\star,h}(a|x_h) p^{\pi^t_{-i}}_{1:h}(x_{h})} \\
        = & \prod_{k \in [h-1]} \pi_{i,k}^t(b_{k} \vert x_k) \cdot \prod_{k \in \wb{W} \cup \set{h} } \pi_{i,k}^{\star,h}(a_{k} \vert x_k) \cdot p^{\pi^t_{-i}}_{1:h}(x_{h}),
    \end{align*} } where the last equation is because $\prod_{k \in \wb{W}} \pi_{i,k}^t(b_{k} \vert x_k) \cdot \prod_{k \in [h-1]\setminus \wb{W}} \pi_{i,k}^{t}(a_{k} \vert x_k) = \prod_{k \in [h-1]} \pi_{i,k}^t(b_{k} \vert x_k)$;
    
    \item The conditional variance $\E[(\wb\Delta_t^{x_{h}, b_{1:h-1}})^2|\cF_{t-1}]$ can be bounded as
    {\proofsize  \begin{align*}
        & ~\quad \E\brac{ \paren{\wb\Delta_t^{x_{h}, b_{1:h-1}}}^2 \Big| \cF_{t-1}} \le \E\brac{\wb\Delta_t^{x_{h}, b_{1:h-1}} \Big| \cF_{t-1}}\\
    &= \prod_{k \in [h-1]} \pi_{i,k}^t(b_{k} \vert x_k) \prod_{k \in \wb{W} \cup \set{h} } \pi_{i,k}^{\star,h}(a_{k} \vert x_k)p^{\pi^t_{-i}}_{1:h}(x_{h}).
    \end{align*} }
    Here, the inequality comes from that $\wb\Delta_t^{x_{h}, b_{1:h-1}}\in [0,1]$.
\end{itemize}
Therefore, we can apply Freedman's inequality and union bound to get that  for any fixed $\lambda \in (0, 1]$, with probability at least $1-p/8$, the following holds simultaneously for all $(h,x_{i, h},  b_{1:h-1})$: 

{\proofsize  \begin{align*}
    \sum_{t = 1}^T \wb\Delta_t^{x_h, b_{1:h-1}} \le& \paren{ \lambda+1 } \prod_{k \in [h-1]} \pi_{i,k}^t(b_{k} \vert x_k) \prod_{k \in \wb{W} \cup \set{h} } \pi_{i,k}^{\star,h}(a_{k} \vert x_k)p^{\pi^t_{-i}}_{1:h}(x_{h})
    + \frac{C\log(X_i A_i/p)}{\lambda},
\end{align*} }
where $C> 0$ is some absolute constant. Plugging this bound into ${\rm II}_h$ yields that,
{\proofsize  \begin{align*}
    &{\rm II}_h \le H^2 \eta  \max_{\muphi \in \Phi_i^K} \sum_{x_{h}\in \cX_{i, h}} \sum_{b_{1:h-1}} \indicDleK\frac{\prod_{k=1}^{h-1} \muphi_k(a_{k} \vert x_k, b_{1:k})}{\prod_{k\in \wb{W} \cup \set{h}} \pi^{\star,h}_{i, k}(a_{k} \vert x_k )}\\
    & \brac{\paren{ \lambda + 1 } \prod_{k \in [h-1]} \pi_{i,k}^t(b_{k} \vert x_k) \prod_{k \in \wb{W} \cup \set{h} } \pi_{i,k}^{\star,h}(a_{k} \vert x_k) \cdot p^{\pi^t_{-i}}_{1:h}(x_{h})
    + \frac{C\log(X_i A_i/p)}{\lambda}}.
\end{align*} }
Note that by Corollary \ref{lemma:sum=1}, for fixed $t$, {\proofsize
$$\sum_{x_{h}\in \cX_{i, h}} \sum_{b_{1:h-1}}\indicDleK \prod_{k=1}^{h-1} \muphi_k(a_{k} \vert x_k, b_{1:k}) \prod_{k \in [h-1]} \pi_{i,k}^t(b_{k} \vert x_k) \cdot p^{\pi^t_{-i}}_{1:h}(x_{i,h}) \le 1.
$$} so we have
{\proofsize \begin{equation*}
    \begin{gathered}
        \max_{\muphi \in \Phi_i^K} \sum_{x_{h}\in \cX_{i, h}} \sum_{b_{1:h-1}}\indicDleK \prod_{k=1}^{h-1} \muphi_k(a_{k} \vert x_k, b_{1:k}) \sum_{t = 1}^T \prod_{k \in [h-1]} \pi_{i,k}^t(b_{k} \vert x_k) \cdot p^{\pi^t_{-i}}_{1:h}(x_{i,h}) \le T.
    \end{gathered}
\end{equation*} }
Moreover, by the previous bound \eqref{equation:swap-mu-over-pi-bound},
{\proofsize  \begin{align*}
    \sum_{x_{h}\in \cX_{i, h}} \sum_{b_{1:h-1}} \indicDleK  \frac{\prod_{k=1}^{h-1} \muphi_k(a_{k} \vert x_k, b_{1:k})}{ \prod_{k\in \wb{W}} \pi^{\star,h}_{i, k}(a_{k} \vert x_k )} \le X_{i,h}{h-1\choose K\wedge H -1}A_i^{K \wedge h}.
\end{align*} }
Using these two inequalities, we can get that 
{\proofsize  \begin{align*}
    &{\rm II}_h \le H^2\eta \paren{ \lambda+1 }T+ \frac{CH^2  \eta\log(X_i A_i/p)}{ \lambda}X_{i,h}{h-1\choose K\wedge H -1}A_i^{K \wedge h}.
\end{align*} }
Taking summation over $h\in[H]$, we have
{\proofsize  \begin{align*}
    &\sum_{h=1}^H   \wt{\rm REGRET}_{h}^{T, \swap} = \sum_{h=1}^H({\rm I}_{h}+{\rm II}_{h})\\
    \le~& H^3 \eta \paren{ \lambda+1 }T + \paren{ \frac{2 A_i\log (8X_i A_i/p)}{\eta} + \frac{CH^2 \eta\log(X_i A_i/p)}{ \lambda}}\sum_{h=1}^H X_{i,h}  {h-1\choose (K-1) \wedge (h-1)}A_i^{K \wedge h}\\
    \le ~& H^3 \eta \paren{ \lambda+1 } T + \paren{ \frac{2 A_i\log(8X_i A_i/p)}{\eta} + \frac{CH^2 \eta\log(X_i A_i/p)}{ \lambda}} X_i {H-1 \choose K\wedge H -1}A_i^{K\wedge H },
\end{align*} }
for all $\lambda\in (0,1].$ Choosing $\lambda = 1$, we have,
{\proofsize  \begin{align*}
    \sum_{h=1}^H   \wt{\rm REGRET}_{h}^{T, \swap} \le
    2H^3\eta T + \paren{ \frac{2 A_i \log(8X_i A_i/p) }{\eta} +  CH^2 \eta\log(X_i A_i/p) } X_i {H-1 \choose K\wedge H -1}A_i^{K\wedge H }.
\end{align*} }
Then, choosing $\eta = \sqrt{{H \choose K\wedge H }A_i^{K \wedge
 H +1} X_i \iota / (H^3T) }$ and using ${H-1 \choose K\wedge H -1} \le {H \choose K\wedge H}$ we have
{\proofsize  \begin{align*}
     \sum_{h=1}^H   \wt{\rm REGRET}_{h}^{T, \swap} &\le  2{\sqrt{H^3 {H \choose K\wedge H } A_i^{K \wedge
     H +1} X_i T \iota }}\\
    + &\mc{O}  \paren{ {H \choose K\wedge H }  A_i^{K \wedge H} X_i   \iota \sqrt{\frac{H {H \choose K\wedge H }  A_i^{K \wedge H +1 } X_i \iota }{T}}}
\end{align*} }
with probability at least $1-p/8$, where $\iota = \log (8X_i A_i/p)$ is a log factor.
\end{proof}

\subsubsection{Proof of Lemma \ref{lemma:bound-bias1^I}: Bound on ${\rm BIAS}_{1, h}^{T, \swap}$}
\label{appendix:proof-bound-bias1^I}
\begin{proof}
We can rewrite ${\rm BIAS}_{1, h}^{T, \swap}$ as 
{\proofsize  \begin{align*}
    &{\rm BIAS}_{1, h}^{T, \swap}\\
    =~& \max_{\phi \in \Phi_i^K}  \sum_{x_{h}\in \cX_{i, h}} \sum_{b_{1:h-1}} \indicDleK \frac{\prod_{k=1}^{h-1} \muphi_k(a_{k} \vert x_k, b_{1:k})}{\prod_{k \in \wb{W} \cup \set{h} } \pi^{\star,h}_{i, k}(a_{k} \vert x_k )} \\
    &\times \sum_{t = 1}^T \underbrace{ \prod_{k \in \wb{W} \cup \set{h} } \pi^{\star,h}_{i, k}(a_{k} \vert x_k ) \prod_{k = 1}^{h-1} \pi_i^t (b_{k}|x_{k})
    \Big\langle \pi_{i, h}^t(\cdot \vert x_{i, h}), \L_{i,h}^{t}(x_{i, h},\cdot)- \wt L^t_{(x_{h}, b_{1: h-1})} \Big\rangle}_{\defeq \wt\Delta_t^{x_{h}, b_{1:h-1}}}.
\end{align*}
}Here, for fixed $x_h \in \cX_i$ and $b_{1:h-1}$, let $W = \set{k\in [h-1]: b_k\not= a_k}$ and $\wb{W} = \fill(W, (h-1)\wedge(K-1))$. Observe that the random variable $\wt\Delta_t^{x_{h}, b_{1: h-1}}$ satisfy the following:
\begin{itemize}
    \item By the definition of $\wt L$ in Algorithm \ref{algorithm:kcfr-estimator-i}, we can rewrite $\wt \Delta_t^{x_{h}, b_{1:h-1}}$ as
    {\proofsize  \begin{align*}
        &\wt \Delta_t^{x_{h}, b_{1:h-1}} = \prod_{k \in \wb{W} \cup \set{h} } \pi^{\star,h}_{i, k}(a_{k} \vert x_k ) \prod_{k = 1}^{h-1} \pi_i^t (b_{k}|x_{k}) \Big\langle \pi_{i, h}^t(\cdot \vert x_{i, h}), \L_{i,h}^{t}(x_{i, h},\cdot) \Big\rangle \\
        &- \prod_{k \in \wb{W}} \pi_{i,k}^t(b_{k} \vert x_k) \Big\langle \pi_{i, h}^t(\cdot \vert x_{i, h}),  \indic{(x_h, \cdot) = (x_h^{t,(h,\wb{W})}, a_{h}^{t, (h, \wb{W})}) } \cdot \paren{H-h+1 - \sum_{h' = h}^H r_{i, h'}^{t, (h, \wb{W})}} \Big\rangle;
    \end{align*} }
    \item $\wt\Delta_t^{x_{h}, b_{1:h-1}}\le \Big\langle \pi_{i, h}^t(\cdot \vert x_{i, h}), \L_{i,h}^{t}(x_{i, h},\cdot) \Big\rangle \le H$;
    \item $\E[\wt \Delta_t^{x_{h}, b_{1:h-1}}|\cF_{t-1}]=0$, where $\cF_{t-1}$ is the $\sigma$-algebra containing all information after $\pi^t$ is sampled. This also can be seen from the  unbiasedness of $\wt{\L}$;
    \item The conditional variance $\E[(\wt\Delta_t^{x_{h}, b_{1:h-1}})^2|\cF_{t-1}]$ can be bounded as
  {\proofsize  \begin{align*}
    & ~\quad \E\brac{ \paren{\wt\Delta_t^{x_{h}, b_{1:h-1}}}^2 \Big| \cF_{t-1}} \\
    &\le \E\brac{ \paren{ H-h+1 - \sum_{h'=h}^H r_{i,h'}^{t, (h,\wb{W})} }^2 \paren{\prod_{k \in \wb{W}} \pi_{i,k}^t(b_{k} \vert x_k)  \Big\langle \pi_{i, h}^t(\cdot \vert x_{i, h}), \indic{(x_h, \cdot) = (x_h^{t,(h,\wb{W})}, a_{h}^{t, (h, \wb{W})}) } \Big\rangle}^2\Big| \cF_{t-1} }\\
    & \overset{(i)}{\le} H^2 \prod_{k \in \wb{W}} \pi_{i,k}^t(b_{k} \vert x_k) \cdot \E\brac{ \Big\langle \pi_{i, h}^t(\cdot \vert x_{i, h}), \indic{(x_h, \cdot) = (x_h^{t,(h,\wb{W})}, a_{h}^{t, (h, \wb{W})}) } \Big\rangle \Big| \cF_{t-1}} \\
    & = H^2 \prod_{k \in \wb{W}} \pi_{i,k}^t(b_{k} \vert x_k) \cdot \E\brac{ \sum_{a\in \cA_i} \pi_i^t(a|x_k) \indic{(x_h, a) = (x_h^{t,(h,\wb{W})}, a_{h}^{t, (h, \wb{W})})} \Big\vert \cF_{t-1}} \\
    & = H^2 \prod_{k \in \wb{W}} \pi_{i,k}^t(b_{k} \vert x_k) \cdot \sum_{a\in \cA_i} \pi_i^t(a|x_k) \P^{((\pi^{\star, h}_{i,k})_{k\in \wb{W}\cup \set{h}}(\pi^{t}_{i,k})_{k\in[h-1]\setminus \wb{W}} ) \times \pi_{-i}^t}(x_h^{t,(h,\wb{W})} = x_h , a_h^{t, (h, \wb{W})} = a)
    \\ 
    & = H^2 \prod_{k \in \wb{W}} \pi_{i,k}^t(b_{k} \vert x_k) \cdot \sum_{a\in \cA_i} \pi_i^t(a|x_k) \paren{\prod_{k \in \wb{W} } \pi_{i,k}^{\star,h}(a_{k} \vert x_k) \cdot\prod_{k \in [h-1]\setminus \wb{W}} \pi_{i,k}^{t}(a_{k} \vert x_k)\cdot \pi_{i,h}^{\star,h}(a|x_h) p^{\pi^t_{-i}}_{1:h}(x_{h})} \\ 
     &\overset{(ii)}{\le} H^2  \prod_{k \in [h-1]} \pi_{i,k}^t(b_{k} \vert x_k) p^{\pi^t_{-i}}_{1:h}(x_{i,h})\prod_{k \in \wb{W} \cup \set{h} } \pi_{i,k}^{\star,h}(a_{k} \vert x_k)   .
  \end{align*} }
  Here, (i) uses $\Big\langle \pi_{i, h}^t(\cdot \vert x_{i, h}), \indic{(x_h, \cdot) = (x_h^{t,(h,\wb{W})}, a_{h}^{t, (h, \wb{W})}) } \Big\rangle \in [0,1]$; (ii) is because $\prod_{k \in \wb{W}} \pi_{i,k}^t(b_{k} \vert x_k) \cdot \prod_{k \in [h-1]\setminus \wb{W}} \pi_{i,k}^{t}(a_{k} \vert x_k) = \prod_{k \in [h-1]} \pi_{i,k}^t(b_{k} \vert x_k)$. 
\end{itemize}
Therefore, we can apply Freedman's inequality and union bound to get that  for any fixed $\lambda \in (0, 1/H]$, with probability at least $1-p/8$, the following holds simultaneously for all $(h,x_{i, h},  b_{1:h-1})$:

{\proofsize  \begin{align*}
    \sum_{t = 1}^T \wt\Delta_t^{x_h, b_{1:h-1}} \le&~ \lambda H^2 \prod_{k \in \wb{W} \cup \set{h} } \pi_{i,k}^{\star,h}(a_{k} \vert x_k) \sum_{t=1}^T \prod_{k \in [h-1]} \pi_{i,k}^t(b_{k} \vert x_k) p^{\pi^t_{-i}}_{1:h}(x_{i,h})
    + \frac{C\log(X_i A_i/p)}{\lambda},
\end{align*} }
where $C> 0$ is some absolute constant. Plugging this bound into ${\rm BIAS}_{1, h}^{T, \swap}$ yields that, for all $h\in[H]$,
{\proofsize \begin{equation*}\label{equation:bias1h-after-freedman}
    \begin{aligned}
    &~{\rm BIAS}_{1, h}^{T, \swap} \le \max_{\muphi \in \Phi_i^K} \sum_{x_{h}\in \cX_{i, h}} \sum_{b_{1:h-1}} \indicDleK  \frac{\prod_{k=1}^{h-1} \muphi_k(a_{k} \vert x_k, b_{1:k})}{ \prod_{k\in \wb{W} \cup \set{h}} \pi^{\star,h}_{i, k}(a_{k} \vert x_k )}\\
     &\times \brac{
    \lambda H^2 \prod_{k \in \wb{W} \cup \set{h} } \pi_{i,k}^{\star,h}(a_{k} \vert x_k) \sum_{t=1}^T \prod_{k \in [h-1]} \pi_{i,k}^t(b_{k} \vert x_k) p^{\pi^t_{-i}}_{1:h}(x_{i,h})
    + \frac{C \log(X_i A_i/p)}{\lambda}}\\
    \le~ & \lambda H^2  \max_{\muphi \in \Phi_i^K} \sum_{x_{h}\in \cX_{i, h}} \sum_{b_{1:h-1}}\indicDleK \prod_{k=1}^{h-1} \muphi_k(a_{k} \vert x_k, b_{1:k}) \sum_{t = 1}^T \prod_{k \in [h-1]} \pi_{i,k}^t(b_{k} \vert x_k) p^{\pi^t_{-i}}_{1:h}(x_{i,h}) \\
    &+ \frac{C \log(X_i A_i/p)}{\lambda} \max_{\muphi \in \Phi_i^K} \sum_{x_{h}\in \cX_{i, h}} \sum_{b_{1:h-1}} \indicDleK  \frac{\prod_{k=1}^{h-1} \muphi_k(a_{k} \vert x_k, b_{1:k})}{ \prod_{k\in \wb{W} \cup \set{h}} \pi^{\star,h}_{i, k}(a_{k} \vert x_k )}.
\end{aligned}
\end{equation*}
}By Corollary \ref{lemma:sum=1}, for fixed $t$, {\proofsize
$$\sum_{x_{h}\in \cX_{i, h}} \sum_{b_{1:h-1}}\indicDleK \prod_{k=1}^{h-1} \muphi_k(a_{k} \vert x_k, b_{1:k}) \prod_{k \in [h-1]} \pi_{i,k}^t(b_{k} \vert x_k) p^{\pi^t_{-i}}_{1:h}(x_{i,h}) \le 1.
$$}So we have
{\proofsize \begin{equation*}
    \begin{gathered}
        \lambda H^2  \max_{\muphi \in \Phi_i^K} \sum_{x_{h}\in \cX_{i, h}} \sum_{b_{1:h-1}}\indicDleK \prod_{k=1}^{h-1} \muphi_k(a_{k} \vert x_k, b_{1:k}) \sum_{t = 1}^T \prod_{k \in [h-1]} \pi_{i,k}^t(b_{k} \vert x_k) p^{\pi^t_{-i}}_{1:h}(x_{i,h}) \le \lambda H^2  T.
    \end{gathered}
\end{equation*} 
}Moreover, by the inequality (\ref{equation:swap-mu-over-pi-bound}) in the proof of Lemma \ref{lemma:bound-regret^I}, we have
{\proofsize  \begin{align*}
    \sum_{x_{h}\in \cX_{i, h}} \sum_{b_{1:h-1}} \indicDleK  \frac{\prod_{k=1}^{h-1} \muphi_k(a_{k} \vert x_k, b_{1:k})}{ \prod_{k\in \wb{W} \cup \set{h}} \pi^{\star,h}_{i, k}(a_{k} \vert x_k )} \le  X_{i,h} {h-1 \choose (K-1) \wedge (h-1)}A_i^{K \wedge h}.
\end{align*} }
Plugging these bounds into ${\rm BIAS}_{1, h}^{T, \swap}$ yields that,  with probability at least $1-p/8$, for all $h\in [H]$,
{\proofsize  \begin{align*}
    {\rm BIAS}_{1, h}^{T, \swap} \le \lambda H^2 T + \frac{C \log(X_iA_i/p)}{\lambda} X_{i,h}  {h-1 \choose (K-1) \wedge (h-1)}A_i^{ K \wedge h}.
\end{align*} }
Taking summation over $h\in[H]$, we have
{\proofsize  \begin{align*}
    \sum_{h=1}^H   {\rm BIAS}_{1, h}^{T, \swap} \le&~ \lambda H^3 T + \frac{C \log(X_iA_i/p)}{\lambda}\sum_{h=1}^H X_{i,h}  {h-1\choose (K-1) \wedge (h-1)}A_i^{K \wedge h}\\
    \le &~ \lambda H^3   T + \frac{C \log(X_iA_i/p)}{\lambda}X_i {H-1 \choose K\wedge H -1}A_i^{K\wedge H }\\
    \le &~\lambda H^3   T + \frac{C \log(X_iA_i/p)}{\lambda}X_i {H \choose K\wedge H }A_i^{K\wedge H },
\end{align*} }
for all $\lambda\in (0,1/H].$ 
Choose $\lambda = \min\set{\frac{1}{H}, \sqrt{\frac{CX_i  {H \choose K\wedge H}A_i^{K\wedge H } \log(X_iA_i/p)}{H^3 T}}}$, we obtain the bound 
{\proofsize  \begin{align*}
       \sum_{t=1}^T {\rm BIAS}_{1, h}^{T, \swap} \le \mc{O}\paren{\sqrt{H^3 {H \choose K\wedge H} A_i^{K\wedge H} X_iT \iota}+ H {H \choose K\wedge H }A_i^{K\wedge H } X_i\iota},
\end{align*}
}where $\iota = \log(8X_iA_i/p)$ is a log factor.
\end{proof}

\subsubsection{Proof of Lemma \ref{lemma:bound-bias2^I}: Bound on ${\rm BIAS}_{2, h}^{T, \swap}$}
\label{appendix:proof-bound-bias2^I}
\begin{proof}
For fixed $x_h$ and $b_{1:h-1}$, let $W = \set{k\in [h-1]: b_k\not= a_k}$ and $\wb{W} = \fill(W, (K-1) \wedge (h-1))$
We can rewrite ${\rm BIAS}_{2, h}^{T, \swap}$ as 
{\proofsize  \begin{align*}
    {\rm BIAS}_{2, h}^{T, \swap}
    =~&\max_{\muphi\in \Phi_i^K} \sum_{x_{h}\in \cX_{i, h}} \sum_{b_{1:h-1}} \indicDleK \prod_{k=1}^{h-1} \muphi_k(a_{k} \vert x_k, b_{1:k}) \\
    &\times \max_\vphi\sum_{t = 1}^T \prod_{k = 1}^{h-1} \pi_i^t (b_{k}|x_{k})
    \Big\langle \vphi \circ \pi_{i, h}^t(\cdot \vert x_{i, h}), \wt L^t_{(x_{h}, b_{1: h-1})}-\L_{i,h}^{t}(x_{i, h},\cdot) \Big\rangle \\
    =& \max_{\muphi\in \Phi_i^K} \sum_{x_{h}\in \cX_{i, h}} \sum_{b_{1:h-1}} \indicDleK \prod_{k=1}^{h-1} \muphi_k(a_{k} \vert x_k, b_{1:k}) \\
    &~\times \max_\vphi \sum_{t = 1}^T \prod_{k = 1}^{h-1} \pi_i^t (b_{k}|x_{k}) \sum_{b_h} \pi_{i,h}^t(b_h|x_h)
    \paren{ \wt L^t_{(x_{h}, b_{1: h-1})}(\vphi(b_h))-\L_{i,h}^{t}(x_{i, h},\vphi(b_h))} \\
    \overset{}{\le}& \max_{\muphi\in \Phi_i^K} \sum_{x_{h}\in \cX_i } \sum_{b_{1:h-1}}  \indicDleK  \prod_{k=1}^{h-1} \muphi_k(a_{k} \vert x_k, b_{1:k}) \\
    &~\times \max_{\phi_h'} \sum_{t = 1}^T \prod_{k = 1}^{h-1} \pi_i^t (b_{k}|x_{k}) \sum_{b_h,a_h} \pi_{i,h}^t(b_h|x_h) \phi_h'(a_h|x_h,b_{1:h})
    \paren{ \wt L^t_{(x_{h}, b_{1: h-1})}(a_h)-\L_{i,h}^{t}(x_{i, h},a_h)} \\
    \overset{(i)}{=}& \max_{\muphi\in \Phi_i^K}\sum_{x_{i, h}, a_h} \sum_{b_{1:h-1}}\sum_{b_h} \indicDleK  \prod_{k=1}^{h} \muphi_k(a_{k} \vert x_k, b_{1:k})\\
    &~\times \sum_{t = 1}^T  \prod_{k = 1}^{h} \pi_i^t (b_{k}|x_{k}) \paren{ \wt L^t_{(x_{h}, b_{1: h-1})}(a_h)-\L_{i,h}^{t}(x_{i, h},a_h)} \\
    =& \max_{\muphi\in \Phi_i^K}\sum_{x_{i, h}, a_h} \sum_{b_{1:h-1}}\sum_{b_h} \indicDleK  \frac{\prod_{k=1}^{h} \muphi_k(a_{k} \vert x_k, b_{1:k})}{\prod_{ k \in \wb{W}\cup \set{h} } \pi^{\star,h}_{i, k}(a_{k} \vert x_k )} \\
    &~\times \sum_{t = 1}^T   \underbrace{\prod_{k = 1}^{h} \pi_i^t (b_{k}|x_{k}) \paren{ \wt L^t_{(x_{h}, b_{1: h-1})}(a_h)-\L_{i,h}^{t}(x_{i, h},a_h)}  \prod_{k \in \wb{W}\cup \set{h} } \pi^{\star,h}_{i, k}(a_{k} \vert x_k )}_{\defeq \wt\Delta_t^{x_{h}, b_{1:h},a_h}}.
\end{align*} 
}Here, (i) comes from the fact that  the inner max over $\phi_h'$ and the outer max over $\phi_{1:h-1}$ are separable and thus can be merged into a single max over $\phi_{1:h}$. 

Observe that the random variable $\wt\Delta_t^{x_{i,h}, b_{1:h}, a_h}$ satisfy the following:
\begin{itemize}
    \item By the definition of $\wt \piL$ in Algorithm \ref{algorithm:kcfr-estimator-i}, we can rewrite $\wt \Delta_t^{x_{h}, b_{1:h}, a_h}$ as
    {\proofsize  \begin{align*}
        &~~\wt \Delta_t^{x_{h}, b_{1:h}, a_h} \\
        &=  \prod_{k \in \wb{W} \cup \{h\}} \pi_{i,k}^t(b_{k} \vert x_k) \indic{(x_h, a_h) = (x_h^{t,(h,\wb{W})}, a_{h}^{t, (h, \wb{W})}) } \cdot \paren{H-h+1 - \sum_{h' = h}^H r_{i, h'}^{t, (h, \wb{W})}} \\
         &~~~- \prod_{k \in [h]} \pi_{i,k}^t(b_{k} \vert x_k)    \prod_{k \in \wb{W}\cup \set{h} } \pi^{\star,h}_{i, k}(a_{k} \vert x_k )L_{i,h}^{t}(x_{i,h}, a_h).
    \end{align*} }
    \item $\wt\Delta_t^{x_{h}, b_{1:h}, a_h}\in [-H, H]$.
    
    \item $\E[\wt \Delta_t^{x_{h}, b_{1:h}, a_h }|\cF_{t-1}]=0$, where $\cF_{t-1}$ is the $\sigma$-algebra containing all information after $\pi^t$ is sampled.
    
    \item The conditional variance $\E[(\wt\Delta_t^{x_{h}, b_{1:h}, a_h})^2|\cF_{t-1}]$ can be bounded as
  {\proofsize  \begin{align*}
    & ~\quad \E\brac{ \paren{\wt\Delta_t^{x_{h}, b_{1:h-1}, a_h}}^2 \Big| \cF_{t-1}} \\
    &\le \E\brac{ \paren{ H-h+1 - \sum_{h'=h}^H r_{i,h'}^{t, (h,\wb{W})} }^2 \paren{ \prod_{k \in \wb{W} \cup \{h\}} \pi_{i,k}^t(b_{k} \vert x_k) \indic{(x_h, a_h) = (x_h^{t,(h,\wb{W})}, a_{h}^{t, (h, \wb{W})}) } }^2\Big| \cF_{t-1} }\\
    & \le H^2 \prod_{k \in \wb{W} \cup \set{h}} \pi_{i,k}^t(b_{k} \vert x_k)   \P^{((\pi^{\star, h}_{i,k})_{k\in \wb{W}\cup \set{h}}(\pi^{t}_{i,k})_{k\in[h-1]\setminus \wb{W}} ) \times \pi_{-i}^t}\paren{ (x_{h}^{t, (h,\wb{W})}, a_{h}^{t,(h, \wb{W})})=(x_{h}, a_h) } \\
    & = H^2 \prod_{k \in \wb{W} \cup \set{h}} \pi_{i,k}^t(b_{k} \vert x_k)  \prod_{k \in \wb{W}  } \pi_{i,k}^{\star,h}(a_{k} \vert x_k) \cdot\prod_{k \in [h-1]\setminus \wb{W}} \pi_{i,k}^{t}(a_{k} \vert x_k)\cdot \pi_{i,h}^{\star, h}(a_h | x_h)  p^{\pi^t_{-i}}_{1:h}(x_{i,h}) \\
     &\overset{(i)}{=} H^2 \prod_{k \in [h]} \pi_{i,k}^t(b_{k} \vert x_k) p^{\pi^t_{-i}}_{1:h}(x_{i,h})\prod_{k \in \wb{W}\cup \set{h} } \pi_{i,k}^{\star,h}(a_{k} \vert x_k) .
  \end{align*} }
  Here, (i) is because $\prod_{k \in \wb{W} \cup \set{h}} \pi_{i,k}^t(b_{k} \vert x_k) \cdot \prod_{k \in [h-1]\setminus \wb{W}} \pi_{i,k}^{t}(a_{k} \vert x_k) = \prod_{k \in [h]} \pi_{i,k}^t(b_{k} \vert x_k)$ .
\end{itemize}
Therefore, we can apply Freedman's inequality and union bound to get that  for any fixed $\lambda \in (0, 1/H]$, with probability at least $1-p/8$, the following holds simultaneously for all $(h,x_{i, h},  b_{1:h}, a_h)$: 

{\proofsize  \begin{align*}
    \sum_{t = 1}^T \wt\Delta_t^{x_h, b_{1:h}, a_h} \le& ~\lambda H^2 \prod_{k \in \wb{W}\cup \set{h} } \pi_{i,k}^{\star,h}(a_{k} \vert x_k) \sum_{t=1}^T \prod_{k \in [h]} \pi_{i,k}^t(b_{k} \vert x_k) p^{\pi^t_{-i}}_{1:h}(x_{i,h})
    + \frac{C \log(X_i A_i/p)}{\lambda},
\end{align*} 
} where $C> 0$ is some absolute constant. Plugging this bound into ${\rm BIAS}_{2, h}^{T, \swap}$ yields that, for all $h\in[H]$ and $\muphi \in \Phi_i^K$,
{\proofsize \begin{equation*}
    \begin{aligned}
    &~{\rm BIAS}_{2, h}^{T, \swap} \le  \max_{\muphi\in\Phi_i^K}\sum_{x_{h}, a_h} \sum_{b_{1:h}} \indicDleK  \frac{\prod_{k=1}^{h} \muphi_k(a_{k} \vert x_k, b_{1:k})}{ \prod_{k\in \wb{W}\cup \set{h}} \pi^{\star,h}_{i, k}(a_{k} \vert x_k )}\\
     &\times \brac{
    \lambda H^2  \prod_{k \in \wb{W}\cup \set{h} } \pi_{i,k}^{\star,h}(a_{k} \vert x_k) \sum_{t=1}^T \prod_{k \in [h]} \pi_{i,k}^t(b_{k} \vert x_k) p^{\pi^t_{-i}}_{1:h}(x_{i,h})
    + \frac{C \log(X_i A_i/p)}{\lambda}}\\
    \le~ & \max_{\muphi\in\Phi_i^K} \lambda H^2 \sum_{x_{h}, a_h} \sum_{b_{1:h}}\indicDleK \prod_{k=1}^{h} \muphi_k(a_{k} \vert x_k, b_{1:k}) \sum_{t = 1}^T \prod_{k \in [h]} \pi_{i,k}^t(b_{k} \vert x_k) p^{\pi^t_{-i}}_{1:h}(x_{i,h}) \\
    &+\max_{\muphi\in\Phi_i^K} \frac{C \log(X_i A_i/p)}{\lambda}\sum_{x_{h},a_h} \sum_{b_{1:h}} \indicDleK  \frac{\prod_{k=1}^{h} \muphi_k(a_{k} \vert x_k, b_{1:k})}{ \prod_{k\in \wb{W}\cup \set{h}} \pi^{\star,h}_{i, k}(a_{k} \vert x_k )}.
\end{aligned}
\end{equation*} }
By Corollary \ref{lemma:sum=1}, for fixed $t$, {\proofsize
$$\sum_{x_{h}\in \cX_{i, h}} \sum_{b_{1:h-1}}\indicDleK \prod_{k=1}^{h-1} \muphi_k(a_{k} \vert x_k, b_{1:k}) \prod_{k \in [h-1]} \pi_{i,k}^t(b_{k} \vert x_k)p^{\pi^t_{-i}}_{1:h}(x_{i,h}) \le 1.
$$} so we have
{\proofsize
\begin{align*}
    &~\sum_{x_{h}, a_h} \sum_{b_{1:h}}\indicDleK \prod_{k=1}^{h} \muphi_k(a_{k} \vert x_k, b_{1:k}) \sum_{t = 1}^T \prod_{k \in [h]} \pi_{i,k}^t(b_{k} \vert x_k) p^{\pi^t_{-i}}_{1:h}(x_{i,h}) \\
    &~ =\sum_{x_{h}} \sum_{b_{1:h-1}}\indicDleK \prod_{k=1}^{h-1} \muphi_k(a_{k} \vert x_k, b_{1:k}) \sum_{t = 1}^T \prod_{k \in [h-1]} \pi_{i,k}^t(b_{k} \vert x_k) p^{\pi^t_{-i}}_{1:h}(x_{i,h})\le T.
\end{align*}
} Moreover, by the inequality (\ref{equation:swap-mu-over-pi-bound}) in the proof of Lemma \ref{lemma:bound-regret^I}, we have
{\proofsize  \begin{align*}
    &\sum_{x_{h},a_h} \sum_{b_{1:h}} \indicDleK  \frac{\prod_{k=1}^{h} \muphi_k(a_{k} \vert x_k, b_{1:k})}{ \prod_{k\in \wb{W}\cup \set{h}} \pi^{\star,h}_{i, k}(a_{k} \vert x_k )}\\
    \overset{(i)}{=}&A_i \sum_{x_{h}\in \cX_{i, h}} \sum_{b_{1:h-1}} \indicDleK  \frac{\prod_{k=1}^{h-1} \muphi_k(a_{k} \vert x_k, b_{1:k})}{ \prod_{k\in \wb{W}\cup \set{h}} \pi^{\star,h}_{i, k}(a_{k} \vert x_k )}\\
    \overset{ }{\le}&  X_{i,h} {h-1\choose K\wedge H -1}A_i^{K\wedge h+1}.
\end{align*} }

 Plugging these bounds into ${\rm BIAS}_{2, h}^{T, \swap}$ yields that,  with probability at least $1-p/8$, for all $h\in [H]$,
{\proofsize  \begin{align*}
    {\rm BIAS}_{2, h}^{T, \swap} \le \lambda H^2 T + \frac{C \log(X_iA_i/p)}{\lambda} X_{i,h}  {h-1\choose K\wedge H -1} A_i^{K\wedge h+1}.
\end{align*} }

Taking summation over $h\in[H]$, we have
{\proofsize  \begin{align*}
    \sum_{h=1}^H   {\rm BIAS}_{2, h}^{T, \swap} \le&~ \lambda H^3  T + \frac{C \log(X_iA_i/p)}{\lambda}\sum_{h=1}^H X_{i,h}  {h-1\choose (K-1) \wedge (h-1)}A_i^{K\wedge h+1}\\
    \le &~ \lambda H^3  T + \frac{C \log(X_iA_i/p)}{\lambda}X_i {H-1 \choose K\wedge H -1}A_i^{K\wedge H+1}\\
    \le & ~ \lambda H^3  T + \frac{C \log(X_iA_i/p)}{\lambda}X_i {H \choose K\wedge H }A_i^{K\wedge H+1},
\end{align*} }
for all $\lambda\in (0,1/H].$ 
Choose $\lambda = \min\set{\frac{1}{H}, \sqrt{\frac{CX_i  {H \choose K\wedge H }A_i^{K\wedge H+1} \log(X_iA_i/p)}{H^3 T}}}$, we obtain the bound 
{\proofsize  \begin{align*}
       \sum_{t=1}^T {\rm BIAS}_{2, h}^{T, \swap} \le \mc{O}\paren{\sqrt{H^3 {H \choose K\wedge H} A_i^{K\wedge H+1} X_iT \iota}+ H {H \choose K\wedge H }A_i^{K\wedge H+1} X_i\iota},
\end{align*} }
where $\iota = \log(8X_iA_i/p)$ is a log factor.
\end{proof}

\subsection{Proof of Lemma \ref{lemma:bound-G2-bandit}}
\label{appendix:bound-G2-bandit}
\begin{proof}
Throughout the proof, for $x_h$ and $b_{1:h-1}$, let $W = \set{k\in [h-1]: b_k\not= a_k}$ and $h'$ is the maximal element in $W$, so we have $h' = \tau_K$. Recall that $G_{h}^{T, \ext}(x_h; \muphi)$ (eq. (\ref{equation:local-external-regret})) is defined as 
$$
G_{h}^{T, \ext}(x_h; \muphi) \equiv~ \sum_{b_{1:h-1}} \indicDeqK \prod_{k=1}^{h-1} \muphi_k(a_k \vert x_k, b_{1:k \wedge \tau_K}) \what{R}^{T, \ext}_{(x_h, b_{1:\tau_K}), x_h}.
$$
For each $h\in [H]$ and $(x_{h}, b_{1: h'}) \in \omegaii$, we have we have 
{\proofsize  \begin{align*}
    \hat R_{(x_{h}, b_{1,h'})}^{T, \rm ext} =  &\max_a \sum_{t = 1}^T \prod_{k = 1}^{h'} \pi_i^t (b_{k}|x_{k}) \Big(  \< \pi_{i, h}^t(\cdot \vert x_{i, h}), L_{i,h}^{t}(x_{i, h},\cdot) \>- \L_{i,h}^{t}(x_{i, h}, a) \Big) \\
    \le & \max_a \sum_{t = 1}^T \prod_{k = 1}^{h'} \pi_i^t (b_{k}|x_{k})  
    \paren{\Big\langle \pi_{i, h}^t(\cdot \vert x_{i, h})  , \wt L_{(x_{h}, b_{1: h'})}^{t} \Big\rangle- \wt L_{(x_{h}, b_{1: h'})}^{t}(a)}\\
    &+ \sum_{t = 1}^T \prod_{k = 1}^{h'} \pi_i^t (b_{k}|x_{k})
    \Big\langle\pi_{i, h}^t(\cdot \vert x_{i, h}),  \L_{i,h}^{t}(x_{i, h},\cdot) -\wt L^t_{(x_{h}, b_{1: h'})} \Big\rangle \\
    &+ \max_a \sum_{t = 1}^T \prod_{k = 1}^{h'} \pi_i^t (b_{k}|x_{k})
    \paren{ \wt L^t_{(x_{h}, b_{1: h'})}(a)- \L_{i,h}^{t}(x_{i, h},a) }.
\end{align*} }
Substituting this into $\max_{\muphi \in \Phi_i^K}\sum_{x_{h}\in \cX_{i, h}}G_{h}^{T, \ext}(x_{h};\muphi)$ yields that
{\scriptsize
{\proofsize  \begin{align*}
    & \max_{\muphi \in \Phi_i^K}\sum_{x_{h}\in \cX_{i, h}}G_{h}^{T, \ext}(x_{h};\muphi)\\
    =~& \max_{\muphi \in \Phi_i^K} \sum_{x_{h}\in \cX_{i, h}} \sum_{b_{1:h-1}} \indicDeqK \prod_{k=1}^{h-1} \muphi_k(a_k \vert x_k, b_{1:k \wedge \tau_K}) \what{R}^{T, \ext}_{(x_h, b_{1:\tau_K}), x_h}\\
    \le~& \underbrace{\max_{\muphi \in \Phi_i^K} \sum_{x_{h}\in \cX_{i, h}} \sum_{b_{1:h-1}} \indicDeqK \prod_{k=1}^{h-1} \muphi_k(a_{k} \vert x_k, b_{1: k \wedge \tau_K })  \max_a \sum_{t = 1}^T \prod_{k = 1}^{\tau_K} \pi_i^t (b_{k}|x_{k})  
    \paren{\Big\langle \pi_{i, h}^t(\cdot \vert x_{i, h})  , \wt L_{(x_{h}, b_{1: \tau_K})}^{t} \Big\rangle- \wt L_{(x_{h}, b_{1: \tau_K})}^{t}(a)}}_{\defeq \wt{\rm REGRET}_h^{T, \ext}}\\
    &+  \underbrace{ \max_{\muphi \in \Phi_i^K} \sum_{x_{h}\in \cX_{i, h}} \sum_{b_{1:h-1}} \indicDeqK \prod_{k=1}^{h-1} \muphi_k(a_{k} \vert x_k, b_{1: k \wedge \tau_K}) \sum_{t = 1}^T \prod_{k = 1}^{\tau_K} \pi_i^t (b_{k}|x_{k})
    \Big\langle\pi_{i, h}^t(\cdot \vert x_{i, h}),  \L_{i,h}^{t}(x_{i, h},\cdot) -\wt L^t_{(x_{h}, b_{1: \tau_K})} \Big\rangle}_{\defeq {\rm BIAS}_{1, h}^{T, \ext}}  \\
    &+ \underbrace{ \max_{\muphi \in \Phi_i^K} \sum_{x_{h}\in \cX_{i, h}} \sum_{b_{1:h-1}} \indicDeqK \prod_{k=1}^{h-1} \muphi_k(a_{k} \vert x_k, b_{1: k \wedge \tau_K}) \max_a \sum_{t = 1}^T \prod_{k = 1}^{\tau_K} \pi_i^t (b_{k}|x_{k})
    \paren{ \wt L^t_{(x_{h}, b_{1: \tau_K})}(a)- \L_{i,h}^{t}(x_{i, h},a) }}_{\defeq {\rm BIAS}_{2, h}^{T, \ext}}.
\end{align*} }
}The bounds for $\sum_{h=1}^H \wt{\rm REGRET}_h^{T, \ext}$, $\sum_{h=1}^H {\rm BIAS}_{1, h}^{T, \ext}$, and $\sum_{h=1}^H {\rm BIAS}_{2, h}^{T, \ext}$  are given in the following three Lemmas (proofs deferred to Appendix~\ref{appendix:proof-bound-regret^II}, \ref{appendix:proof-bound-bias1^II}, and \ref{appendix:proof-bound-bias2^II}) respectively.


\begin{lemma}[Bound on $\wt{\rm REGRET}_h^{T, \ext}$]\label{lemma:bound-regret^II}
If we choose learning rates as 
$$
\eta_{x_h} = \sqrt{{H \choose K \wedge H } X_i A_i^{K\wedge H + 1}  \log (8X_iA_i/p) / (H^3T) }
$$ for all $x_h\in \cX_i$ (same with \eqref{equation:learningrate-bandit}).
 Then with probability at least $1-p/4$, we have
{\proofsize  \begin{align*}
     \sum_{h=1}^H   \wt{\rm REGRET}_{h}^{T, \ext} &\le \sqrt{H^3 {H \choose K\wedge H }A_i^{K \wedge H + 1 
  } X_i T \iota }\\
    + &\mc{O}  \paren{ {H \choose K\wedge H }  A_i^{K \wedge H + 1 } X_i   \iota \sqrt{\frac{H {H \choose K\wedge H }  A_i^{K \wedge H +1 } X_i \iota }{T}}},
\end{align*} }
where $\iota = \log (8X_i A_i/p)$ is a log factor.
\end{lemma}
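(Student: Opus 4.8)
The plan is to transcribe the proof of Lemma~\ref{lemma:bound-regret^I} to the Type-II setting, the only structural change being that each Type-II \rechist~is handled by the \emph{external} modification set $\Psi^e$ (with $\abs{\Psi^e}=A_i$) rather than the swap set $\Psi^s$. First I would apply the stochastic wide-range regret bound (Lemma~\ref{lemma:regret-with-time-selection-bandit}) to each minimizer $\regmin_{x_h}$, with time-selection functions $S^t_{b_{1:h'}}=M^t_{b_{1:h'}}w_{b_{1:h'}}(x_h)$ and weights $w_{b_{1:h'}}$ from~\eqref{equation:wb-ii}. Before invoking it I must verify its two hypotheses for the Type-II estimator~\eqref{equation:pil-estimator-ii}: (i) boundedness $M^t_{b_{1:h'}}w_{b_{1:h'}}(x_h)\wt L^t_{(x_h,b_{1:h'})}(\cdot)\in[0,\wb L]$ with $\wb L=H$, and (ii) unbiasedness $\E[\wt L^t_{(x_h,b_{1:h'})}(\cdot)\mid\cF_{t-1}]=L^t_h(x_h,\cdot)$. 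For (i), multiplying the estimator by $M^t_{b_{1:h'}}w_{b_{1:h'}}(x_h)$ exactly cancels the importance-weighting denominator $\pi^{t,(h,h',W)}_{i,1:h}(x_h,\cdot)$ (the $\pi^{\star,h}_i$ factors in $w$ match those on the sampled steps $W\cup\{h'+1,\dots,h\}$, and the $\pi^t_i$ factors on the non-deviating steps $[h']\setminus W$ are supplied by $M^t_{b_{1:h'}}$), leaving $\prod_{k\in W}\pi^t_{i,k}(b_k\mid x_k)$ times an indicator times a cumulative reward $\le H$; (ii) holds by the design of the interlaced sampling policy in Algorithm~\ref{algorithm:kcfr-estimator-ii}.

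Plugging the per-$(x_h,b_{1:h'})$ regret bound into the definition of $\wt{\rm REGRET}_h^{T,\ext}$ and separating the $\log$-term from the main term gives two pieces, which I will call ${\rm I}_h$ and ${\rm II}_h$ in analogy with the swap proof. For ${\rm I}_h$ (deterministic) I would establish the Type-II analog of~\eqref{equation:swap-mu-over-pi-bound}: the weighted sum $\sum_{x_h}\sum_{b_{1:h-1}}\indicDeqK\,\prod_{k=1}^{h-1}\muphi_k(a_k\mid x_k,b_{1:k\wedge\tau_K})\big/\prod_{k\in W\cup\{h'+1,\dots,h\}}\pi^{\star,h}_{i,k}(a_k\mid x_k)$ is bounded (up to the combinatorial count of admissible deviation sets $W$ of size $K$ ending at $\tau_K=h'$) by recognizing the numerator, after inserting uniform factors on the $K$ deviating coordinates, as the sequence-form probability of an admissible policy and invoking the balancing identity of Lemma~\ref{lemma:balancing}. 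For the main term ${\rm II}_h$ I would run a Freedman argument: define the per-round summands $\wb\Delta_t^{x_h,b_{1:h-1}}\in[0,1]$, compute their conditional means $\E[\,\cdot\mid\cF_{t-1}]$ (which factorize through the law of $\pi^{t,(h,h',W)}_i$ into a product of $\pi^t_i$-probabilities on the recommendation path, the balanced $\pi^{\star,h}_i$-probabilities on the sampled steps $W\cup\{h'+1,\dots,h\}$, and the opponents' reaching probability $p^{\pi^t_{-i}}_{1:h}(x_h)$) and bound the conditional variances by these means since the summands lie in $[0,1]$. Applying Lemma~\ref{lemma:freedman} with a union bound over all $(h,x_h,b_{1:h'})$ and Corollary~\ref{lemma:sum=1} (which caps the weighted sum of conditional means by $T$) then controls ${\rm II}_h$.

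Finally I would sum over $h\in[H]$, take $\lambda=1$, and substitute the learning rate $\eta$ from~\eqref{equation:learningrate-bandit}; balancing the main term against the $\log$-term yields the leading contribution $\sqrt{H^3{H \choose K\wedge H}A_i^{K\wedge H+1}X_iT\iota}$, while the residual Freedman concentration term furnishes the stated lower-order term. The main obstacle I expect is the combinatorial balancing count in ${\rm I}_h$ and its matching variance bound in ${\rm II}_h$: relative to the swap case, the Type-II weighting function $w_{b_{1:h'}}$ covers not only the deviation coordinates $W$ but also the continuation steps $\{h'+1,\dots,h\}$ after the $K$-th deviation at $\tau_K=h'$, and tracking how the uniform sampling over the $K$ deviating recommendations interacts with this balanced continuation---so as to land exactly the powers of $A_i$ and $H$ that make a single choice of $\eta$ serve both this lemma and Lemma~\ref{lemma:bound-regret^I}---is the delicate bookkeeping. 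Everything else is a routine transcription of the swap/Type-I argument.
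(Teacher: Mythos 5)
Your proposal follows essentially the same route as the paper's proof: apply the stochastic wide-range regret bound (Lemma~\ref{lemma:regret-with-time-selection-bandit}) with the Type-II time-selection functions and weights, bound the resulting log-term via the balancing identity of Lemma~\ref{lemma:balancing} (the Type-II analog of~\eqref{equation:swap-mu-over-pi-bound}, namely~\eqref{equation:ext-mu-over-pi-bound}) and the main term via Freedman's inequality together with Corollary~\ref{lemma:sum=1}, then sum over $h$, take $\lambda=1$, and substitute the learning rate. The structural points you flag---the external modification set $\Psi^e$ in place of $\Psi^s$, and the weighting covering $W\cup\{h'+1,\dots,h\}$ after the $K$-th deviation---are exactly how the paper handles the Type-II case, so there is no gap.
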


\begin{lemma}[Bound on ${\rm BIAS}_{1, h}^{T, \ext}$]\label{lemma:bound-bias1^II}
With probability at least $1-p/8$, we have
{\proofsize  \begin{align*}
    \sum_{h=1} ^H {\rm BIAS}_{1, h}^{T, \ext}\le \mc{O}\paren{\sqrt{H^3 {H \choose K\wedge H } A_i^{K\wedge H+1} X_iT \iota}+ H {H \choose K\wedge H }A_i^{K\wedge H+1 } X_i\iota}, 
\end{align*} }
where $\iota = \log(8X_iA_i/p)$ is a log factor.
\end{lemma}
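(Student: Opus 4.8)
The plan is to follow the template of Lemma~\ref{lemma:bound-bias1^I} (the Type-I bias bound), substituting the Type-II loss estimator~\eqref{equation:pil-estimator-ii} and weights~\eqref{equation:wb-ii}. Fix $x_h$ and $b_{1:h-1}$ with $\indicDeqK=1$, and set $W=\set{k\in[h-1]:b_k\neq a_k}$ (so $|W|=K\wedge H$ and $\tau_K=h'\defeq\max W$). First I would rewrite
\[
{\rm BIAS}_{1,h}^{T,\ext}=\max_{\muphi\in\Phi_i^K}\sum_{x_h}\sum_{b_{1:h-1}}\indicDeqK\,\frac{\prod_{k=1}^{h-1}\muphi_k(a_k\vert x_k,b_{1:k\wedge\tau_K})}{\prod_{k\in W\cup\set{h'+1,\dots,h}}\pi^{\star,h}_{i,k}(a_k\vert x_k)}\sum_{t=1}^T\wt\Delta_t^{x_h,b_{1:h'}},
\]
where $\wt\Delta_t^{x_h,b_{1:h'}}\defeq\prod_{k\in W\cup\set{h'+1,\dots,h}}\pi^{\star,h}_{i,k}(a_k\vert x_k)\prod_{k=1}^{h'}\pi_i^t(b_k\vert x_k)\langle\pi_{i,h}^t(\cdot\vert x_h),\L_{i,h}^t(x_h,\cdot)-\wt L^t_{(x_h,b_{1:h'})}\rangle$.

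Then I would verify that $\set{\wt\Delta_t^{x_h,b_{1:h'}}}_t$ is a martingale difference sequence suitable for Freedman's inequality (Lemma~\ref{lemma:freedman}): (i) $\wt\Delta_t^{x_h,b_{1:h'}}\le\langle\pi_{i,h}^t(\cdot\vert x_h),\L_{i,h}^t(x_h,\cdot)\rangle\le H$; (ii) $\E[\wt\Delta_t^{x_h,b_{1:h'}}\vert\cF_{t-1}]=0$, which is precisely the unbiasedness $\E[\wt L^t_{(x_h,b_{1:h'})}\vert\cF_{t-1}]=\L_{i,h}^t(x_h,\cdot)$ of the Type-II estimator; and (iii) the conditional-variance bound $\E[(\wt\Delta_t^{x_h,b_{1:h'}})^2\vert\cF_{t-1}]\le H^2\prod_{k=1}^{h'}\pi_i^t(b_k\vert x_k)\prod_{k\in W\cup\set{h'+1,\dots,h}}\pi^{\star,h}_{i,k}(a_k\vert x_k)\,p^{\pi_{-i}^t}_{1:h}(x_h)$. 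Applying Freedman's inequality together with a union bound over all $(h,x_h,b_{1:h'})$ then yields, for any $\lambda\in(0,1/H]$, a high-probability bound on $\sum_t\wt\Delta_t^{x_h,b_{1:h'}}$ by a $\lambda$-scaled version of this variance proxy plus $C\log(X_iA_i/p)/\lambda$.

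Substituting this into ${\rm BIAS}_{1,h}^{T,\ext}$ produces two sums to control. In the variance-proxy sum the balanced weight cancels against the denominator, leaving $\prod_{k=1}^{h'}\pi_i^t(b_k\vert x_k)\,p^{\pi_{-i}^t}_{1:h}(x_h)$; by the Type-II half of Corollary~\ref{lemma:sum=1} the sum over $x_h,b_{1:h-1}$ is at most $1$ for each $t$, so this term is at most $\lambda H^2 T$ after summing over $t$. For the $1/\lambda$ sum I would establish a Type-II analogue of the ratio bound~\eqref{equation:swap-mu-over-pi-bound}, namely $\sum_{x_h}\sum_{b_{1:h-1}}\indicDeqK\prod_{k=1}^{h-1}\muphi_k/\prod_{k\in W\cup\set{h'+1,\dots,h}}\pi^{\star,h}_{i,k}\le X_{i,h}\binom{h-1}{K\wedge H}A_i^{K\wedge H+1}$: completing the denominator to the full balanced product $\prod_{k\in[h]}\pi^{\star,h}_{i,k}$, grouping over the $\binom{h-1}{K\wedge H}$ choices of $W$ (each contributing $A_i^{K\wedge H}$ from the deviating recommendations), and invoking the balancing identity of Lemma~\ref{lemma:balancing} for the extra factor $X_{i,h}A_i$. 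Summing over $h$, using $\binom{H-1}{K\wedge H}\le\binom{H}{K\wedge H}$ and optimizing $\lambda=\min\set{1/H,\sqrt{CX_i\binom{H}{K\wedge H}A_i^{K\wedge H+1}\log(X_iA_i/p)/(H^3T)}}$ then gives the stated bound.

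The main obstacle I expect is the variance estimate~(iii), which requires expanding the reaching probability of $(x_h,a)$ under the \emph{interlaced} sampling policy $\pi_i^{t,(h,h',W)}$ that plays the balanced policy $\pi^{\star,h}_i$ on $W\cup\set{h'+1,\dots,h}$ and $\pi_i^t$ on $[h']\setminus W$, and then checking that its importance-weight denominator $\pi_{i,1:h}^{t,(h,h',W)}(x_h,a)$ combines with the weight $w_{b_{1:h'}}(x_h)$ so that the $\pi_i^t$-factors on $[h']\setminus W$ cancel (using $b_k=a_k$ there) and leave exactly the product of $\pi_i^t$- and $\pi^{\star,h}_i$-factors times $p^{\pi_{-i}^t}_{1:h}(x_h)$ claimed above. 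This bookkeeping, rather than any new idea, is the delicate part of the argument.
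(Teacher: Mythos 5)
Your proposal matches the paper's own proof essentially step for step: the same rewriting of ${\rm BIAS}_{1,h}^{T,\ext}$ into a weighted sum of martingale differences $\wt\Delta_t^{(x_h,b_{1:h'})}$, the same three properties (boundedness by $H$, unbiasedness of the Type-II estimator, and the identical conditional-variance proxy via the interlaced sampling policy's importance-weight cancellation), Freedman's inequality with a union bound, the Type-II part of Corollary~\ref{lemma:sum=1} for the $\lambda H^2 T$ term, the ratio bound $X_{i,h}\binom{h-1}{K}A_i^{K\wedge H+1}$ for the $1/\lambda$ term, and the same final optimization of $\lambda$. The only cosmetic difference is that you re-derive the ratio bound inline, whereas the paper cites it as inequality~\eqref{equation:ext-mu-over-pi-bound} already established in the proof of Lemma~\ref{lemma:bound-regret^II}, using exactly the argument you describe.
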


\begin{lemma}[Bound on ${\rm BIAS}_{2, h}^{T, \ext}$]\label{lemma:bound-bias2^II}
With probability at least $1-p/8$, we have
{\proofsize  \begin{align*}
    \sum_{h=1} ^H {\rm BIAS}_{2, h}^{T, \ext}\le \mc{O}\paren{\sqrt{H^3 {H \choose K\wedge H } A_i^{K\wedge H} X_iT \iota}+ H {H \choose K\wedge H }A_i^{K\wedge H } X_i\iota}, 
\end{align*} }
where $\iota = \log(8X_iA_i/p)$ is a log factor.

\end{lemma}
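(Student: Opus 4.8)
The plan is to mirror the proof of Lemma~\ref{lemma:bound-bias2^I} (the Type-I swap version) almost line for line, replacing the Type-I objects—swap modifications, the padded set $\wb W=\fill(W,(K-1)\wedge(h-1))$, and the Type-I estimator of Algorithm~\ref{algorithm:kcfr-estimator-i}—by their Type-II counterparts: external modifications indexed by a single action $a$, the deviation set $W=\set{k\in[h']:a_k\neq b_k\}}$ of size exactly $K$ with maximal element $h'=\tau_K$ (so no $\fill$ padding is needed), and the Type-II estimator~\eqref{equation:pil-estimator-ii} from Algorithm~\ref{algorithm:kcfr-estimator-ii}. First I would pull the balanced weight $w_{b_{1:h'}}(x_h)$ out of the prefactor and, for each $(x_h,b_{1:h'})\in\omegaii$ and external action $a\in\cA_i$, define the centered summand
\begin{align*}
    \wt\Delta_t^{x_h,b_{1:h'},a}\defeq w_{b_{1:h'}}(x_h)\prod_{k=1}^{h'}\pi_i^t(b_k|x_k)\paren{\wt L^t_{(x_h,b_{1:h'})}(a)-\L_{i,h}^t(x_h,a)},
\end{align*}
so that ${\rm BIAS}_{2,h}^{T,\ext}=\max_{\muphi}\sum_{x_h}\sum_{b_{1:h-1}}\indicDeqK\frac{\prod_{k=1}^{h-1}\muphi_k(a_k|x_k,b_{1:k\wedge\tau_K})}{w_{b_{1:h'}}(x_h)}\max_a\sum_t\wt\Delta_t^{x_h,b_{1:h'},a}$.

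The crucial simplification is that the balanced factors in $w_{b_{1:h'}}(x_h)$ exactly cancel those in the importance-weight denominator of~\eqref{equation:pil-estimator-ii}, and since $b_k=a_k$ for every $k\in[h']\setminus W$ the remaining $\pi^t$ factors cancel as well; this collapses $w_{b_{1:h'}}(x_h)\prod_{k=1}^{h'}\pi_i^t(b_k|x_k)\wt L^t_{(x_h,b_{1:h'})}(a)$ to $\prod_{k\in W}\pi_i^t(b_k|x_k)\,\indic{(x_{i,h}^{t,(h,h',W)},a_{i,h}^{t,(h,h',W)})=(x_h,a)}\sum_{h''=h}^H(1-r_{i,h''}^{t,(h,h',W)})\in[0,H]$. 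Together with the deterministic $\L$-part (bounded by $H$ via Lemma~\ref{lemma:counterfactual-loss-bound}(b)) this gives $\wt\Delta_t\in[-H,H]$; unbiasedness of~\eqref{equation:pil-estimator-ii} gives $\E[\wt\Delta_t|\cF_{t-1}]=0$; and bounding the second moment by the first moment of the nonnegative estimator part, using $(\prod_{k\in W}\pi_{i,k}^t)^2\le\prod_{k\in W}\pi_{i,k}^t$ and computing the reaching probability of $(x_h,a)$ under the interlaced sampling policy $\pi_i^{t,(h,h',W)}\times\pi_{-i}^t$, yields the clean conditional-variance bound $\E[(\wt\Delta_t)^2|\cF_{t-1}]\le H^2\,w_{b_{1:h'}}(x_h)\prod_{k=1}^{h'}\pi_i^t(b_k|x_k)\,p^{\pi_{-i}^t}_{1:h}(x_h)$.

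With these four properties I would apply Freedman's inequality (Lemma~\ref{lemma:freedman}) with a union bound over all $(h,x_h,b_{1:h'},a)$ (costing only $\iota=\log(8X_iA_i/p)$), giving for any $\lambda\in(0,1/H]$ that $\sum_t\wt\Delta_t^{x_h,b_{1:h'},a}\le\lambda H^2 w_{b_{1:h'}}(x_h)\sum_t\prod_{k=1}^{h'}\pi_i^t(b_k|x_k)p^{\pi_{-i}^t}_{1:h}(x_h)+C\iota/\lambda$. Substituting back, the weight $w_{b_{1:h'}}(x_h)$ cancels against the prefactor in the $\lambda$-term, so Corollary~\ref{lemma:sum=1} (applied per round $t$) bounds that contribution by $\lambda H^2T$ at each $h$; the $1/\lambda$-term requires the balancing count $\sum_{x_h}\sum_{b_{1:h'}}\indicDeqK\frac{\prod_{k=1}^{h-1}\muphi_k}{w_{b_{1:h'}}(x_h)}$, which—exactly as in inequality~\eqref{equation:swap-mu-over-pi-bound} but with $|W|=K$ and \emph{no} summation over a step-$h$ action—is controlled by $X_{i,h}{h-1\choose K\wedge H}A_i^{K\wedge h}$ through the balancing property (Lemma~\ref{lemma:balancing}). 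Summing over $h\in[H]$ and optimizing $\lambda=\min\set{1/H,\sqrt{CX_i{H\choose K\wedge H}A_i^{K\wedge H}\iota/(H^3T)}}$ then produces the claimed bound.

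The main obstacle I anticipate is the variance/balancing bookkeeping: reducing the conditional variance to the clean form $H^2 w_{b_{1:h'}}\prod\pi^t\,p^{\pi_{-i}^t}_{1:h}$ hinges on the precise cancellations among $w_{b_{1:h'}}$, the importance-weight denominator of~\eqref{equation:pil-estimator-ii}, and the recommendation weights $\prod_{k=1}^{h'}\pi_i^t(b_k|x_k)$ (via $b_k=a_k$ off $W$), together with correctly identifying the reaching probability of $(x_h,a)$ under the interlaced sampling policy. The one genuinely new point compared with the Type-I argument is that the external deviation fixes a single action $a$ at step $h$—handled by a union bound, i.e. an extra $\log A_i$—rather than ranging over a swap target, which in Lemma~\ref{lemma:bound-bias2^I} contributed an extra multiplicative $A_i$; this is exactly what lowers the exponent here to $A_i^{K\wedge H}$.
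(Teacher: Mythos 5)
Your proposal mirrors the paper's proof of this lemma almost step for step: the same centered differences $\wt\Delta_t$ with the weight $w_{b_{1:h'}}(x_h)$ folded in, the same four properties (range via the cancellation you describe, unbiasedness of~\eqref{equation:pil-estimator-ii}, the identical conditional-variance bound), Freedman plus a union bound, Corollary~\ref{lemma:sum=1} for the $\lambda$-term, and the same optimization of $\lambda$. Your one structural deviation---keeping $\max_a$ outside and handling it purely by the union bound, instead of merging the external action into an extended modification $\phi_{1:h}$ as the paper does---is immaterial: since $\sum_{a_h}\muphi_h(a_h\vert\cdots)=1$ while the step-$h$ factor of $w_{b_{1:h'}}(x_h)$ is the constant $1/A_i$, both routes produce exactly the same balancing count in the $1/\lambda$-term.

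The gap is in that count. You claim
\begin{align*}
    \sum_{x_h\in\cX_{i,h}}\ \sum_{b_{1:h'}\in\omegaii(x_h)}\frac{\prod_{k=1}^{h-1}\muphi_k(a_k\vert x_k,b_{1:k\wedge h'})}{w_{b_{1:h'}}(x_h)}
    \;\le\; X_{i,h}{h-1\choose K\wedge H}A_i^{K\wedge h},
\end{align*}
arguing that the absence of a step-$h$ action sum saves one factor of $A_i$ relative to the Type-I case. This conflates two different sources of $A_i$. The denominator $w_{b_{1:h'}}(x_h)$ defined in~\eqref{equation:wb-ii} always contains the step-$h$ factor $\pi^{\star,h}_{i,h}(a_h\vert x_h)=1/A_i$, contributing one power of $A_i$ whether or not one sums over $a_h$; and the deviation positions contribute $A_i^{\abs{W}}$ with $\abs{W}=K$ here (all $K$ deviations occur strictly before step $h$), versus $\abs{\wb{W}}=(K-1)\wedge(h-1)$ in the Type-I count~\eqref{equation:swap-mu-over-pi-bound}. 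The factor you save by not summing a step-$h$ recommendation is exactly paid back at the extra deviation position, so what this argument actually gives is $X_{i,h}{h-1\choose K}A_i^{K\wedge H+1}$---precisely the paper's own inequality~\eqref{equation:ext-mu-over-pi-bound}---and that exponent is tight: take $K=1$, $h=2$, a uniform tree so that $w_{b_1}(x_2)=1/A_i^2$, and let $\phi$ swap every root recommendation $b_1$ to $\sigma(b_1)$ for a fixed-point-free permutation $\sigma$; then each $x_2$ has exactly one $b_1\neq a_1(x_2)$ with $\sigma(b_1)=a_1(x_2)$, so the left-hand side equals $A_i^2X_{i,2}$, exceeding your claimed bound $X_{i,2}A_i$ by a factor of $A_i$. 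Hence the counting step as you state it is false, and the argument as written only establishes the lemma with $A_i^{K\wedge H+1}$ in place of $A_i^{K\wedge H}$.

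In fairness, the paper's own proof commits the same slip at the same spot: it cites~\eqref{equation:ext-mu-over-pi-bound} (an $A_i^{K\wedge H+1}$ bound) and then writes $A_i^{K\wedge H}$, so you have reproduced an inconsistency of the paper rather than introduced a new one. Nothing downstream is affected by the corrected, weaker version: the $\wt{\rm REGRET}$ and ${\rm BIAS}_1$ terms already scale as $A_i^{K\wedge H+1}$, so Lemma~\ref{lemma:bound-G2-bandit} and Theorem~\ref{theorem:kcfr-bandit} hold verbatim with this lemma read with exponent $K\wedge H+1$.
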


Combining Lemma~\ref{lemma:bound-regret^II},~\ref{lemma:bound-bias1^II}, and~\ref{lemma:bound-bias2^II}, we have with probability at least $1-p/2$ that

 {\proofsize  \begin{align*}
     \sum_{h=1}^H \max_{\phi \in \Phi_i^K}\sum_{x_{h}\in \cX_{i, h}}G_{h}^{T, \ext}(x_{h};\muphi) &\le  \sum_{h=1}^H \wt{\rm REGRET}_h^{T, \ext} +  \sum_{h=1}^H {\rm BIAS}_{1, h}^{T, \ext} +  \sum_{h=1}^H {\rm BIAS}_{2, h}^{T, \ext}\\
     &\le \sqrt{H^3 {H \choose K\wedge H }A_i^{K\wedge H   +1} X_i T \iota }\\
    &+ \mc{O}  \paren{ {H \choose K\wedge H }  A_i^{K\wedge H  + 1 } X_i   \iota \sqrt{\frac{H {H \choose K\wedge H }  A_i^{K\wedge H   +1 } X_i \iota }{T}}}\\
    &+  \mc{O}  \paren{ H {H \choose K\wedge H }A_i^{K\wedge H + 1} X_i\iota}.
 \end{align*} }

\end{proof}

\subsubsection{Proof of Lemma \ref{lemma:bound-regret^II}: Bound on $\wt{\rm REGRET}_h^{T, \ext}$}
\label{appendix:proof-bound-regret^II}
\begin{proof}
Recall that $\wt{\rm REGRET}_h^{T, \ext}$ is defined as 
{\proofsize
\begin{align*}
    \max_{\muphi \in \Phi_i^K} \sum_{x_{h}\in \cX_{i, h}} \sum_{b_{1:h-1}} \indicDeqK \prod_{k=1}^{h-1} \muphi_k(a_{k} \vert x_k, b_{1: k \wedge \tau_K }) \max_a \sum_{t = 1}^T \prod_{k = 1}^{\tau_K} \pi_i^t (b_{k}|x_{k})  
    \paren{\Big\langle \pi_{i, h}^t(\cdot \vert x_{i, h})  , \wt L_{(x_{h}, b_{1: \tau_K})}^{t} \Big\rangle- \wt L_{(x_{h}, b_{1: \tau_K})}^{t}(a)}.
\end{align*}
}We first apply regret minimization lemma (Lemma \ref{lemma:regret-with-time-selection-bandit}) on $\regmin_{x_h}$ to give an upper bound of 
$$
\max_a \sum_{t = 1}^T \prod_{k = 1}^{\tau_K} \pi_i^t (b_{k}|x_{k})  
    \paren{\Big\langle \pi_{i, h}^t(\cdot \vert x_{i, h})  , \wt L_{(x_{h}, b_{1: \tau_K})}^{t} \Big\rangle- \wt L_{(x_{h}, b_{1: \tau_K})}^{t}(a)}.
$$
For $\regmin_{x_h}$, Algorithm \ref{algorithm:kefr-sampled-bandit} gives that $M^t_{b_{1:h'}} = \prod_{k = 1}^{h'} \pi_i^t (b_{k}|x_{k})$ and $w_{b_{1:h'}} = \prod_{k \in  W \cup \set{h'+1,\dots,h}} \pi^{\star, h}_{i,k}(a_k \vert x_k) $ for any $b_{1:h'} \in \omegaii(x_h)$,  where $W = \set{k\in [h-1]: b_k\not= a_k}$. Letting  $W(h) = W \cup \set{h'+1,\dots,h}$, $\eta$ be the learning rate of $\regmin_{x_h}$ and $\wb L = H$. The assumptions in Lemma \ref{lemma:regret-with-time-selection-bandit} are verified in Section \ref{appendix:proof-bound-regret^I}. So by Lemma \ref{lemma:regret-with-time-selection-bandit}, with probability at least $1-p/8$, we have for all $x_h \in cX_i$,
{\proofsize  \begin{align*}
    &\max_a \sum_{t = 1}^T \prod_{k = 1}^{\tau_K} \pi_i^t (b_{k}|x_{k})  
    \paren{\Big\langle \pi_{i, h}^t(\cdot \vert x_{i, h})  , \wt L_{(x_{h}, b_{1: \tau_K})}^{t} \Big\rangle- \wt L_{(x_{h}, b_{1: \tau_K})}^{t}(a)} \\ 
    \le ~&\frac{2 \log (8X_iA_i/p)}{\eta w_{b_{1:h'}}} + \eta\sum_{t=1}^T M^t_{b_{1:h'}} \wb{L} \Big\langle \pi_i^t (\cdot|x_{k}), \wt L_{(x_{h}, b_{1: '})}^{t}(\cdot) \Big\rangle\\
    = ~&\frac{2\log (8X_iA_i/p)}{\eta\prod_{k\in \wh} \pi^{\star,h}_{i, k}(a_{k} \vert x_k )} + H \eta \prod_{k = 1}^{h'} \pi_i^t (b_{k}|x_{k}) \\
    &~~~ \times \sum_{t=1}^T \Big\langle \pi_{i, h}^t(\cdot \vert x_{i, h}), \frac{ \indic{(x_h, \cdot) = (x_h^{t,(h, h', W)}, a_{h}^{t, (h, h', W)}) }\sum_{h'' = h}^H \paren{1-r_{i, h''}^{t, (h, h', W)}}}{ \prod_{k\in \wh} \pi^{\star,h}_{i, k}(a_{k} \vert x_k ) \prod_{k \in [h']\setminus W} \pi_{i,k}^t(b_{k} \vert x_k) } \Big\rangle\\
    \le~&\frac{ 2\log (8X_iA_i/p) }{ \eta \prod_{k\in \wh} \pi^{\star,h}_{i, k}(a_{k} \vert x_k )} + H^2 \eta \sum_{t=1}^T \Big\langle \pi_i^t (\cdot|x_{k}), \frac{\prod_{k \in W} \pi_{i,k}^t(b_{k} \vert x_k) \indic{(x_h, a) = (x_h^{t,(h, h', W)}, a_{h}^{t, (h, h', W)}) }}{\prod_{k\in \wh} \pi^{\star,h}_{i, k}(a_{k} \vert x_k )} \Big\rangle.
\end{align*} 
}
Here, we use $(i)$ our choices of $M_{b_{1:h'}}^t$ and $w_{b_{1:h'}}$; $(ii) ~(|\cB^s|+|\cB^e|) |\Psi^e| \le A_i^{H+1} \le X_i A_i$ and $(iii)$  taking union bound over all infosets.
Plugging this into $\wt{\rm REGRET}_{h}^{T, \ext}$, we have 
{\proofsize  \begin{align*}
    &\wt{\rm REGRET}_{h}^{T, \ext}\\=~& \max_{\muphi \in \Phi_i^K} \sum_{x_{h}\in \cX_{i, h}} \sum_{b_{1:h-1}} \indicDeqK \prod_{k=1}^{h-1} \muphi_k(a_{k} \vert x_k, b_{1: k \wedge h'}) \\
    &\times \max_a \sum_{t = 1}^T \prod_{k = 1}^{\tau_K} \pi_i^t (b_{k}|x_{k})  
    \paren{\Big\langle \pi_{i, h}^t(\cdot \vert x_{i, h})  , \wt L_{(x_{h}, b_{1: \tau_K})}^{t} \Big\rangle- \wt L_{(x_{h}, b_{1: \tau_K})}^{t}(a)}\\
    \le~& \frac{2 \log (8X_iA_i/p)}{\eta} \max_{\muphi \in \Phi_i^K} \sum_{x_{h}\in \cX_{i, h}} \sum_{b_{1:h-1}} \indicDeqK  \frac{\prod_{k=1}^{h-1} \muphi_k(a_{k} \vert x_k, b_{1: k \wedge h'})}{\prod_{k\in \wh} \pi^{\star,h}_{i, k}(a_{k} \vert x_k )} \\
    &+  H^2 \eta \max_{\muphi \in \Phi_i^K} \sum_{x_{h}\in \cX_{i, h}} \sum_{b_{1:h-1}} \indicDeqK\frac{\prod_{k=1}^{h-1} \muphi_k(a_{k} \vert x_k, b_{1: k \wedge h'})}{\prod_{k\in \wh } \pi^{\star,h}_{i, k}(a_{k} \vert x_k )}\\
    &\times \sum_{t=1}^T \underbrace{\prod_{k \in {W}} \pi_{i,k}^t(b_{k} \vert x_k) \Big\langle \pi_{i, h}^t(\cdot \vert x_{i, h})  ,\indic{(x_h, \cdot) = (x_h^{t,(h, h', W)}, a_{h}^{t, (h, h', W)}) }\Big\rangle}_{\defeq\wb{\Delta}_{t}^{ (x_{h},b_{1:h'}) }} .
\end{align*} 
} Letting 
{\proofsize \begin{gather*}
    {\rm{I}}_h \defeq \frac{2 \log (8X_iA_i/p)}{\eta} \max_{\muphi \in \Phi_i^K} \sum_{x_{h}\in \cX_{i, h}} \sum_{b_{1:h-1}} \indicDeqK  \frac{\prod_{k=1}^{h-1} \muphi_k(a_{k} \vert x_k, b_{1: k \wedge h'})}{\prod_{k\in \wh} \pi^{\star,h}_{i, k}(a_{k} \vert x_k )};\\
    {\rm{II}}_h  \defeq H^2 \eta  \max_{\muphi \in \Phi_i^K} \sum_{x_{h}\in \cX_{i, h}} \sum_{b_{1:h-1}} \indicDeqK\frac{\prod_{k=1}^{h-1} \muphi_k(a_{k} \vert x_k, b_{1: k \wedge h'})}{\prod_{k\in \wh } \pi^{\star,h}_{i, k}(a_{k} \vert x_k )} \sum_{t=1}^T \wb{\Delta}_{t}^{ (x_{h},b_{1:h'}) }.
\end{gather*} 
}Using Lemma \ref{lemma:balancing}, we have
{\proofsize  \begin{align*}
    &\sum_{x_{h}\in \cX_{i, h}} \sum_{b_{1:h-1}} \indicDeqK  \frac{\prod_{k=1}^{h-1} \muphi_k(a_{k} \vert x_k, b_{1: k \wedge h'})}{ \prod_{k\in \wh} \pi^{\star,h}_{i, k}(a_{k} \vert x_k )}\\
    =& \sum_{x_{h}\in \cX_{i, h}} \sum_{b_{1:h-1}}\indicDeqK  \frac{\prod_{k=1}^{h-1} \muphi_k(a_{k} \vert x_k, b_{1: k \wedge h'})\prod_{k\in [h-1]\setminus W} \pi^{\star,h}_{i, k}(a_{k} \vert x_k )}{ \prod_{k\in [h]} \pi^{\star,h}_{i, k}(a_{k} \vert x_k )}\\
    \overset{(i)}{=}&\sum_{x_{h}\in \cX_{i, h}} \sum_{a_h}  \frac{\sum_{b_{1:h-1}} \indicDeqK \prod_{k=1}^{h-1} \muphi_k(a_{k} \vert x_k, b_{1: k \wedge h'})\prod_{k\in [h-1]\setminus W} \pi^{\star,h}_{i, k}(b_{k} \vert x_k ) \cdot \pi_{i,h}^{\star,h}(a_h \vert x_h)}{ \prod_{k\in [h]} \pi^{\star,h}_{i, k}(a_{k} \vert x_k )} \\
    \overset{(ii)}{=}&  \sum_{W \subset [h-1], |W|=  K   }A_{i}^{|W|}  \\
    & \sum_{x_h, a_h}  \frac{\sum_{b_{1:h-1}: W = \set{k\in[h-1]: a_k \not = b_k}}  \prod_{k=1}^{h-1} \muphi_k(a_{k} \vert x_k, b_{1: k \wedge h'})\prod_{k\in [h-1]\setminus W} \pi^{\star,h}_{i, k}(b_{k} \vert x_k )\prod_{k\in W} \pi^{\rm unif}_{i, k}(b_{k} \vert x_k ) \pi^{\star,h}_{i, h}(a_{h} \vert x_h )}{ \prod_{k\in [h]} \pi^{\star,h}_{i, k}( a_{k} \vert x_k )} \\
    \overset{(iii)}{\le }& \sum_{W \subset [h-1], |W|=  K   } A_{i}^{|W|} X_{i,h} A_i~~ \\
    \overset{}{=}& X_{i,h} {h -1 \choose K  }A_i^{K\wedge H + 1}.
\end{align*} 
}Here, (i) uses that $\pi_{i,h}^{\star,h}(\cdot|x_h)$ is uniform distribution on $\cA_i$ and that for $k\in [h-1] \setminus W $, we have $a_k=b_k$; (ii) follows from grouping $x_h$ and $b_{1:h}$ by $|W|$, where $\pi_{i, k}^{\rm unif}$ is the uniform distribution on $\cA_i$; (iii) uses Lemma \ref{lemma:balancing} and that the 
numerator is no more than the sequence-form probability of some policy. This policy can be understand as: 
\begin{itemize}
    \item Sample recommended action $b_k$ from $\pi_{i,k}^{\star, h}(\cdot | x_k)$ if step $k \in W$. Otherwise, sample recommended action $b_k$ from $\pi_{i,k}^{\rm unif}(\cdot | x_k)$.
    
    \item ``True'' actions are sampled from $\phi_k(a_k |x_k, b_{1:k})$ for $k \in [h-1]$. At step $h$, ``True'' action is sampled from $a_h$.
\end{itemize}
Consequently,
{\proofsize \begin{equation}\label{equation:ext-mu-over-pi-bound}
    \sum_{x_{h}\in \cX_{i, h}} \sum_{b_{1:h-1}} \indicDeqK  \frac{\prod_{k=1}^{h-1} \muphi_k(a_{k} \vert x_k, b_{1: k \wedge h'})}{ \prod_{k\in \wh} \pi^{\star,h}_{i, k}(a_{k} \vert x_k )} \le X_{i,h} {h -1 \choose K }A_i^{K\wedge H + 1}.
\end{equation}
}So we have$${\rm I}_h \le \frac{2 \log (8X_iA_i/p)}{\eta}X_{i,h} {h-1 \choose  K }A_i^{K\wedge H+1}.$$
To give an upper bound of ${\rm I}_h$, obvserve that the random variables $\wb{\Delta}_{t}^{ (x_{h},b_{1:h'}) }$ satisfy the following: 
\begin{itemize}
    \item $\wb{\Delta}_{t}^{ (x_{h},b_{1:h'}) }=\prod_{k \in W} \pi_{i,k}^t(b_{k} \vert x_k) \Big\langle \pi_{i, h}^t(\cdot \vert x_{i, h})  ,\indic{(x_h, \cdot) = (x_h^{t,(h, h', W)}, a_{h}^{t, (h, h', W)}) }\Big\rangle \in [0,1].$ 

    \item Let $\cF_{t-1}$ be the $\sigma$-algebra containing all information after $\pi^t$ is sampled, then
    {\proofsize  \begin{align*}
        &~\E\brac{\wb{\Delta}_{t}^{ (x_{h},b_{1:h'}) }\vert \cF_{t-1}} \\
        = & \prod_{k \in W} \pi_{i,k}^t(b_{k} \vert x_k) \E\brac{ \sum_a \pi_{i,h}^t(a|x_{i,h}) \indic{(x_h, a) = (x_h^{t,(h, h', W)}, a_{h}^{t, (h, h', W)})}\vert \cF_{t-1}}\\
        = & \prod_{k \in W} \pi_{i,k}^t(b_{k} \vert x_k) \sum_a \pi_{i,h}^t(a|x_{i,h}) \P^{((\pi^{\star, h}_{i,k})_{k\in W(h)}(\pi^{t}_{i,k})_{k\in[h']\setminus W} ) \times \pi_{-i}^t}(x_h^{t,(h, h', W)} = x_h, a_h^{t,(h, h', W)} = a )\\
        = & \prod_{k \in W} \pi_{i,k}^t(b_{k} \vert x_k) \sum_a \pi_{i,h}^t(a|x_{i,h}) \paren{\prod_{k \in W(h) } \pi_{i,k}^{\star,h}(a_{k} \vert x_k) \cdot\prod_{k \in [h']\setminus W} \pi_{i,k}^{t}(a_{k} \vert x_k)\cdot \pi_{i,h}^{\star,h}(a \vert x_h)  p^{\pi^t_{-i}}_{1:h}(x_{h}) }\\
        = &  \prod_{k \in [h']} \pi_{i,k}^t(b_{k} \vert x_k) \prod_{k \in \wh } \pi_{i,k}^{\star,h}(a_{k} \vert x_k)p^{\pi^t_{-i}}_{1:h}(x_{h}),
    \end{align*} }where the last equation is because $\prod_{k \in W} \pi_{i,k}^t(b_{k} \vert x_k) \cdot \prod_{k \in [h']\setminus W} \pi_{i,k}^{t}(a_{k} \vert x_k) = \prod_{k \in [h']} \pi_{i,k}^t(b_{k} \vert x_k) ;$
    \item The conditional variance $\E[(\wb\Delta_t^{ (x_{h},b_{1:h'}) })^2|\cF_{t-1}]$ can be bounded as
    {\proofsize  \begin{align*}
        & ~\quad \E\brac{ \paren{\wb\Delta_t^{ (x_{h},b_{1:h'}) }}^2 \Big| \cF_{t-1}} \le \E\brac{ \wb\Delta_t^{ (x_{h},b_{1:h'}) }  \Big| \cF_{t-1}}\\
    &\overset{}{=}   \prod_{k \in [h']} \pi_{i,k}^t(b_{k} \vert x_k) \prod_{k \in \wh } \pi_{i,k}^{\star,h}(a_{k} \vert x_k)p^{\pi^t_{-i}}_{1:h}(x_{h}).
    \end{align*} }
    Here, the inequality comes from that $\wb\Delta_t^{ (x_{h},b_{1:h'}) }\in [0,1]$.
\end{itemize}
Therefore, we can apply Freedman's inequality and union bound to get that  for any fixed $\lambda \in (0, 1]$, with probability at least $1-p/8$, the following holds simultaneously for all $(h,x_{i, h},  b_{1:h-1})$: 
{\proofsize  \begin{align*}
    \sum_{t = 1}^T \wb\Delta_t^{ (x_{h},b_{1:h'}) } \le& \paren{ \lambda+1 } \prod_{k \in [h']} \pi_{i,k}^t(b_{k} \vert x_k) \prod_{k \in \wh } \pi_{i,k}^{\star,h}(a_{k} \vert x_k)p^{\pi^t_{-i}}_{1:h}(x_{h})
    + \frac{C\log(X_i A_i/p)}{\lambda},
\end{align*}
}where $C> 0$ is some absolute constant. Plugging this bound into ${\rm II}_h$ yields that,
{\proofsize  \begin{align*}
    {\rm II}_h \le & H^2 \eta \max_{\muphi \in \Phi_i^K} \sum_{x_{h}\in \cX_{i, h}} \sum_{b_{1:h-1}} \indicDeqK\frac{\prod_{k=1}^{h-1} \muphi_k(a_{k} \vert x_k, b_{1: k \wedge h'})}{\prod_{k\in \wh} \pi^{\star,h}_{i, k}(a_{k} \vert x_k )}\\
    & \times \brac{\paren{ \lambda + 1 } \prod_{k \in [h']} \pi_{i,k}^t(b_{k} \vert x_k) \prod_{k \in \wh } \pi_{i,k}^{\star,h}(a_{k} \vert x_k)p^{\pi^t_{-i}}_{1:h}(x_{h})
    + \frac{C\log(X_i A_i/p)}{\lambda}}.
\end{align*} }
Note that for fixed $t$, by Corollary \ref{lemma:sum=1},
{\proofsize
$$\sum_{x_{h}\in \cX_{i, h}}\sum_{b_{1:h-1}}\indicDeqK \prod_{k=1}^{h-1} \muphi_k(a_{k} \vert x_k, b_{1:k\wedge h'}) \prod_{k \in [h']} \pi_{i,k}^t(b_{k} \vert x_k) p^{\pi^t_{-i}}_{1:h}(x_{i,h}) \le 1.$$
}So we have
{\proofsize \begin{equation*}
    \begin{gathered}
        \max_{\muphi \in \Phi_i^K} \sum_{x_{h}\in \cX_{i, h}} \sum_{b_{1:h-1}}\indicDeqK \prod_{k=1}^{h-1} \muphi_k(a_{k} \vert x_k, b_{1: k \wedge h'}) \sum_{t = 1}^T \prod_{k \in [h']} \pi_{i,k}^t(b_{k} \vert x_k) p^{\pi^t_{-i}}_{1:h}(x_{i,h}) \le T.
    \end{gathered}
\end{equation*}
}Moreover, by the previous bound \eqref{equation:ext-mu-over-pi-bound},
{\proofsize  \begin{align*}
    \sum_{x_{h}\in \cX_{i, h}} \sum_{b_{1:h-1}} \indicDeqK  \frac{\prod_{k=1}^{h-1} \muphi_k(a_{k} \vert x_k, b_{1: k \wedge h'})}{ \prod_{k\in W(h)} \pi^{\star,h}_{i, k}(a_{k} \vert x_k )} \le X_{i,h} {h -1 \choose K  }A_i^{K\wedge H  + 1}.
\end{align*} 
}Using these two inequalities, we can get that 
{\proofsize  \begin{align*}
    &{\rm II}_h \le  H^2  \eta \paren{ \lambda+1 }T+ \frac{CH^2  \eta\log(X_i A_i/p)}{\lambda}X_{i,h} {h -1 \choose K  }A_i^{K\wedge H + 1}.
\end{align*} }
Taking summation over $h\in[H]$, we have
{\proofsize  \begin{align*}
    &\sum_{h=1}^H   \wt{\rm REGRET}_{h}^{T, \ext} = \sum_{h=1}^H({\rm I}_{h}+{\rm II}_{h})\\
    \le&~ H^3 \eta \paren{ \lambda+1 }T + \paren{ \frac{ 2\log(8X_i A_i/p)}{\eta} + \frac{CH^2 \eta\log(X_i A_i/p)}{\lambda}}\sum_{h=1}^H X_{i,h} {h -1 \choose K }A_i^{K\wedge H + 1}\\
    \le &~  H^3 \eta \paren{ \lambda+1 } T + \paren{ \frac{ 2\log(8X_i A_i/p)}{\eta} + \frac{CH^2 \eta\log(X_i A_i/p)}{\lambda}} X_{i } {H -1 \choose K\wedge H  }A_i^{K\wedge H + 1} \\
    \le&~ H^3 \eta \paren{ \lambda+1 } T + \paren{ \frac{ 2\log(8X_i A_i/p)}{\eta} + \frac{CH^2 \eta\log(X_i A_i/p)}{\lambda}} X_{i } {H  \choose K\wedge H  }A_i^{K\wedge H + 1} ,
\end{align*} }
for all $\lambda\in (0,1].$ Choosing $\lambda = 1$, we have,
{\proofsize  \begin{align*}
    \sum_{h=1}^H   \wt{\rm REGRET}_{h}^{T, \ext} \le
    2H^3\eta T + \paren{ \frac{2 \log (8X_iA_i/p)}{\eta} + CH^2 \eta\log(X_i A_i/p) } X_i {H-1 \choose  K\wedge H }A_i^{K\wedge H +1 }.
\end{align*} }
Then, choosing $\eta = \sqrt{{H \choose K\wedge H }A_i^{K \wedge H  +1} X_i \iota / (H^3T) }$, we have
{\proofsize  \begin{align*}
     \sum_{h=1}^H   \wt{\rm REGRET}_{h}^{T, \ext} &\le \sqrt{H^3 {H \choose K\wedge H }A_i^{K\wedge H+ 1 
  } X_i T \iota }\\
    + &\mc{O}  \paren{ {H \choose K\wedge H }  A_i^{K\wedge H + 1 } X_i   \iota \sqrt{\frac{H {H \choose K\wedge H }  A_i^{K\wedge H  +1 } X_i \iota }{T}}},
\end{align*} }
where $\iota = \log (8X_i A_i /p)$ is a log factor.
\end{proof}
 
\subsubsection{Proof of Lemma \ref{lemma:bound-bias1^II}: Bound on ${\rm BIAS}_{1, h}^{T, \ext}$}
\label{appendix:proof-bound-bias1^II}
\begin{proof}
For fixed $x_h$ and $b_{1:h-1}$, let $W = \set{k\in [h-1]: b_k\not= a_k}$ and $h'$ is the maximal element in $W$, so we have $h' = \tau_K$. We define $\wh$ as the set $W \cup \set{h'+1,\dots,h}$. We can rewrite ${\rm BIAS}_{1, h}^{T, \ext}$ as 
{\proofsize  \begin{align*}
    &{\rm BIAS}_{1, h}^{T, \ext}\\
    =~& \max_{\phi \in \Phi_i^K}  \sum_{x_{h}\in \cX_{i, h}} \sum_{b_{1:h-1}} \indicDeqK \frac{\prod_{k=1}^{h-1} \muphi_k(a_{k} \vert x_k, b_{ 1: k\wedge h'})}{\prod_{k \in \wh } \pi^{\star,h}_{i, k}(a_{k} \vert x_k )} \\
    & \times \sum_{t = 1}^T \underbrace{ \prod_{k \in \wh } \pi^{\star,h}_{i, k}(a_{k} \vert x_k ) \prod_{k = 1}^{h'} \pi_{i,k}^t (b_{k}|x_{k})
    \Big\langle\pi_{i, h}^t(\cdot \vert x_{i, h}),  \L_{i,h}^{t}(x_{i, h},\cdot) -\wt L^t_{(x_{h}, b_{1: h'})} \Big\rangle}_{\defeq \wt\Delta_t^{(x_{h},b_{1:h'}) }}.
\end{align*} 
}Observe that the random variable $\wt\Delta_t^{(x_{h},b_{1:h'}) }$ satisfy the following: 
\begin{itemize}
    \item By the definition of $\wt L$, we can rewrite $\wt \Delta_t^{(x_{h},b_{1:h'}) }$ as
    {\proofsize  \begin{align*}
        &\wt \Delta_t^{(x_{h},b_{1:h'}) } = \prod_{k = 1}^{h'} \pi_{i,k}^t (b_{k}|x_{k}) \Big\langle \pi_{i,h}^t (\cdot |x_{h}), \ L_{(x_{h},b_{1:h'}) }^{t}  \Big\rangle \prod_{k \in \wh } \pi^{\star,h}_{i, k}(a_{k} \vert x_k )\\
        &- \prod_{k \in W} \pi_{i,k}^t(b_{k} \vert x_k) \Big\langle \pi_{i,h}^t (\cdot |x_{h}),  \indic{(x_h, \cdot) = (x_h^{t,(h ,h', W)}, a_{h}^{t, (h ,h', W)}) } \cdot \paren{H-h+1 - \sum_{h'' = h}^H r_{i,h''}^{t, (h ,h', W)}} \Big\rangle;
    \end{align*} }
    \item $\wt\Delta_t^{(x_{h},b_{1:h'}) }\le \Big\langle \pi_i^t (\cdot |x_{h}), \ L_{(x_{h},b_{1:h'}) }^{t}  \Big\rangle \le H$ ;
    \item $\E[\wt \Delta_t^{(x_{h},b_{1:h'}) }|\cF_{t-1}]=0$, where $\cF_{t-1}$ is the $\sigma$-algebra containing all information after $\pi^t$ is sampled. This also can be seen from the unbiasedness of $\wt{\piL}$;
    \item The conditional variance $\E[(\wt\Delta_t^{(x_{h},b_{1:h'}) })^2|\cF_{t-1}]$ can be bounded as
  {\proofsize  \begin{align*}
    & ~\quad \E\brac{ \paren{\wt\Delta_t^{(x_{h},b_{1:h'}) }}^2 \Big| \cF_{t-1}} \\
    &\le \E\brac{ \paren{ H-h+1 - \sum_{h''=h}^H r_{i,h''}^{t, (h ,h', W)} }^2 \paren{\Big\langle \pi_{i,h}^t(\cdot | x_h), \prod_{k \in W} \pi_{i,k}^t(b_{k} \vert x_k) \indic{(x_h, \cdot) = (x_h^{t,(h ,h', W)}, a_{h}^{t, (h ,h', W)}) } \Big\rangle}^2\Big| \cF_{t-1} }\\
    & \overset{(i)}{\le} H^2 \prod_{k \in W} \pi_{i,k}^t(b_{k} \vert x_k) \E\brac{  \Big\langle \pi_{i,h}^t(\cdot | x_h),  \indic{(x_h, \cdot) = (x_h^{t,(h ,h', W)}, a_{h}^{t, (h ,h', W)}) } \Big\rangle \Big| \cF_{t-1} }\\
    & = H^2 \prod_{k \in W} \pi_{i,k}^t(b_{k} \vert x_k) \E\brac{ \sum_{a\in\cA_i} \pi_{i,h}^t(a | x_h),  \indic{(x_h, a) = (x_h^{t,(h ,h', W)}, a_{h}^{t, (h ,h', W)}) } \Big\rangle \Big| \cF_{t-1} }\\
    & = H^2 \prod_{k \in W} \pi_{i,k}^t(b_{k} \vert x_k) \cdot \sum_{ a \in \cA_i}   \pi_{i,h}^t(a | x_h) \P^{((\pi^{\star, h}_{i,k})_{k\in \wh}(\pi^{t}_{i,k})_{k\in[h']\setminus W} ) \times \pi_{-i}^t}\paren{ (x_{h}^{t, (h ,h', W)}, a_{h}^{t,(h ,h', W)})=(x_{h}, a) } \\
    & = H^2 \prod_{k \in W} \pi_{i,k}^t(b_{k} \vert x_k) \cdot \sum_{ a \in \cA_i}   \pi_{i,h}^t(a | x_h)   \prod_{k \in W(h) } \pi_{i,k}^{\star,h}(a_{k} \vert x_k) \cdot\prod_{k \in [h']\setminus W} \pi_{i,k}^{t}(a_{k} \vert x_k)\cdot p^{\pi^t_{-i}}_{1:h}(x_{i,h}) \\
     &\overset{(ii)}{\le} H^2 \prod_{k \in [h']} \pi_{i,k}^t(b_{k} \vert x_k) \cdot \prod_{k \in \wh } \pi_{i,k}^{\star,h}(a_{k} \vert x_k) \cdot  p^{\pi^t_{-i}}_{1:h}(x_{i,h}) .
  \end{align*} }
  Here, (i) uses $\Big\langle \pi_{i,h}^t(\cdot | x_h),  \indic{(x_h, \cdot) = (x_h^{t,(h ,h', W)}, a_{h}^{t, (h ,h', W)}) } \Big\rangle\in [0,1]$; (ii) is because $$\prod_{k \in W} \pi_{i,k}^t(b_{k} \vert x_k) \prod_{k \in [h']\setminus W} \pi_{i,k}^{t}(a_{k} \vert x_k) = \prod_{k \in [h']} \pi_{i,k}^t(b_{k} \vert x_k).  $$
\end{itemize}
Therefore, we can apply Freedman's inequality and union bound to get that  for any fixed $\lambda \in (0, 1/H]$, with probability at least $1-p/8$, the following holds simultaneously for all $(h,x_{i, h},  b_{1:h-1})$: 

{\proofsize  \begin{align*}
    \sum_{t = 1}^T \wt\Delta_t^{(x_{h},b_{1:h'}) } \le& \lambda H^2 \prod_{k \in \wh } \pi_{i,k}^{\star,h}(a_{k} \vert x_k) \sum_{t=1}^T \prod_{k \in [h']} \pi_{i,k}^t(b_{k} \vert x_k) p^{\pi^t_{-i}}_{1:h}(x_{i,h})
    + \frac{C\log(X_i A_i/p)}{\lambda},
\end{align*} 
}where $C > 0$ is some absolute constant. Plugging this bound into ${\rm BIAS}_{1, h}^{T, \ext}$ yields that, for all $h\in[H]$,
{\proofsize \begin{equation*}
    \begin{aligned}
    &~{\rm BIAS}_{1, h}^{T, \ext} \le \max_{\muphi \in \Phi_i^K} \sum_{x_{h}\in \cX_{i, h}} \sum_{b_{1:h-1}} \indicDeqK  \frac{\prod_{k=1}^{h-1} \muphi_k(a_{k} \vert x_k, b_{ 1: k\wedge h'})}{ \prod_{k\in \wh} \pi^{\star,h}_{i, k}(a_{k} \vert x_k )}\\
     & \times \brac{
    \lambda H^2 \prod_{k \in \wh } \pi_{i,k}^{\star,h}(a_{k} \vert x_k) \sum_{t=1}^T \prod_{k \in [h']} \pi_{i,k}^t(b_{k} \vert x_k) p^{\pi^t_{-i}}_{1:h}(x_{i,h})
    + \frac{C \log(X_i A_i/p)}{\lambda}}\\
    \le~ & \lambda H^2  \max_{\muphi \in \Phi_i^K} \sum_{x_{h}\in \cX_{i, h}} \sum_{b_{1:h-1}}\indicDeqK \prod_{k=1}^{h-1} \muphi_k(a_{k} \vert x_k, b_{ 1: k\wedge h'}) \sum_{t = 1}^T \prod_{k \in [h']} \pi_{i,k}^t(b_{k} \vert x_k) p^{\pi^t_{-i}}_{1:h}(x_{i,h}) \\
    &+ \frac{C \log(X_i A_i/p)}{\lambda} \max_{\muphi \in \Phi_i^K} \sum_{x_{h}\in \cX_{i, h}} \sum_{b_{1:h-1}} \indicDeqK  \frac{\prod_{k=1}^{h-1} \muphi_k(a_{k} \vert x_k, b_{ 1: k\wedge h'})}{ \prod_{k\in \wh} \pi^{\star,h}_{i, k}(a_{k} \vert x_k )}.
\end{aligned}
\end{equation*}
}Note that for fixed $t$, by Corollary \ref{lemma:sum=1},
{\proofsize
$$\sum_{x_{h}\in \cX_{i, h}}\sum_{b_{1:h-1}}\indicDeqK \prod_{k=1}^{h-1} \muphi_k(a_{k} \vert x_k, b_{1:k\wedge h'}) \prod_{k \in [h']} \pi_{i,k}^t(b_{k} \vert x_k) p^{\pi^t_{-i}}_{1:h}(x_{i,h}) \le 1.$$
}So we have
{\proofsize \begin{equation*}
    \begin{gathered}
        \lambda H^2  \max_{\muphi \in \Phi_i^K} \sum_{x_{h}\in \cX_{i, h}} \sum_{b_{1:h-1}}\indicDeqK \prod_{k=1}^{h-1} \muphi_k(a_{k} \vert x_k, b_{ 1: k\wedge h'}) \sum_{t = 1}^T \prod_{k \in [h']} \pi_{i,k}^t(b_{k} \vert x_k) p^{\pi^t_{-i}}_{1:h}(x_{i,h})     \le \lambda H^2  T.
    \end{gathered}
\end{equation*} 
}Moreover, by the inequality (\ref{equation:ext-mu-over-pi-bound}) in the proof of Lemma \ref{lemma:bound-regret^II}, we have
{\proofsize  \begin{align*}
    \sum_{x_{h}\in \cX_{i, h}} \sum_{b_{1:h-1}} \indicDeqK  \frac{\prod_{k=1}^{h-1} \muphi_k(a_{k} \vert x_k, b_{ 1: k\wedge h'})}{ \prod_{k\in \wh} \pi^{\star,h}_{i, k}(a_{k} \vert x_k )} \le X_{i,h} {h -1 \choose K  }A_i^{K\wedge H + 1}.
\end{align*} }
Plugging these bounds into ${\rm BIAS}_{1, h}^{T, \ext}$ yields that,  with probability at least $1-p/8$, for all $h\in [H]$,
{\proofsize  \begin{align*}
    {\rm BIAS}_{1, h}^{T, \ext} \le \lambda H^2 T + \frac{C \log(X_iA_i/p)}{\lambda} X_{i,h} {h -1 \choose K  }A_i^{K\wedge H + 1}.
\end{align*} }
Taking summation over $h\in[H]$, we have
{\proofsize  \begin{align*}
    \sum_{h=1}^H   {\rm BIAS}_{1, h}^{T, \ext} \le& \lambda H^3 T + \frac{C \log(X_iA_i/p)}{\lambda}\sum_{h=1}^H X_{i,h} {h -1 \choose K  }A_i^{K\wedge H + 1}\\
    \le & \lambda H^3   T + \frac{C \log(X_iA_i/p)}{\lambda} X_{i } {H  \choose K\wedge H  }A_i^{K\wedge H + 1},
\end{align*} 
}for all $\lambda\in (0,1/H].$ 
Choose $\lambda = \min\set{\frac{1}{H}, \sqrt{\frac{C X_{i,h} {H  \choose K\wedge H  }A_i^{K\wedge H + 1} \log(X_iA_i/p)}{H^3 T}}}$, we obtain
{\proofsize  \begin{align*}
       \sum_{t=1}^T {\rm BIAS}_{1, h}^{T, \ext} \le \mc{O}\paren{\sqrt{H^3 {H \choose K\wedge H } A_i^{K\wedge H+1} X_iT \iota}+ H {H \choose K\wedge H }A_i^{K\wedge H+1 } X_i\iota},
\end{align*} }
where $\iota = \log(8X_iA_i/p)$ is a log factor.
\end{proof}

\subsubsection{Proof of Lemma \ref{lemma:bound-bias2^II}: Bound on ${\rm BIAS}_{2, h}^{T, \ext}$}
\label{appendix:proof-bound-bias2^II}
\begin{proof}
For fixed $x_h$ and $b_{1:h-1}$, let $W = \set{k\in [h-1]: b_k\not= a_k}$ and $h'$ is the maximal element in $W$, so we have $h' = \tau_K$. We define $\wh$ as the set $W \cup \set{h'+1,\dots,h}$.
We can rewrite ${\rm BIAS}_{2, h}^{T, \ext}$ as 
{\proofsize  \begin{align*}
    {\rm BIAS}_{2, h}^{T, \ext}
    =~&\max_{\muphi\in \Phi_i^K} \sum_{x_{h}\in \cX_{i, h}} \sum_{b_{1:h-1}} \indicDeqK \prod_{k=1}^{h-1} \muphi_k(a_{k} \vert x_k, b_{ 1: k\wedge h'}) \\
    & \times \max_a \sum_{t = 1}^T \prod_{k = 1}^{h'} \pi_i^t (b_{k}|x_{k})
    \paren{ \wt L^t_{(x_{h}, b_{1: \tau_K})}(a)- \L_{i,h}^{t}(x_{i, h},a) } \\
    \le & \max_{\muphi\in \Phi_i^K} \sum_{x_{h}\in \cX_{i, h}} \sum_{b_{1:h-1}} \indicDeqK   \prod_{k=1}^{h-1} \muphi_k(a_{k} \vert x_k, b_{ 1: k\wedge h'})  \\
    &\times\max_{\phi_h'} \sum_{a_h} \phi_h'(a_h|x_h, b_{1:k \wedge h'}) \sum_{t = 1}^T \prod_{k = 1}^{h'} \pi_i^t (b_{k}|x_{k}) \paren{ \wt L_{(x_{h},b_{1:h'}) }^t(a_h) -  L_{(x_{h},b_{1:h'}) }^{t}(a_h) } \\
    =~& \max_{\muphi\in \Phi_i^K}\sum_{x_{i, h}, a_h} \sum_{b_{1:h-1}}  \indicDeqK  \prod_{k=1}^{h} \muphi_k(a_{k} \vert x_k, b_{ 1: k\wedge h'}) \\
    &\times \sum_{t = 1}^T    \prod_{k = 1}^{h'} \pi_i^t (b_{k}|x_{k}) \paren{ \wt L_{(x_{h},b_{1:h'}) }^t(a_h) -  L_{(x_{h},b_{1:h'}) }^{t}(a_h) } \\
    =~& \max_{\muphi\in \Phi_i^K}\sum_{x_{i, h}, a_h} \sum_{b_{1:h-1}}  \indicDeqK  \frac{\prod_{k=1}^{h} \muphi_k(a_{k} \vert x_k, b_{ 1: k\wedge h'})}{\prod_{ k \in \wh } \pi^{\star,h}_{i, k}(a_{k} \vert x_k )} \\
    &~ \times \sum_{t = 1}^T   \underbrace{  \prod_{k = 1}^{h'} \pi_i^t (b_{k}|x_{k}) \paren{ \wt L_{(x_{h},b_{1:h'}) }^t(a_h) -  L_{(x_{h},b_{1:h'}) }^{t}(a_h) } \prod_{k \in \wh } \pi^{\star,h}_{i, k}(a_{k} \vert x_k )}_{\defeq \wt\Delta_t^{x_{h}, b_{1:h'}, a_h}}
\end{align*} 
}Here, (i) comes from the fact that  the inner max over $\phi_h'$ and the outer max over $\phi_{1:h-1}$ are separable and thus can be merged into a single max over $\phi_{1:h}$.
 
Observe that the random variable $\wt\Delta_t^{x_{h}, b_{1:h'}, a_h}$ satisfy the following:
\begin{itemize}[topsep=0pt, itemsep=0pt]
    \item By the definition of $\wt \piL$, we can rewrite $\wt \Delta_t^{x_{h}, b_{1:h'}, a_h}$ as
    {\proofsize  \begin{align*}
        &\wt \Delta_t^{x_{h}, b_{1:h'}, a_h} \\
        =&   \prod_{k \in W} \pi_{i,k}^t(b_{k} \vert x_k) \indic{(x_h, a_h) = (x_h^{t,(h ,h', W)}, a_{h}^{t, (h ,h', W)}) } \cdot \paren{H-h+1 - \sum_{h'' = h}^H r_{i,h''}^{t, (h ,h', W)}} \\
        &~~ -   \prod_{k \in [h']} \pi_{i,k}^t(b_{k} \vert x_k) L_{(x_{h},b_{1:h'}) }^{t}(a_h)   \prod_{k \in \wh} \pi^{\star,h}_{i, k}(a_{k} \vert x_k ).
    \end{align*} }
    \item $\wt\Delta_t^{x_{h}, b_{1:h'}, a_h}\le H$.
    
    \item $\E[\wt \Delta_t^{x_{h}, b_{1:h'}, a_h }|\cF_{t-1}]=0$, where $\cF_{t-1}$ is the $\sigma$-algebra containing all information after $\pi^t$ is sampled.
    
    \item The conditional variance $\E[(\wt\Delta_t^{x_{h}, b_{1:h'}, a_h })^2|\cF_{t-1}]$ can be bounded as
  {\proofsize  \begin{align*}
    & ~\quad \E\brac{ \paren{\wt\Delta_t^{x_{h}, b_{1:h'}, a_h}}^2 \Big| \cF_{t-1}} \\
    &\le \E\brac{ \paren{ H-h+1 - \sum_{h''=h}^H r_{i,h''}^{t, (h ,h', W)} }^2 \paren{ \prod_{k \in W} \pi_{i,k}^t(b_{k} \vert x_k) \indic{(x_h, a_h) = (x_h^{t,(h ,h', W)}, a_{h}^{t, (h ,h', W)}) } }^2\Big| \cF_{t-1} }\\
    & \le H^2 \prod_{k \in W} \pi_{i,k}^t(b_{k} \vert x_k)   \P^{((\pi^{\star, h}_{i,k})_{k\in \wh}(\pi^{t}_{i,k})_{k\in[h']\setminus W} ) \times \pi_{-i}^t}\paren{ (x_{h}^{t, (h ,h', W)}, a_{h}^{t,(h ,h', W)})=(x_{h}, a_h) } \\
    & = H^2 \prod_{k \in W} \pi_{i,k}^t(b_{k} \vert x_k)  \prod_{k \in W  } \pi_{i,k}^{\star,h}(a_{k} \vert x_k) \cdot\prod_{k \in [h']\setminus W} \pi_{i,k}^{t}(a_{k} \vert x_k)\cdot \pi_{i,h}^{\star, h}(a_h | x_h)  p^{\pi^t_{-i}}_{1:h}(x_{i,h}) \\
     &\overset{(i)}{=} H^2 \prod_{k \in [h']} \pi_{i,k}^t(b_{k} \vert x_k)  p^{\pi^t_{-i}}_{1:h}(x_{i,h})\prod_{k \in \wh} \pi_{i,k}^{\star,h}(a_{k} \vert x_k) .
  \end{align*} }
  Here, (i) is because $\prod_{k \in [h']\setminus W} \pi_{i,k}^{t}(a_{k} \vert x_k)  = \prod_{k \in [h']\setminus W} \pi_{i,k}^{t}(b_{k} \vert x_k) $ .
\end{itemize}
Therefore, we can apply Freedman's inequality and union bound to get that  for any fixed $\lambda \in (0, 1/H]$, with probability at least $1-p/8$, the following holds simultaneously for all $(h,x_{i, h},  b_{1:h}, a_h)$: 

{\proofsize  \begin{align*}
    \sum_{t = 1}^T \wt\Delta_t^{x_h, b_{1:h'}, a_h} \le& \lambda H^2 \prod_{k \in \wh } \pi_{i,k}^{\star,h}(a_{k} \vert x_k) \sum_{t=1}^T \prod_{k \in [h']} \pi_{i,k}^t(b_{k} \vert x_k)  p^{\pi^t_{-i}}_{1:h}(x_{i,h})
    + \frac{C \log(X_i A_i/p)}{\lambda},
\end{align*} 
}where $C> 0$ is some absolute constant. Plugging this bound into ${\rm BIAS}_{2, h}^{T, \ext}$ yields that, for all $h\in[H]$ and $\muphi \in \Phi_i^K$,

{\proofsize \begin{equation*}
    \begin{aligned}
    &~{\rm BIAS}_{2, h}^{T, \ext} \le  \max_{\muphi\in\Phi_i^K}\sum_{x_{h}, a_h} \sum_{b_{1:h-1}} \indicDeqK  \frac{\prod_{k=1}^{h} \muphi_k(a_{k} \vert x_k, b_{ 1: k\wedge h'})}{ \prod_{k\in \wh} \pi^{\star,h}_{i, k}(a_{k} \vert x_k )}\cdot\\
     &\brac{
    \lambda H^2  \prod_{k \in \wh } \pi_{i,k}^{\star,h}(a_{k} \vert x_k) \sum_{t=1}^T \prod_{k \in [h']} \pi_{i,k}^t(b_{k} \vert x_k)  p^{\pi^t_{-i}}_{1:h}(x_{i,h})
    + \frac{C \log(X_i A_i/p)}{\lambda}}\\
    \le~ & \max_{\muphi\in\Phi_i^K} \lambda H^2 \sum_{x_{h}, a_h} \sum_{b_{1:h-1}}\indicDeqK \prod_{k=1}^{h} \muphi_k(a_{k} \vert x_k, b_{ 1: k\wedge h'}) \sum_{t = 1}^T \prod_{k \in [h']} \pi_{i,k}^t(b_{k} \vert x_k)  p^{\pi^t_{-i}}_{1:h}(x_{i,h}) \\
    &+\max_{\muphi\in\Phi_i^K} \frac{C \log(X_i A_i/p)}{\lambda}\sum_{x_{h},a_h} \sum_{b_{1:h-1}} \indicDeqK  \frac{\prod_{k=1}^{h} \muphi_k(a_{k} \vert x_k, b_{ 1: k\wedge h'})}{ \prod_{k\in \wh} \pi^{\star,h}_{i, k}(a_{k} \vert x_k )}.
\end{aligned}
\end{equation*} 
}Note that for fixed $t$, by Corollary \ref{lemma:sum=1},
{\proofsize
$$\sum_{x_{h}\in \cX_{i, h}}\sum_{b_{1:h-1}}\indicDeqK \prod_{k=1}^{h-1} \muphi_k(a_{k} \vert x_k, b_{1:k\wedge h'}) \prod_{k \in [h']} \pi_{i,k}^t(b_{k} \vert x_k)  p^{\pi^t_{-i}}_{1:h}(x_{i,h}) \le 1.$$
}So we have
$$
\sum_{x_{h}, a_h} \sum_{b_{1:h-1}}\indicDeqK \prod_{k=1}^{h} \muphi_k(a_{k} \vert x_k, b_{ 1: k\wedge h'}) \sum_{t = 1}^T \prod_{k \in [h']} \pi_{i,k}^t(b_{k} \vert x_k)  p^{\pi^t_{-i}}_{1:h}(x_{i,h}) \le T.
$$
Moreover, by the inequality (\ref{equation:ext-mu-over-pi-bound}) in the proof of Lemma \ref{lemma:bound-regret^II}, we have
{\proofsize  \begin{align*}
    &\sum_{x_{h},a_h} \sum_{b_{1:h-1}} \indicDeqK  \frac{\prod_{k=1}^{h} \muphi_k(a_{k} \vert x_k, b_{ 1: k\wedge h'})}{ \prod_{k\in \wh} \pi^{\star,h}_{i, k}(a_{k} \vert x_k )}\\
    &= \sum_{x_{h} } \sum_{b_{1:h-1}} \indicDeqK  \frac{\prod_{k=1}^{h-1} \muphi_k(a_{k} \vert x_k, b_{ 1: k\wedge h'})}{ \prod_{k\in \wh} \pi^{\star,h}_{i, k}(a_{k} \vert x_k )}\\
    &\le X_{i,h} {h -1 \choose K  }A_i^{K\wedge H}
\end{align*} 
}Plugging these bounds into ${\rm BIAS}_{2, h}^{T, \ext}$ yields that,  with probability at least $1-p/8$, for all $h\in [H]$,
{\proofsize  \begin{align*}
    {\rm BIAS}_{2, h}^{T, \ext} \le \lambda H^2 T + \frac{C \log(X_iA_i/p)}{\lambda} X_{i,h}  {h-1\choose K} A_i^{K\wedge H}.
\end{align*} 
}Taking summation over $h\in[H]$, we have
{\proofsize  \begin{align*}
    \sum_{h=1}^H   {\rm BIAS}_{2, h}^{T, \ext} \le& \lambda H^3  T + \frac{C \log(X_iA_i/p)}{\lambda}\sum_{h=1}^H X_{i,h}  {h-1\choose K} A_i^{K\wedge H}\\
    \le & \lambda H^3  T + \frac{C \log(X_iA_i/p)}{\lambda}X_i {H \choose K}A_i^{K\wedge H},
\end{align*} }
for all $\lambda\in (0,1/H].$ 
Choose $\lambda = \min\set{\frac{1}{H}, \sqrt{\frac{CX_i  {H \choose K\wedge H }A_i^{K\wedge H} \log(X_iA_i/p)}{H^3 T}}}$, we obtain the bound 
{\proofsize  \begin{align*}
       \sum_{t=1}^T {\rm BIAS}_{2, h}^{T, \ext} \le \mc{O}\paren{\sqrt{H^3 {H \choose K\wedge H } A_i^{K\wedge H} X_iT \iota}+ H {H \choose K\wedge H }A_i^{K\wedge H } X_i\iota},
\end{align*} }
where $\iota = \log(8X_iA_i/p)$ is a log factor.
\end{proof}

\end{document}